\def\eqref#1{equation~\ref{#1}}
\def\1{\bm{1}}
\newcommand{\E}{\mathbb{E}}
\newcommand{\one}{\mathbbm{1}}
\newcommand{\sgn}{\textnormal{sgn}}
\def\<{\langle}
\def\>{\rangle}
\newcommand{\Z}{\mathbb{Z}}
\newcommand{\tagnum}[2]{%
    \refstepcounter{equation}%
    \tag{#1) \ (\theequation}%
    \protected@write \@auxout {}{%
        \string \newlabel {#2}{{\theequation}{\thepage}{}{equation.\theequation}{}}%
    }%
}
\newcommand{\assign}{:=}
\newcommand{\mathd}{\mathrm{d}}
\newcommand{\mathe}{\mathrm{e}}
\newcommand{\tmop}[1]{\ensuremath{\operatorname{#1}}}
\newcommand{\tmstrong}[1]{\textbf{#1}}
\newcommand{\infixand}{\text{ and }}
\newcommand{\tmtextbf}[1]{\text{{\bfseries{#1}}}}
\definecolor{Gred}{RGB}{219, 50, 54}
\definecolor{Ggreen}{RGB}{60, 186, 84}
\definecolor{Gblue}{RGB}{72, 133, 237}
\definecolor{Gyellow}{RGB}{247, 178, 16}
\definecolor{ToCgreen}{RGB}{0, 128, 0}
\definecolor{myGold}{RGB}{231,141,20}
\definecolor{myBlue}{rgb}{0.19,0.41,.65}
\definecolor{myPurple}{RGB}{175,0,124}
\definecolor{orange}{HTML}{ff7f0e}
\definecolor{blue}{HTML}{1f77b4}
\definecolor{blueGrotto}{HTML}{059DC0}
\definecolor{royalBlue}{HTML}{057DCD}
\definecolor{navyBlue}{HTML}{0B579C}
\definecolor{framebg}{RGB}{241,244,247}     % Added for myframe environment
\definecolor{niceRed}{RGB}{190,38,38}
\definecolor{shadecolor}{gray}{0.90}
\declaretheoremstyle[
headfont=\normalfont\bfseries,
notefont=\mdseries, notebraces={(}{)},
bodyfont=\normalfont,
postheadspace=0.5em,
spaceabove=0.5em,
spacebelow=0.5em,
mdframed={
 skipabove=8pt,
 skipbelow=8pt,
 hidealllines=true,
 backgroundcolor={shadecolor},
 innerleftmargin=4pt,
 innerrightmargin=4pt}
]{shaded}
\declaretheoremstyle[
  bodyfont=\normalfont  % Ensures the body text is upright (non-italic)
]{nonitalic}
\newmdenv[
    backgroundcolor=framebg,
    roundcorner=5pt,
    skipabove=7pt,
    linewidth=0pt,
    innertopmargin=4pt
]{myframe}
\declaretheorem[within=section]{style}
\declaretheorem[style=shaded,sibling=style]{definition}
\declaretheorem[sibling=style]{main theorem}
\declaretheorem[style=shaded,sibling=style]{theorem}
\declaretheorem[style=nonitalic,sibling=style]{lemma}
\declaretheorem[sibling=style]{identity}
\declaretheorem[sibling=style]{proposition}
\declaretheorem[sibling=style]{fact}
\declaretheorem[style=nonitalic,sibling=style, numbered=no]{remark}
\declaretheorem[style=nonitalic,sibling=style, numbered=no]{Proof Sketch}
\title{Characterizing Evolution in Expectation-Maximization Estimates for 
Overspecified Mixed Linear Regression}
\author{\name Zhankun Luo \email luo333@purdue.edu \\
      \name Abolfazl Hashemi \email abolfazl@purdue.edu \\
      \addr School of Electrical and Computer Engineering\\
      Purdue University, West Lafayette, IN, USA}
\begin{document}

\maketitle

%%%%%%%%%%%%%%%
%% Title
% Probing Expectation-Maximization in Overspecified Mixed Linear Regression
%%%%%%%%%%%%%%%

\begin{abstract}
% background
Estimating data distributions using parametric families is crucial in many learning setups, serving both as a standalone problem and an intermediate objective for downstream tasks. Mixture models, in particular, have attracted significant attention due to their practical effectiveness and comprehensive theoretical foundations.
% previous works & open problem
A persisting challenge is model misspecification, which occurs when the model to be fitted has more mixture components than those in the data distribution. 
% summary
In this paper, we develop a theoretical understanding of the Expectation-Maximization (EM) algorithm's behavior in the context of targeted model misspecification for overspecified two-component Mixed Linear Regression (2MLR) with unknown $d$-dimensional regression parameters and mixing weights.
% contributions
%With an unbalanced initial guess for mixing weights, we establish linear convergence of regression parameters in $\mathcal{O}(\log (1/\epsilon))$ steps.
%Conversely, with a balanced initial guess for mixing weights, we observe sublinear convergence in $\mathcal{O}(\epsilon^{-2})$ steps to achieve the $\epsilon$-accuracy at Euclidean distance.
%For mixtures with sufficiently imbalanced fixed mixing weights, we demonstrate a statistical accuracy of $\mathcal{O}((d/n)^{1/2})$, whereas for those with sufficiently balanced fixed mixing weights, the accuracy is $\mathcal{O}((d/n)^{1/4})$ given $n$ data samples.
% novelty
%Our new findings not only expand the scope of theoretical convergence but also improve the bounds for statistical error and sample complexity, and rigorously characterize the evolution of EM estimates.
% Revised contributions and connection between population and finite-sample results
In Theorem~\ref{thm:popl} at the population level, with an unbalanced initial guess for mixing weights, we establish linear convergence of regression parameters in $\mathcal{O}(\log (1/\epsilon))$ steps. 
Conversely, with a balanced initial guess for mixing weights, we observe sublinear convergence in $\mathcal{O}(\epsilon^{-2})$ steps to achieve the $\epsilon$-accuracy at Euclidean distance. 
In Theorem~\ref{thm:finite} at the finite-sample level, for mixtures with sufficiently unbalanced fixed mixing weights, we demonstrate a statistical accuracy of $\mathcal{O}((d/n)^{1/2})$, whereas for those with sufficiently balanced fixed mixing weights, the accuracy is $\mathcal{O}((d/n)^{1/4})$ given $n$ data samples.
Furthermore, we underscore the connection between our population level and finite-sample level results: by setting the desired final accuracy $\epsilon$ in Theorem~\ref{thm:popl} to match that in Theorem~\ref{thm:finite} at the finite-sample level, 
namely letting $\epsilon = \mathcal{O}((d/n)^{1/2})$ for sufficiently unbalanced fixed mixing weights and $\epsilon = \mathcal{O}((d/n)^{1/4})$ for sufficiently balanced fixed mixing weights,
we intuitively derive iteration complexity bounds $\mathcal{O}(\log (1/\epsilon))=\mathcal{O}(\log (n/d))$ and $\mathcal{O}(\epsilon^{-2})=\mathcal{O}((n/d)^{1/2})$ at the finite-sample level for sufficiently unbalanced and balanced initial mixing weights, respectively.
We further extend our analysis in the overspecified setting to the finite low SNR regime, providing approximate dynamic equations that characterize the EM algorithm's behavior in this challenging case.
Our new findings not only expand the scope of theoretical convergence but also improve the bounds for statistical error, time complexity, and sample complexity, and rigorously characterize the evolution of EM estimates.
% experiments
% Our complementary numerical results also align with our theoretical findings.
\end{abstract}

% For TOC in appendix (https://tex.stackexchange.com/a/419290)
\doparttoc[n] % Tell to minitoc to generate a toc for the parts
\faketableofcontents % Run a fake tableofcontents command for the partocs

\section{Introduction}\label{sec:intro}
%% significance of the problem
% mixitures
Mixtures of parameterized models are powerful tools to model intricate relationships in practical scenarios, 
such as Mixed Linear Regression (MLR) and Gaussian Mixture Models (GMM)~\citep{beale1975missing}.
It is common for the number of mixture components in the fitted model to differ from that of the data distribution, 
which can substantially slow down parameter estimation convergence rates. 
We specifically focus on the overspecified setting, 
where the number of mixture components in the fitted model exceeds that of the data distribution.
% EM
Expectation Maximization (EM) algorithm~\citep{dempster1977maximum,wu1983convergence,de1989mixtures,jordan1995convergence,wedel1995mixture} 
is notable for its computational efficiency and ease of practical implementation, 
overcoming the intractable problem of 
non-convexity and the presence of numerous spurious local maxima in Maximum Likelihood Estimation (MLE)~\citep{qian2022spurious}.
It proceeds through two steps: in the E-step, it computes the expected log-likelihood using the current parameter estimate; 
subsequently, the M-step updates the parameters to maximize the expected log-likelihood computed in the E-step. 
These iterative steps serve to maximize the lower bound on MLE until convergence.
% goal
The goal of this paper is to gain a comprehensive understanding of EM updates for overspecified mixture models.
%%%%%%%%%%%%%%%%%%%%%%%%%%%%%%%%%%
%% literature review
\subsection{Related Work}
%  EM convergence rate
It has been shown that EM achieves global convergence for GMM with $k = 2$ components (2GMM)~\citep{klusowski2016statistical,xu2016global,daskalakis17b,ndaoud2018sharp,zhao2020statistical,wu2021randomly}.
Studies~\citep{balakrishnan2017statistical,klusowski2016statistical,kwon2019global,kwon2022dissertation} 
have confirmed that EM for two-component mixed linear regression (2MLR) converges from a random initialization with high probability.
GMMs with $k \geq 3$ components frequently lead to EM with random initialization being trapped in local minima with high probability, 
while local maxima can demonstrate notably inferior likelihood than any global maximum~\citep{jin2016local}. 
For mixture models with multiple components, such as Gaussian Mixture Models (GMM), it is well known that the negative log-likelihood—used as the objective function in the EM algorithm—can exhibit several spurious local minima that are not globally optimal, even when the mixture components are well separated~\citep{chen2024local}. 
Consequently, proper initialization of the parameters is crucial, as assumed in the analysis of mixtures of many linear regressions \citep{kwon2020converges}.
A convergence analysis for EM in MLR with multiple components has been presented~\citep{kwon2020converges}, 
but it requires careful initialization and strong separation for regression parameters.
Even for the case of 2GMM, most work requires good initialization and strong separation for regression parameters. 
For instance, \citet{wu2021randomly} analyzed the convergence rates of 2GMM with fixed balanced mixing weights given a specific initialization and strong separation for location parameters.
Alternatively, specific initialization needs to trend toward the ground truth in cases of weak separation or no separation \citep{weinberger2022algorithm}.
There is a lack of understanding of the EM update rules in mixture models given weak separation or no separation of the ground truth parameters.

When ground truth parameters of some components in mixture models have no separation,
this means we are using a model with more components to fit a ground truth data distribution with fewer components,
which is called the overspecified setting. 
In particular, the 2GMM/2MLR model is considered overspecified when there is no separation between the true regression parameters or location parameters of its two components. 
Moreover, it has been observed that the convergence of EM can be prohibitively slow 
when dealing with poorly separated mixtures~\citep{redner1984mixture},
while achieving rapid linear convergence or superlinear convergence in cases with strong separation~\citep{kwon2021minimax,ghosh20a}.
The convergence rate of regression parameters using EM for 2GMM in the overspecified setting, assuming known mixing weights, 
is analyzed in~\citet{dwivedi2020unbalanced,dhawan2023sharp}.
\citet{kwon2019global, kwon2021minimax} derived iteration complexity and statistical accuracy for the known-weight case across all SNR regimes---but their results in the low-SNR setting apply only when mixing weights are known to be balanced.
\citet{kwon2020converges} analyzed EM for MLR with more than two components under restrictive conditions---such as requiring initializations very near the ground truth and well-separated regression parameters---which do not cover cases of overspecification where the ground truth parameters coincide.
Nevertheless, there is a lack of prior research formally analyzing the evolution of EM updates for 2MLR with unknown regression parameters and mixing weights in the overspecified setting, 
with most prior research primarily concentrating on scenarios with known mixing weights.
The behavior of EM updates in overspecified settings is not fully understood 
when dealing with unknown mixing weights, whether they are balanced or unbalanced.
In this paper, we address this gap by conducting a detailed analysis 
of the EM updates for overspecified 2MLR, based on the expressions of integrals with Bessel functions given in \citep{luo24cycloid}.
Even though \citet{luo24cycloid} provided closed-form EM update rules (using Bessel functions) for 2MLR across all SNR regimes, but they focus primarily on the high-SNR case and do not discuss convergence guarantees or the dynamics 
in the overspecified setting and the low-SNR regime.
% minimax rate
The statistical rates for parameter estimation of a class of overspecified finite mixture models have been studied in several studies~\citep{chen1995optimal,ho2016weakly,heinrich2018minimax,doss2020optimal,manole2020uniform,manole2022refined,ho2022pde,ho2023softmax}. 
The optimal minimax rates for 2MLR with known balanced mixing weights have been determined in~\citet{kwon2021minimax},
while those for 2GMM with known mixing weights have been established in~\citet{dwivedi2020unbalanced} using the scheme outlined by~\citet{balakrishnan2017statistical}. However, the statistical error for 2MLR with unknown mixing weights remains unclear. 
We develop new techniques to sharpen the analysis for this statistical error.

%%%%%%%%%%%%%%%%%%%%%%%%%%%%%%%%%%
%% contribution of our work / Technical overview
\subsection{Technical Overview} % Organization
In this paper, we present a comprehensive study of overspecified mixture models. 
The paper is organized as follows.
In section~\ref{sec:setup}, we state the setting of two-component mixed linear regression (2MLR), overspecification, and the EM update rules, and introduce Bessel functions for the analysis of EM updates.
In section~\ref{sec:example}, we provide two motivating examples (haplotype assembly and phase retrieval) of the 2MLR model to justify the practical implications.
In section~\ref{sec:em}, we characterize the population EM update rules with expectations under the density involving Bessel functions (\eqref{eq:expectation_em}), show their nonincreasing property and boundedness (Facts~\ref{fact:monotone_expectation},~\ref{fact:nonincreasing}), and provide their approximate dynamic equations (Proposition~\ref{prop:dynamic}) for the evolution of regression parameters and mixing weights.
In section~\ref{sec:population}, we establish the guarantees of convergence rate for population EM (Theorem~\ref{thm:popl}) with balanced and unbalanced initial guesses for mixing weights, 
and establish theoretical bounds for sublinear convergence (Proposition~\ref{prop:sublinear_convg}) with balanced initial guesses and the contraction factor of linear convergence (Proposition~\ref{prop:contraction}) with unbalanced initial guesses.
In section~\ref{sec:finite}, we present the tight bounds for sample complexity, time complexity, and final accuracy for finite-sample EM (Theorem~\ref{thm:finite}) 
by coupling the analysis of population EM and finite-sample EM and establishing statistical errors (Propositions~\ref{prop:stat_err_mixing},~\ref{prop:stat_err}) in the overspecified setting.
In section~\ref{sec:extension}, we discuss the differences between results of 2MLR and 2GMM, extend our analysis from the overspecified setting to the low-SNR regime, 
and examine the challenges of analyzing overspecified mixture models with multiple components.
In section~\ref{sec:experiments}, we validate our theoretical findings with numerical experiments. 
Detailed derivations and proof techniques are provided in the Appendices.
In summary, our contributions include: 
\begin{itemize}
    \item We derive the approximate dynamic equations for the regression parameters and mixing weights in Proposition~\ref{prop:dynamic} based on the population EM update rules to disentangle the relationships between regression parameters and mixing weights 
    by establishing novel inequalities and identities for EM update rules
    (see~\eqref{eq:expectation_em} and Appendices~\ref{sup:em_lemma},~\ref{sup:extension}), aiding in the investigation of EM evolution in the overspecified setting and the low-SNR regime.
    \item Linear convergence of regression parameters with a rate of $\mathcal{O}(\log (1/\epsilon))$ is achieved with an unbalanced initial guess for mixing weights. 
    In contrast, sublinear convergence at a rate of $\mathcal{O}(\epsilon^{-2})$ is confirmed with a balanced initial guess
    by replacing annulus-based localization in~\citet{dwivedi2020unbalanced} with a ``variable separation'' method upon the discretized differential inequality
    (see the proofs of Proposition~\ref{prop:sublinear_convg} and Theorem~\ref{thm:finite} 
    in Appendices~\ref{sup:popl},~\ref{sup:finite} respectively), 
    ensuring $\epsilon$-accuracy in Euclidean distance, as shown in Theorem~\ref{thm:popl} and Proposition~\ref{prop:sublinear_convg}.
    \item We address the gap for sufficiently balanced mixtures, where the imbalance of the initial guess for mixing weights is less than or of the same order of magnitude as $(d/n)^{1/4}$ in Theorem~\ref{thm:finite}, improving bounds on statistical error, time complexity, and sample complexity in 2MLR 
    by establishing the concentration inequality based on modified log-Sobolev inequality (see Section 5.3 modified logarithmic Sobolev inequalities in~\citet{ledoux2001concentration}) and bounds for the statistics
    (see Appendices~\ref{sup:finite_lemma},~\ref{sup:finite_err}).
    Our new techniques advance the results beyond those for 2GMM~\citep{dwivedi2020unbalanced}.
\end{itemize}
By adopting the above novel techniques, we rigorously characterize the evolution of EM estimates for both regression parameters and mixing weights of overspecified MLR models by providing approximate dynamic equations (Proposition~\ref{prop:dynamic}) for EM update rules and establishing convergence guarantees (Theorems~\ref{thm:popl},~\ref{thm:finite}) for final accuracy, time complexity, and sample complexity at population and finite-sample levels, respectively.

\section{Problem Setup}\label{sec:setup}
% Notation
In this paper, we investigate the symmetric two-component mixed linear regression (2MLR) model given by:
\begin{equation}\label{eq:model}
  y= (- 1)^{z+ 1} \langle \theta^{\ast}, x
     \rangle +\varepsilon,
\end{equation}
where $\varepsilon$ denotes the additive Gaussian noise, $s= (x,y) \in \mathbb{R}^d \times \mathbb{R}$ represents the covariate-response observation, 
$z \in \{1, 2\}$ is the latent variable, i.e., the label of the data, with probabilities $\mathbb{P}[z = 1] = \pi^\ast(1)$ and $\mathbb{P}[z = 2] = \pi^\ast(2)$. The ground truth values for the regression parameters and the mixing weights are expressed by $\theta^\ast$ and $\pi^\ast = (\pi^\ast(1), \pi^\ast(2))$, respectively.
$\theta$ and $\pi = (\pi(1), \pi(2))$ denote the estimated values for the regression parameters and mixing weights in the fitted 2MLR model, respectively.

Consider the 2MLR model in \eqref{eq:model}.
Let $n$ denote the number of samples $\mathcal{S} \assign \{ (x_i, y_i) \}_{i=1}^n$ used for each EM update, and let $\{ z_i \}_{i=1}^n$ be the values of the latent variable for these samples. 
% $\theta^{\ast}$ is the true parameters of the 2-MLR model, and
% $\pi^{\ast}$ is the true value of the mixing weights of the two components.
Furthermore, $\sigma^2$ denotes the noise variance, $\bar{\theta}\assign \theta/\sigma$ and $\bar{\theta}^\ast\assign\theta^\ast/\sigma$ are the normalized parameters,
and $\eta\assign\|\theta^\ast\|/\sigma$ is the signal-to-noise ratio (SNR), while \(\rho\assign \langle \theta^\ast, \theta \rangle /(\|\theta^\ast\|\cdot\|\theta\|)\) is the cosine angle between the ground truth and the estimated regression parameters. 
In this paper, we focus on the case of overspecification, namely $\theta^\ast = \vec{0}=(0, 0, \cdots, 0)\in \mathbb{R}^d$, where the ground truth regression parameters are zero, and there is no separation between two mixtures.
We note that the 2MLR and 2GMM models are standard models for establishing theoretical understanding of methods for mixture models, see, e.g., the recent works~\citep{dwivedi2020unbalanced,weinberger2022algorithm,dhawan2023sharp,luo24cycloid,reisizadeh2024mixture}. 
Thus, we adopt the 2MLR model and its standard assumptions in this paper to develop a finer understanding of the misspecification phenomenon 
(see also 
Section 2.2.2, page 6 of~\citet{balakrishnan2017statistical}; %[Balakrishnan et al., 2017]; 
page 1 of~\citet{klusowski2016statistical};
%[Klusowski et al., 2019]; 
page 4 of~\citet{reisizadeh2024mixture};
%[Reisizadeh et al., 2024]; 
and page 3 of~\citet{luo24cycloid}).

\noindent\textbf{Bessel Function.}
$K_0(x)$, for all $x > 0$, is the modified Bessel function of the second kind with parameter 0, defined by the integral representation
\(
K_0(x) := \int^\infty_0 \exp(-x \cosh t) \, \mathrm{d}t  
\)
%= \frac{1}{2} \int^\infty_0 \exp(-t - \frac{x^2}{4t}) \frac{\mathrm{d}t}{t}
which is also the solution $f = K_0(x)$ to the modified Bessel equation  
\(
x^2 \frac{\mathrm{d}^2 f}{\mathrm{d}x^2} + x \frac{\mathrm{d}f}{\mathrm{d}x} - x^2 f = 0.
\) 
The approximations for $K_0(x)$ are: \(K_0(x) \approx \ln \frac{2}{x} - \gamma\) for \(x \to 0_+\),
where \(\gamma \approx 0.577\) is Euler's constant, and \(K_0(x) \approx \sqrt{\frac{\pi}{2x}} \exp(-x)\) for \(x \to +\infty\).
(see equation 10.32.9 for the integral representation, equation 10.25.1 for the modified Bessel equation, and equations 10.25.2, 10.25.3, and 10.31.2 for approximations in Chapter 10 of~\citet{olver2010nist}).
An important fact is that for the product of two independent standard Gaussian random variables $Z_1\sim \mathcal{N}(0, 1)$ and $Z_2\sim \mathcal{N}(0, 1)$, it has the probability density $f_X$ involving the Bessel function $K_0$,
\(
X := Z_1 \times Z_2 \sim f_X(x) = \frac{K_0(|x|)}{\pi}
\) (see page 50, Section 4.4 Bessel Function Distributions, Chapter 12 Continuous Distributions (General) of~\citet{johnson1970continuous}).

\noindent\textbf{Notations.}
The notations $\Omega(\cdot)$, $\mathcal{O}(\cdot)$ and $\Theta(\cdot)$ share the standard definitions of asymptotic notations: $f=\Omega(g)$ means $g=\mathcal{O}(f)$ for $f,g$, namely $|f(x)| \geq C\times |g(x)|$ for some constant $C>0$ and all $x$ sufficiently large (see page 528 of~\citet{lehman2017mathematics}). 
$f = \Theta(g)$ means $f = \mathcal{O}(g)$ and $g = \mathcal{O}(f)$. We also write \(f \lesssim g\) if $f = \mathcal{O}(g)$, and \(f \gtrsim g\) if $f = \Omega(g)$, and \(f \asymp g\) if $f = \Theta(g)$.
$a\vee b$ and $a\wedge b$ refer to the least upper bound $\max(a, b)$ and the greatest lower bound $\min(a, b)$ of $a,b$, respectively.

% Assumptions
%\noindent\textbf{Assumptions.} The main assumptions used in the theoretical analysis are presented next.
%\begin{assumption}\label{ass:1}
%    The latent variable and the mixing weights $(z ; \pi)$ are independent
%    of the regression parameters $\theta$, namely $(z; \pi) \ind \theta$.
%\end{assumption}
%
%\begin{assumption}\label{ass:2}
%    The additive noise $\varepsilon$ is independent of the covariate random variable, latent variable, the regression parameters, and the mixing weights, that is $\varepsilon \ind (x, z; \theta, \pi)$.
%\end{assumption}
%
%\begin{assumption}\label{ass:3}
%    The covariate random variable $x$ is independent of the latent
%    variable, the regression parameters, and the mixing weights, namely $x \ind
%    (z ; \theta, \pi)$.
%\end{assumption}
%
%\begin{assumption}\label{ass:4}
%    Both the covariate $x$ and the noise $\varepsilon$ are Gaussians, $x
%    \sim \mathcal{N} (0, I_d), \varepsilon \sim \mathcal{N} (0, \sigma^2)$,
%    where $I_d$ is a $d$ by $d$ identity matrix.
%\end{assumption}
%The above assumptions for 2MLR are standard in prior work (see Section 2.2.2, page 6 of~\citet{balakrishnan2017statistical}; %[Balakrishnan et al., 2017];
%page 1 of~\citet{klusowski2016statistical};
%%[Klusowski et al., 2019]; 
%page 4 of~\citet{reisizadeh2024mixture};
%%[Reisizadeh et al., 2024]; 
%and page 3 of~\citet{luo24cycloid}).
%%[Luo and Hashemi, 2024]).
%%(see, e.g., \cite{ghosh20a,kwon2021minimax}) 
%% are necessary to derive the forthcoming results. 

% EM updates
\subsection{EM Updates}
The EM algorithm estimates the regression parameters and the mixing weights from observations. 
\citet{balakrishnan2017statistical} gave the population EM update rule of regression parameters for 2MLR given the known balanced mixing weights $\pi = \pi^\ast = (\frac{1}{2}, \frac{1}{2})$ as follows:
\begin{equation*}
  M (\theta) =\mathbb{E}_{s\sim p(s\mid\theta^\ast, \pi^\ast)} 
   \tanh \left(
  \frac{y\langle x, \theta \rangle}{\sigma^2}\right) y  x.
\end{equation*}
For the more general case of unknown mixing weights $\pi$, we introduce the variable $\nu$ to characterize the imbalance \(\tanh\nu = \pi(1)- \pi(2)\) of the mixing weights \(\pi=(\pi(1), \pi(2))\),
namely 
\begin{equation}\label{eq:nu_def}
  \nu\assign \frac{\ln \pi(1) - \ln \pi(2)}{2},
\end{equation}
and the population EM update rule for regression parameters $\theta$ becomes
\begin{equation}\label{eq:theta}
  M(\theta, \nu) \assign \mathbb{E}_{s\sim p(s\mid\theta^\ast, \pi^\ast)} 
  \tanh \left(
  \frac{y \langle x, \theta \rangle}{\sigma^2} +\nu\right)
  y  x ,
\end{equation}
while the population EM update rule for the imbalance $\tanh(\nu)$ is given by
\begin{equation}\label{eq:nu}
  N(\theta, \nu) \assign \mathbb{E}_{s\sim p(s\mid\theta^\ast, \pi^\ast)} 
  \tanh \left(
  \frac{y \langle x, \theta \rangle}{\sigma^2} +\nu\right).
\end{equation}

The corresponding finite-sample EM update rules with \(n\) observations are given by
\begin{equation}\label{eq:finite}
  M_n(\theta, \nu) = \left( \frac{1}{n}  \sum_{i = 1}^n x_i x_i^{\top} \right)^{-1} 
  \left( \frac{1}{n}  \sum_{i = 1}^n \tanh \left(
  \frac{y_i\langle x_i, \theta\rangle}{\sigma^2} +\nu \right) y_i x_i \right),\quad
  N_n(\theta, \nu) = \frac{1}{n}  \sum_{i = 1}^n \tanh \left(
  \frac{y_i\langle x_i, \theta\rangle}{\sigma^2} +\nu\right).
\end{equation}

To further simplify the analysis of the finite-sample EM update rules, we introduce the following easy EM update rule for regression parameters:
\begin{equation}\label{eq:easyEM}
    M^{\tmop{easy}}_n  (\theta, \nu)=  \frac{1}{n}  \sum_{i = 1}^n \tanh \left(
\frac{y_i\langle x_i, \theta\rangle}{\sigma^2} +\nu \right) y_i x_i.
\end{equation}

The derivation of the EM update rules for the 2GMM model is on pages 4--6 of~\citet{weinberger2022algorithm}, 
and the rigorous derivation of 2MLR for the population and finite-sample EM update rules 
% for regression parameters and mixing weights
(equations~\ref{eq:theta},~\ref{eq:nu},~\ref{eq:finite}) can be found in Appendix B of~\citet{luo24cycloid}.

% Where the $\nu \assign \frac{\log \pi_1 - \log \pi_2}{2} \in \mathbb{R}$ is defined with
% the estimate $\pi$ for mixing weights, and that is a one-to-one mapping
% between $\pi$ and $\nu$. The true value $\nu^{\ast} \assign \frac{\log
% \pi_1^{\ast} - \log \pi_2^{\ast}}{2}$ is determined by $\pi^{\ast}$.

\subsection{Auxiliary Quantities}
The superscript $t$ stands for the $t$-th EM iteration. For instance, $\theta^t$ and $\pi^t$ denote the $t$-th iteration of regression parameters and
mixing weights. For the ease of theoretical analysis, we denote 
\begin{equation}
\alpha^t\assign \frac{\|\theta^t\|}{\sigma},\quad\beta^t \assign \tanh(\nu^t) = \pi^t(1)-\pi^t(2),
\end{equation}\label{eq:alpha_beta}
to be the $\ell_2$ norm of the normalized regression parameters $\theta^t/\sigma$, and $|\beta^t|=\|\pi^t - \frac{\mathds{1}}{2} \|_1=|\pi^t(1)-1/2|+|\pi^t(2)-1/2|$ represents the $\ell_1$ distance between the mixing weight $\pi^t$ and the balanced mixing weights $(1/2, 1/2)$, namely the imbalance in mixtures,
to further simplify the discussions on the convergence of EM iterations. Here, $\mathds{1} = (1, 1)$ is the vector of all ones.
%Further, we derive a closed-form expression for the population update of both the regression parameters and mixing weights by using the Bessel function $K_0$, see the supplementary (Appendix~\ref{sup:pop}). 
% Our sharp analysis further utilizes the expressions with Bessel functions described in the notation subsection.
To further simplify the analysis, we introduce these two functions \(m(\alpha, \nu)\) and \(n(\alpha, \nu)\) of \(\alpha = \|\theta\|/\sigma\) and \(\nu =(\ln\pi(1)-\ln\pi(2))/2\) by defining them as the expectations 
under the density \(X\sim f_X(x) = \frac{K_0(|x|)}{\pi}\) involving the Bessel function $K_0$,
\begin{equation}
  m(\alpha, \nu) = \E[\tanh(\alpha X + \nu)X], \quad n(\alpha, \nu) = \E[\tanh(\alpha X + \nu)].
\end{equation}
In particular, we write \(m_0(\alpha) = m(\alpha, 0)= \E[\tanh(\alpha X)X]\) for the case of \(\tanh \nu = \pi(1)-\pi(2)=0\).

\section{Motivating Examples}\label{sec:example}
As motivating examples, we highlight the tasks of haplotype assembly in bioinformatics and genomics \citep{cai2016structured}
and phase retrieval, which arises in numerous fields including acoustics, optics, and quantum information \citep{candes2015phase}, 
and also learning overparameterization models and Mixture of Experts (MoE) models, to justify the practical implications of our work.

\subsection{Haplotype Assembly}
Haplotypes are sequences of chromosomal variations in an individual's genome that are crucial for determining the individual's disease susceptibility. 
Haplotype assembly involves reconstructing these sequences from a mixture of sequenced chromosome fragments. Notably, humans have two haplotypes, i.e., they are diploid organisms 
(see \citet{cai2016structured} for a clear mathematical formulation of the problem). For diploids, the primary challenge is to reconstruct two distinct haplotypes (binary sequences of single nucleotide polymorphisms---SNPs) from short, noisy sequencing reads. Each read corresponds to a local window of the genome but originates from one of the two chromosomes. The ambiguity in the haplotype origin of each read can be modeled as a mixture of two linear regression models with symmetric parameters, aligning with the model discussed in our work. 
Following \citet{cai2016structured}, 
% let $\theta^\ast\in \{-1, +1\}^d$ represent one haplotype, and the other haplotype is its negation $-\theta^\ast$. The sequencing data comprises $n$ noisy reads $\{(x_i, y_i)\}_{i=1}^n$, where $x_i \in \{0,1\}^d$ is a sparse binary indicator vector encoding which SNPs are covered by the read, and $y_i \in \mathbb{R}$ is the observed read signal (e.g., aggregate allele value), modeled as
% \begin{equation}
% y_i = (-1)^{z_i+1} \langle x_i, \theta^\ast\rangle + \varepsilon_i,
% \end{equation}
% where $z_i \in \{1, 2\}$ is a latent variable indicating whether the read originates from haplotype $\theta^\ast$ or $-\theta^\ast$, and $\varepsilon_i \sim \mathcal{N}(0, \sigma^2)$ is Gaussian noise. In bioinformatics and genomics, sequencing data is inherently noisy and often exhibits a low SNR. This is due to short read lengths relative to haplotype length and high sequencing error rates in biological variation (e.g., substitution and phasing errors), both of which increase the effective noise variance $\sigma^2$. 
% As noted in~\citet{sankararaman2020comhapdet}, the efficacy of EM-type methods is demonstrated for this problem---indeed, the authors remark that ``this iterative update rule is reminiscent of the class of Expectation Maximization algorithms.'' Our work aligns with the diploid haplotype assembly problem under high noise conditions, and for precise mathematical derivations, we leverage the Gaussianity assumption on $x_i$ to characterize EM behavior in the symmetric two-component mixture of linear regressions (2MLR) setting.
%In this model, 
let $\theta^\ast \in \{-1, +1 \}^d$ represent one haplotype, 
and the other haplotype is its negative, $-\theta^\ast$. 
The binary variable $z_i \in \{-1, +1\}$ indicates the haplotype origin of the $i$-th read, 
where the probability of the read originating from $\theta^\ast$ is 
$\mathbb{P}[z_i = +1] = \pi^\ast(1)$, 
and the probability of it originating from $-\theta^\ast$ is $\mathbb{P}[z_i = -1] = \pi^\ast(2)$. 
Furthermore, the $j$-th entry, $\varepsilon_i[j]$ for $j \in [d]$, 
of the noise vector $\varepsilon_i$ follows a distribution defined 
by a fixed error probability $p_e$: 
specifically, the noise causes an error (a flip) with probability $p_e$, 
meaning $\mathbb{P}(\varepsilon_i[j] = -2 z_i \theta^\ast[j]) = p_e$, 
and the noise is zero (correct reading) with probability $1 - p_e$, 
meaning $\mathbb{P}(\varepsilon_i[j] = 0) = 1 - p_e$. 
Given this framework, the read signal $y_i$ can be modeled by the following two-mixture model: 
$y_i = z_i \theta^\ast + \varepsilon_i$. 
The primary goal is to estimate the unknown ground truth parameters, 
which are the mixing probabilities $\pi^\ast = (\pi^\ast(1), \pi^\ast(2))$ 
and the haplotype $\theta^\ast$, using the dataset of read signals $\{y_i\}_{i=1}^n$. 
It should be noted that while this is also a two-mixture model, 
it features a distinct formulation and noise structure.

\subsection{Phase Retrieval}
Regarding the application of the phase retrieval problem, as noted in Section 3 of~\citet{dana2019estimate2mix} and discussed in Section 3.5 of~\citet{chen2013convex}, 
there is an established connection between the symmetric 2MLR and the phase retrieval problem. Specifically, by squaring the response variable $y_i$ and subtracting the variance $\sigma^2$, we obtain:
\begin{equation}
%\begin{aligned}
y'_i := y_i^2 - \sigma^2 
%= \vert \langle x_i, \theta^\ast\rangle\vert^2 + \underbrace{\left[2(-1)^{z_i+1}\langle x_i, \theta^\ast\rangle\varepsilon_i + (\varepsilon_i^2-\sigma^2)\right]}_{\xi_i} 
= \vert \langle x_i, \theta^\ast\rangle\vert^2 + \xi_i.
%\end{aligned}
\end{equation}
This formulation is essentially the phase retrieval model with a heteroskedastic error term $\xi_i:=2(-1)^{z_i+1}\langle x_i, \theta^\ast\rangle\varepsilon_i + (\varepsilon_i^2-\sigma^2)$, which has zero mean, i.e., $\mathbb{E}[\xi_i]=0$. Therefore, by leveraging our results on symmetric 2MLR, we can directly establish convergence guarantees for the corresponding phase retrieval problem.
For phase retrieval problems, several theoretical guarantees for the parameter estimate $\hat{\theta}$ have been established. Regarding the convex formulation~\citep{chen2013convex}, for $n \gtrsim d$ samples, the relative error bound is $\|\hat{\theta} - \theta^\ast\|/\sigma \lesssim \sqrt{d/n} \log^4 n + \min(\sqrt{d/n}/\eta, \left(d/n\right)^{1/4}) \log^4 n$, where $\eta=\|\theta^\ast\|/\sigma$ represents the signal-to-noise ratio, as shown on page 10 of~\citet{chen2013convex}. In the case of agnostic estimation~\citep{neykov2016agnostic}, for $n \gtrsim d^2 \log d$ samples, the error bound of the estimate $\hat{\theta}$ satisfies $\|\hat{\theta} - \theta^\ast\|/\|\theta^\ast\| \lesssim \sqrt{(d \log d)/n}$ under the constraint $\|\hat{\theta}\| = \|\theta^\ast\|$, 
as established on pages 3 and 9 of~\citet{neykov2016agnostic}.
For the EM algorithm when the sample size $n$ is sufficiently greater than the dimension $d$ ($n \gtrsim d$), the relative error bound for the estimated vector $\hat{\theta}$ depends on the initialization of the mixing weight $\pi^0$. With unbalanced initialization of mixing weights $\|\pi^0 - \frac{1}{2}\|_1 \gtrsim (d/n)^{1/4}$, the relative error is bounded by $\|\hat{\theta} - \theta^\ast\|/\sigma \lesssim \sqrt{d/n}$. Conversely, with balanced initialization of mixing weights $\|\pi^0 - \frac{1}{2}\|_1 \lesssim (d/n)^{1/4}$, the relative error bound is less favorable, given by $\|\hat{\theta} - \theta^\ast\|/\sigma \lesssim (d/n)^{1/4}$ (see Theorem~\ref{thm:finite}).
Regarding sample complexity, our results require $n \gtrsim d$ samples, matching the sample complexity requirement established in~\citet{chen2013convex} and improving upon the $n \gtrsim d^2 \log d$ requirement in~\citet{neykov2016agnostic}.
In terms of error rates, with balanced initialization, our rate $(d/n)^{1/4}$ matches the second term in~\citet{chen2013convex}, demonstrating that the EM algorithm achieves a better rate when SNR $\eta$ is sufficiently small. With unbalanced initialization, our rate $\sqrt{d/n}$ matches the leading term $\sqrt{d/n} \log^4 n$ in~\citet{chen2013convex} and improves upon the $\sqrt{d \log d/n}$ rate in~\citet{neykov2016agnostic} by removing the logarithmic factor when SNR $\eta \asymp 1$.
These theoretical comparisons demonstrate that our EM-based approach achieves competitive or improved error rates compared to existing phase retrieval methods, while providing explicit characterization of the initialization-dependent convergence behavior.

\subsection{Overparametrization Models}
For the general setting of overparameterization models, our theoretical results provide one of the fundamental examples.
In our theoretical results of the EM algorithm for the 2MLR model, we exhibit linear convergence with an unbalanced guess of mixing weights, while showing sublinear convergence ($\alpha^t \asymp 1/\sqrt{t}$) with a balanced guess of mixing weights (see Theorems~\ref{thm:popl},~\ref{thm:finite}),
which is consistent with the convergence rate of $\mathcal{O}(1/\sqrt{t})$ for the overparameterized Gaussian mixture model~\citep{xu2024toward}.
Interestingly, for the problem of low-rank matrix factorization~\citep{xiong2023over} in the overparameterization regime, 
an exponentially faster linear convergence rate is achieved using gradient descent with an asymmetric parameterization, while gradient descent with symmetric parameterization exhibits a sublinear rate of $1/t^2$. This suggests that imbalance accelerates the convergence rate.
Overparameterization also has an impact on the convergence rate of gradient descent for learning a single neuron in neural networks~\citep{xu2023over}. 
The method exhibits linear convergence in the exact-parameterization regime, but shows a sublinear convergence rate of $1/t^3$ in the overparameterization regime. Therefore, overparameterization can exponentially slow down the convergence rate of Gradient Descent (GD).

\subsection{Mixture of Experts}
While the EM algorithm has been shown to be a powerful tool for learning Mixture of Experts (MoE) models~\citep{fruytier2025learning}, establishing the convergence rates of the Maximum Likelihood Estimator (MLE) 
for these complex mixture models under exact-specified and over-specified settings remains a significant open challenge~\citep{ho2022convergence,nguyen2023demystifying,nguyen2024statistical}.
In this context, our theoretical analysis of the EM algorithm for the 2MLR model serves as a fundamental example. By establishing rigorous guarantees in this setting, 
our work provides the necessary theoretical groundwork to deepen the understanding of convergence behaviors in more complex architectures, such as MoE and deep mixture models.

\section{Population EM Updates}\label{sec:em}
In this section, we characterize the population EM update rules by using expectations (\eqref{eq:expectation_em}) under the density involving Bessel function $K_0$. 
We show an alternative approach to derive Identity~\ref{prop:em} (Corollary 3.2 of~\citet{luo24cycloid}) that leverages the key fact that the product $X = Z_1 Z_2$ of two independent standard Gaussians follows $X\sim \frac{K_0(|x|)}{\pi}$, from which we can derive the expectations (\eqref{eq:expectation_em}).
We further show the nonincreasing property and boundedness (Facts~\ref{fact:monotone_expectation},~\ref{fact:nonincreasing}) of the expectations of EM update rules and provide approximate dynamic equations (Proposition~\ref{prop:dynamic}) for the evolution of regression parameters and mixing weights.

\begin{identity}{(Corollary 3.2 in~\citet{luo24cycloid}: EM Updates for Overspecified 2MLR) 
}\label{prop:em}
    Suppose a 2MLR model is fitted to the overspecified model with no separation, where $\theta^\ast=\vec{0}$. 
    The EM update rules at the population level for $\bar{\theta}^t\assign\theta^t/\sigma=M(\theta^{t-1},\nu^{t-1})/\sigma$ 
    and $\tanh(\nu^t)\assign\pi^t(1)-\pi^t(2)=N(\theta^{t-1},\nu^{t-1})$ are then as follows.
    \begin{eqnarray*}
        &\bar{\theta}^t  =  \frac{\bar{\theta}^0}{\| \bar{\theta}^0 \|} \cdot
        \frac{1}{\pi}  \int_{\mathbb{R}} \tanh (\| \bar{\theta}^{t - 1} \| x -
        \nu^{t - 1}) xK_0 (|x|) \mathrm{d} x,\\
        &\tanh (\nu^t)  =  \frac{1}{\pi}  \int_{\mathbb{R}} \tanh (\nu^{t - 1} -
        \| \bar{\theta}^{t - 1} \| x) K_0 (|x|) \mathrm{d} x,
    \end{eqnarray*}
    where $\bar{\theta}^0 \assign \theta^0/\sigma$.
\end{identity}

\begin{remark}
Identity~\ref{prop:em} (see our derivation in Appendix~\ref{sup:em}, Subsection~\ref{supsub:em_update}) completely characterizes the evolution of EM updates by using Bessel functions. Note that the proposition unveils that the EM update for regression parameters at the population level must be in the same direction as the previous EM iteration (as further corroborated numerically in Fig.~\ref{fig:traj_EM}). The numerical experiments in Fig.~\ref{fig:traj_EM} of EM updates validate the theoretical results. Hence, we only need to focus on the reduction of length in terms of the regression parameters; therefore, we introduce the $\ell_2$ norm of the normalized regression parameters $\alpha^t := \|\theta^t\|/\sigma$ and the imbalance of mixing weights $\beta^t:=\tanh(\nu^t)$ to facilitate the analysis in the following context. Furthermore, the population EM update rules for $\alpha^t, \beta^t$ can be expressed as expectations with respect to a symmetric random variable $X$, whose probability density involves the Bessel function $K_0$, namely $X\sim \frac{K_0(|x|)}{\pi}$, given by
\begin{equation}\label{eq:expectation_em}
\alpha^{t+1} = m(\alpha^t, \nu^t) = \mathbb{E}[\tanh(\alpha^t X+\nu^t)X],\quad
\beta^{t+1} = n(\alpha^t, \nu^t) = \mathbb{E}[\tanh(\alpha^tX+\nu^t)].
\end{equation}
\end{remark}

\begin{figure}[!htbp]
    \centering
    \begin{subfigure}{0.45\columnwidth}
        \centering
        \includegraphics[width=\linewidth]{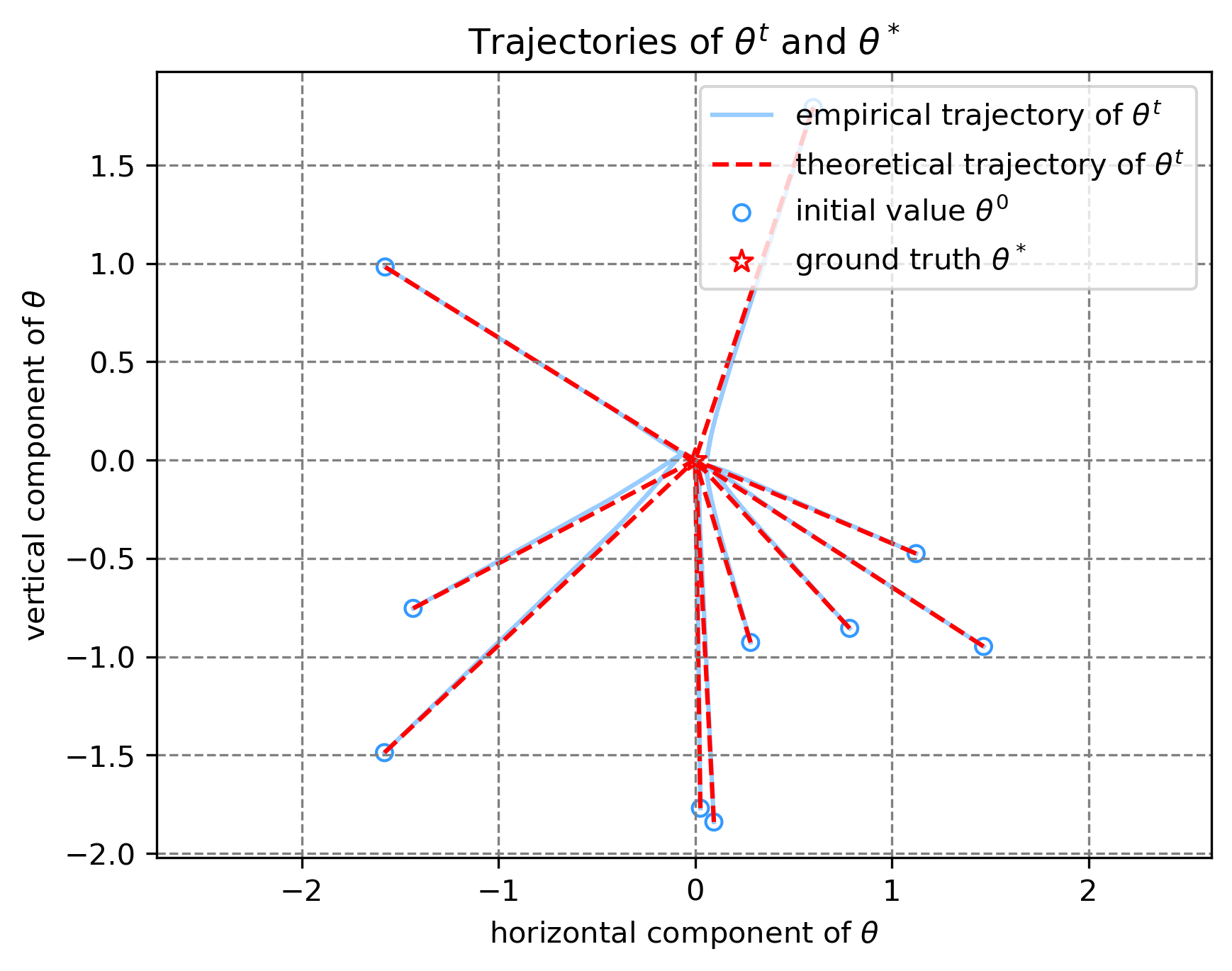}
        \caption{Trajectories of EM iterations for regression parameters in the overspecified setting given different initial values. $d=2$, trajectories of $\theta^t$ for 10 trials with $\theta^0$ and $\pi^0$ uniformly sampled from $[-2, 2]^2$ and $[0, 1]$, respectively in the overspecified setting $\theta^\ast=\vec{0}$.}
        \label{fig:traj_EM}
    \end{subfigure}
    \hfill
    \begin{subfigure}{0.48\columnwidth}
        \centering
        \includegraphics[width=\linewidth]{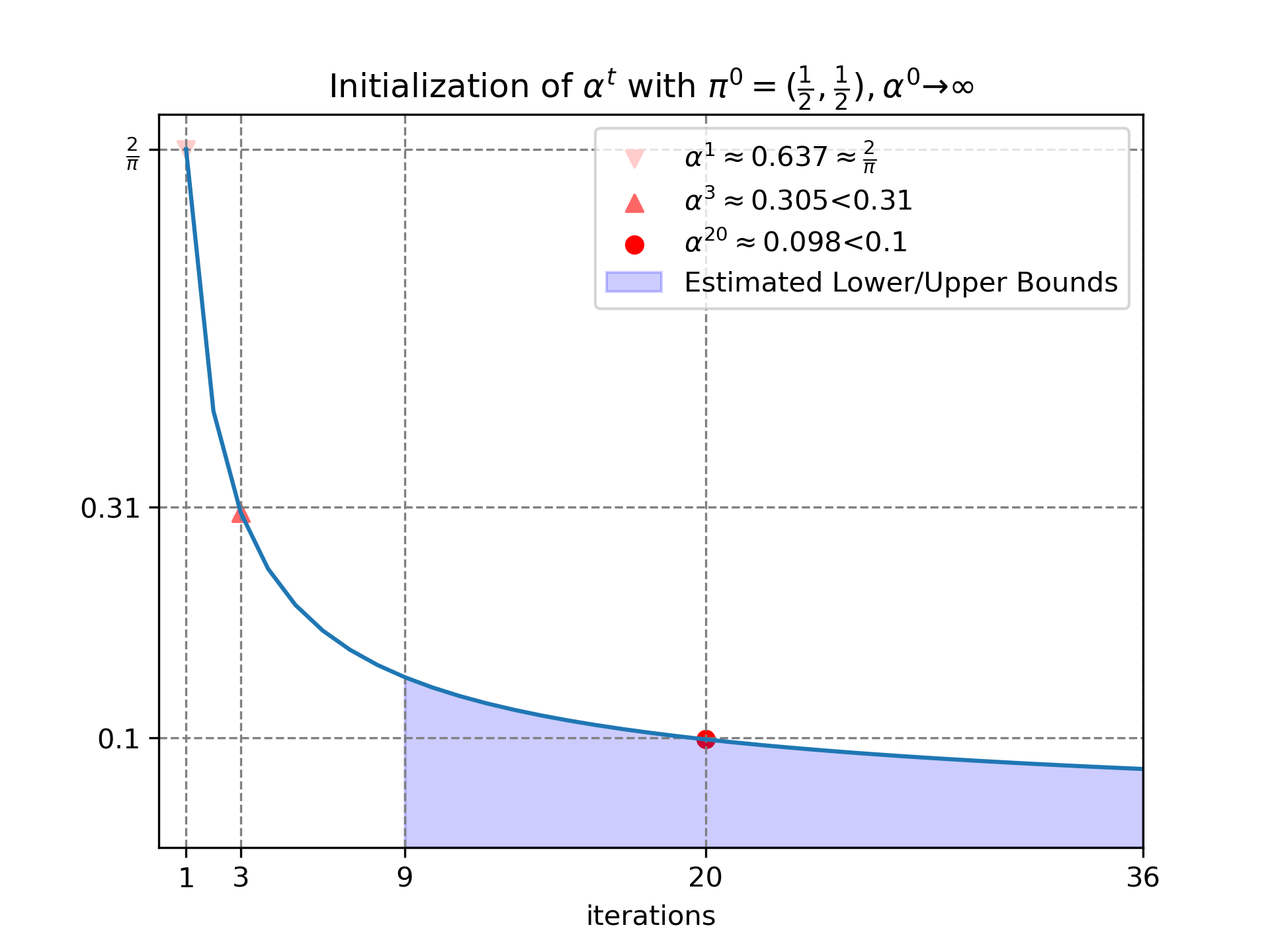}
        \caption{Initialization phase of $\alpha^t := \| \theta^t \| / \sigma$ in the worst-case scenario with a balanced initial guess $\pi^0 = (\frac{1}{2}, \frac{1}{2})$ and $\alpha^0:=\|\theta^0\|/\sigma \to \infty$.
        Even in the worst case, the initialization phase of $\alpha^t$ is bounded by $\alpha^1 \leq \frac{2}{\pi}$, $\alpha^3 < 0.31$, and $\alpha^{20} < 0.1$ after $1$, $3$, and $20$ EM iterations.}
        \label{fig:init}
    \end{subfigure}
    \caption{Left: EM trajectories are nearly perfect rays from the origin to the initial point, which aligns
    with the theoretical results in Identity~\ref{prop:em}. 
    Right: 
    In the worst case, we show that \(\alpha^t \geq 0.1\) for all \(t \leq 9\) (see remark on the proof of Fact~\ref{fact:init_popl} in Appendix~\ref{sup:popl}, Subsection~\ref{supsub:init_popl}) by using the theoretical matching lower bound for the worst case in Proposition~\ref{prop:sublinear_convg}.
    Also, we demonstrate that \(\alpha^t < 0.1\) for all \(t \geq 36\) (Fact~\ref{fact:init_popl}) by applying the theoretical upper bound in Proposition~\ref{prop:sublinear_convg}.
    As \(\alpha^{20} \approx 0.1\) by numerical evaluations, and \(20 > 9\) and \(20 < 36\), 
    the theoretical results are consistent with the numerical results shown in the figure.
    }
\end{figure}

\begin{fact}{(Monotonicity of Expectations)}\label{fact:monotone_expectation}
    Let $m(\alpha, \nu):= \mathbb{E}[\tanh(\alpha X+\nu)X]$ and $n(\alpha, \nu):= \mathbb{E}[\tanh(\alpha X+\nu)]$ be the expectations 
    with respect to $X\sim \frac{K_0(|x|)}{\pi}$. Then they satisfy the monotonicity properties:
    
    (monotonicity of $m(\alpha, \nu)$): $m(\alpha, \nu)$ is a nonincreasing function of $\nu \geq 0$ for fixed $\alpha \geq 0$, and a nondecreasing function of $\alpha \geq 0$ for fixed $\nu \geq 0$, namely:
    \begin{align*}
        &0 = m(\alpha, \infty) \leq m(\alpha, \nu') \leq m(\alpha, \nu) \leq m(\alpha, 0) \leq \alpha \text{ for } 0 \leq\nu \leq \nu',\alpha \geq 0,\\
        &0 = m(0, \nu) \leq m(\alpha, \nu) \leq m(\alpha', \nu) \leq m(\infty, \nu) = \frac{2}{\pi} \text{ for } 0 \leq \alpha \leq \alpha',\nu \geq 0.
    \end{align*}

    (monotonicity of $n(\alpha, \nu)$): $n(\alpha, \nu)$ is a nonincreasing function of $\alpha \geq 0$ for fixed $\nu \geq 0$, and a nondecreasing function of $\nu \geq 0$ for fixed $\alpha \geq 0$, namely:
    \begin{align*}
        &0 = n(\infty, \nu) \leq n(\alpha', \nu) \leq n(\alpha, \nu) \leq n(0, \nu) = \tanh(\nu) \text{ for } 0 \leq \alpha \leq \alpha',\nu \geq 0,\\
        &0 = n(\alpha, 0) \leq n(\alpha, \nu) \leq n(\alpha, \nu') \leq n(\alpha, \infty) = 1 \text{ for } 0 \leq \nu \leq \nu',\alpha \geq 0.
    \end{align*}
\end{fact}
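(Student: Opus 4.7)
The plan is to establish all four monotonicity claims by differentiating $m$ and $n$ under the integral sign (justified by $|\tanh'|\leq 1$ and finiteness of $\mathbb{E}[X^2]$) and reading off the signs, then to pin down the six boundary values by direct computation or dominated convergence. Throughout I exploit two structural facts: the density $f_X(x)=K_0(|x|)/\pi$ is even on $\mathbb{R}$, and $X = Z_1 Z_2$ with $Z_1,Z_2$ independent standard Gaussians gives $\mathbb{E}[X]=0$, $\mathbb{E}[X^2]=1$, and $\mathbb{E}[|X|]=\mathbb{E}[|Z_1|]\mathbb{E}[|Z_2|]=2/\pi$.

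Two of the four signs are essentially free: $\partial_\alpha m = \mathbb{E}[(1-\tanh^2(\alpha X+\nu))X^2]\geq 0$ gives $\alpha$-monotonicity of $m$, and $\partial_\nu n = \mathbb{E}[1-\tanh^2(\alpha X+\nu)]\geq 0$ gives $\nu$-monotonicity of $n$. The other two derivatives coincide by Schwarz, $\partial_\nu m = \partial_\alpha n = \mathbb{E}[(1-\tanh^2(\alpha X+\nu))X]$, and folding the integral on $\{x>0\}$ via the even density yields
\begin{equation*}
\partial_\nu m(\alpha,\nu) \;=\; \int_0^\infty x\,f_X(x)\bigl[(1-\tanh^2(\alpha x+\nu)) - (1-\tanh^2(\alpha x-\nu))\bigr]\,\mathrm{d}x.
\end{equation*}
Since $(\alpha x+\nu)^2-(\alpha x-\nu)^2 = 4\alpha x\nu \geq 0$ for $x,\alpha,\nu\geq 0$, and $u\mapsto 1-\tanh^2(u)$ is even and strictly decreasing on $[0,\infty)$, the bracket is $\leq 0$, so $\partial_\nu m = \partial_\alpha n \leq 0$. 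This delivers the remaining two monotonicities, namely that $\nu\mapsto m(\alpha,\nu)$ and $\alpha\mapsto n(\alpha,\nu)$ are nonincreasing.

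For the boundary values: $m(0,\nu)=\tanh(\nu)\mathbb{E}[X]=0$ and $n(\alpha,0)=\mathbb{E}[\tanh(\alpha X)]=0$ by oddness of $\tanh$ together with symmetry of $X$, while $n(0,\nu)=\tanh(\nu)$ is immediate; the bound $m(\alpha,0)\leq \alpha$ uses $|\tanh(u)|\leq|u|$ and $\mathbb{E}[X^2]=1$. For the limits, $\tanh(\alpha X+\nu)\to 1$ pointwise as $\nu\to\infty$ and $\tanh(\alpha X+\nu)\to \operatorname{sgn}(X)$ pointwise as $\alpha\to\infty$ (away from $X=0$); dominated convergence with envelopes $|X|$ and $1$ then yields $m(\alpha,\infty)=\mathbb{E}[X]=0$, $n(\alpha,\infty)=1$, $m(\infty,\nu)=\mathbb{E}[|X|]=2/\pi$, and $n(\infty,\nu)=\mathbb{E}[\operatorname{sgn}(X)]=0$. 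Nonnegativity throughout the chains (e.g., $m(\alpha,\nu)\geq 0$ and $n(\alpha,\nu)\geq 0$) then follows from monotonicity together with these limits. The main obstacle is the sign analysis for the mixed partial $\partial_\nu m=\partial_\alpha n$; every other step is a routine symmetry/DCT argument once the even-density folding is set up.
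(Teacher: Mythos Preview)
Your argument is correct and takes a genuinely different route from the paper. The paper first invokes the rewriting identities (its Lemma in Subsection~\ref{supsub:identity})
\[
m(\alpha,\nu)=(1-\beta^2)\,\mathbb{E}\!\left[\frac{\tanh(\alpha X)X}{1-\beta^2\tanh^2(\alpha X)}\right],\qquad
n(\alpha,\nu)=\beta\,\mathbb{E}\!\left[\frac{1-\tanh^2(\alpha X)}{1-\beta^2\tanh^2(\alpha X)}\right],
\]
obtained from the hyperbolic addition formulas, and then deduces all four monotonicities from the elementary monotonicity of the rational functions $t\mapsto t/(1-r^2t^2)$ and $t\mapsto(1-t^2)/(1-r^2t^2)$ on $[0,1]$ (together with the $\beta$-dependence of the prefactors). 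You instead differentiate under the integral sign: $\partial_\alpha m\geq 0$ and $\partial_\nu n\geq 0$ are immediate, and the remaining cross-derivative $\partial_\nu m=\partial_\alpha n=\mathbb{E}[\mathrm{sech}^2(\alpha X+\nu)X]$ you handle by folding the even density onto $\{x>0\}$ and using that $\mathrm{sech}^2$ is even and decreasing on $[0,\infty)$ together with $|\alpha x+\nu|\geq|\alpha x-\nu|$. Your route is shorter and avoids the hyperbolic identities altogether; the paper's route, while slightly heavier here, leans on a rewriting lemma that it reuses later for the sharp upper/lower bounds on $m(\alpha,\nu)$ and $n(\alpha,\nu)$ in Appendix~\ref{sup:em_lemma}. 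The boundary-value computations (including the DCT limits and $\mathbb{E}[|X|]=2/\pi$) match the paper's.
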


\begin{remark}
    $m(\alpha, \nu)$ is an even function of $\nu$ and $n(\alpha, \nu)$ is an odd function of $\nu$. 
    Moreover, by using the Fact~\ref{fact:monotone_expectation} together with~\eqref{eq:expectation_em}, 
    we can establish the bounded and nonincreasing properties of $\{\alpha^t\}_{t=0}^\infty, \{|\beta^t|\}_{t=0}^\infty$.
\end{remark}

\begin{fact}{(Nonincreasing and Bounded)}\label{fact:nonincreasing}
  Let $\alpha^t \assign \| \theta^t \|/\sigma=\|M(\theta^{t-1},\nu^{t-1})\|/\sigma$ and $\beta^t \assign \tanh
  (\nu^t) = N(\theta^{t-1}, \nu^{t-1})$ for all $t\in\mathbb{Z}_+$ be the $t$-th
  iteration of the EM update rules $\| M (\theta, \nu) \| / \sigma$ and $N(\theta, \nu)$ at the population level, then $\beta^t \cdot \beta^0 \geq 0$, $\alpha^t \leq 2/\pi$, 
  and $\{\alpha^t\}_{t=0}^\infty$ and $\{| \beta^t |\}_{t=0}^\infty$ are non-increasing.
\end{fact}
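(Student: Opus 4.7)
The plan is a single induction on $t$ in which each of the four claims reduces to a direct application of Fact~\ref{fact:monotone_expectation} to the closed-form expressions $\alpha^{t+1}=m(\alpha^t,\nu^t)$ and $\beta^{t+1}=n(\alpha^t,\nu^t)$ from~\eqref{eq:expectation_em}. The preliminary step is to exploit the symmetry noted in the remark after Fact~\ref{fact:monotone_expectation}, namely that $m(\alpha,\cdot)$ is even and $n(\alpha,\cdot)$ is odd in $\nu$, so that $\alpha^{t+1}=m(\alpha^t,|\nu^t|)$ and $|\beta^{t+1}|=n(\alpha^t,|\nu^t|)$. This symmetrization lets me invoke the monotonicity bounds of Fact~\ref{fact:monotone_expectation}, which are all stated for $\nu\ge 0$.

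With the reduction in place, the four claims are handled as follows. Sign preservation $\beta^t\cdot\beta^0\ge 0$ follows inductively: assuming $\nu^t$ has the same sign as $\nu^0$ (equivalently, as $\beta^0$), the oddness of $n$ together with the bound $n(\alpha,|\nu|)\ge n(\alpha,0)=0$ forces $\beta^{t+1}=n(\alpha^t,\nu^t)$ to share that sign, and the induction closes since $\nu^{t+1}$ has the same sign as $\beta^{t+1}$. The uniform bound $\alpha^t\le 2/\pi$ for $t\ge 1$ follows from $\alpha^t = m(\alpha^{t-1},|\nu^{t-1}|)\le m(\infty,|\nu^{t-1}|)=2/\pi$ by monotonicity of $m$ in $\alpha$. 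For the nonincreasing property of $\{\alpha^t\}$ I chain the monotonicity of $m$ in $\nu$ with the pointwise inequality $m(\alpha,0)\le\alpha$ of Fact~\ref{fact:monotone_expectation} to get $\alpha^{t+1}=m(\alpha^t,|\nu^t|)\le m(\alpha^t,0)\le\alpha^t$. For the nonincreasing property of $\{|\beta^t|\}$ I use monotonicity of $n$ in $\alpha$ to write $|\beta^{t+1}|=n(\alpha^t,|\nu^t|)\le n(0,|\nu^t|)=\tanh(|\nu^t|)=|\beta^t|$.

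Because every step is a one-line consequence of Fact~\ref{fact:monotone_expectation} combined with~\eqref{eq:expectation_em}, there is no substantial obstacle in the argument; the only mild subtlety is making the even/odd reduction in $\nu$ explicit so that the monotonicity hypotheses $\nu\ge 0$ of Fact~\ref{fact:monotone_expectation} actually apply. Once that bookkeeping is spelled out, all four assertions are established simultaneously by a short induction on $t$.
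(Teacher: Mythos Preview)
Your proposal is correct and follows essentially the same route as the paper: use the even/odd symmetry of $m,n$ in $\nu$ to reduce to $|\nu^t|\ge 0$, then read off each assertion directly from the monotonicity bounds of Fact~\ref{fact:monotone_expectation} applied to the recursions in~\eqref{eq:expectation_em}. Your bound $\alpha^{t}=m(\alpha^{t-1},|\nu^{t-1}|)\le m(\infty,|\nu^{t-1}|)=2/\pi$ is in fact the correct version of the paper's argument, which contains a typo (it writes $m(\alpha^t,\infty)$ where it should have $m(\infty,|\nu^t|)$).
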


\begin{remark}
In Fact~\ref{fact:nonincreasing} (see proof in Appendix~\ref{sup:em}, Subsection~\ref{supsub:nonincrease}), the nonincreasing property of $\|\theta^t\|$ and $\|\pi^t -\frac{\mathds{1}}{2}\|_1$ at the population level indicates that the estimates of regression parameters gradually approach the ground truth $\theta^\ast=\vec{0}$, while the estimates for mixing weights shift from ``unbalanced'' to ``balanced''. For the case of $\pi^0=(1/2, 1/2)$, we always have $\beta^t=\beta^0=\|\pi^0-\frac{\mathds{1}}{2}\|_1=0$ by using the nonincreasing property of $\{|\beta^t|\}^\infty_{t=0}$. Additionally, the bounded $\|\theta\|/\sigma$ ensures that EM iterations remain within a bounded region, regardless of the initial distance from the ground truth $\theta^\ast=\vec{0}$.
\end{remark}

\begin{proposition}{(Approximate Dynamic Equations)}\label{prop:dynamic}
  Let $\alpha^t \assign \| \theta^t \|/\sigma=\|M(\theta^{t-1},\nu^{t-1})\|/\sigma$ and $\beta^t \assign \tanh
  (\nu^t) = N(\theta^{t-1}, \nu^{t-1})$ for all $t\in\mathbb{Z}_+$ be the $t$-th
  iteration of the EM update rules $\| M (\theta, \nu) \| / \sigma$ and $N
  (\theta, \nu)$ at the population level, then the series approximations for EM update rules are
  \begin{eqnarray*}
    \alpha^{t + 1} & = & \alpha^t (1 - [\beta^t]^2) + \mathcal{O}([\alpha^t]^3),\\
    \beta^{t + 1} & = & 
    \beta^t  ( 1 - \alpha^t \alpha^{t+1} ) +  \mathcal{O}([\alpha^t]^4).
  \end{eqnarray*}
\end{proposition}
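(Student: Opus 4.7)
The plan is to treat both dynamic equations as Taylor expansions of the expectations
$m(\alpha,\nu)=\mathbb{E}[\tanh(\alpha X+\nu)X]$ and $n(\alpha,\nu)=\mathbb{E}[\tanh(\alpha X+\nu)]$
from \eqref{eq:expectation_em} in the parameter $\alpha$ around $\alpha=0$. The key point is that Fact~\ref{fact:nonincreasing} already guarantees $\alpha^t\leq 2/\pi$ for every $t\geq 1$, so $\alpha^t$ is bona fide "small" and the truncated expansion carries a meaningful remainder. Since $X=Z_1Z_2$ for two independent standard Gaussians, $X$ is symmetric, so all odd moments of $X$ vanish, while the even moments $\mathbb{E}[X^{2k}]=((2k-1)!!)^2$ are finite and explicit. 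This symmetry will kill exactly the terms one needs to get from the naive Taylor expansion to the clean approximate identities stated in the proposition.

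Concretely, I would first record the derivatives of $f(a):=\tanh(a+\nu)$ at $a=0$ up to order three: $f(0)=\tanh(\nu)=\beta$, $f'(0)=\operatorname{sech}^2\nu=1-\beta^2$, $f''(0)=-2\beta(1-\beta^2)$, and note that $f'''$ is uniformly bounded on $\mathbb{R}$ independently of $\nu$. Substituting $a=\alpha X$ and invoking the Lagrange remainder gives
\begin{equation*}
\tanh(\alpha X+\nu)=\beta+(1-\beta^2)\alpha X-\beta(1-\beta^2)\alpha^2 X^2+R_3(\alpha X,\nu),
\end{equation*}
with $|R_3(\alpha X,\nu)|\leq C\alpha^3|X|^3$ for an absolute constant $C$. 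Because $K_0(|x|)$ decays exponentially at infinity, $\mathbb{E}|X|^k<\infty$ for every $k$, so the remainder is uniformly $\mathcal{O}(\alpha^3)$ in the $m$ case (after multiplying by $X$ and using $\mathbb{E}|X|^4<\infty$) and $\mathcal{O}(\alpha^3)$ in the $n$ case; pushing one order further will upgrade the latter to $\mathcal{O}(\alpha^4)$ as needed.

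Taking expectations then proceeds mechanically. For $m(\alpha,\nu)$ I multiply the expansion by $X$; the constant term contributes $\beta\mathbb{E}[X]=0$, the quadratic term contributes $-\beta(1-\beta^2)\alpha^2\mathbb{E}[X^3]=0$, and only the linear term survives nontrivially, giving $(1-\beta^2)\alpha\mathbb{E}[X^2]=(1-\beta^2)\alpha$. Hence $\alpha^{t+1}=m(\alpha^t,\nu^t)=\alpha^t(1-[\beta^t]^2)+\mathcal{O}([\alpha^t]^3)$. For $n(\alpha,\nu)$ the linear and cubic terms vanish by symmetry and only $\beta-\beta(1-\beta^2)\alpha^2$ remains, with remainder $\mathcal{O}(\alpha^4)$ after pushing one further Taylor order. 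Finally, to rewrite this as the advertised closed-loop relation, I substitute $\alpha(1-\beta^2)=m(\alpha,\nu)+\mathcal{O}(\alpha^3)$ to get $\beta(1-\beta^2)\alpha^2=\beta\alpha\cdot m(\alpha,\nu)+\mathcal{O}(\alpha^4)$, so $n(\alpha,\nu)=\beta(1-\alpha\, m(\alpha,\nu))+\mathcal{O}(\alpha^4)$, which is exactly $\beta^{t+1}=\beta^t(1-\alpha^t\alpha^{t+1})+\mathcal{O}([\alpha^t]^4)$.

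The main obstacle is not the algebra but the justification that the Taylor remainder is truly $\mathcal{O}(\alpha^k)$ with a constant independent of $\nu$ and $\beta$. For this one uses that the derivatives of $\tanh$ are uniformly bounded on $\mathbb{R}$, so the Lagrange remainder $|R_N|\leq C_N(\alpha|X|)^N$ is a clean tail bound whose expectation is finite thanks to the exponential decay of $K_0$. A secondary care point is that the $\mathcal{O}([\alpha^t]^4)$ in the second equation really requires the vanishing of the third-order contribution, which is where the symmetry of $X$ (odd moments equal zero) is essential; otherwise one would only obtain $\mathcal{O}([\alpha^t]^3)$, which would be too weak to drive the sharper sublinear-rate analysis in Proposition~\ref{prop:sublinear_convg}.
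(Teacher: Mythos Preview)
Your proposal is correct and takes essentially the same approach as the paper: expand $m(\alpha,\nu)$ and $n(\alpha,\nu)$ in $\alpha$ using the symmetry of $X$ and the finiteness of its moments, then substitute the first relation into the second to convert $\beta^t[\alpha^t]^2(1-[\beta^t]^2)$ into $\beta^t\alpha^t\alpha^{t+1}+\mathcal{O}([\alpha^t]^4)$. The paper's main proof packages the expansion step by citing explicit upper/lower bounds on $m$ and $n$ (Lemmas~\ref{suplem:bound_regress} and~\ref{suplem:bound_mixing}), but it also records precisely your direct Taylor argument in Lemma~\ref{suplem:taylor_expectation_tanh}, and the final substitution step is identical to yours.
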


\begin{figure}[!htbp]
  \centering
  % \begin{subfigure}{0.42\columnwidth}
  %     \centering
  %     \includegraphics[width=\linewidth]{EM_trajectory.png}
  %     \caption{Trajectories of EM iterations for regression parameters in the overspecified setting given different initial values. $d=2$, trajectories of $\theta^t$ for 10 trials with $\theta^0$ and $\pi^0$ uniformly sampled from $[-2, 2]^2$ and $[0, 1]$, respectively in the overspecified setting $\theta^\ast=\vec{0}$.}
  %     \label{fig:traj_EM}
  % \end{subfigure}
  % \hfill
  %\begin{subfigure}{0.54\columnwidth}
      \centering
      \includegraphics[width=\linewidth]{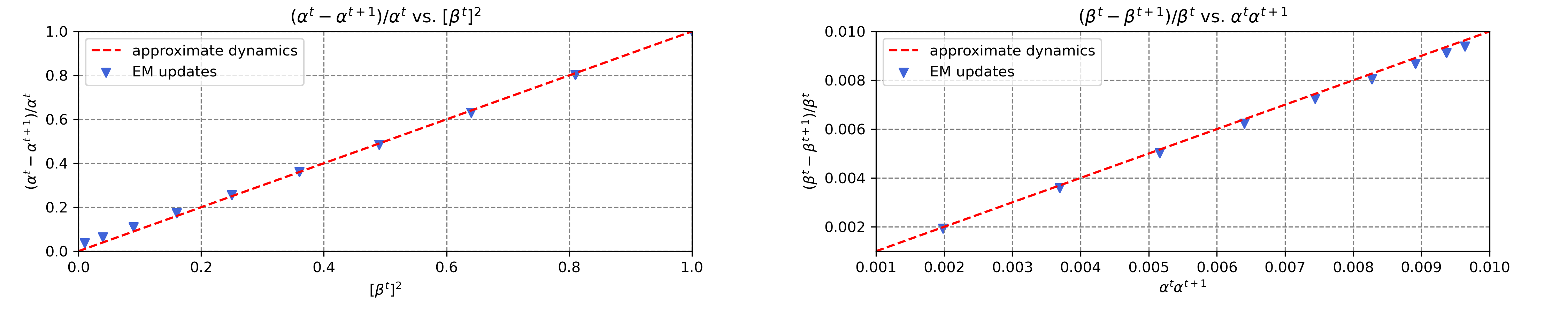}
      \caption{Top: Relation of the $(\alpha^t-\alpha^{t+1})/\alpha^t$ and $[\beta^t]^2$ given $\alpha^t = 0.1$ and $\beta^t \in\{0.1, 0.2, \cdots, 0.9, 1\}$. Bottom: Relation of the $(\beta^t-\beta^{t+1})/\beta^t$ and $\alpha^t \alpha^{t+1}$ given $\alpha^t = 0.1$ and $\beta^t \in\{0.1, 0.2, \cdots, 0.9, 1\}$.
      The difference between \((\alpha^t-\alpha^{t+1})/\alpha^t\) and \([\beta^t]^2\) is bounded by \((1-[\beta^t]^2)\mathcal{O}([\alpha^t]^2)\),
      and the difference between \((\beta^t-\beta^{t+1})/\beta^t\) and \(\alpha^t \alpha^{t+1}\) is bounded by \((1-[\beta^t]^2)\mathcal{O}([\alpha^t]^4)\) (see remark on the proof of Proposition~\ref{prop:dynamic} in Appendix~\ref{sup:em}, Subsection~\ref{supsub:em_dynamic}).}
      \label{fig:dynamic}
  % \end{subfigure}
\end{figure}

\begin{remark}
Proposition \ref{prop:dynamic} (see proof in Appendix~\ref{sup:em}, Subsection~\ref{supsub:em_dynamic}) characterizes the evolution of EM updates by introducing approximate dynamic equations when the regression parameters are small enough. 
When \(\alpha^t\) is sufficiently small, the higher order terms for \(\alpha^{t+1}\) and \(\beta^{t+1}\) are \((1-[\beta^t]^2)\mathcal{O}([\alpha^t]^3)\) and \(\beta^t(1-[\beta^t]^2)\mathcal{O}([\alpha^t]^4)\) respectively (see the remark on the proof of Proposition~\ref{prop:dynamic} in Appendix~\ref{sup:em}, Subsection~\ref{supsub:em_dynamic}).
We illustrate our findings by comparing the EM updates at the population level with our approximate dynamics, as shown in Fig.~\ref{fig:dynamic}.

Further, we precisely outline the evolution of EM estimates for both regression parameters and mixing weights as opposed to just the former as done in~\citet{dwivedi2020unbalanced}.
Using the techniques from pages 30-33 of~\citet{dwivedi2020sharp}, we can only establish a rough upper bound for the update $|\beta^{t+1}| := |\pi^{t+1}(1) - \pi^{t+1}(2)|$, which represents the imbalance of mixing weights.
However, to fully understand the evolution of EM iterations, it is crucial to establish a lower bound for $|\beta^{t+1}|$. We achieve this by bounding expectations involving the Bessel function $K_0$ (see Lemma~\ref{suplem:bound_mixing} and Lemma~\ref{suplem:bound_ratio_beta}). By applying this lower bound for the imbalance $|\beta^t|$, we further establish Theorem~\ref{thm:popl}, which characterizes the convergence rate at the population level.

In the special case of balanced mixing weights, namely \(\beta^t = \tanh \nu^t = 0\) then \(1-[\beta^t]^2 = 1\), we have \(\beta^{t+1} = \beta^t\) by using Fact~\ref{fact:nonincreasing}, the lower/upper bounds for \(\alpha^{t+1}\) when \(\alpha^t\) is sufficiently small by introducing \([\alpha^t]^3\) term:
\begin{equation}\label{eq:cubic}
    \alpha^t - 3 [\alpha^t]^3 \leq \alpha^{t+1} \leq \alpha^t - \frac{3 [\alpha^t]^3}{1+8 [\alpha^t]},
\end{equation}
where the upper bound \(\alpha^t - 3 [\alpha^t]^3/(1+8 [\alpha^t])\) still holds when \(\beta^t \neq 0\) (see remark on the proof of Proposition~\ref{prop:dynamic} in Appendix~\ref{sup:em}, Subsection~\ref{supsub:em_dynamic}), 
and we further establish Proposition~\ref{prop:sublinear_convg} by using the above bounds for \(\alpha^{t+1}\).
\end{remark}

\section{Population Level Analysis}\label{sec:population}
In this section, we give an analysis of population EM and establish convergence rate guarantees for population EM (Theorem~\ref{thm:popl}) with balanced and unbalanced initial guesses for mixing weights. 
We show theoretical bounds for sublinear convergence (Proposition~\ref{prop:sublinear_convg}) with balanced initial guesses and the contraction factor of linear convergence (Proposition~\ref{prop:contraction}) with unbalanced initial guesses,
by bounding the expectations (\eqref{eq:expectation_em}) under the density involving Bessel functions for EM updates in the overspecified setting.

\begin{myframe} 
\begin{main theorem}{(Convergence Rate at Population Level)}\label{thm:popl}
    Suppose a 2MLR model is fitted to
    the overspecified model with no separation $\theta^{\ast} = \vec{0}$, then
    for any $\epsilon \in (0, 2/\pi]$:
    
    (unbalanced) if $\pi^0 \neq \left( \frac{1}{2}, \frac{1}{2}
      \right)$,
      population EM takes at most $T =\mathcal{O} \left( \log
      \frac{1}{\epsilon} \right)$ iterations to achieve $\| \theta^T \| / \sigma
      \leq \epsilon$,
      
    (balanced) if $\pi^0 = \left( \frac{1}{2}, \frac{1}{2} \right)$,
      population EM takes at most $T =\mathcal{O} (\epsilon^{- 2})$ iterations
      to achieve $\| \theta^T \| / \sigma \leq \epsilon$.

    % \setlist{nolistsep}
    % \begin{enumerate}[label=(\alph*),noitemsep]
    %   \item (unbalanced) if $\pi^0 \neq \left( \frac{1}{2}, \frac{1}{2}
    %   \right)$,
    %   population EM takes at most $T =\mathcal{O} \left( \log
    %   \frac{1}{\epsilon} \right)$ itertions to achieve $\| \theta^T \| / \sigma
    %   \leq \epsilon$,
    %   \item (balanced) if $\pi^0 = \left( \frac{1}{2}, \frac{1}{2} \right)$,
    %   population EM takes at most $T =\mathcal{O} (\epsilon^{- 2})$ itertions
    %   to achieve $\| \theta^T \| / \sigma \leq \epsilon$.
    % \end{enumerate}
\end{main theorem}
\end{myframe}

\begin{figure}[!htbp]
    \centering
    \begin{subfigure}{0.50\columnwidth}
        \centering
        \includegraphics[width=\linewidth]{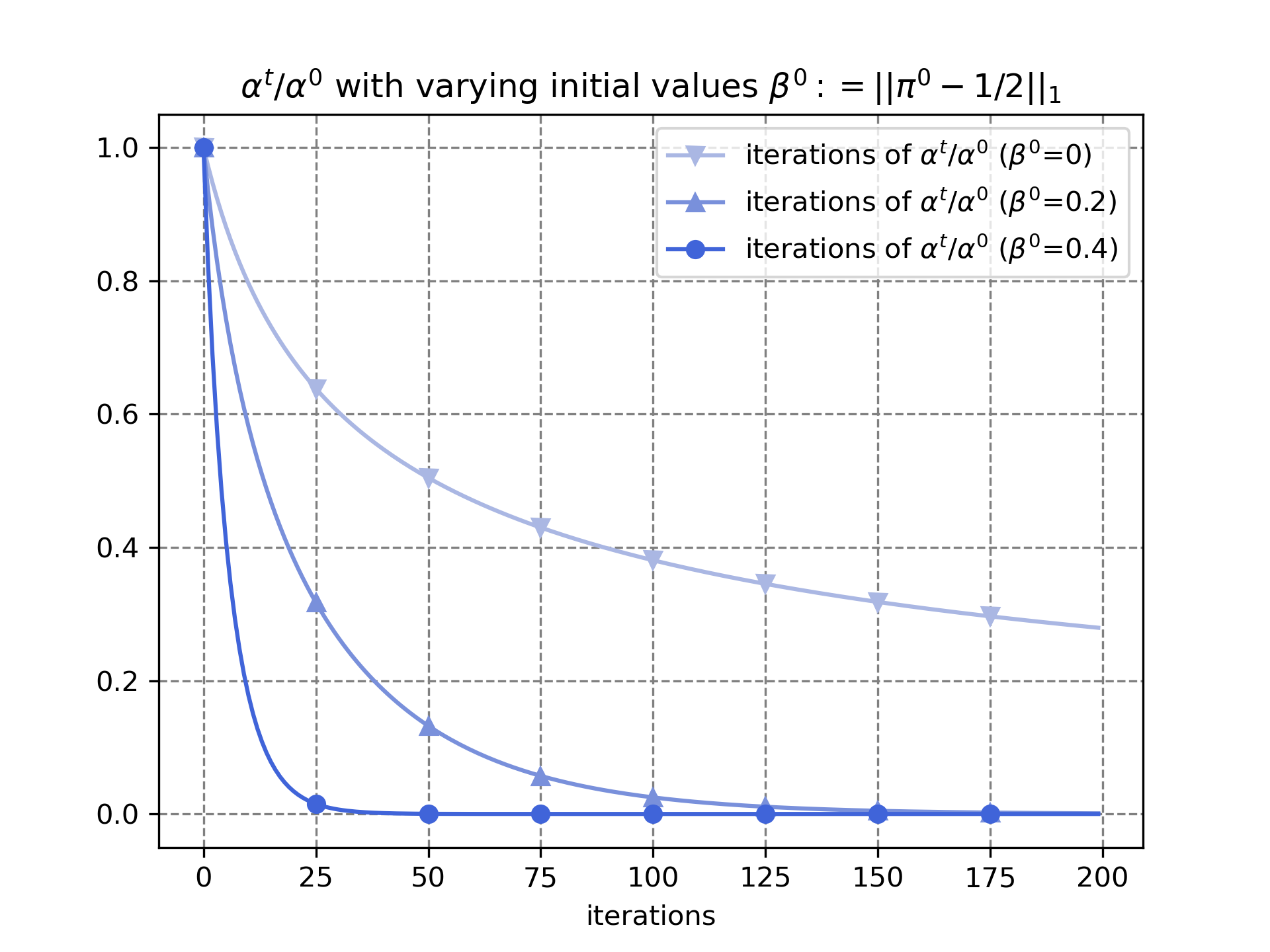}
        \caption{Interpolation across initial mixing weight regimes $\pi^0=(\frac{1}{2},\frac{1}{2}), (0.6, 0.4), (0.7, 0.3)$, showing sublinear convergence for $\pi^0=(\frac{1}{2},\frac{1}{2})$ and linear convergence for $\pi^0=(0.6, 0.4),(0.7,0.3)$, with the normalized regression parameter $\alpha^0=\|\theta^0\|/\sigma =0.1$.}
        \label{fig:convg_popl}
    \end{subfigure}
    \hfill
    \begin{subfigure}{0.46\columnwidth}
        \centering
        \includegraphics[width=\linewidth]{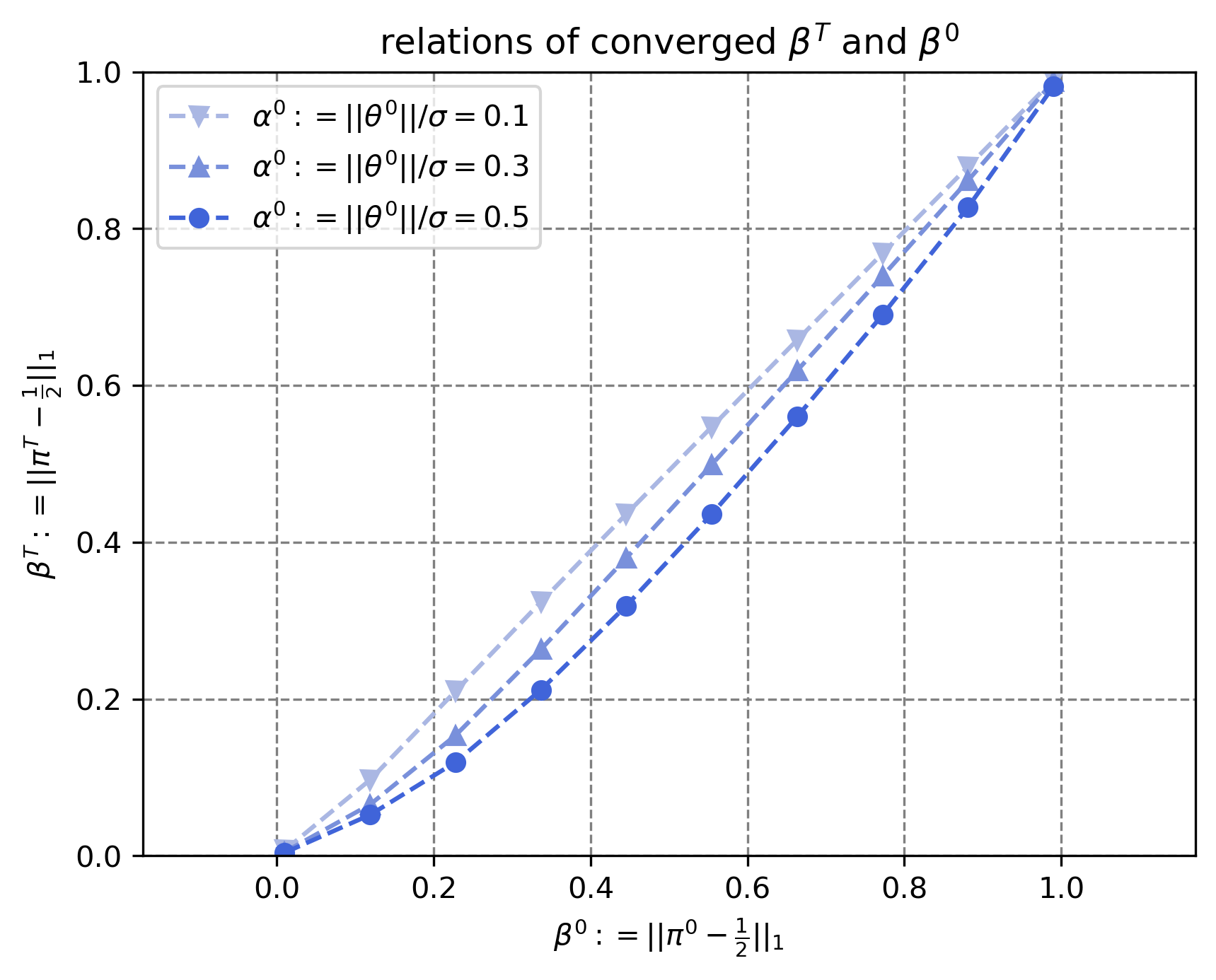}
        \caption{Relation of the converged value $\beta^\infty\approx \beta^T=\|\pi^T-\frac{\mathds{1}}{2}\|_1$ and the initial value $\beta^0=\|\pi^0-\frac{\mathds{1}}{2}\|_1$ when $\beta^0=\pi^0(1)-\pi^0(2)\geq0$, where $\beta^0$ takes ten evenly spaced values between 0.01 and 0.99, given different initial $\alpha^0\assign \frac{\|\theta^0\|}{\sigma}= 0.1, 0.3, 0.5$.}
        \label{fig:bound_mixweight}
    \end{subfigure}
    \caption{Comparison of convergence behavior and EM trajectories at population level.}
\end{figure}

\begin{remark}
In Theorem~\ref{thm:popl} (see proof in Appendix~\ref{sup:popl}, Subsection~\ref{supsub:convg_popl}), with an unbalanced initial guess for mixing weights, that is $\beta^0\neq0$, we further show the linear convergence and the upper bound for the contraction factor $\alpha^{t+1}/\alpha^{t}$, therefore EM updates achieve the $\epsilon$-accuracy in $\mathcal{O}(\log (1/\epsilon))$ steps.
Even in the worst case (the initial guess for mixing weights is balanced, $\beta^0=0$), we still show that $\alpha^t = \mathcal{O}(t^{-\frac{1}{2}})$ holds (Proposition~\ref{prop:sublinear_convg}), and it takes at most $\mathcal{O}(\epsilon^{-2})$ iterations to achieve $\epsilon$-accuracy in $\ell_2$ norm. 
We validate the sublinear convergence rate with numerical results, see Fig.~\ref{fig:balanced}.
Compared to $\mathcal{O}(\epsilon^{-2} \log (1/\epsilon))$ steps taken to achieve the $\epsilon$-accuracy for 2GMM with known balanced mixing weights $\pi^t=(\frac{1}{2}, \frac{1}{2})$ (Equation (15), page 16 in~\citet{dwivedi2020unbalanced}), we give a better estimation $\mathcal{O}(\epsilon^{-2})$ for steps in 2MLR.
\end{remark}

\textbf{Intuitive Explanation of Theorem~\ref{thm:popl}.}\space
For GMM/MLR models, running the EM algorithm is equivalent to performing a step of Gradient Descent on the negative log-likelihood~\citep{kwon2024global}.
When an unbalanced estimate of mixing weights is given, the negative log-likelihood retains its dominant quadratic term, $[\alpha]^2$. 
This quadratic form ensures that the EM algorithm behaves like a Gradient Descent step on a strongly convex function, suggesting a fast, linear convergence rate for $\alpha^t$.
In contrast, when a balanced estimate of mixing weights is given, the $[\alpha]^2$ term in the negative population log-likelihood cancels out (page 22 of~\citet{luo24cycloid} and Lemma A.3). Specifically, the likelihood term can be expanded as $[\alpha]^2/2 - \mathbb{E}[\ln \cosh(\alpha X)] = [\alpha]^2/2 - \mathbb{E}[(\alpha X)^2/2 - (\alpha X)^4/12 +\ldots]\approx 3[\alpha]^4/4$. The EM algorithm, being equivalent to Gradient Descent, then follows a path approximated by $\alpha^{t+1} \approx \alpha^t - \nabla (3[\alpha^t]^4/4) = \alpha^t - 3[\alpha^t]^3$, leading to a much slower, sublinear convergence rate where $\alpha^t$ is proportional to $1/\sqrt{t}$.

\begin{Proof Sketch}\textbf{of Theorem~\ref{thm:popl}.}\space
By invoking Fact~\ref{fact:init_popl} of initialization at population level, it is guaranteed that $\alpha^{T_0}=\|\theta^{T_0}\|/\sigma<0.1$ after running population EM for at most $T_0=\mathcal{O}(1)$ iterations. By applying the nonincreasing property of $\{\alpha^t\}^\infty_{t=0}$, we show that $\alpha^t\leq \alpha^{T_0}<0.1$ for $t\geq T_0$

For the case of $\pi^0=(1/2, 1/2)$, by applying
the sublinear convergence guarantee in Proposition~\ref{prop:sublinear_convg} and selecting $t=\Theta(\epsilon^{-2})$, the normalized regression parameters $\alpha^{t+T_0}= \mathcal{O}(t^{-1/2} \wedge\alpha^{T_0})=\mathcal{O}(t^{-1/2}) \leq \epsilon$ achieve $\varepsilon$-accuracy within $t+T_0=\Theta(\epsilon^{-2})+\mathcal{O}(1)=\Theta(\epsilon^{-2})$ iterations.

For the case of $\pi^0\neq(1/2, 1/2)$, namely $\beta^0\neq0$, by using the bound for the contraction factor $\alpha^{t+1+T_0}/\alpha^{t+T_0}$ in Proposition~\ref{prop:contraction} repeatedly, then $\alpha^{t+T_0}/\alpha^{T_0}\leq \alpha^{t+T_0}/0.1 \leq 10 (1-c [\beta^\infty]^2)^t$ for some $c>0$. Then by selecting $t=\Theta(\log(1/\epsilon)/(-\log(1-c [\beta^\infty]^2))=\Theta(\log(1/\epsilon))$, it is sufficient to show that $\alpha^{t+T_0}\leq \epsilon$.

\end{Proof Sketch}

\begin{fact}{(Initialization at Population Level) }\label{fact:init_popl}
    Let $\alpha^t \assign \| \theta^t \| /
    \sigma = \|M (\theta^{t - 1}, \nu^{t - 1})\| / \sigma$ and $\beta^t \assign \tanh
    (\nu^t) = N (\theta^{t - 1}, \nu^{t - 1})$ for all $t \in \mathbb{Z}_+$ be
    the $t$-th iteration of the EM update rules $\|M (\theta, \nu)\| / \sigma$ and $N
    (\theta, \nu)$ at the population level. If we run EM at the population level for at
    most $T_0 = 36$ iterations, then $\alpha^{T_0} < 0.1$.
\end{fact}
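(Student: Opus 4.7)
\textbf{Proof plan for Fact~\ref{fact:init_popl}.}
The strategy is to reduce the multidimensional EM dynamics to a one-dimensional scalar recurrence that is decoupled from the mixing-weight variable $\beta^t$, and then convert that recurrence into a hitting-time estimate via a Lyapunov potential inspired by the associated continuous-time ODE. The reduction proceeds in two parts. First, by Fact~\ref{fact:nonincreasing}, regardless of $\alpha^0$ and $\beta^0$, a single EM step already forces $\alpha^1 \leq 2/\pi$, so it suffices to bound the time for the sequence to cross below $0.1$ starting from $\alpha^1 \in [0, 2/\pi]$. Second, the upper-bound half of~\eqref{eq:cubic} in the remark after Proposition~\ref{prop:dynamic}, namely
$\alpha^{t+1} \leq g(\alpha^t) := \alpha^t - 3(\alpha^t)^3/(1 + 8\alpha^t)$,
is explicitly stated to hold even when $\beta^t \neq 0$, which reduces the worst case to the scalar orbit of $g$ alone, independently of the mixing-weight iterates.

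To count iterations, I would introduce the Lyapunov potential $h(\alpha) := 1/(6\alpha^2) + 8/(3\alpha)$, chosen so that $-h'(\alpha) = (1+8\alpha)/(3\alpha^3) = 1/f(\alpha)$, where $f(\alpha) := 3\alpha^3/(1+8\alpha)$ is exactly the per-step decrement of $g$. Because $f$ is increasing on $(0, \infty)$, the monotone comparison
\[
h(\alpha^{t+1}) - h(\alpha^t) \;=\; \int_{\alpha^{t+1}}^{\alpha^t} \frac{ds}{f(s)} \;\geq\; \frac{\alpha^t - \alpha^{t+1}}{f(\alpha^t)} \;\geq\; 1
\]
yields at least one unit of Lyapunov progress per iteration. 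Iterating from $\alpha^1 \leq 2/\pi$ and using that $h$ is strictly decreasing, it suffices to choose $T_0 - 1$ so that $h(\alpha^1) + (T_0 - 1) \geq h(0.1)$; this is a concrete arithmetic inequality given the endpoint values $h(0.1) = 130/3$ and $h(2/\pi) = \pi^2/24 + 4\pi/3$. The stated constant $T_0 = 36$ is then reached by tightening the crude $\geq 1$ bound: for $\alpha^t$ close to $2/\pi$, the value $1/f$ at $\alpha^{t+1}$ is substantially larger than $1/f(\alpha^t)$ since $f(\alpha^t)$ is not negligible relative to $\alpha^t$, so the first few iterations contribute strictly more than one unit of $h$ each.

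The principal obstacle is technical: \eqref{eq:cubic} is introduced ``when $\alpha^t$ is sufficiently small,'' whereas the initial condition from Step 1 only gives $\alpha^1 \leq 2/\pi$, which sits at the large end of the admissible range. To patch this, I would use the Bessel-integral representation of $M(\theta, \nu)$ from Identity~\ref{prop:em} together with the monotonicity $m(\alpha, \nu) \leq m(\alpha, 0)$ from Fact~\ref{fact:monotone_expectation} to verify directly that the inequality $\alpha^{t+1} \leq g(\alpha^t)$ extends uniformly to the full interval $[0, 2/\pi]$; this reduces to a univariate inequality on integrals against $K_0$ and is the real crux of the argument. Once this global scalar bound is in hand, the rest is bookkeeping with $h$, and the precise constant $T_0 = 36$ falls out of the endpoint evaluations of $h$ together with the improved per-step increment near $2/\pi$.
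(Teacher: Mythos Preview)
Your Lyapunov function $h(\alpha)=\tfrac{1}{6\alpha^2}+\tfrac{8}{3\alpha}$ is exactly the potential implicit in the paper's proof of Proposition~\ref{prop:sublinear_convg}, so the core counting mechanism is the same. The genuine difference is in how the large-$\alpha$ initial phase is handled. The paper does \emph{not} extend the cubic bound $m_0(\alpha)\le g(\alpha)$ beyond $\alpha<0.31$; instead it spends the first three steps on direct numerical evaluation of $m_0$ itself, using $m_0(\infty)=2/\pi$ and then $m_0(m_0(2/\pi))\approx 0.305<0.31$, to land inside the validity region of Lemma~\ref{suplem:cubic}. Only then does it invoke the Proposition~\ref{prop:sublinear_convg} bound (your $h$-argument) for the remaining $33$ steps, and $3+33=36$ drops out exactly. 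This two-phase split buys two things: it avoids the ``real crux'' you flagged (no extension of the cubic inequality is needed), and it gives the constant $36$ without any ad hoc sharpening of the per-step increment.

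By contrast, your route leaves two loose ends. First, the extension of $\alpha^{t+1}\le g(\alpha^t)$ to all of $[0,2/\pi]$ is asserted but not proved; Lemma~\ref{suplem:ration}, which underlies the $3/(1+8\alpha)$ factor, is stated only for $\alpha<0.31$. Second, even granting the extension, your crude telescoping gives $T_0-1\ge h(0.1)-h(2/\pi)\approx 38.7$, i.e.\ $T_0\ge 40$, and the proposed tightening near $2/\pi$ is not quantified. Since the cubic bound is quite loose for $\alpha$ near $2/\pi$ (the true $m_0$ drops from $2/\pi$ to below $0.31$ in two steps, whereas $g$ does not), the paper's approach of simply evaluating $m_0$ directly for those first few iterates is both easier and sharper.
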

  
\begin{remark}
By using such nonincreasing and bounded property in Fact~\ref{fact:nonincreasing} and applying the EM update rules in Identity~\ref{prop:em}, we show that the regression parameters will converge to a small region $\|\theta^t\|/\sigma< 0.31$ within only three iterations of EM updates. Then by invoking the following upper bound in Proposition~\ref{prop:sublinear_convg}, it takes at most 33 iterations for the value to drop below $0.1$. Thus, Fact~\ref{fact:init_popl} is established (see proof in Appendix~\ref{sup:popl}, Subsection~\ref{supsub:init_popl}).
Similarly, we can show that in the worst case of sublinear convergence with balanced initial guess $\pi^0=(\frac{1}{2}, \frac{1}{2})$ and $\alpha^0 = \|\theta^0\|/\sigma \to \infty$, we have $\alpha^3 =\|\theta^3\|/\sigma \approx 0.31$ and $\alpha^t \geq 0.1$ for all $t\leq 9$ by using the matching lower bound in Proposition~\ref{prop:sublinear_convg},
which also matches the numerical results in Fig.~\ref{fig:init}.
\end{remark}

\begin{proposition}{(Sublinear Convergence Rate) }\label{prop:sublinear_convg}
    Let $\alpha^t \assign \| \theta^t \| / \sigma =
    \|M (\theta^{t - 1}, \nu^{t - 1})\| / \sigma$ and $\beta^t \assign \tanh (\nu^t)
    = N (\theta^{t - 1}, \nu^{t - 1})$ for all $t \in \mathbb{Z}_+$ be the $t$-th
    iteration of the EM update rules $\|M (\theta, \nu)\| / \sigma$ and $N (\theta,
    \nu)$ at the population level. Suppose that $\alpha^0 \in (0, 0.31)$, then
    \[ \alpha^t \leq \frac{1}{\sqrt{6 t + \left\{ 8 + \frac{1}{\alpha^0}
       \right\}^2} - 8} \quad \forall t \in \mathbb{Z}_{\geq 0}; \]
    when the initial guess of mixing weights is balanced $\pi^0=(\frac{1}{2}, \frac{1}{2})$, namely $\beta^0=\tanh \nu^0=\|\pi^0-\frac{\mathds{1}}{2} \|_1 =0$, then
    \[
    \alpha^t\geq \frac{1}{\sqrt{6t+ 22 \ln(1.2t+1)+ (\frac{1}{\alpha^0})^2}}  \quad \forall t \in \mathbb{Z}_{\geq 0}.
    \]
\end{proposition}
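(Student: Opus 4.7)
The plan is to convert the cubic inequalities in~\eqref{eq:cubic} (from the remark after Proposition~\ref{prop:dynamic}) into monotone recursions on transformed variables, and then sum, in the spirit of the ``variable separation'' method mentioned in the technical overview. Since $\alpha^{t+1} \leq \alpha^t - 3[\alpha^t]^3/(1+8\alpha^t)$ holds regardless of $\beta^t$, the upper bound will be obtained by tracking $\phi_t := (8 + 1/\alpha^t)^2$. The lower bound only applies when $\beta^0=0$; by Fact~\ref{fact:nonincreasing} together with the fact that $n(\alpha,\nu)$ is odd in $\nu$, we get $\beta^t \equiv 0$, so the tighter one-sided bound $\alpha^{t+1} \geq \alpha^t(1 - 3[\alpha^t]^2)$ from~\eqref{eq:cubic} is available, and we track $\psi_t := 1/[\alpha^t]^2$.

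For the upper bound, I would first note that $\phi(\cdot)$ is monotone decreasing in $\alpha$, so $\alpha^{t+1} \leq g(\alpha^t)$ with $g(x) := x - 3x^3/(1+8x) = x(1+8x-3x^2)/(1+8x)$ implies $\phi_{t+1} \geq \phi(g(\alpha^t))$. A direct algebraic manipulation of $(1/g(x)+8)^2 - (1/x+8)^2$ cancels $(1+8x)$-type factors and leaves a numerator that factors cleanly as $x^2(27+144x-54x^2)$ over a manifestly positive denominator; since $27+144x-54x^2 > 0$ on $(0,0.31)$, one concludes $\phi_{t+1} - \phi_t \geq 6$. Induction then yields $\phi_T \geq \phi_0 + 6T = (8+1/\alpha^0)^2 + 6T$, which rearranges to the stated upper bound.

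For the lower bound, the inequality $\alpha^{t+1} \geq \alpha^t(1-3[\alpha^t]^2)$ (valid since $\alpha^0 < 0.31 < 1/\sqrt 3$) rearranges to $\psi_{t+1} \leq \psi_t^3/(\psi_t-3)^2$; a short computation of $V^3/(V-3)^2 - V$ gives the separated discrete inequality
\begin{equation*}
\psi_{t+1} - \psi_t \;\leq\; 6 \;+\; \frac{27(\psi_t-2)}{(\psi_t-3)^2}.
\end{equation*}
Telescoping from $0$ to $T-1$ gives $\psi_T - \psi_0 \leq 6T + 27\sum_{t=0}^{T-1}(\psi_t-2)/(\psi_t-3)^2$. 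I would then plug in the lower bound on $\psi_t$ supplied by the already-proven upper bound, namely $\psi_t \geq (\sqrt{6t+(8+1/\alpha^0)^2}-8)^2$, and compare the resulting sum $\sum 1/(\psi_t-3)$ with an integral $\int \mathrm{d}t/(6t+c)$ to produce a $\log T$ correction.

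The main obstacle is pinning down the exact constants $22$ and $1.2$ in $22\ln(1.2T+1)$: the naive ODE comparison gives a coefficient of $27/6 = 4.5$, so one has to (i) apply the quadratic lower bound on $\psi_t$ only in the regime where it dominates the $16\sqrt{6t+\phi_0}$ correction from the upper bound, and (ii) absorb the secondary $27/(\psi_t-3)^2$ piece (as opposed to just $27/(\psi_t-3)$) by slightly inflating the log coefficient. The cleanest write-up is then an induction verifying the target bound $\psi_T \leq \psi_0 + 6T + 22\ln(1.2T+1)$ closes under the recursion, with $22$ and $1.2$ calibrated so that the discrete log-increment $22\ln(1 + 1.2/(1.2t+1))$ dominates $27(\psi_t-2)/(\psi_t-3)^2$ uniformly in $t \geq 0$ for every $\alpha^0 \in (0,0.31)$.
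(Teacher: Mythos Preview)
Your proposal is correct and follows essentially the same ``variable separation'' strategy as the paper: both arguments start from the cubic inequalities in~\eqref{eq:cubic}, track reciprocal powers of $\alpha^t$, and for the lower bound use the already-established upper bound to lower-bound the transformed variable before summing. The execution differs in two places. For the upper bound, you verify $\phi_{t+1}-\phi_t\ge 6$ pointwise by an explicit polynomial identity (the $x^2(27+144x-54x^2)$ factorization), whereas the paper instead writes $1\le (\alpha^t-\alpha^{t+1})\cdot\frac{1+8\alpha^t}{3[\alpha^t]^3}$, inflates the factor $1$ to $\tfrac12(\alpha^t/\alpha^{t+1})^2+\tfrac12(\alpha^t/\alpha^{t+1})$, and telescopes two pieces $\tfrac16[\alpha^t]^{-2}$ and $\tfrac83[\alpha^t]^{-1}$ separately; both lead to the same inequality $[\alpha^T]^{-2}+16[\alpha^T]^{-1}\ge 6T+[\alpha^0]^{-2}+16[\alpha^0]^{-1}$. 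For the lower bound, you expand $\psi_{t+1}\le \psi_t^3/(\psi_t-3)^2$ exactly, while the paper passes through $A^t:=6[\alpha^t]^2$ and applies AM-GM $2\sqrt{A^tA^{t+1}}\le A^t+A^{t+1}$ to get the slightly coarser $A^t-A^{t+1}\le [A^t]^2$, i.e., $\psi_{t+1}-\psi_t\le 6+36/(\psi_t-6)$; the paper then substitutes the concrete lower bound $1/A^t\ge \tfrac{3}{11}t+\tfrac32$ (coming from the upper bound plus $[\alpha^t]^{-2}+9\ge 6[\alpha^t]^{-1}$) and bounds the sum by an integral, yielding $\tfrac{11}{3}\ln(1.2t+1)$ and hence the constants $22=6\cdot\tfrac{11}{3}$ and $1.2$ directly, without an induction. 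Your exact recursion is marginally sharper ($27$ versus $36$ in the tail), but the paper's AM-GM shortcut makes the constant extraction cleaner than the closing induction you sketch.
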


% \begin{figure}[!htbp]
%     \centering
%     \includegraphics[width=\columnwidth]%[width=\textwidth]
%     {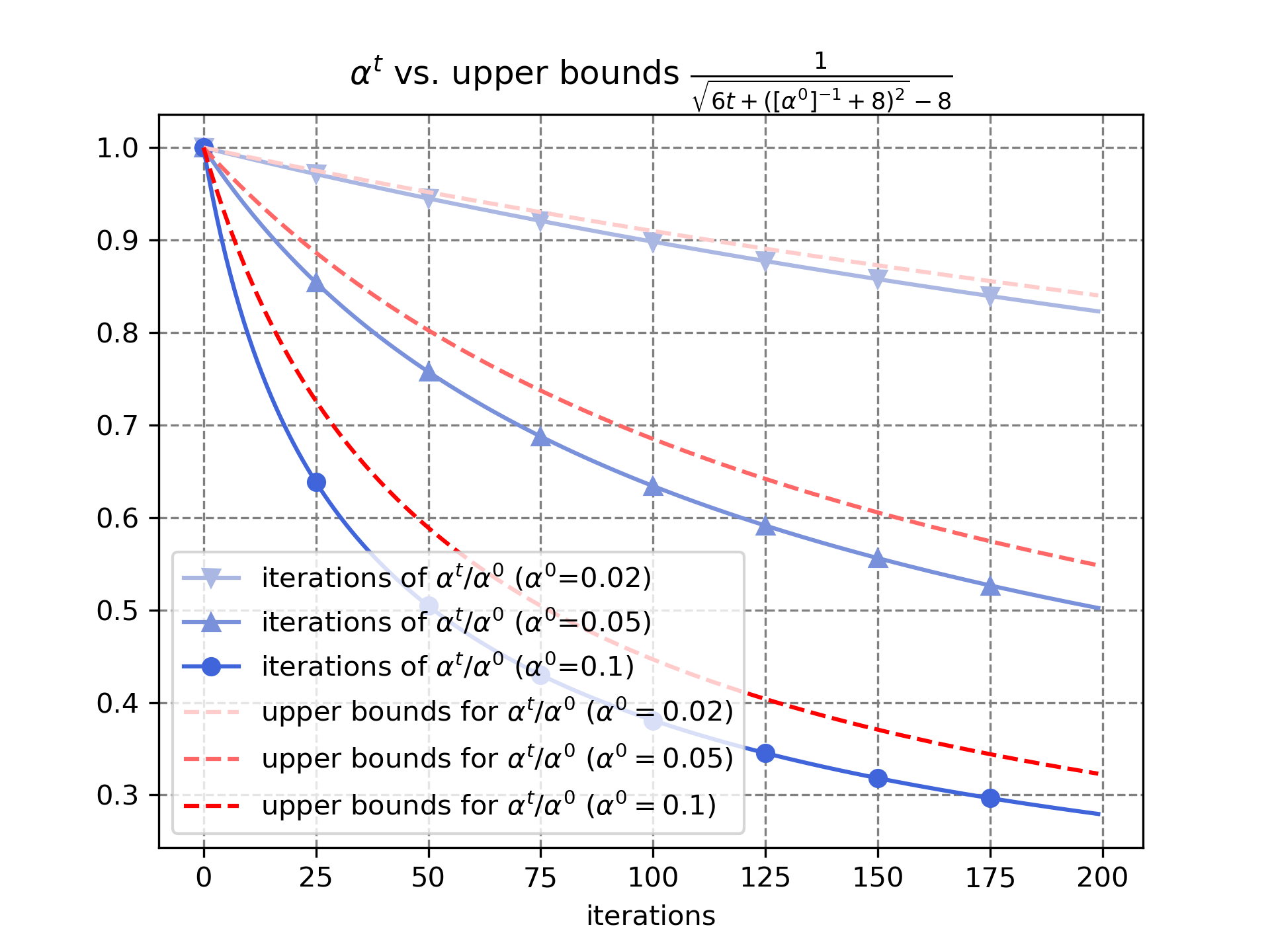}
%     \caption{Sublinear convergence rate bound for \(\alpha^t:=\|\theta^t\|/\sigma\) as shown in Proposition~\ref{prop:sublinear_convg}, with different initial values $\alpha^0$=0.02,0.05,0.1 and balanced initial guess $\beta^0:=\|\pi^0-\frac{1}{2}\|_1=0$ over 200 EM iterations.} 
% \end{figure}\label{fig:balanced}

\begin{figure}[!htbp]
    \centering
    \begin{subfigure}{0.48\columnwidth}
        \centering
        \includegraphics[width=\linewidth]{balanced.png}
        \caption{Upper bound illustration for sublinear convergence with balanced initial mixing weights.}
        \label{fig:balanced}
    \end{subfigure}
    \hfill
    \begin{subfigure}{0.48\columnwidth}
        \centering
        \includegraphics[width=\linewidth]{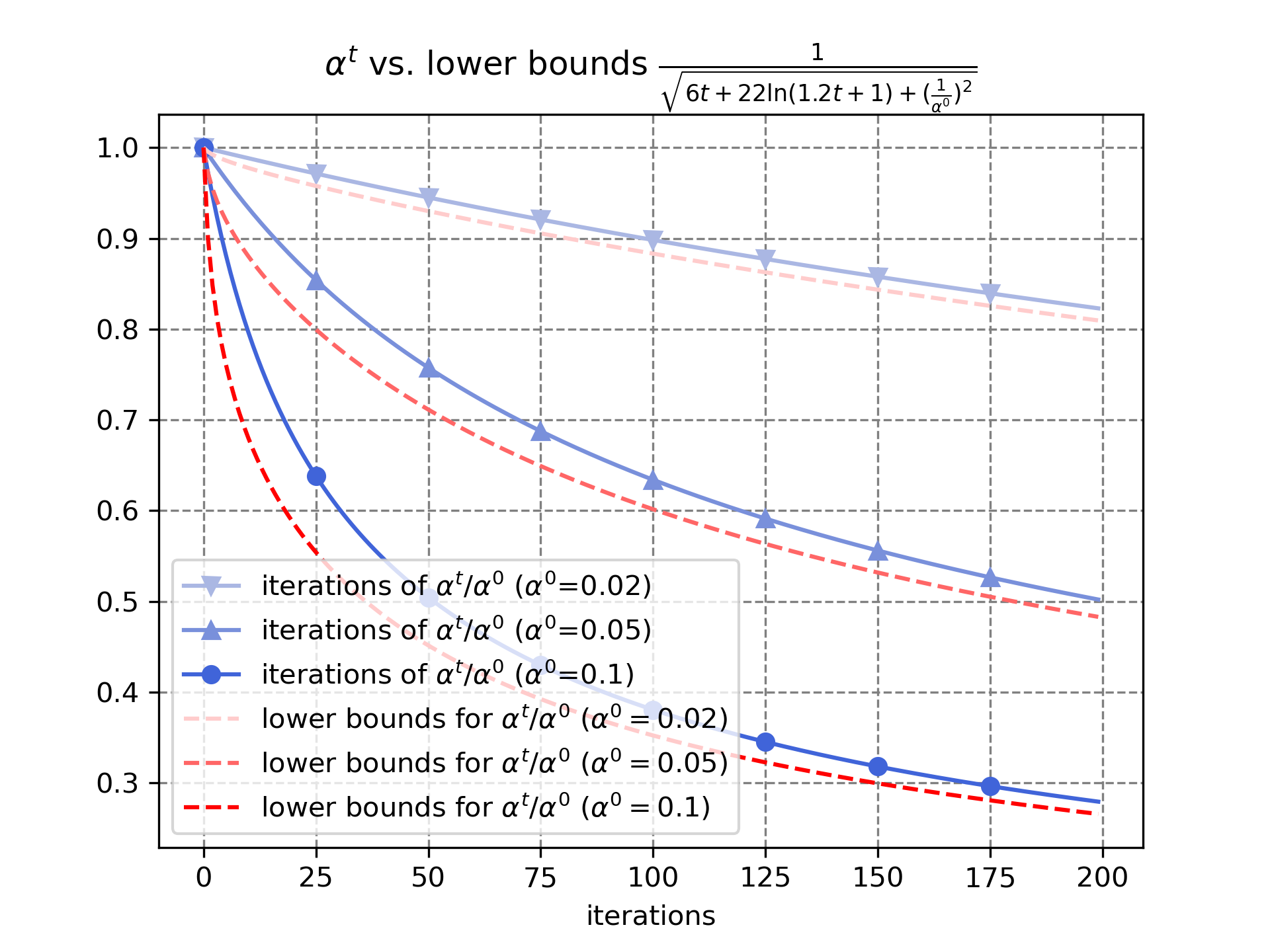}
        \caption{Lower bound illustration for sublinear convergence with balanced initial mixing weights.}
        \label{fig:balanced_LB}
    \end{subfigure}
    \caption{Sublinear convergence rate bounds for \(\alpha^t:=\|\theta^t\|/\sigma\) as shown in Proposition~\ref{prop:sublinear_convg}, with different initial values $\alpha^0=0.02$, $0.05$, $0.1$ and balanced initial guess $\beta^0:=\|\pi^0-\frac{\mathds{1}}{2}\|_1=0$ over 200 EM iterations.}
\end{figure}

\begin{remark}
The above Proposition~\ref{prop:sublinear_convg} (see proof in Appendix~\ref{sup:popl}, Subsection~\ref{supsub:sublinear}) highlights that at the population level, when the initial guess of the mixing weights is balanced, the EM updates exhibit an excessively slow convergence rate. 
To establish the above tight bounds, we present much tighter lower and upper bounds for $\alpha^{t+1}$ with the same coefficients of $[\alpha^t]^3$ in~\eqref{eq:cubic} (see remark on Proposition~\ref{prop:dynamic})). %the proof in Subsection~\ref{supsub:em_dynamic} and Lemma~\ref{suplem:cubic}).
Furthermore, we developed a technique of ``variable separation'' by interpreting the upper bound \(\alpha^{t+1} \leq \alpha^{t} -3 [\alpha^t]^3/(1+8 \alpha^t) \) in~\eqref{eq:cubic} as the discretized version of a differential inequality $\mathrm{d}\alpha \leq -3 [\alpha]^3 \mathrm{d} t$.
Consequently, we derived the simple upper bound for $\alpha^t$ in Proposition~\ref{prop:sublinear_convg} by solving the discretized version of the differential inequality.
For sufficiently small $\alpha^t$, we could further establish a tighter upper bound \(\alpha^{t+1} \leq \alpha^{t} -3 [\alpha^t]^3/(1+10 [\alpha^t]^2)\), and the ``variable separation'' technique could be applied to derive the other upper bound for $\alpha^t$ 
by introducing Lambert $W$ function (see Section 4.13 Lambert W-Function in~\citet{olver2010nist}), namely \(\alpha^t \leq 1/\sqrt{6t+(\frac{1}{\alpha^0})^2- 10\ln(6[\alpha^0]^2 t- 10[\alpha^0]^2\ln(10[\alpha^0]^2)+1)}\)
(see remark on the proof of Proposition~\ref{prop:sublinear_convg} in Appendix~\ref{sup:popl}, Subsection~\ref{supsub:sublinear}).
As \(\alpha^0\) approaches 0, the logarithmic term in the above upper bound for \(\alpha^t\) becomes negligible, and the upper bound can be approximated by \(1/\sqrt{6t + (\frac{1}{\alpha^0})^2}\). 
Our lower/upper bounds for \(\alpha^t\) are tight given the balanced initial guess \(\pi^0=(\frac{1}{2},\frac{1}{2})\), since \(t \gtrsim \ln(ct + c')\) for some constants \(c, c'>0\).
\end{remark}

\begin{proposition}{(Contraction Factor for Linear Convergence)}\label{prop:contraction}
    Let $\alpha^t \assign \| \theta^t \| / \sigma =
    \|M (\theta^{t - 1}, \nu^{t - 1})\| / \sigma$ and $\beta^t \assign \tanh (\nu^t)
    = N (\theta^{t - 1}, \nu^{t - 1})$ for all $t \in \mathbb{Z}_+$ be the $t$-th
    iteration of the EM update rules $\|M (\theta, \nu)\| / \sigma$ and $N (\theta,
    \nu)$ at the population level. Then $\beta^\infty := \lim_{t\to \infty} \beta^t$ exists and
    
    (i) if $\beta^0 = 0$, then $\beta^\infty =0$,
    
    (ii) if $\beta^0 > 0$, then $0 < \beta^\infty \leq \beta^t \leq \beta^0$,
  
    (iii) if $\beta^0 < 0$, then $0 > \beta^\infty \geq \beta^t \geq \beta^0$.
    % \[ 0 < B (\alpha^0, \beta^0) \leq \beta^{\infty} \leq \beta^t \]
  
    Suppose that $\alpha^t \in (0, 0.1)$, then
    \[ \frac{\alpha^{t + 1}}{\alpha^t} \leq 1 - \frac{4}{5} [\beta^\infty]^2 .\]
\end{proposition}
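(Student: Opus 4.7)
The plan is to split the proposition into three claims: existence of $\beta^\infty$ together with the sign-based classification in (i)--(iii); strict positivity $|\beta^\infty|>0$ whenever $|\beta^0|>0$; and the contraction bound on $\alpha^{t+1}/\alpha^t$. Existence and classification reduce essentially to Fact~\ref{fact:nonincreasing}: since $\{|\beta^t|\}$ is nonincreasing and nonnegative, $\lim_t|\beta^t|$ exists, and the sign-preservation $\beta^t\cdot\beta^0\geq 0$ promotes this to the signed limit $\beta^\infty$, yielding the sandwich $|\beta^\infty|\leq|\beta^t|\leq|\beta^0|$ in cases (ii) and (iii). Case (i) with $\beta^0=0$ follows from~\eqref{eq:expectation_em} and the symmetry of $X\sim K_0(|x|)/\pi$ around zero: oddness of $\tanh$ gives $n(\alpha,0)=\mathbb{E}[\tanh(\alpha X)]=0$, so $\beta^1=0$ and inductively $\beta^t=0$ for all $t$.

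For strict positivity in (ii) and (iii), the refined dynamic from Proposition~\ref{prop:dynamic} and its remark yields $|\beta^{t+1}|\geq|\beta^t|(1-\alpha^t\alpha^{t+1})(1-c[\alpha^t]^4)$ for small $\alpha^t$---valid after finitely many iterations by Fact~\ref{fact:init_popl}---so $|\beta^\infty|>0$ reduces to summability of $\sum_s\alpha^s\alpha^{s+1}$. To close the self-consistency loop in which $|\beta^t|$ bounded away from zero forces geometric decay of $\alpha^t$ and vice versa, I would run a dyadic bootstrap. Multiplying the $\alpha$-dynamic inequality $\alpha^s-\alpha^{s+1}\geq\alpha^s[\beta^s]^2-C[\alpha^s]^3$ by $\alpha^s$ and telescoping via $\alpha^s(\alpha^s-\alpha^{s+1})\leq[\alpha^s]^2-[\alpha^{s+1}]^2$ produces a finite bound on $\sum_s[\alpha^s\beta^s]^2$ once the $[\alpha^s]^4$ corrections are absorbed using $\alpha^s\leq 0.1$. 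Iterating over dyadic windows on which $|\beta^t|$ lies in $[2^{-k-1}|\beta^0|,2^{-k}|\beta^0|]$ and summing the per-window geometric contributions to $\sum\alpha^s\alpha^{s+1}$ then forces this series to be finite, which in turn gives $|\beta^\infty|>0$.

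With $|\beta^\infty|>0$, and hence $|\beta^t|\geq|\beta^\infty|$ for all $t$, the contraction inequality follows from the refined cubic bound in the remark after Proposition~\ref{prop:dynamic}, namely $\alpha^{t+1}\leq\alpha^t(1-[\beta^t]^2)(1-3[\alpha^t]^2/(1+8\alpha^t))$. Dividing by $\alpha^t$ and applying $\alpha^t\in(0,0.1)$ together with $[\beta^t]^2\geq[\beta^\infty]^2$ gives $\alpha^{t+1}/\alpha^t\leq 1-[\beta^\infty]^2$; the slightly weaker factor $4/5$ in the stated bound accommodates the higher-order $\mathcal{O}([\alpha^t]^3)$ contributions whose sign is not a priori fixed, which can be controlled by a direct numerical check over the compact region $\alpha^t\in(0,0.1)$, $[\beta^\infty]^2\in(0,1]$.

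The hardest step will be the dyadic bootstrap for $|\beta^\infty|>0$ in the middle paragraph. The obstacle is that Proposition~\ref{prop:sublinear_convg}'s sublinear lower bound $\alpha^t=\Omega(1/\sqrt{t+\log t})$ in the balanced regime permits $\sum_s\alpha^s\alpha^{s+1}$ to diverge logarithmically, so naive summability estimates immediately fail; only the coupled interplay between the $\alpha$- and $\beta$-dynamics, tracked over successive scales of $|\beta^t|$, can close the otherwise circular argument.
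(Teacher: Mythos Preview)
Your treatment of existence of $\beta^\infty$ and the sign classification (i)--(iii) via Fact~\ref{fact:nonincreasing} is correct and matches the paper. The contraction bound is also essentially right in spirit, though note that the factor $4/5$ does not arise from an uncontrolled $\mathcal{O}([\alpha^t]^3)$ remainder of indeterminate sign: it comes from the cross term $[\alpha^t]^2[\beta^t]^2$ in the expansion of $m(\alpha,\nu)/(\alpha(1-\beta^2))$ (Lemma~\ref{suplem:bound_regress} / Lemma~\ref{suplem:bound_dynamic}), which for $\alpha^t\le 0.1$ erodes the coefficient on $[\beta^t]^2$ from $1$ down to $\approx 0.888$. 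Your ``numerical check'' would reproduce this, but the mechanism is explicit.

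The genuine gap is in the $|\beta^\infty|>0$ step. Your dyadic bootstrap does not close. You correctly obtain $\sum_s[\alpha^s\beta^s]^2<\infty$, but on each dyadic window $W_k=\{t:\,|\beta^t|\in[2^{-k-1},2^{-k}]|\beta^0|\}$ the $\beta$-dynamics lower bound $\ln|\beta^t|-\ln|\beta^{t+1}|\lesssim[\alpha^t]^2$ forces $\sum_{t\in W_k}[\alpha^t]^2\gtrsim\ln 2$, a constant independent of $k$. If $\beta^\infty=0$ there are infinitely many windows, so $\sum_t[\alpha^t]^2=\infty$; nothing in your per-window geometric estimate or in the finiteness of $\sum[\alpha^s\beta^s]^2$ rules this out, since the latter only gives $\sum_{t\in W_k}[\alpha^t]^2\lesssim 4^{k}$---the wrong direction.

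The paper breaks the circularity differently, by a two-round bootstrap that uses the \emph{upper} bound of Proposition~\ref{prop:sublinear_convg} (which holds for all $\beta^0$, not just $\beta^0=0$). From $[\alpha^t]^2\lesssim 1/(6t)$ one feeds into $\ln|\beta^t/\beta^{t+1}|\lesssim[\alpha^t]^2$ to get $|\beta^t|\gtrsim|\beta^0|\,t^{-9/20}$. The crucial point is that the explicit constants make the exponent $9/20<1/2$, so $\sum_{s<t}[\beta^s]^2\gtrsim t^{1/10}$ diverges; feeding this back into $\ln(\alpha^0/\alpha^t)\gtrsim\sum_{s<t}[\beta^s]^2$ gives stretched-exponential decay of $\alpha^t$, hence $\sum_t[\alpha^t]^2<\infty$ and $|\beta^\infty|>0$. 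Your proposal misses this mechanism: you cite only the sublinear \emph{lower} bound as an obstacle, whereas the matching upper bound---with its specific constant---is exactly what launches the bootstrap.
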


\begin{remark}
Proposition~\ref{prop:contraction} (see proof in Appendix~\ref{sup:popl}, Subsection~\ref{supsub:contraction})
provides the upper bound for the contraction factor $\alpha^{t+1}/\alpha^t$ of linear convergence when $\alpha^t=\|\theta^t\|/\sigma$ is small enough, which is strictly less than 1 when the initial guess of mixing weights $\pi^0\neq(\frac{1}{2},\frac{1}{2})$.
This upper bound for contraction factor is obtained by establishing the bounds (see Subsection~\ref{supsub:bound_integral} and Appendix~\ref{sup:popl_lemma}) for expectations in Identity~\ref{prop:em}, thereby $\alpha^{t + 1}/\alpha^t \leq 1 - (5/3)[\alpha^t]^2 - (4/5) [\beta^t]^2 \leq 1- (4/5) [\beta^\infty]^2$.
Furthermore, we can provide the lower/upper bounds for \(\beta^\infty\) by showing that \(-[\alpha^0]^2 / (300 [\beta^0]^{20}) \leq \ln|\beta^\infty| - \ln|\beta^0| \leq -[\alpha^0]^2 / 4\) when \(\alpha^0 < 0.1, |\beta^0|< \sqrt{2/5}\) (see proof of Proposition~\ref{prop:contraction} and its remark in Appendix~\ref{sup:popl}, Subsection~\ref{supsub:contraction}).
Also, we can show the lower bound for the contraction factor \(\alpha^{t+1}/\alpha^t \geq 1- 3[\alpha^t]^2 - [\beta^t]^2 \geq 1- \Theta(\frac{1}{t})-[\beta^0]^2\) by using Proposition~\ref{prop:sublinear_convg} and Fact~\ref{fact:nonincreasing} when \(\alpha^0 < 0.1\) is sufficiently small (see Appendix~\ref{sup:popl_lemma} for the derivation of the first inequality).
Since \(\alpha^{t+1}/\alpha^t\) is lower bounded and upper bounded by some constants which are in the range of \((0, 1)\) when \(t\) is sufficiently large, \(\alpha^t\) converges linearly when the initial guess \(\pi^0\neq(\frac{1}{2},\frac{1}{2})\) is unbalanced (namely \(\beta^0\neq0\)).
\end{remark}

% \begin{figure}[!htbp]
%     \centering
%     \includegraphics[width=0.4\columnwidth]{bound_mixweight.png}
%     \caption{Relation of the converged value $\beta^\infty\approx \beta^T=\|\pi^T-\frac{1}{2}\|_1$ and the initial value $\beta^0=\|\pi^0-\frac{1}{2}\|_1$ when $\beta^0=\pi^0(1)-\pi^0(2)\geq0$, where $\beta^0$ takes ten evenly spaced values between 0.01 and 0.99, given different initial $\alpha^0\assign \frac{\|\theta^0\|}{\sigma}= 0.1, 0.3, 0.5$.} 
%     \label{fig:bound_mixweight}
% \end{figure}

\section{Finite-Sample Level Analysis}\label{sec:finite}
In this section, we give a finite-sample analysis and present tight bounds for sample complexity, time complexity, and final accuracy (Theorem~\ref{thm:finite}) 
by coupling the analysis of population EM and finite-sample EM and establishing statistical errors (Propositions~\ref{prop:stat_err_mixing},~\ref{prop:stat_err}) in the overspecified setting.

\begin{myframe}
\begin{main theorem}{(Convergence Rate at Finite-Sample Level for Fixed Mixing Weights)}\label{thm:finite}

  Suppose a 2MLR model is fitted to the overspecified
  model with no separation $\theta^\ast=\vec{0}$, given mixing weights $\pi^t = \pi^0$ 
  and $n = \Omega \left( d \vee\log \frac{1}{\delta} \vee \log^3 \frac{1}{\delta'} \right)$ samples:

  (sufficiently unbalanced) if $\left\| \pi^0 - \frac{\mathds{1}}{2} \right\|_1 \gtrsim \left[ \frac{d \vee \log \frac{1}{\delta}}{n}\right]^{\frac{1}{4}}$,
  finite-sample EM takes at most $T =\mathcal{O} \left( \left\| \pi^0 -
  \frac{\mathds{1}}{2} \right\|^{- 2}_1 \log \frac{n}{d \vee \log \frac{1}{\delta}}
  \right)$ iterations to achieve $\| \theta^T \| / \sigma = \mathcal{O} \left(
  \left\| \pi^0 - \frac{\mathds{1}}{2} \right\|^{- 1}_1 \left[ \frac{d \vee \log
  \frac{1}{\delta}}{n} \right]^{\frac{1}{2}} \right)$,
   
  (sufficiently balanced) if $\left\| \pi^0 - \frac{\mathds{1}}{2} \right\|_1 \lesssim \left[ \frac{d \vee \log \frac{1}{\delta}}{n} \right]^{\frac{1}{4}}$,
  finite-sample EM takes at most $T =\mathcal{O} \left( \left[ \frac{n}{d
  \vee \log \frac{1}{\delta}} \right]^{\frac{1}{2}} \right)$ iterations to achieve $\| \theta^T \| / \sigma = \mathcal{O} \left( \left[ \frac{d \vee
  \log \frac{1}{\delta}}{n} \right]^{\frac{1}{4}} \right)$,
  with probability at least $1 - T (\delta + \delta')$.
\end{main theorem}
\end{myframe}

\begin{figure}[!htbp]
    \centering
    \begin{subfigure}{0.48\columnwidth}
        \centering
        \includegraphics[width=\linewidth]{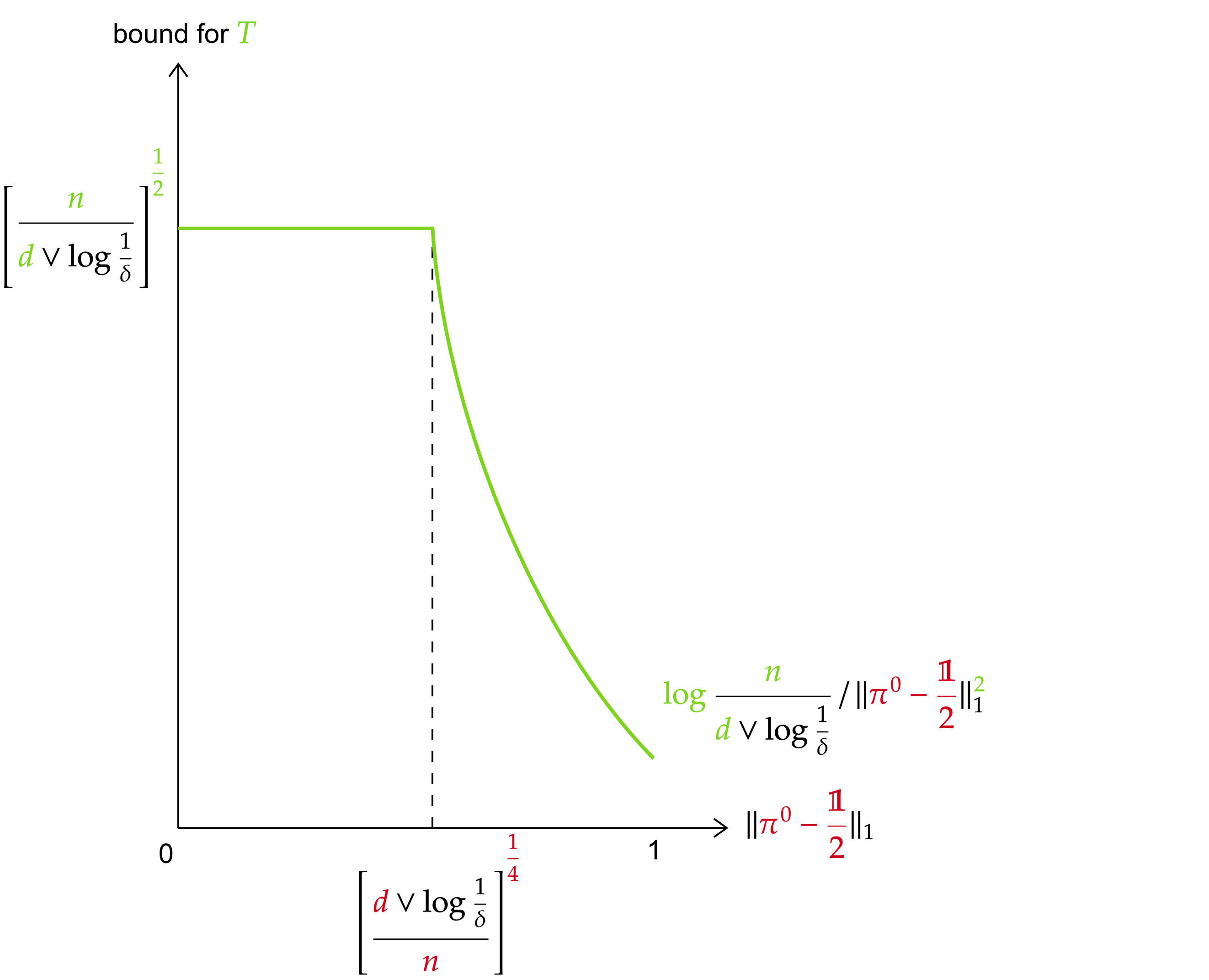}
        \caption{Bounds of the time complexity $T$  in both sufficiently unbalanced and balanced cases.}
        \label{fig:bound_time}
    \end{subfigure}
    \hfill
    \begin{subfigure}{0.48\columnwidth}
        \centering
        \includegraphics[width=\linewidth]{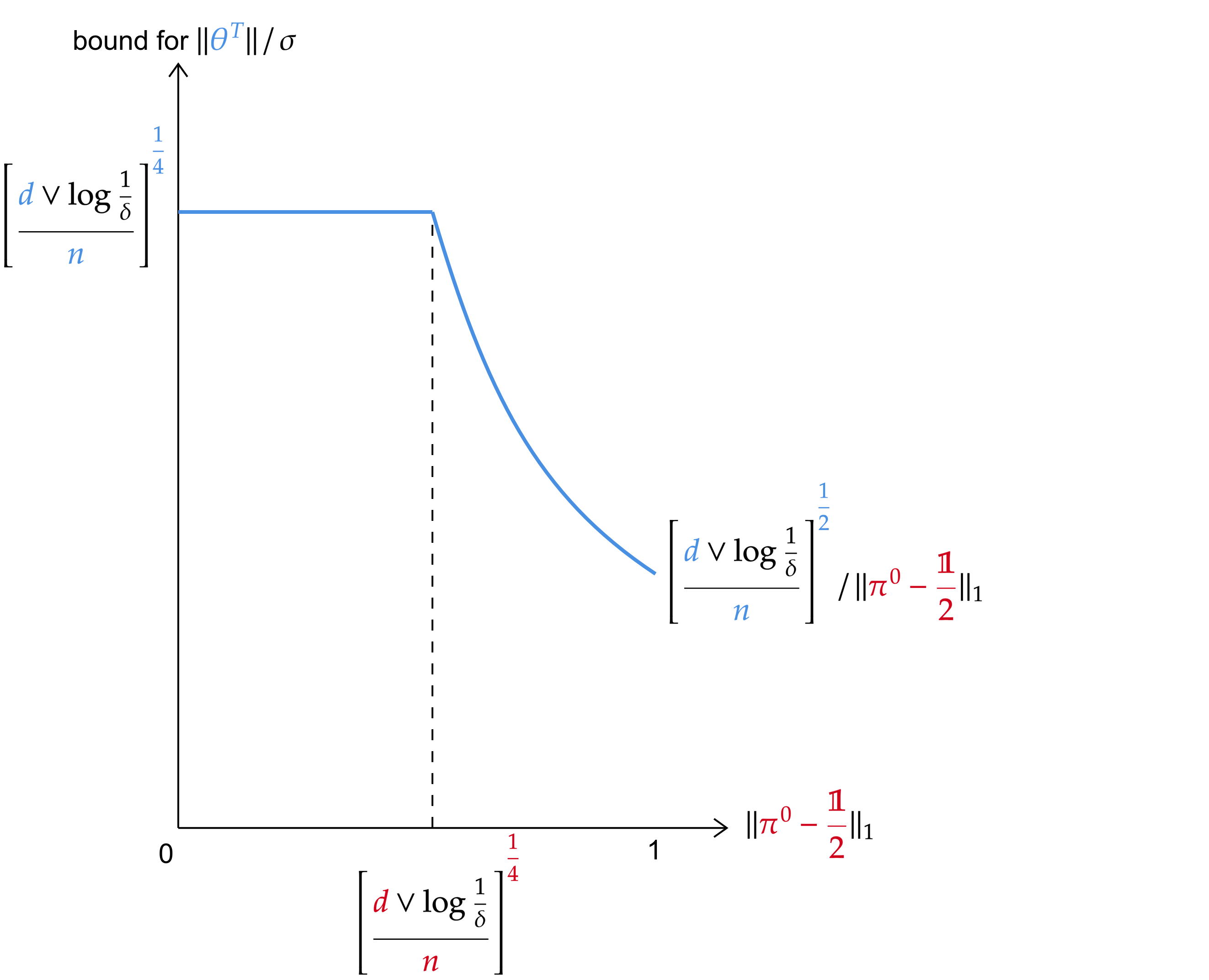}
        \caption{Bounds of the statistical accuracy $\|\theta^T-\theta^\ast\|/\sigma$ in both sufficiently unbalanced and balanced cases.}
        \label{fig:bound_accuracy}
    \end{subfigure}
    \caption{Theorem~\ref{thm:finite} provides bounds on time complexity and statistical accuracy for fixed mixing weights $\pi^t = \pi^0$ in both sufficiently unbalanced and balanced cases.}
\end{figure}\label{fig:bounds}

\begin{remark}
Theorem~\ref{thm:finite} (see proof in Appendix~\ref{sup:finite}, Subsection~\ref{supsub:convg_finite}) provides the final statistical errors, time complexity, and sample complexity in the specific setting of fixed mixing weights. 
There is a tight connection between Theorem~\ref{thm:finite} at the finite-sample level and Theorem~\ref{thm:popl} at the population level.
In particular, by setting the final accuracy $\epsilon$ in Theorem~\ref{thm:popl} for regression parameters to be $\mathcal{O}((d/n)^{1/2})$ for the ``sufficiently unbalanced'' case, we intuitively obtain the required iteration complexity $T=\mathcal{O}(\log \frac{1}{\epsilon})=\mathcal{O}(\log \frac{n}{d})$ for the ``sufficiently unbalanced'' case, 
which is consistent with the iteration complexity $\mathcal{O}(\log \frac{n}{d})$ in Theorem~\ref{thm:finite} at the finite-sample level.
Moreover, by setting the final accuracy $\epsilon$ in Theorem~\ref{thm:popl} for regression parameters to be $\mathcal{O}((d/n)^{1/4})$ for the ``sufficiently balanced'' case, we intuitively obtain the required iteration complexity $T=\mathcal{O}(\epsilon^{-2})=\mathcal{O}((n/d)^{1/2})$ for the ``sufficiently balanced'' case, 
which is consistent with the time complexity $\mathcal{O}((n/d)^{1/2})$ in Theorem~\ref{thm:finite} at the finite-sample level.
This main theorem addresses all regimes where the mixing weights remain fixed, $\pi^t = \pi^0$. In contrast, Theorems 1 and 3 in~\citet{dwivedi2020unbalanced} cover only the cases of sufficiently unbalanced Gaussian mixtures, where $\|\pi^0 - \frac{\mathds{1}}{2} \|_1 \gtrsim \mathcal{O}((d/n)^{1/4})$ and the special case $\pi^0 = (\frac{1}{2}, \frac{1}{2})$; 
while our main theorem covers all regimes of mixing weights, including the ``sufficiently balanced'' case $\|\pi^0 - \frac{\mathds{1}}{2} \|_1 \lesssim \mathcal{O}((d/n)^{1/4})$, which is handled by introducing our proof technique of ``variable separation''. 
In addition to developing new proof techniques, we improve the bounds in Theorem 3 of~\citet{dwivedi2020unbalanced} for statistical accuracy $\mathcal{O}\left(\left[\frac{d\vee \log \frac{1}{\delta} \vee \log\log \frac{1}{\epsilon'}}{n}\right]^{\frac{1}{4}-\epsilon'}\right)$, time complexity $\mathcal{O}\left(\left(\frac{n}{d}\right)^{\frac{1}{2}-2\epsilon'}\log \frac{n}{d}\log\frac{1}{\epsilon'}\right)$, and sample complexity $\Omega(d\vee \log \frac{1}{\delta}\vee \log\log\frac{1}{\epsilon'})$ for $\epsilon'\in(0, 1/4)$ in the case of balanced mixing weights $\pi^0 = (\frac{1}{2}, \frac{1}{2})$. The previous bounds diverged as $\epsilon'\to0$ whereas our time and sample complexity remain stable.
\end{remark}

\textbf{Intuitive Explanation of Theorem~\ref{thm:finite}.}\space
The final statistical accuracy of the MLE estimate is governed by the invertibility of the Fisher Information matrix, which is the second-order derivative of the negative population log-likelihood.
With an unbalanced estimate of mixing weights, the negative log-likelihood maintains the $[\alpha]^2= \|\theta\|^2/\sigma^2$ term, which ensures the Fisher Information matrix (w.r.t. the regression parameters) is invertible. As a well-established result~\citep{van2000asymptotic}, this invertibility guarantees that the MLE estimate achieves the standard parametric rate of order $n^{-1/2}$, and therefore the final accuracy of the EM algorithm is also proportional to $n^{-1/2}$.
However, when a balanced estimate is given, the critical term $[\alpha]^2$ vanishes, causing the Fisher Information matrix to be singular. This singularity slows down the standard parametric rate of final accuracy~\citep{dwivedi2020unbalanced}, resulting in a converged $\alpha^T$ that exhibits a rate proportional to $n^{-1/4}$.

\begin{Proof Sketch}\textbf{of Theorem~\ref{thm:finite}.}\space
Initially, Fact~\ref{fact:init_finite} ensures that after at most $T_0=\mathcal{O}(1)$ finite-sample EM iterations, we have $\alpha^{T_0}=\|\theta^{T_0}\|/\sigma<0.1$ with high probability. 
Therefore, without loss of generality, we assume $\alpha^0<0.1, \beta^0\geq0$. For simplicity, we denote finite-sample and population updates for the norm of normalized regression parameters as $\alpha^t, \bar{\alpha}^t$, respectively.

We upper bound the $(t+1)$-th finite-sample EM iteration $\alpha^{t+1}$ using the triangle inequality $\alpha^{t+1}\leq|\alpha^{t+1}-\bar{\alpha}^{t+1}| + \bar{\alpha}^{t+1}$. The first term represents the statistical error $|\alpha^{t+1}-\bar{\alpha}^{t+1}|=(\beta^t+\alpha^t) \mathcal{O}((d/n)^{1/2})$ (see Proposition~\ref{prop:stat_err}), while the second term $\bar{\alpha}^t$ is the population EM update, which is bounded by the upper bounds provided in Appendix~\ref{sup:popl_lemma}.

For the case where $\beta^t=\beta^0\geq \Theta((d/n)^{1/4})$, by selecting sufficiently large sample size $n$ and simplifying the expression, we derive the recurrence relation $\alpha^{t+1}-\Theta((d/n)^{1/4}/\beta^0) \leq (1-c[\beta^0]^2) (\alpha^{t}-\Theta((d/n)^{1/4}/\beta^0))$ for some $c>0$.
Using the approximation $1-c[\beta^0]^2 \leq \exp(-c [\beta^0]^2)$, we can express the above relation recursively, leading to:
$\alpha^{T}-\Theta((d/n)^{1/4}/\beta^0) =\mathcal{O}(\exp(-c[\beta^0]^2T))$. By choosing $T=\Theta(\log(n/d)/[\beta^0]^2)$, we achieve $\epsilon$-accuracy for the normalized regression parameters, yielding: $\alpha^{T} \leq \Theta((d/n)^{1/4}/\beta^0)$.

In the case where $\beta^0=\beta^t \leq \Theta((d/n)^{1/4})$, we similarly derive a recurrence relation by selecting sufficiently large sample size $n$ and simplifying the expression, $\alpha^{t+1}\leq \alpha^t - c [\alpha^t]^3$ for some $c>0$ when $\alpha^{t} \geq \Theta((d/n)^{1/4})$.
This relation can be viewed as the discretized version of the ordinary differential inequality:
$\mathd \alpha \leq -c \alpha^3 \mathd t$. By applying the method of ``variable separation'' to this discretized version, we obtain $\alpha^t \leq \Theta(t^{-1/2})$ when $\alpha^{t} \geq \Theta((d/n)^{1/4})$. To achieve $\alpha^{T}\leq\Theta((d/n)^{1/4})$,
we select $T=\Theta((d/n)^{1/4})^{-2}=\Theta((n/d)^{1/2})$.
\end{Proof Sketch}

\begin{fact}{(Initialization at Finite-Sample Level) }\label{fact:init_finite}
  Let $\alpha^t \assign \| \theta^t \|
  / \sigma = \|M_n (\theta^{t - 1}, \nu^{t - 1})\| / \sigma$ and $\beta^t \assign
  \tanh (\nu^t) = N_n (\theta^{t - 1}, \nu^{t - 1})$ for all $t \in
  \mathbb{Z}_+$ be the $t$-th iteration of the EM update rules $\|M_n (\theta,
  \nu)\| / \sigma$ and $N_n (\theta, \nu)$ at the finite-sample level. If we run EM at
  the finite-sample level for at most $T_0 = \mathcal{O} (1)$ iterations with $n = \Omega
  \left( n \vee \log \frac{1}{\delta} \right)$ samples, then $\alpha^{T_0} <
  0.1$ with probability at least $1 - \delta$.
\end{fact}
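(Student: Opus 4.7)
The plan is to reduce the finite-sample statement to its population-level counterpart (Fact~\ref{fact:init_popl}) via a coupling argument over a constant number of steps. Denote by $\bar{\alpha}^t,\bar{\beta}^t$ the population-EM iterates started from the same $(\theta^0,\nu^0)$ as the finite-sample iterates $\alpha^t,\beta^t$. Since Fact~\ref{fact:init_popl} gives $\bar{\alpha}^{36}<0.1$ and Fact~\ref{fact:nonincreasing} shows $\{\bar{\alpha}^t\}$ is nonincreasing, I can pick a constant $T_0\geq 36$ for which $\bar{\alpha}^{T_0}$ sits strictly below $0.1$ (say, $\bar{\alpha}^{T_0}\leq 0.05$), leaving a positive gap to absorb the statistical error.

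The first step is to control the very first iteration. Because $\theta^0$ may have arbitrarily large norm, $\alpha^1$ cannot be treated by any local-Lipschitz argument; instead I would use the crude bound $|\tanh(\cdot)|\leq 1$ together with sub-Gaussian concentration for $\frac{1}{n}\sum_i y_i x_i$ and a spectral-norm deviation bound on the sample covariance $\frac{1}{n}\sum_i x_i x_i^\top$, both requiring $n=\Omega(d\vee\log(1/\delta))$. This yields $\alpha^1\leq \bar{\alpha}^1+\mathcal{O}(\sqrt{(d\vee\log(1/\delta))/n})\leq 2/\pi+o(1)$ with probability at least $1-\delta$, putting the finite-sample iterate into a bounded region of radius $\mathcal{O}(1)$ that matches the population bound in Fact~\ref{fact:nonincreasing}.

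The second step is an inductive coupling for $t\geq 1$. Once the iterates sit in a bounded region, I would combine the Lipschitz continuity of the EM maps $M_n/\sigma$ and $N_n$ in $(\theta,\nu)$ over that region with the per-step statistical errors of Propositions~\ref{prop:stat_err_mixing} and~\ref{prop:stat_err} to show
\[ |\alpha^{t+1}-\bar{\alpha}^{t+1}|+|\beta^{t+1}-\bar{\beta}^{t+1}|\leq L\bigl(|\alpha^t-\bar{\alpha}^t|+|\beta^t-\bar{\beta}^t|\bigr)+C\sqrt{(d\vee\log(1/\delta))/n}, \]
holding with probability at least $1-\delta$ at each step, where $L,C$ are absolute constants (the prefactor $\alpha^t+\beta^t$ in the statistical error bounds is itself $\mathcal{O}(1)$ after the first step, so the recursion closes). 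A union bound over the $T_0=\mathcal{O}(1)$ iterations and unrolling this recursion yields $|\alpha^{T_0}-\bar{\alpha}^{T_0}|\leq C'L^{T_0}\sqrt{(d\vee\log(1/\delta))/n}$ with probability at least $1-T_0\delta$, which is smaller than $0.05$ whenever $n=\Omega(d\vee\log(1/\delta))$ with a large enough hidden constant. Combining with $\bar{\alpha}^{T_0}\leq 0.05$ gives $\alpha^{T_0}<0.1$ as claimed.

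The main obstacle I expect is precisely the first iteration: the initial point lies outside any bounded region where the Lipschitz and concentration bounds are uniform, so one must separately argue $\alpha^1=\mathcal{O}(1)$ using the boundedness of $\tanh$ rather than any perturbative expansion. This is exactly what forces the $n=\Omega(d\vee\log(1/\delta))$ sample requirement, because inverting the sample covariance needs $n\gtrsim d$ and controlling sub-Gaussian averages uniformly needs the $\log(1/\delta)$ term. A secondary (but purely bookkeeping) hurdle is rescaling $\delta$ across the $T_0$ union-bound steps, which only loses a constant factor since $T_0$ is an absolute constant.
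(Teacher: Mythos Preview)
Your high-level plan is sound, but the paper's proof takes a different and more direct route that avoids the trajectory coupling entirely.

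Rather than running a parallel population trajectory $\{\bar{\alpha}^t\}$ from the same start and controlling $|\alpha^t-\bar{\alpha}^t|$ via Lipschitz continuity of the population maps, the paper applies the one-step statistical error bound (Proposition~\ref{prop:stat_err}) at each step to the \emph{current} finite-sample state, obtaining the perturbed scalar recursion
\[
\alpha^{t+1}\;\leq\; m(\alpha^t,\nu^t)+\epsilon\;\leq\; m_0(\alpha^t)+\epsilon,\qquad \epsilon=C\sqrt{(d\vee\log\tfrac{1}{\delta})/n},
\]
where the second inequality uses the monotonicity $m(\alpha,\nu)\leq m(\alpha,0)=m_0(\alpha)$ from Fact~\ref{fact:monotone_expectation}. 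This eliminates $\nu^t$ entirely, so the mixing-weight iterates never enter the argument. The paper then analyzes this perturbed recursion directly: three explicit evaluations of $m_0$ give $\alpha^3<0.31$, after which the cubic bound $m_0(\alpha)\leq\alpha-3\alpha^3/(1+8\alpha)$ is applied to the shifted variable $r^t=\alpha^t-\tfrac{1}{20}$ and a variable-separation argument yields $\alpha^{T_0}<0.1$ at the explicit $T_0=196$.

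What the paper's approach buys: no Lipschitz estimate on $M$ or $N$ is ever needed, and the analysis is a one-line perturbation of the sublinear-rate calculation already done for Proposition~\ref{prop:sublinear_convg}. What your approach buys: modularity---you invoke Fact~\ref{fact:init_popl} as a black box---at the cost of supplying the Lipschitz ingredient yourself and tracking both $\alpha$- and $\beta$-deviations. One imprecision in your write-up: you refer to ``Lipschitz continuity of the EM maps $M_n/\sigma$ and $N_n$,'' but the recursion you wrote requires Lipschitz continuity of the \emph{population} maps $M,N$ (equivalently $m,n$), combined with the statistical error on $M_n-M$; the finite-sample maps themselves are random and their Lipschitz constants would require separate control.
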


\begin{remark}

Fact~\ref{fact:init_finite} (see proof in Appendix~\ref{sup:finite}, Subsection~\ref{supsub:init_finite}) is established in a similar way as Fact~\ref{fact:init_popl} at the population level, but requires selecting a sufficiently large number of samples $n=\Omega(d\vee \log \frac{1}{\delta})$ to guarantee that the statistical error is small enough.
\end{remark}

\begin{proposition}{(Statistical Error of Mixing Weights)}\label{prop:stat_err_mixing}
  Let $N (\theta, \nu)$ and $N_n (\theta,
  \nu)$ be the EM update rules for mixing weights $\tanh (\nu) \assign \pi (1)
  - \pi (2)$ at the population level and finite-sample level with $n$ samples, and
  $n \gtrsim \log \frac{1}{\delta}$, $\delta \in (0, 1)$, then
  \[ |N_n (\theta, \nu) - N (\theta, \nu) | = \min \left\{ \frac{\| \theta
     \| / \sigma}{1 + | \nu |}, 1 \right\} \mathcal{O} \left( \sqrt{\frac{\log
     \frac{1}{\delta}}{n}} \right) \]
  with probability at least $1 - \delta$.
\end{proposition}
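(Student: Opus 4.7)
The plan is to reduce to a concentration problem for a sum of bounded random variables, and establish both branches of the $\min$ via Hoeffding (for the factor $1$ branch) and Bernstein together with a tight variance bound (for the factor $\alpha/(1+|\nu|)$ branch). Because $\theta^\ast = \vec{0}$, the product $y_i \langle x_i, \theta \rangle / \sigma^2$ has the same distribution as $\alpha X_i$, where $\alpha \assign \|\theta\|/\sigma$ and $X_i = Z_{1,i} Z_{2,i}$ is the product of two independent standard Gaussians, with density $K_0(|x|)/\pi$. Setting $W_i \assign \tanh(\alpha X_i + \nu) - \tanh(\nu)$, we have $|W_i| \leq 2$ and $N_n - N = (1/n) \sum_i (W_i - \mathbb{E}[W_i])$, so Hoeffding's inequality immediately yields $|N_n - N| \lesssim \sqrt{\log(1/\delta)/n}$ with probability at least $1 - \delta$, which is the ``$\min = 1$'' branch.

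The heart of the argument is the variance estimate $\mathrm{Var}(W_1) \leq \mathbb{E}[W_1^2] \lesssim \alpha^2/(1+|\nu|)^2$. I would treat $|\nu| \leq 1$ and $|\nu| > 1$ separately. For $|\nu| \leq 1$, the Lipschitz bound $|W_1| \leq |\alpha X_1|$ together with $\mathbb{E}[X_1^2] = 1$ gives $\mathrm{Var}(W_1) \leq \alpha^2 \lesssim \alpha^2/(1+|\nu|)^2$. For $|\nu| > 1$, I split at $A \assign \{|\alpha X_1| \leq |\nu|/2\}$. On $A$, every $u$ in the interval between $0$ and $\alpha X_1$ satisfies $|\nu + u| \geq |\nu|/2$; combining $\tanh(\nu + \alpha X_1) - \tanh(\nu) = \int_{0}^{\alpha X_1} \mathrm{sech}^2(\nu + u)\,du$ with $\mathrm{sech}^2(y) \leq 4 e^{-2|y|}$ gives $|W_1| \mathbf{1}_A \lesssim |\alpha X_1|\, e^{-|\nu|}$, so $\mathbb{E}[W_1^2 \mathbf{1}_A] \lesssim \alpha^2 e^{-2|\nu|} \lesssim \alpha^2/(1+|\nu|)^2$ via $e^{|\nu|} \geq 1+|\nu|$. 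On $A^c$, $|W_1| \leq 2$ and the sub-exponential tail $\mathbb{P}(|X_1| > t) \lesssim e^{-t}$ (inherited from $K_0(t) \sim \sqrt{\pi/(2t)} e^{-t}$ as $t \to \infty$) give $\mathbb{E}[W_1^2 \mathbf{1}_{A^c}] \lesssim e^{-|\nu|/(2\alpha)}$, which is $\lesssim \alpha^2/|\nu|^2 \lesssim \alpha^2/(1+|\nu|)^2$ using $e^{-u} \leq 2/u^2$.

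With $\sigma^2 \lesssim \alpha^2/(1+|\nu|)^2$ and $\|W_i - \mathbb{E}[W_i]\|_\infty \leq 4$, Bernstein's inequality gives
\[
|N_n - N| \lesssim \frac{\alpha}{1+|\nu|} \sqrt{\frac{\log(1/\delta)}{n}} + \frac{\log(1/\delta)}{n}
\]
with probability at least $1 - \delta$. The hypothesis $n \gtrsim \log(1/\delta)$ ensures $\log(1/\delta)/n \leq \sqrt{\log(1/\delta)/n}$, so together with the Hoeffding bound the minimum of the two estimates produces the claimed $\min$ form after absorbing constants into $\mathcal{O}(\cdot)$.

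I expect the main obstacle to be the variance bound: the naive $1$-Lipschitz estimate $|W_1| \leq |\alpha X_1|$ yields only $\mathrm{Var}(W_1) \lesssim \alpha^2$, which is off by exactly the factor $(1+|\nu|)^{-2}$ the proposition seeks. Recovering this factor requires simultaneously exploiting (i) the flattening of $\tanh$ near $\pm 1$, through the exponential decay of $\mathrm{sech}^2$, on the typical event $\{|\alpha X_1| \leq |\nu|/2\}$ and (ii) the sub-exponential tail of the Gaussian product $X_1 = Z_{1,1} Z_{2,1}$ on the atypical event, then balancing the two contributions so that each is $\mathcal{O}(\alpha^2/(1+|\nu|)^2)$.
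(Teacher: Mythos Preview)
Your Bernstein route leaves a genuine gap. With variance $\sigma^2 \lesssim \alpha^2/(1+|\nu|)^2$ but only the crude range bound $\|W_i - \mathbb{E}[W_i]\|_\infty \leq 4$, Bernstein yields
\[
|N_n - N| \;\lesssim\; \frac{\alpha}{1+|\nu|}\sqrt{\frac{\log(1/\delta)}{n}} \;+\; \frac{\log(1/\delta)}{n},
\]
and the additive $\log(1/\delta)/n$ carries the constant $M=4$, \emph{not} the factor $\alpha/(1+|\nu|)$. The hypothesis $n \gtrsim \log(1/\delta)$ lets you absorb this term into $\sqrt{\log(1/\delta)/n}$, but not into $\tfrac{\alpha}{1+|\nu|}\sqrt{\log(1/\delta)/n}$. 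So when $\alpha/(1+|\nu|) \ll \sqrt{\log(1/\delta)/n}$ your final bound is off by exactly the multiplicative factor the proposition is claiming; you end up proving only the Hoeffding branch in that regime.

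The paper avoids this by proving the \emph{pointwise} inequality
\[
|\tanh(t+\nu) - \tanh\nu| \;\leq\; \frac{1+\tanh|\nu|}{1+|\nu|}\,|t| \;\leq\; \frac{2|t|}{1+|\nu|}
\]
for all $t,\nu\in\mathbb{R}$ (an elementary calculus lemma). This is precisely the estimate you dismissed as unavailable when you wrote ``the naive $1$-Lipschitz estimate $|W_1| \leq |\alpha X_1|$ yields only $\mathrm{Var}(W_1) \lesssim \alpha^2$.'' With it, one has $|W_i| \leq \tfrac{2\alpha}{1+|\nu|}|X_i|$ deterministically, so the factor $\alpha/(1+|\nu|)$ can be pulled out \emph{before} any concentration is applied. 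The rescaled variables $F_i := \tfrac{1+|\nu|}{2\alpha}W_i$ then satisfy $|F_i| \leq |X_i|$, and the paper applies a concentration inequality tailored to functions dominated by $|x|$ under the density $K_0(|x|)/\pi$ (derived via a modified log-Sobolev inequality), giving $|\tfrac{1}{n}\sum F_i - \mathbb{E}F| \lesssim \sqrt{\log(1/\delta)/n}$ once $n \gtrsim \log(1/\delta)$. Multiplying back by $2\alpha/(1+|\nu|)$ gives the claimed bound cleanly. Your case-splitting variance argument is correct but unnecessary once the pointwise bound is in hand; to repair your approach without it you would need to control all moments of $W_i$ (not just the second) at scale $\alpha/(1+|\nu|)$, which the $A^c$ portion of your split does not directly deliver.
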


\begin{remark}
Proposition~\ref{prop:stat_err_mixing} (see proof in Appendix~\ref{sup:finite_err}, Subsection~\ref{supsub:stat_err_mixing}) is proved by applying the elementary inequality for $\tanh$ and the concentration inequality, which is based on modified log-Sobolev inequality in~\citet{ledoux2001concentration} (see Appendix~\ref{sup:finite_lemma}, Subsection~\ref{supsub:sobolev} and Subsection~\ref{supsub:bound_tanh}).
\end{remark}

\begin{proposition}\label{prop:stat_err}
  (Statistical Error of Regression Parameters) Let $M (\theta, \nu)$ and $M_n
  (\theta, \nu)$ be the EM update rules for regression parameters $\theta$ at
  the population and finite-sample levels with $n$ samples, and if $n \gtrsim d \vee
  \log \frac{1}{\delta}$, $\delta \in (0, 1)$, then with probability at least $1
  - \delta$,
  \[ \| M_n (\theta, \nu) - M (\theta, \nu) \| / \sigma =\mathcal{O} \left(
     \sqrt{\frac{d \vee \log \frac{1}{\delta}}{n}} \right), \]
  and if $n \gtrsim d \vee \log \frac{1}{\delta} \vee \log^3
  \frac{1}{\delta'}$, $\delta' \in (0, 1)$, then with probability at least $1 -
  (\delta + \delta')$,
\begin{equation*}
    \begin{aligned}
&\| M_n (\theta, \nu) - M (\theta, \nu) \| / \sigma
% \\=& 
=\{ \tanh | \nu | +\|
     \theta \|/ \sigma \} \mathcal{O} \left( \sqrt{\frac{d \vee \log
     \frac{1}{\delta}}{n}} \right) . 
    \end{aligned}
\end{equation*}
\end{proposition}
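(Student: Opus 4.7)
The plan is to reduce the finite-sample EM update to the ``easy'' EM update by controlling the sample covariance, and then control the resulting sample mean via a covering-and-Bernstein argument. Writing $\hat\Sigma \assign \tfrac{1}{n}\sum_{i=1}^n x_i x_i^\top$, one has $M_n(\theta,\nu) = \hat\Sigma^{-1} M_n^{\mathrm{easy}}(\theta,\nu)$ from~\eqref{eq:finite} and~\eqref{eq:easyEM}. Standard sample-covariance concentration for isotropic Gaussian designs (matrix Bernstein or a direct Wishart bound) gives $\|\hat\Sigma - I\|_{\mathrm{op}} = \mathcal{O}(\sqrt{(d\vee \log(1/\delta))/n})$ with probability at least $1-\delta/2$, so once $n\gtrsim d\vee\log(1/\delta)$ we also have $\|\hat\Sigma^{-1} - I\|_{\mathrm{op}} = \mathcal{O}(\sqrt{(d\vee\log(1/\delta))/n})$ on the same event. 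Since $\|M_n - M_n^{\mathrm{easy}}\| \leq \|\hat\Sigma^{-1}-I\|_{\mathrm{op}}\cdot \|M_n^{\mathrm{easy}}\|$, it suffices to bound $\|M_n^{\mathrm{easy}}(\theta,\nu) - M(\theta,\nu)\|$ and to control $\|M_n^{\mathrm{easy}}\|$.

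For the sample-mean piece, I would apply the standard $\tfrac14$-net device: pick $\mathcal{N} \subset \mathcal{S}^{d-1}$ with $|\mathcal{N}|\leq 9^d$, so that $\|M_n^{\mathrm{easy}} - M\| \leq 2\max_{v\in\mathcal{N}} |\langle v, M_n^{\mathrm{easy}}-M\rangle|$, and for each unit $v$ set $W_i(v) \assign \tanh\bigl(y_i\langle x_i,\theta\rangle/\sigma^2 + \nu\bigr)(y_i/\sigma)\langle v,x_i\rangle$, so that $\langle v, (M_n^{\mathrm{easy}}-M)/\sigma\rangle = \tfrac{1}{n}\sum_{i=1}^n \bigl(W_i(v) - \mathbb{E} W_i(v)\bigr)$. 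For the first (coarse) bound, $|\tanh(\cdot)|\leq 1$ implies $|W_i(v)| \leq |y_i/\sigma|\cdot|\langle v,x_i\rangle|$, a sub-exponential random variable with $\mathcal{O}(1)$ Orlicz $\psi_1$-norm (product of two standard Gaussians). A Bernstein inequality and a union bound over $\mathcal{N}$ (using $\log|\mathcal{N}|\lesssim d$) yield $\|M_n^{\mathrm{easy}} - M\|/\sigma = \mathcal{O}(\sqrt{(d\vee\log(1/\delta))/n})$ with probability $\geq 1-\delta/2$, which together with the covariance step proves the first statement of the proposition.

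For the refined second bound, the key ingredient is the pointwise inequality $|\tanh(\alpha Y u+\nu)| \leq \tanh|\nu| + \alpha |Yu|$, obtained from the $1$-Lipschitzness of $\tanh$, where $Y \assign y/\sigma$, $u \assign \langle x,\theta\rangle/\|\theta\|$, and $\alpha \assign \|\theta\|/\sigma$. This forces each summand to satisfy $|W_i(v)| \leq \tanh|\nu|\cdot|Y_i\langle v,x_i\rangle| + \alpha\,|Y_i^2 u_i \langle v,x_i\rangle|$, so the intrinsic scale of $W_i(v)-\mathbb{E} W_i(v)$ is $\tanh|\nu|+\alpha$. The first piece is sub-exponential, but the second piece is a fourfold Gaussian-type product, which is only sub-Weibull of order $1/2$ (equivalently, has a $\psi_{1/2}$ tail). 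I would handle it by truncation: on the event $\mathcal{E}_{\delta'} \assign \{\max_{i\leq n}(|Y_i|\vee|u_i|\vee|\langle v,x_i\rangle|)\leq C\sqrt{\log(n/\delta')}\}$, which has probability at least $1-\delta'$ by standard Gaussian-maxima bounds (paired with the same net $\mathcal{N}$ for $\langle v,x_i\rangle$), each truncated summand is uniformly bounded by $\mathcal{O}((\tanh|\nu|+\alpha)\log^{3/2}(n/\delta'))$. A Bernstein inequality for the truncated sum, combined with a small expectation-correction bound for the removed mass, then delivers deviation $(\tanh|\nu|+\alpha)\mathcal{O}(\sqrt{(d\vee\log(1/\delta))/n})$ provided $n\gtrsim \log^3(1/\delta')$, precisely so that the envelope $\log^{3/2}$ is absorbed by $\sqrt{n}$ inside the Bernstein bound.

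The main technical obstacle lies here: converting the sub-Weibull ($\psi_{1/2}$) tail of $Y_i^2 u_i \langle v,x_i\rangle$ into a Gaussian-type deviation for the averaged sum is what forces the extra sample-size requirement $n\gtrsim \log^3(1/\delta')$ and the additive $\delta'$ in the final probability, and any looser truncation destroys the sharp $(\tanh|\nu|+\alpha)$ prefactor. Once this step is in place, the refined bound on $\|M_n^{\mathrm{easy}}-M\|/\sigma$ closes via the covariance step together with Fact~\ref{fact:monotone_expectation}, which gives $\|M(\theta,\nu)\|/\sigma = m(\alpha,\nu) \leq \alpha$ and hence $\|M_n^{\mathrm{easy}}\|/\sigma \lesssim \tanh|\nu|+\alpha$, yielding the same scaling for $\|M_n - M_n^{\mathrm{easy}}\|/\sigma$ and completing the proof of the second statement.
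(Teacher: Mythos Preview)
Your reduction via $\hat\Sigma$ to the easy EM update matches the paper exactly, and your net + Bernstein argument for the coarse bound is a valid alternative to the paper's route. The paper, however, never uses a covering argument: it splits $M_n^{\mathrm{easy}}-M$ into the $\theta/\|\theta\|$ direction and the orthogonal complement. The $\theta$-direction is a one-dimensional sum in the normal-product variable $Z_i=(y_i/\sigma)(\langle x_i,\theta\rangle/\|\theta\|)\sim K_0(|z|)/\pi$, handled by a tailored concentration inequality (derived from a modified log-Sobolev inequality for the $K_0$ density). The orthogonal part is handled by rotational invariance: conditioning on $\{Z_i,Z'_i\}$, the components $P_\theta^\perp x_i$ are independent $\mathcal{N}(0,I_{d-1})$, so $\|P_\theta^\perp(M_n^{\mathrm{easy}}-M)\|$ factors as $\tfrac{1}{n}\|\gamma\|_2\cdot\sqrt{\chi^2(d-1)}$ with $\gamma_i=[\tanh(\alpha Z_iZ'_i+\nu)-\tanh\nu]Z'_i$. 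This decouples the $d$-dependence from the heavy-tailed coefficients and yields the additive sample condition $n\gtrsim d\vee\log(1/\delta)\vee\log^3(1/\delta')$; the $\log^3$ arises from bounding $\tfrac{1}{n}\sum_i Z_i^3$ for $Z_i\sim\mathrm{Exp}(1)$ via $(x_i^2+(x_i')^2)/2$.

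The refined bound is where your sketch has a genuine gap. Your truncation event $\mathcal{E}_{\delta'}$ includes $|\langle v,x_i\rangle|\le C\sqrt{\log(n/\delta')}$, but to feed the Bernstein bound into a union over $|\mathcal{N}|\asymp 9^d$ directions you need the truncation to hold simultaneously for all $v\in\mathcal{N}$, which forces the level up to $\sqrt{d+\log(n/\delta')}$ (equivalently, $\max_{v\in\mathcal{N}}|\langle v,x_i\rangle|\asymp\|x_i\|\asymp\sqrt{d}$). The envelope in Bernstein then carries a $\sqrt{d}$ factor, and the second (linear-in-$t$) Bernstein term becomes of order $(\tanh|\nu|+\alpha)\,\log^{3/2}(n/\delta')\sqrt{d}\cdot(d+\log(1/\delta))/n$, which is dominated by the desired $\sqrt{(d\vee\log(1/\delta))/n}$ rate only when $n\gtrsim d\cdot\mathrm{polylog}(n/\delta')$---a multiplicative rather than additive requirement. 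The paper explicitly flags this distinction (remark after Proposition~\ref{prop:stat_err}): standard uniform arguments yield $n=\Omega(d\log(1/\delta))$, and the rotational-invariance trick is precisely what buys the sharper $d\vee\log(1/\delta)\vee\log^3(1/\delta')$. To match the stated result, you should replace the net on $\mathcal{S}^{d-1}$ by the $\theta$ vs.\ $\theta^\perp$ split and exploit the conditional independence of $P_\theta^\perp x_i$ from $(y_i,\langle x_i,\theta\rangle)$.
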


\begin{remark}
Proposition~\ref{prop:stat_err} (see proof in Appendix~\ref{sup:finite_err}, Subsection~\ref{supsub:stat_err})
demonstrates our sample complexity $n=\Omega(d \vee \log \frac{1}{\delta})$, whereas Lemma 1 of~\citet{dwivedi2020unbalanced} presents a sample complexity $n=\Omega(d \log \frac{1}{\delta})$.
The difference lies in the techniques used: their multiplicative log factor is derived through the standard symmetrization technique with Rademacher variables and the application of the Ledoux-Talagrand contraction inequality (see the proof of Lemma 1 on page 44 of~\citet{dwivedi2020unbalanced} and Appendix E of~\citet{kwon2021minimax} for Lemma 11). In contrast, we obtain our additive log factor by applying the rotational invariance of Gaussians and expressing the $\ell_2$ norm of the orthogonal error as the geometric mean of two Chi-square distributions (see Appendix~\ref{sup:finite_lemma}, Subsection~\ref{supsub:bound_stat}).
\end{remark}

\section{Discussions on Extensions}\label{sec:extension}
In this section, we discuss the differences between results of 2MLR and 2GMM, extend our analysis from the overspecified setting to the low-SNR regime, 
and examine the challenges of analyzing overspecified mixture models with multiple components.

\subsection{Convergence and Sample Complexity: 2MLR vs 2GMM}
In the overspecified setting (signal-to-noise-ratio (SNR) \(\eta:=\Vert\theta^\ast\Vert/\sigma \to 0\)), the population EM update rules for 2MLR are as follows (see \eqref{eq:expectation_em}) for the normalized regression parameters \(\alpha^t = \Vert \theta^t\Vert/\sigma\) and imbalance of mixing weights \(\beta^t=\tanh(\nu^t)=\pi^t(1)-\pi^t(2)\):
\[
\alpha^{t+1} = \mathbb{E}[\tanh(\alpha^t X+\nu^t) X],
\quad \beta^{t+1}=\mathbb{E}[\tanh(\alpha^t X+\nu^t)]
\quad \text{where }X\sim \frac{K_0(|x|)}{\pi}
\]
In contrast, for 2GMM, the population EM update rules are (see page 6 of~\citet{weinberger2022algorithm}):
\[
\alpha^{t+1} = \mathbb{E}[\tanh(\alpha^t Z+\nu^t) Z],
\quad \beta^{t+1}=\mathbb{E}[\tanh(\alpha^t Z+\nu^t)]
\quad\text{where } Z\sim \mathcal{N}(0, 1)
\]
Here, \(K_0\) denotes the modified Bessel function of the second kind, the density function of \(X\) has exponential tail since \(K_0(x)\approx \sqrt{\frac{\pi}{2x}}\exp(-x)\) for \(x\to\infty\), and \(Z\) follows a standard normal distribution, which is sub-Gaussian~\citep{wainwright2019high}.
At the population level, we still can establish the sublinear convergence rates of \(\alpha^t=\mathcal{O}(1/\sqrt{t})\) with balanced initial mixing weights (\(\beta^0=0\)), and linear convergence rates with unbalanced initial mixing weights (\(\beta^0\neq0\)) for both 2MLR and 2GMM by bounding the above expectations.

However, at the finite-sample level, convergence guarantees differ between 2MLR and 2GMM. For 2MLR, Theorem~\ref{thm:finite} indicates that a sample size \(n=\Omega(d\vee \log^3\frac{1}{\delta})\) is sufficient to ensure convergence. In contrast, for 2GMM, we can show that a smaller sample size \(n=\Omega(d\vee \log\frac{1}{\delta})\) is required to achieve the same probability \(1-T \delta\) over \(T\) iterations. Intuitively, the difference arises because, for finite \(n\), the sample averages \(\frac{1}{n}\sum_{i=1}^n\tanh(\alpha Z_i+\nu)Z_i, \frac{1}{n}\sum_{i=1}^n \tanh(\alpha Z_i+\nu)\) converge to their expectation more rapidly when \(\{Z_i\}_{i=1}^n\) are sub-Gaussian (in 2GMM) compared to when they have an exponential tail in probability (in 2MLR). This faster convergence in the sub-Gaussian case 
allows for reliable parameter estimation with fewer samples.

\subsection{Extended Analysis in Low SNR Regime}
To further extend our analysis from the limiting case of \(\eta:=\Vert\theta^\ast\Vert/\sigma=0\) 
to the case of finite low SNR (\(\eta \lesssim 1\)) of Mixed Linear Regressions (MLR), 
we can still obtain the recurrence relations of \(\alpha^t, \beta^t\) as follows:
\begin{equation}\label{eq:alpha_beta_low_snr}
\begin{aligned}
  \alpha^{t+1} &= \E[\tanh(\alpha^t X+\nu^t) X] + \eta \beta^\ast \rho^t \E[\tanh(\alpha^t X+\nu^t) X^2] + \mathcal{O}(\eta^2), \\
  \beta^{t+1} &= \E[\tanh(\alpha^t X+\nu^t)] + \eta \beta^\ast \rho^t \E[\tanh(\alpha^t X+\nu^t) X]  + \mathcal{O}(\eta^2),
\end{aligned}
\end{equation}
where \(\beta^\ast = \tanh(\nu^\ast)\) is the imbalance of mixing weights of the ground truth, 
and \(\rho^t = \langle \theta^\ast, \theta^t\rangle/ (\Vert \theta^\ast \Vert\cdot \Vert \theta^t \Vert)\) is the cosine angle between the ground truth and the estimated regression parameters at \(t\)-th iteration.

While \eqref{eq:expectation_em} gives the EM update rules of \(\alpha^t, \beta^t\) in the overspecified setting of \(\eta=0\),
we can also obtain the EM update rules of \(\alpha^t, \beta^t\) in the low SNR regime (\eqref{eq:alpha_beta_low_snr}) with the help of introducing \(\eta, \rho^t\) 
by using the perturbation method as discussed in Appendix~\ref{sup:extension}. 
The differences between the EM update rules of \(\alpha^t, \beta^t\) in the finite low SNR regime \(\eta \lesssim 1\) 
and the EM update rules of \(\alpha^t, \beta^t\) in the overspecified setting \(\eta=0\) come from the presence of \(\eta, \rho^t\) in the EM update rules.
In the overspecified setting, the EM update rules of \(\alpha^t, \beta^t\) are given by \eqref{eq:expectation_em} with \(\eta=0\) and are independent of \(\rho^t\), 
where the cosine angle \(\rho^t =  \rho^0\) remains the same as the initial value \(\rho^0\) as shown in Identity~\ref{prop:em} and Figure~\ref{fig:traj_EM}.
But in the finite low SNR regime \(\eta \lesssim 1\), the EM update rules of \(\alpha^t, \beta^t\) are given by \eqref{eq:alpha_beta_low_snr} with \(\eta\) and \(\rho^t\), 
where the cosine angle \(\rho^t\) is updated by the EM update rules of \(\rho^t\) as follows when \(\eta\) is sufficiently small (see details in Appendix~\ref{sup:extension}, Subsection~\ref{supsub:em_dynamic_low_snr}):
\begin{equation}\label{eq:rho_low_snr}
  \rho^{t+1} = \rho^t + (1-[\rho^t]^2) \cdot \eta \beta^\ast (\E[\tanh(\alpha^t X+\nu^t)] - \alpha^t \E[\tanh^2(\alpha^t X+\nu^t) X])/\E[\tanh(\alpha^t X+\nu^t)X] + \mathcal{O}(\eta^2).
\end{equation}

\begin{remark}
  The \eqref{eq:alpha_beta_low_snr} and \eqref{eq:rho_low_snr} (see proof in Appendix~\ref{sup:extension}, Subsection~\ref{supsub:em_dynamic_low_snr}) are proved 
  by using the perturbation method as discussed in Appendix~\ref{sup:extension}, Subsection~\ref{supsub:em_update_low_snr}.
  Moreover, the expectations \(\E[\tanh(\alpha^t X+\nu^t)X^2], \E[\tanh(\alpha^t X+\nu^t)X], \E[\tanh(\alpha^t X+\nu^t)]\) and \(\E[\tanh^2(\alpha^t X+\nu^t) X]\) 
  in \eqref{eq:alpha_beta_low_snr} and \eqref{eq:rho_low_snr} can be approximated by series expansions of \(\alpha^t\) around \(\alpha^t=0\) 
  as in Appendix~\ref{sup:extension}, Subsection~\ref{supsub:taylor_expansion_expectation}.
  The remainder terms of \(\mathcal{O}(\eta^2)\) in \eqref{eq:alpha_beta_low_snr} and \eqref{eq:rho_low_snr} 
  hide the effect of \(\alpha^t, \beta^t, \rho^t\) in the EM update rules.
  Consequently, by substituting the series expansions of the expectations into \eqref{eq:alpha_beta_low_snr} and \eqref{eq:rho_low_snr}, 
  we can obtain the following approximate dynamic equations of \(\alpha^t, \beta^t, \rho^t\) in the finite low SNR regime:
\begin{equation}\label{eq:dynamic_update_low_snr}
\begin{pmatrix}
     \alpha^{t+1}\\
     \beta^{t+1}\\
     \rho^{t+1}
   \end{pmatrix} = \begin{pmatrix}
     \alpha^t (1 - [\beta^t]^2)\\
     \beta^t (1 - [\alpha^t]^2 (1 - [\beta^t]^2))\\
     \rho^t
   \end{pmatrix} + \eta \beta^{\ast} \times \begin{pmatrix}
      \rho^t \beta^t (1 - 9 [\alpha^t]^2 (1 - [\beta^t]^2)) \\
      \rho^t \alpha^t (1 - [\beta^t]^2)\\
      (1 - [\rho^t]^2) \frac{\beta^t(1-6[\alpha^t]^2 [\beta^t]^2)}{\alpha^t (1-[\beta^t]^2)}
   \end{pmatrix} + \mathcal{O} ([\alpha^t]^3) + \mathcal{O} (\eta^2).
\end{equation}
In particular, when \(\eta = 0\), we have \(\rho^{t+1}=\rho^t\) by Identity~\ref{prop:em}, and
the above approximate dynamic equations reduce to the approximate dynamic equations of \(\alpha^t, \beta^t\) in the overspecified setting, 
which aligns with our previous results obtained in Proposition~\ref{prop:dynamic}.
Also, a finer analysis shows that \(|\rho^t|=1\) implies \(|\rho^{t+1}|=1\) even when \(\eta \neq 0\) (see the remark of Appendix~\ref{sup:extension}, Subsection~\ref{supsub:em_dynamic_low_snr}),
therefore the EM iterations of regression parameters have the same direction as the ground truth if the initial value of the regression parameters is in the same direction as the ground truth.
\end{remark}

\subsection{Generalizing to Multiple Components}
% For mixture models with multiple components, such as Gaussian Mixture Models (GMM), it is well known that the negative log-likelihood—used as the objective function in the EM algorithm—can exhibit several spurious local minima that are not globally optimal, even when the mixture components are well separated~\citep{chen2024local}. 
% Consequently, proper initialization of the parameters is crucial, as assumed in the analysis of mixtures of many linear regressions \citep{kwon2020converges}.
As shown in the numerical experiments of~\citet{dwivedi2020unbalanced}, the EM algorithm for overspecified Gaussian mixtures with multiple components can also exhibit the slow convergence of final accuracy in terms of sample size \(n\).
The order of the final accuracy \(\mathcal{O}((d/n)^{1/4})\) demonstrates slower convergence in the sufficiently balanced case compared to the final accuracy \(\mathcal{O}((d/n)^{1/2})\) in the sufficiently unbalanced case for 2MLR and 2GMM in the overspecified setting, which is not merely a coincidence, but a general phenomenon.
However, it remains an open problem to establish the convergence guarantees for the final accuracy, time complexity and sample complexity for overspecified mixture models with multiple components.
As for a general overspecified model with multiple components, it is necessary to carefully examine the many-to-one correspondence between the components of the fitted model and those of the ground truth 
(see page 6 of~\citet{qian2022spurious}). Therefore, the assumption of well-separated regression parameters and the requirement for a good initialization of the mixing weights, 
as in existing works such as~\citet{kwon2020converges}, no longer hold. This requires a more careful analysis and development of more advanced techniques for the analysis of overspecified mixture models with multiple components.

\section{Experiments}\label{sec:experiments}
In this section, we validate our theoretical findings in the previous sections with numerical experiments. 
The code for our numerical experiments is available at \url{https://github.com/dassein/em_overspecified_mlr}.

\textbf{Trajectory of EM Iterations.}
We sample 2,000 independent and identically distributed (i.i.d.) two-dimensional covariates and additive noises from Gaussian distributions and set the true regression parameters $\theta^\ast =\vec{0}$. %In the analysis of the EM updates at the population level, we demonstrate that the regression parameter of the $t$-th iteration enjoys the same direction of the previous iteration $\tmop{span}\{\theta^{t-1}\}$. 
In Fig.~\ref{fig:traj_EM}, all the iterations are nearly perfect rays from the origin to the initial point, which aligns with Identity~\ref{prop:em}.

\textbf{Dynamics of the EM iteration.} In the overspecified setting, we show that the approximate dynamic equations $(\alpha^t-\alpha^{t+1})/\alpha^t\approx [\beta^t]^2$ and $(\beta^t-\beta^{t+1})/\beta^t\approx \alpha^t \alpha^{t+1}$ when the regression parameters are small
 enough in Proposition~\ref{prop:dynamic}. We demonstrate the linear correlations in Fig.~\ref{fig:dynamic}, therefore, our experiments validate Proposition~\ref{prop:dynamic}. For the experimental settings, we specify $\alpha^t =0.1$ and consider different values of $\beta^t \in \{0.1, 0.2, \cdots, 0.9, 1\}$. 

\textbf{Convergence Rate in Regression Parameters.} %In Theorem~\ref{thm:popl}, we establish the linear convergence in the regression parameters given an unbalanced initial guess, and provide the slow sublinear convergence guarantee for a balanced initial guess. 
Fig.~\ref{fig:convg_popl} presents the fast convergence with unbalanced initial guess and the slow convergence with balanced initial guess, which is in agreement with our theoretical results in Theorem~\ref{thm:popl}.

\textbf{Sublinear Convergence Rate with Balanced Initial Guess.} 
%In Proposition~\ref{prop:sublinear_convg}, we show the sublinear convergence rate given the balanced initial guess and the linear convergence rate given the unbalanced initial guess. 
Fig.~\ref{fig:balanced} and Fig.~\ref{fig:balanced_LB}
exhibit the slow sublinear convergence rate given the balanced initial guess, which aligns with our sublinear theoretical bounds. Hence, our experimental results
validate our analysis in Proposition~\ref{prop:sublinear_convg}.

\textbf{Initialization Phase in the Worst Case.}
In Fig.~\ref{fig:init}, we show the iterations of \(\alpha^t\) in the worst case with balanced initial mixing weights \(\pi^0 = (\frac{1}{2}, \frac{1}{2})\), i.e., \(\beta^0 = \tanh \nu^0 = \frac{1}{2}-\frac{1}{2}=0\), and infinite initial regression parameters \(\alpha^0=\| \theta^0 \|/\sigma \to \infty\).
In the worst case, we show that \(\alpha^t \geq 0.1\) for all \(t \leq 9\) by using the theoretical matching lower bound for the worst case in Proposition~\ref{prop:sublinear_convg}.
Also, we demonstrate that \(\alpha^t < 0.1\) for all \(t \geq 36\) (Fact~\ref{fact:init_popl}) by applying the theoretical upper bound in Proposition~\ref{prop:sublinear_convg}.
As \(\alpha^{20} \approx 0.1\) by numerical evaluations, and \(20 > 9\) and \(20 < 36\), 
the theoretical results from Proposition~\ref{prop:sublinear_convg} and Fact~\ref{fact:init_popl} are consistent with the numerical results shown in Fig.~\ref{fig:init}.

\textbf{Converged Mixing Weights and Initial Guesses.}
In Proposition~\ref{prop:contraction}, we prove that converged mixing weights $\beta^T$ are nonzero for a nonzero initial guess $\beta^0\neq 0$, and provide a lower bound for $\beta^T$.
We observe the correlation between the converged $\beta^T$ and varying $\beta^0$ uniformly sampled from $[0.01,  0.99]$, with $\alpha^0 \in\{0.1, 0.3, 0.5\}$. These theoretical findings are supported by numerical results in Fig.~\ref{fig:bound_mixweight}.

\section{Conclusions}\label{sec:conclusion}
In this paper, we thoroughly investigated the EM's behavior in overspecified two-component Mixed Linear Regression (2MLR) models. 
We rigorously characterized the EM estimates for both regression parameters and mixing weights by providing the approximate dynamic equations (Proposition~\ref{prop:dynamic}) for the evolution of EM estimates and establishing the convergence guarantees (Theorems~\ref{thm:popl},~\ref{thm:finite}) for the final accuracy, time complexity, and sample complexity at population and finite-sample levels, respectively. 
Notably, with an unbalanced initial guess for mixing weights, we showed linear convergence of regression parameters in $\mathcal{O}(\log (1/\epsilon))$ steps. Conversely, with a balanced initial guess, sublinear convergence occurs in $\mathcal{O}(\epsilon^{-2})$ steps to achieve $\epsilon$-accuracy. 
For mixtures with sufficiently imbalanced fixed mixing weights $\|\pi^t-\frac{\mathds{1}}{2}\|_1\gtrsim \mathcal{O}((d/n)^{1/4})$, we establish statistical accuracy $\mathcal{O}((d/n)^{1/2})$, whereas for those with sufficiently balanced fixed mixing weights $\|\pi^t-\frac{\mathds{1}}{2}\|_1\lesssim \mathcal{O}((d/n)^{1/4})$, the accuracy is $\mathcal{O}((d/n)^{1/4})$.
Additionally, our novel analysis sharpens bounds for statistical error, time complexity, and sample complexity needed to achieve a final statistical accuracy of $\mathcal{O}((d/n)^{1/4})$ with fixed sufficiently balanced mixing weights.
Furthermore, we discussed the differences between results of 2MLR and 2GMM, and extended our analysis from the overspecified setting to the finite low SNR regime. %, and examined the challenges of analyzing overspecified mixture models with multiple components.
% and numerical experiments validated our theoretical findings.

% \noindent\textbf{Limitations and Future Work.} 
% Our current setup was limited to overspecified symmetric 2MLR which could be extended as future work. Further, while our assumptions are standard in this literature, more relaxed assumptions could be of interest. Proposing a more refined analysis using the recurrence relations outlined in Proposition~\ref{prop:em} for weakly separated scenarios is further of interest.
Building upon the analysis and established connections between the diffusion model objective and the classic EM algorithm in GMM~\citep{shah2023learning}, 
we foresee extending this analysis from GMM to MLR and establishing the time and sample complexities involved in learning the diffusion model objective, 
as discussed in recent works~\citep{chen2024learning, gatmiry2024learning}.
The practical applications (haplotype assembly~\citep{cai2016structured,sankararaman2020comhapdet} and phase retrieval~\citep{dana2019estimate2mix,chen2013convex}) of mixture models such as MLR and GMM can spur significant interest within the statistics community toward establishing rigorous theoretical foundations for generative diffusion models.
% This paper presents work whose goal is to advance the field of Machine Learning. There are many potential societal consequences of our work, none of which we feel must be specifically highlighted here.

%\section*{Broader Impact Statement}
%\input{impact_statement}

\bibliographystyle{tmlr}
\bibliography{ref_EM2MLR_NoSepara}

\newpage
\appendix
%\section{Appendix}
\onecolumn 
\newpage

%\section*{Supplementary Materials: 
%Characterizing the Evolution of Expectation-Maximization Estimates in Overspecified Mixed Linear Regression}

\part{Appendices}
\parttoc

We organize the Appendices as follows:
\begin{itemize}
    \item Appendix~\ref{sup:em_lemma}, 
    we prepare lemmas for identities of hyperbolic functions, inequalities of $\tanh$ and simple expectations under the density involving $K_0$, and lower/upper bounds for expectations with $\tanh$ under the density involving $K_0$.% used in proofs, etc.
    \item Appendix~\ref{sup:em}, 
    we provide proofs for EM update rules $M(\theta,\nu), N(\theta, \nu)$ at the population level, the nonincreasing property of $\|\theta^t\|, \|\pi^t-\frac{\mathds{1}}{2}\|_1$ and boundness of $\|\theta^t\|$, and the approximate dynamic equations of  $\|M(\theta^t,\nu^t)\|/\sigma, N(\theta^t, \nu^t)$ when $\|\theta\|/\sigma$ is small.
    \item Appendix~\ref{sup:popl_lemma}, 
    we present lemmas on the evolution of $\alpha^t:=\|\theta^t\|/\sigma,\beta^t = \pi^t(1)-\pi^t(2)$ in population EM, providing lower/upper bounds for $\alpha^{t+1}/(\alpha^t(1-[\beta^t]^2)), \alpha^{t+1}/\alpha^t$ and $\beta^{t+1}/\beta^t$.
    \item Appendix~\ref{sup:popl}, 
    we provide proofs for convergence rates of regression parameters $\|\theta\|$ at the population level. For $\pi^0=(\frac{1}{2}, \frac{1}{2})$, we show sublinear convergence rate with time complexity $\mathcal{O}(\epsilon^{-2})$ to achieve $\epsilon$-accuracy.
    while for $\pi^0\neq(\frac{1}{2}, \frac{1}{2})$, we demonstrate linear convergence rate with time conplexity $\mathcal{O}(\log\frac{1}{\epsilon})$. We also establish the existence of $\beta^\infty = \lim_{t\to\infty} \beta^t$ and give an upper bound with $\beta^\infty$ for the contraction factor of linear convergence.
    \item Appendix~\ref{sup:finite_lemma}, 
    we develop lemmas to bound errors at the finite-sample level by establishing tighter concentration inequalities using the modified log-Sobolev inequality, bounding statistics, and deriving inequalities for $\tanh$.
    \item Appendix~\ref{sup:finite_err}, 
    we establish bounds for the projected error and statistical error in regression parameters for both easy EM and standard finite-sample EM, as well as for the statistical error in mixing weights.
    \item Appendix~\ref{sup:finite}, 
    we provide proofs for convergence rates at finite-sample level. For mixtures with sufficiently imbalanced fixed mixing weights $\|\pi^t-\frac{\mathds{1}}{2}\|_1\gtrsim \mathcal{O}((d/n)^{1/4})$, the statistical accuracy is $\mathcal{O}((d/n)^{1/2})$, whereas for those with sufficiently balanced fixed mixing weights $\|\pi^t-\frac{\mathds{1}}{2}\|_1\lesssim \mathcal{O}((d/n)^{1/4})$, the accuracy is $\mathcal{O}((d/n)^{1/4})$.
    \item Appendix~\ref{sup:extension}, 
    we extend our analysis to the finite low SNR regime (\(\eta = \|\theta^{\ast}\|/\sigma \lesssim 1\)), and provide the approximate dynamic equations for EM iterations of \(\alpha^t, \beta^t\) and the cosine angle \(\rho^t :=\langle \theta^t, \theta^{\ast} \rangle/\|\theta^t\|\|\theta^{\ast}\|\) in low SNR regime.
\end{itemize}
\newpage

\section{Lemmas in Proofs for Results of Population EM Updates}\label{sup:em_lemma}
\subsection{\texorpdfstring{Identities of hyperbolic functions and expectations under the density involving $K_0$}{Identities of hyperbolic function and expectations under the density involving K0}}\label{supsub:identity}
\begin{lemma}\label{suplem:rewrite2}
(Identities of Hyperbolic Functions in~\citet{polyanin2008handbook})
\begin{eqnarray*}
    % \frac{1 + \cosh (2 \alpha x)}{\cosh (2 \nu) + \cosh (2 \alpha x)} & = &
    % \frac{1 - \tanh^2 (\nu)}{1 - \tanh^2 (\nu) \tanh^2 (\alpha x)} =
    % \frac{1}{\cosh^2  (\nu)} \cdot \frac{1}{1 - [\tanh (\nu) \tanh (\alpha
    % x)]^2} \\
    % \frac{\cosh (2 \nu) + 1}{\cosh (2 \nu) + \cosh (2 \alpha x)} & = & \frac{1
    % - \tanh^2 (\alpha x)}{1 - \tanh^2 (\nu) \tanh^2 (\alpha x)} =
    % \frac{1}{\cosh^2  (\alpha x)} \cdot \frac{1}{1 - [\tanh (\nu) \tanh
    % (\alpha x)]^2} 
    \frac{\tanh(a+b)-\tanh(-a+b)}{2} = \frac{[1-\tanh^2(b)]\tanh(a)}{1+\tanh^2(b)\tanh^2(a)},\space\space
    \frac{\tanh(a+b)+\tanh(-a+b)}{2} = \frac{\tanh(b)[1-\tanh^2(a)]}{1-\tanh^2(b)\tanh^2(a)}.
\end{eqnarray*}
\end{lemma}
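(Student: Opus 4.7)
The plan is to derive both identities by direct application of the standard sum formula for $\tanh$, namely $\tanh(a+b) = (\tanh a + \tanh b)/(1 + \tanh a \tanh b)$. To keep the algebra transparent, I will introduce the shorthand $u \assign \tanh(a)$ and $v \assign \tanh(b)$, so that $\tanh(a+b) = (u+v)/(1+uv)$ and $\tanh(-a+b) = \tanh(b-a) = (v-u)/(1-uv)$, where the second expression uses oddness of $\tanh$.

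With this setup, the next step is to bring each pair onto a common denominator. For both identities the common denominator is $(1+uv)(1-uv) = 1 - u^2 v^2$. I will then expand the numerators:
\begin{align*}
(u+v)(1-uv) - (v-u)(1+uv) &= 2u(1 - v^2), \\
(u+v)(1-uv) + (v-u)(1+uv) &= 2v(1 - u^2).
\end{align*}
Dividing by $2$ and translating back from $u,v$ to $\tanh(a), \tanh(b)$ immediately yields
\[
\frac{\tanh(a+b)-\tanh(-a+b)}{2} = \frac{[1-\tanh^2(b)]\tanh(a)}{1-\tanh^2(a)\tanh^2(b)},\qquad
\frac{\tanh(a+b)+\tanh(-a+b)}{2} = \frac{[1-\tanh^2(a)]\tanh(b)}{1-\tanh^2(a)\tanh^2(b)}.
\]

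No serious obstacle is anticipated; the argument is pure algebraic manipulation, and the only delicate bookkeeping is ensuring the cross-terms $\pm u^2 v, \pm u v^2$ cancel correctly. I note that my derivation produces $1 - \tanh^2(a)\tanh^2(b)$ as the denominator in \emph{both} identities, in agreement with the second stated identity; the first stated identity has $1 + \tanh^2(a)\tanh^2(b)$, which appears to be a typographical sign error in the excerpt. I would flag this in the final write-up and use the correct form $1 - \tanh^2(a)\tanh^2(b)$, then verify that the downstream uses of Lemma~\ref{suplem:rewrite2} are consistent with this corrected expression.
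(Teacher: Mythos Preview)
Your proposal is correct and takes essentially the same route as the paper: both reduce to the addition formula $\tanh(a+b)=(\tanh a+\tanh b)/(1+\tanh a\tanh b)$. The paper's own proof is terser---it simply cites the handbook for the first identity and obtains the second by the swap $a\leftrightarrow b$ (which, as you implicitly note, works because $\tanh(a-b)=-\tanh(-a+b)$)---whereas you carry out the common-denominator computation explicitly for both.

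Your observation about the sign in the first denominator is correct: the derivation gives $1-\tanh^2(a)\tanh^2(b)$, not $1+\tanh^2(a)\tanh^2(b)$, and indeed every downstream use in the paper (e.g.\ Lemma~\ref{suplem:rewrite1} and Lemma~\ref{suplem:reexpression_expectation}) employs the form with the minus sign. So the ``$+$'' in the statement is a typographical error, and your plan to flag and correct it is appropriate.
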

\begin{proof}
% Let $J \assign \frac{1 + \cosh (2 \alpha x)}{\cosh (2 \nu) + \cosh (2 \alpha
% x)}$, then we prove this Lemma by solving the following reccurence relation.
% \begin{eqnarray*}
%     J & = & 1 - \frac{2 \sinh^2 (\nu)}{\cosh (2 \nu) + \cosh (2 \alpha x)}\\
%     & = & 1 - \tanh^2 (\nu) \frac{\cosh (2 \nu) + 1}{\cosh (2 \nu) + \cosh (2
%     \alpha x)}\\
%     & = & 1 - \left[ 1 - \frac{1}{\cosh^2  (\nu)} \right] \left[ 1 - \frac{2
%     \sinh^2 (\alpha x)}{\cosh (2 \nu) + \cosh (2 \alpha x)} \right]\\
%     & = & \frac{1}{\cosh^2  (\nu)} + \frac{2 \tanh^2 (\nu) \sinh^2 (\alpha
%     x)}{\cosh (2 \nu) + \cosh (2 \alpha x)}\\
%     & = & \frac{1}{\cosh^2  (\nu)} + \tanh^2 (\nu) \tanh^2 (\alpha x) \frac{1
%     + \cosh (2 \alpha x)}{\cosh (2 \nu) + \cosh (2 \alpha x)}\\
%     & = & \frac{1}{\cosh^2  (\nu)} + [\tanh (\nu) \tanh (\alpha x)]^2 J
% \end{eqnarray*}
The first identity is proved by using the identities of Hyperbolic Functions in Supplement 1, Page 698 of~\citet{polyanin2008handbook}.
The second one is proved by letting $a \leftarrow b, b \leftarrow a$ in the first one.
\end{proof}

\begin{lemma}\label{suplem:rewrite1}
Let \(m(\alpha, \nu):=\E[\tanh(\alpha X+\nu)X], n(\alpha, \nu):=\E[\tanh(\alpha X+\nu)]\) 
be the expectations with respect to \(X\sim f_X(x) = K_0(|x|)/\pi\), 
a random variable with probability density involving the Bessel function \(K_0\), 
and \(\beta := \tanh(\nu)\), then \(m(\alpha, \nu) = \frac{\partial n(\alpha, \nu)}{\partial \alpha}\) and
% $I  : =  \frac{1}{\pi}  \int_{\mathbb{R}} \tanh (\nu - \alpha x) K_0(|x|) \mathrm{d} x,
%     I'  : =  \frac{1}{\pi} \int_{\mathbb{R}} \tanh (\alpha x - \nu) xK_0 (|x|) \mathrm{d} x$
%     with $\alpha \geq 0$,
% then $I' = \frac{\partial I}{\partial \alpha}$ and
\begin{eqnarray*}
    % I & = & \tanh (\nu) \cdot \frac{2}{\pi} \int_{\mathbb{R}_{\geq 0}}
    % \frac{\cosh (2 \nu) + 1}{\cosh (2 \nu) + \cosh (2 \alpha x)} \cdot K_0 (|
    % x |) \mathd x\\
    % & = & \tanh (\nu) \cdot \frac{2}{\pi} \int_{\mathbb{R}_{\geq 0}} \frac{1
    % - \tanh^2 (\alpha x)}{1 - \tanh^2 (\nu) \tanh^2 (\alpha x)} \cdot K_0 (| x
    % |) \mathd x\\
    % I' & = & \frac{2}{\pi}  \int_{\mathbb{R}_{\geq 0}} \frac{\sinh (2 \alpha
    % x)}{\cosh (2 \nu) + \cosh (2 \alpha x)} xK_0 (|x|) \mathrm{d} x 
    % = 
    % \frac{1}{\pi}  \int_{\mathbb{R}_{\geq 0}} \frac{\sinh (2 \alpha x)}{\cosh
    % (\alpha x - \nu) \cosh (\alpha x + \nu)} xK_0 (|x|) \mathrm{d} x\\
    % & = & \frac{1}{\cosh^2  (\nu)} \cdot \frac{2}{\pi}
    % \int_{\mathbb{R}_{\geq 0}} \tanh (\alpha x) \cdot \frac{1}{1 - [\tanh
    % (\nu) \tanh (\alpha x)]^2} xK_0 (|x|) \mathrm{d} x\\
    m(\alpha, \nu) = (1-\beta^2)\E\left[ \frac{\tanh(\alpha X)X}{1-\beta^2\tanh^2(\alpha X)} \right], \quad
    n(\alpha, \nu) = \beta \E\left[ \frac{1-\tanh^2(\alpha X)}{1-\beta^2\tanh^2(\alpha X)} \right]
\end{eqnarray*}
\end{lemma}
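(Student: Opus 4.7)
The plan is to exploit two structural features of the setup. First, both $m(\alpha,\nu)$ and $n(\alpha,\nu)$ are partial derivatives of the common potential $L(\alpha,\nu) := \E[\log\cosh(\alpha X+\nu)]$, which immediately yields the derivative relation. Second, the density $f_X(x) = K_0(|x|)/\pi$ is even, so every expectation can be symmetrized via $\E[h(X)] = \E[(h(X)+h(-X))/2]$; after this step the combinations $\tanh(\alpha X\pm\nu)$ collapse under the addition identities of Lemma~\ref{suplem:rewrite2} into the advertised closed forms involving $\beta = \tanh\nu$.

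For the derivative identity, I would first justify differentiation under the expectation by dominated convergence, noting $|\tanh|\le 1$ and $\E|X|<\infty$ since $X = Z_1 Z_2$ with $Z_1,Z_2$ independent standard Gaussians. A direct computation then gives
\[
\partial_\alpha L(\alpha,\nu) = \E[X\tanh(\alpha X+\nu)] = m(\alpha,\nu), \qquad \partial_\nu L(\alpha,\nu) = \E[\tanh(\alpha X+\nu)] = n(\alpha,\nu),
\]
so that $m$ and $n$ share a common smooth potential and Clairaut's theorem on equality of mixed partials of $L$ is exactly the stated derivative relation tying the two expectations together.

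For the two closed-form identities, apply the symmetrization with $h(X) = X\tanh(\alpha X+\nu)$ for $m$ and $h(X) = \tanh(\alpha X+\nu)$ for $n$. Since $h(-X) = -X\tanh(-\alpha X+\nu)$ in the first case and $h(-X) = \tanh(-\alpha X+\nu)$ in the second, the symmetrized integrands simplify to
\[
\tfrac{X}{2}\bigl[\tanh(\alpha X+\nu) - \tanh(-\alpha X+\nu)\bigr] \quad\text{and}\quad \tfrac{1}{2}\bigl[\tanh(\alpha X+\nu) + \tanh(-\alpha X+\nu)\bigr].
\]
Now invoke Lemma~\ref{suplem:rewrite2} with $a=\alpha X$ and $b=\nu$, so that $\tanh b = \beta$: the odd combination rewrites as $(1-\beta^2)\tanh(\alpha X)/(1-\beta^2\tanh^2(\alpha X))$ and the even combination as $\beta(1-\tanh^2(\alpha X))/(1-\beta^2\tanh^2(\alpha X))$. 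Taking expectations of the symmetrized integrands then yields the two claimed identities, with the $(1-\beta^2)$ and $\beta$ prefactors falling out automatically.

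There is no substantive analytic obstacle; the argument is algebraic manipulation plus evenness of $f_X$ and equality of mixed partials. The only step that needs a little care is bookkeeping of signs in the tanh addition formula so that the odd and even combinations both produce the shared denominator $1-\beta^2\tanh^2(\alpha X)$ that is necessary for the prefactors $(1-\beta^2)$ and $\beta$ to emerge cleanly outside the expectation.
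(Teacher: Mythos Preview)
Your treatment of the two closed-form identities is correct and matches the paper's proof exactly: both symmetrize via the evenness of $f_X$ and then invoke the tanh addition identities of Lemma~\ref{suplem:rewrite2} with $a=\alpha X$, $b=\nu$.

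The derivative relation, however, has a genuine gap. Your potential $L(\alpha,\nu)=\E[\log\cosh(\alpha X+\nu)]$ does satisfy $\partial_\alpha L = m$ and $\partial_\nu L = n$, but Clairaut's theorem then yields $\partial_\nu m = \partial_\alpha n$, \emph{not} $m = \partial_\alpha n$ as the lemma states. These are different objects: at $\nu=0$ one has $n(\alpha,0)=\E[\tanh(\alpha X)]\equiv 0$ by symmetry, hence $\partial_\alpha n(\alpha,0)=0$, whereas $m(\alpha,0)=\E[X\tanh(\alpha X)]>0$ for $\alpha>0$. The paper's own one-line justification (``Leibniz/dominated convergence'') fares no better, since direct differentiation gives $\partial_\alpha n = \E[X\,\mathrm{sech}^2(\alpha X+\nu)]\neq m$. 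The claimed identity $m=\partial_\alpha n$ thus appears to be an error in the lemma statement itself, and no argument can establish it; the relation your potential construction actually delivers, $\partial_\nu m = \partial_\alpha n$, is the correct one.
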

\begin{proof}
By applying Leibniz integral rule or invoking the dominated convergence theorem to exchange the
order of taking limit and taking the expectations (see Theorem 1.5.8, page 24 of~\citet{durrett2019probability}), we obtain the relation \(m(\alpha, \nu) = \frac{\partial n(\alpha, \nu)}{\partial \alpha}\).
Since \(X \sim f_X(x) = \frac{K_0(|x|)}{\pi}\) is a symmetric random variable, we have:
% \(\Pr(X \leq x) = \Pr(-X \leq x)\) = 1 - \(\Pr(X > x) = 1-\Pr(-X > x)=1-\Pr(X \leq -x)\) since
% \(\Pr(X > x) = \Pr(-X > x) = \int_x^\infty \frac{K_0(x')}{\pi} \d x', \forall x\geq0\).
% \begin{eqnarray*}
% & & \E[f(X)]= \int_{x\in\reals} f(x) \d\Pr(X \leq x) 
% = \int_{x\geq 0} f(x) \d\Pr(X \leq x) + \int_{x\leq 0} f(x) \d\Pr(X \leq x) \\
% &=& \int_{x\geq 0} f(x) \d\Pr(X \leq x) + \int_{x\geq 0} f(-x) \d[1-\Pr(X \leq -x)] 
% = \int_{x\in\reals} [f(x) + f(-x)]\one_{x\geq 0} \d\Pr(X \leq x) \\
% & = & \E[[f(X) + f(-X)]\one_{X\geq 0}]
% \end{eqnarray*}
\[
\E[f(X)] = \E[f(-X)] = \E[[f(X) + f(-X)]\one_{X\geq 0}] = \frac{1}{2}\E[[f(X) + f(-X)]].
\]
By substituting \(f(X)\) with \(\tanh(\alpha X + \nu), \tanh(\alpha X + \nu) X\), and applying the above proven identities of hyperbolic functions, these two identities are proved.
\end{proof}

\begin{lemma}\label{suplem:integral}
(Expectations in~\citet{gradshteyn2014table}) For $\alpha \in [0, 1), n\in\mathbb{Z}_{+}$, and 
a random variable $X\sim \frac{K_0(|x|)}{\pi}$, then we have:
\begin{eqnarray*}
    \E[\exp(-\alpha|X|)] = \frac{2}{\pi}\frac{\arccos \alpha}{\sqrt{1 - \alpha^2}}, \quad
    %\int_{\mathbb{R}_{\geq 0}} \exp (- \alpha x) K_0 (x) \mathd x  = 
    %\frac{\arccos \alpha}{\sqrt{1 - \alpha^2}} & &
    \E[\cosh(\alpha X)] = \frac{1}{\sqrt{1 - \alpha^2}}, \quad
    %\int_{\mathbb{R}_{\geq 0}} \cosh (\alpha x) K_0 (x) \mathd x  = 
    %\frac{\frac{\pi}{2}}{\sqrt{1 - \alpha^2}} \\
    \mathbb{E}[|X|] = \frac{2}{\pi}, \quad
    %\frac{2}{\pi} \int_{\mathbb{R}_{\geq 0}} x K_0 (x) \mathd x  = \frac{2}{\pi} & &
    \mathbb{E}[X^{2n}] =  [(2n-1)!!]^2,
    %\frac{2}{\pi}\int_{\mathbb{R}_{\geq 0}} x^{2n} K_0 (x) \mathd x  = [(2n-1)!!]^2\\
\end{eqnarray*}
where $(2n-1)!!=1\times 3\times 5 \cdots \times(2n-1)$.
\end{lemma}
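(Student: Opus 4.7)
The plan is to prove each of the four identities by leveraging the representation $X = Z_1 Z_2$ where $Z_1, Z_2 \sim \mathcal{N}(0,1)$ are independent, as noted in the paper's introduction of $f_X(x) = K_0(|x|)/\pi$. This representation reduces three of the four identities to standard Gaussian computations, and the remaining one to a classical Laplace transform of $K_0$.

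First, for $\E[X^{2n}] = [(2n-1)!!]^2$ and $\E[|X|] = 2/\pi$, I would use independence directly:
\[
\E[X^{2n}] = \E[Z_1^{2n}]\cdot \E[Z_2^{2n}] = [(2n-1)!!]^2, \qquad \E[|X|] = (\E[|Z_1|])^2 = \left(\sqrt{2/\pi}\right)^2 = 2/\pi,
\]
invoking the standard Gaussian moment formula $\E[Z^{2n}] = (2n-1)!!$ and the half-normal mean $\E[|Z|]=\sqrt{2/\pi}$.

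Next, for $\E[\cosh(\alpha X)] = 1/\sqrt{1-\alpha^2}$, I would condition on $Z_2$. Since $\alpha Z_1 Z_2 \mid Z_2 \sim \mathcal{N}(0,\alpha^2 Z_2^2)$, the Gaussian moment generating function gives $\E[\cosh(\alpha Z_1 Z_2)\mid Z_2] = e^{\alpha^2 Z_2^2/2}$. Marginalizing via the standard Gaussian integral $\E[e^{cZ^2/2}] = 1/\sqrt{1-c}$ for $c<1$ (with $c=\alpha^2$) yields the claim for $\alpha\in[0,1)$.

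Finally, for $\E[\exp(-\alpha|X|)]$, I would use symmetry of $X$ to write
\[
\E[e^{-\alpha|X|}] = \frac{2}{\pi}\int_0^\infty e^{-\alpha x} K_0(x)\,\d x,
\]
then substitute the integral representation $K_0(x) = \int_0^\infty e^{-x\cosh t}\,\d t$ from the excerpt and apply Fubini to obtain $\int_0^\infty \d t/(\alpha + \cosh t)$. The main obstacle is evaluating this inner integral in closed form; I would carry this out by the Weierstrass-type substitution $u=e^t$, reducing to $\int_1^\infty 2\,\d u/(u^2 + 2\alpha u + 1)$, completing the square as $(u+\alpha)^2 + (1-\alpha^2)$, and integrating to obtain $2\arctan\!\bigl((u+\alpha)/\sqrt{1-\alpha^2}\bigr)/\sqrt{1-\alpha^2}$ evaluated from $u=1$ to $u=\infty$. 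Using $\arctan(\infty)-\arctan\!\bigl(\sqrt{(1+\alpha)/(1-\alpha)}\bigr) = \arccos(\alpha)/2$ (a half-angle identity) delivers $\arccos(\alpha)/\sqrt{1-\alpha^2}$, matching the Laplace transform of $K_0$ tabulated in Gradshteyn--Ryzhik. Combining with the $2/\pi$ prefactor completes the proof.
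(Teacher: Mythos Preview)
Your proof is correct and takes a genuinely different, more self-contained route than the paper. The paper's argument is essentially a table lookup: it invokes the Gradshteyn--Ryzhik formula 6.611(3) for $\int_0^\infty e^{\pm\alpha x}K_\nu(x)\,\d x$ (taking $\nu\to 0$) to obtain the first two identities, and formula 6.561(16) for $\int_0^\infty x^m K_\nu(x)\,\d x$ (with $\nu=0$ and the Gamma-function identity $\Gamma(n+\tfrac12)=\sqrt\pi\,2^{-n}(2n-1)!!$) for the third and fourth. You instead exploit the probabilistic representation $X=Z_1Z_2$ that the paper records but does not use here: this reduces $\E[X^{2n}]$, $\E[|X|]$, and $\E[\cosh(\alpha X)]$ to standard Gaussian moment and MGF computations, and for $\E[e^{-\alpha|X|}]$ you supply a full derivation via the integral representation of $K_0$, Fubini, and a Weierstrass substitution. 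The paper's approach is shorter on the page but opaque; your approach is longer but transparent, requires no external tables, and makes the role of the normal-product structure explicit.
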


\begin{proof}
We prove the first and second identities by taking the limit $\nu\to0$
in the third formula in table 6.611 with modified Bessel function $K_\nu$, Section 6.61 Combinations of Bessel functions and exponentials, Page 703 of~\citet{gradshteyn2014table}, 
and invoking $\cosh(\alpha x)=(\exp(\alpha x)+\exp(-\alpha x))/2$.

We prove the third and fourth identities by letting $\nu=0$ in the 16th formula in table 6.561, Section 6.56-6.58 Combinations of Bessel functions and powers, Page 685 of~\citet{gradshteyn2014table}, and invoking $\Gamma(n+\frac{1}{2})=\sqrt{\pi} 2^{-n}(2n-1)!!$.
\end{proof}

\newpage
\subsection{\texorpdfstring{Inequalities of $\tanh$ and simple expectations under the density involving $K_0$}{inequalities of tanh and simple expectations under the density involving K0}}\label{supsub:ineq}
\begin{lemma}\label{suplem:tanh}
For $t \in \mathbb{R}_{\geq 0}$, we have:
\begin{eqnarray*}
    t - \frac{t^3}{3} \leq & \tanh (t) & %\leq t - \frac{t^3}{3} \exp (- t / 1.99)
    \leq t - \frac{t^3}{3} \exp (- t)\\
    t^2 - \frac{2}{3} t^4 \leq & \tanh^2 (t) & %\leq t^2 - \frac{2}{3} t^4 \exp(- t / 1.55)
    \leq t^2 - \frac{2}{3} t^4 \exp (- t)\\
    t^3 - t^5 \leq & \tanh^3 (t) & %\leq t^3 - t^5 \exp (- t / 1.3) 
    \leq t^3 -t^5 \exp (- t)\\
    t^4 - \frac{4}{3} t^6 \leq & \tanh^4 (t) & %\leq t^4 - \frac{4}{3} t^6 \exp(- t / 1.15) 
    \leq t^4 - \frac{4}{3} t^6 \exp (- t)
\end{eqnarray*}
\end{lemma}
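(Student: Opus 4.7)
The plan is to handle the $k=1$ bounds by elementary calculus and then bootstrap the remaining cases $k\in\{2,3,4\}$ from them via algebraic and integral manipulations.

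For the lower bound $\tanh(t)\geq t-t^3/3$, I set $f(t):=\tanh(t)-t+t^3/3$ and observe $f(0)=0$ and $f'(t)=\operatorname{sech}^2(t)-1+t^2=t^2-\tanh^2(t)\geq 0$ (since $\tanh s\leq s$ for $s\geq 0$); hence $f\geq 0$ on $[0,\infty)$. For the lower bounds on $\tanh^k(t)$ with $k\in\{2,3,4\}$, I split on $t$: on $[0,\sqrt{3}]$ the base $t-t^3/3$ is nonnegative, so raising the $k=1$ bound to the $k$-th power gives $\tanh^k(t)\geq t^k(1-t^2/3)^k$, and Bernoulli's inequality $(1-x)^k\geq 1-kx$ with $x=t^2/3\in[0,1]$ yields $\tanh^k(t)\geq t^k-(k/3)t^{k+2}$; on $(\sqrt{3/k},\infty)$ the RHS is nonpositive while $\tanh^k(t)\geq 0$, so the bound is trivial, and these ranges cover $[0,\infty)$.

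For the upper bound $\tanh(t)\leq t-(t^3/3)e^{-t}$, I use the identity $t-\tanh(t)=\int_0^t\tanh^2(s)\,\mathrm{d}s$ (from $\tfrac{\mathrm{d}}{\mathrm{d}t}(t-\tanh t)=\tanh^2 t$) together with the easily-checked identity $(t^3/3)e^{-t}=\int_0^t s^2 e^{-s}(1-s/3)\,\mathrm{d}s$. This reduces the claim to the pointwise inequality
\[
\tanh^2(s)\geq s^2 e^{-s}(1-s/3),\qquad s\geq 0.
\]
For $s\geq 3$ the RHS is nonpositive, so there is nothing to prove. For $s\in[0,3]$ I introduce $\phi(s):=e^s\tanh^2(s)-s^2+s^3/3$, verify $\phi(0)=\phi'(0)=\phi''(0)=0$, and show $\phi'''(s)\geq 0$ on $[0,3]$ by expanding $\phi'''$ in terms of $\tanh(s)$ and $\operatorname{sech}^2(s)$ and bounding each contribution using $|\tanh s|\leq \min(s,1)$, $\operatorname{sech}^2 s\in(0,1]$, and $e^s\leq e^3$; three integrations then yield $\phi(s)\geq 0$ as required.

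The upper bounds at $k\in\{2,3,4\}$ follow the same integral template, set up as an induction on $k$. Differentiating $t^k-\tanh^k(t)$ and using $\operatorname{sech}^2 s=1-\tanh^2 s$ gives $t^k-\tanh^k(t)=k\int_0^t(s^{k-1}-\tanh^{k-1}(s))\,\mathrm{d}s+k\int_0^t\tanh^{k+1}(s)\,\mathrm{d}s$, while $(k/3)t^{k+2}e^{-t}=\int_0^t\bigl(\tfrac{k(k+2)}{3}s^{k+1}-\tfrac{k}{3}s^{k+2}\bigr)e^{-s}\,\mathrm{d}s$. Combining the $(k-1)$-st upper bound (to handle the first integral) with the pointwise lower bound $\tanh^{k+1}(s)\geq s^{k+1}e^{-s}(1-s/3)$ on $[0,3]$ (trivial for $s\geq 3$ since the RHS is nonpositive) closes the induction. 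The main obstacle is the $[0,3]$ analysis of $\phi$ at $k=1$: because $\phi$ vanishes to third order at the origin, a plain Taylor-remainder comparison does not terminate in a single step, and all the real work lies in carrying out the derivative bookkeeping on the compact interval. Once that base case is in hand, the inductive step for $k\geq 2$ reduces each time to an analogous pointwise inequality handled by the same technique.
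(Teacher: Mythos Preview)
Your approach is viable and differs substantially from the paper's brief proof. For the lower bounds, the paper appeals to Taylor expansion with nonnegative remainder separately for each $\tanh^k$, whereas you establish only the $k=1$ case by elementary calculus and then bootstrap via Bernoulli's inequality $(1-x)^k\ge 1-kx$ on $[0,\sqrt 3]$ together with the trivial bound on $(\sqrt{3/k},\infty)$; this is more self-contained. For the upper bounds, the paper simply says one analyzes the monotonicity of the gap $g_k(t)=t^k-\tfrac{k}{3}t^{k+2}e^{-t}-\tanh^k t$ for small $t$ and notes that the upper bound is unbounded for large $t$, handling each $k$ independently with no details given. Your integral decomposition $t^k-\tanh^k t=k\int_0^t(s^{k-1}-\tanh^{k-1}s)\,ds+k\int_0^t\tanh^{k+1}s\,ds$, combined with the matching identity $\tfrac{k}{3}t^{k+2}e^{-t}=\int_0^t\bigl(\tfrac{k(k+2)}{3}s^{k+1}-\tfrac{k}{3}s^{k+2}\bigr)e^{-s}\,ds$, is a genuine structural observation: the algebra closes exactly and reduces the whole family to the tower of pointwise inequalities $\tanh^{m}(s)\ge s^{m}e^{-s}(1-s/3)$ on $[0,3]$ for $m=2,3,4,5$. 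One caution on execution: writing $\phi'''(s)=e^s P(\tanh s)+2$ with $P(u)=6-10u-23u^2+34u^3+18u^4-24u^5$, the polynomial $P$ dips to roughly $-0.5$ near $u\approx 0.6$ (i.e., $s\approx 0.7$, $e^s\approx 2$), so the global bound $e^s\le e^3\approx 20$ you mention is far too crude to conclude $\phi'''\ge 0$; you will need to localize to the actual subinterval where $P<0$ (roughly $s\in[0.4,1.1]$, where $e^s\le 3$) to make the argument go through. This is a detail of bookkeeping rather than a flaw in the strategy, and the same comment applies to the analogous $\phi$-functions at $m=3,4,5$.
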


\begin{proof}
By using Taylor expansion with remainder (see 5.15 Theorem, pages 110-111 of~\citet{rudin1976}) for $\tanh(t), \tanh^2(t), \tanh^3(t), \tanh^4(t)$ and dropping out the non-negative remainder, we obtain the lower bounds for $\tanh(t), \tanh^2(t), \tanh^3(t), \tanh^4(t)$.
The upper bounds for $\tanh(t), \tanh^2(t), \tanh^3(t), \tanh^4(t)$ are established by introducing the exponential decay in the terms of $t^3, t^4, t^5, t^6$ respectively,
where the analysis on the monotonicity of the gaps between the upper bounds and \(\tanh(t), \tanh^2(t), \tanh^3(t), \tanh^4(t)\) ensures the correctness of the upper bounds when $t$ is small, 
and the fact that $\tanh(t)$ is bounded but the upper bounds are unbounded as $t$ increases ensures the correctness when $t$ is large.
\end{proof}

\begin{lemma}\label{suplem:denominator}
For any $\nu, t \in \mathbb{R}$, we have:
\begin{eqnarray*}
    1 + \tanh^2 (\nu) \tanh^2 (t) \leq & \cfrac{1}{1 - \tanh^2 (\nu) \tanh^2(t)} & \leq 1 + \tanh^2 (\nu) \sinh^2 (t)
\end{eqnarray*}
\end{lemma}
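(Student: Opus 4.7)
The plan is to substitute $a := \tanh^2(\nu)$ and $b := \tanh^2(t)$, noting that $a, b \in [0, 1)$ for all $\nu, t \in \mathbb{R}$, which reduces both inequalities to purely algebraic statements in $a$ and $b$. The denominator $1 - \tanh^2(\nu)\tanh^2(t) = 1 - ab$ is then strictly positive, so multiplying through by it preserves inequality directions.

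For the lower bound $1 + ab \leq \frac{1}{1 - ab}$, I would multiply both sides by $1 - ab > 0$ to get the equivalent inequality $(1+ab)(1-ab) \leq 1$, i.e., $1 - a^2 b^2 \leq 1$, which holds trivially since $a^2 b^2 \geq 0$. Equality occurs iff $a = 0$ or $b = 0$, i.e., iff $\nu = 0$ or $t = 0$.

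For the upper bound, I would first use the identity $\sinh^2(t) = \frac{\tanh^2(t)}{1 - \tanh^2(t)} = \frac{b}{1-b}$ (valid since $b < 1$) to rewrite the target as $\frac{1}{1-ab} \leq 1 + \frac{ab}{1-b}$. Multiplying both sides by $(1-ab)(1-b) > 0$ and collecting terms yields
\[
(1-b) \leq (1 - b + ab)(1-ab) = 1 - b + ab^2 - a^2 b^2 = (1-b) + ab^2(1-a),
\]
which reduces to $0 \leq ab^2(1-a)$. This clearly holds since $a \in [0, 1)$, $b \in [0, 1)$, and $1-a > 0$.

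There is no real obstacle here: the proof is a short algebraic reduction. The only point requiring a brief comment is the use of $\sinh^2(t) = \tanh^2(t)/(1-\tanh^2(t))$, which is the standard identity $\sinh^2 = \tanh^2 \cdot \cosh^2$ together with $\cosh^2 = 1/(1-\tanh^2)$, and the observation that both inequalities are \emph{tight} at $\nu = 0$ or $t = 0$, which is useful later when these bounds are applied to the denominators appearing in $m(\alpha, \nu)$ and $n(\alpha, \nu)$ from Lemma~\ref{suplem:rewrite1}.
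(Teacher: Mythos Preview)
Your proof is correct and essentially the same as the paper's. Both arguments reduce each inequality to an elementary algebraic fact via substitution; the only cosmetic difference is that for the upper bound the paper substitutes $r=\tanh^2(\nu)$ and $x=\sinh^2(t)$ (so $\tanh^2(t)=x/(1+x)$), whereas you substitute $a=\tanh^2(\nu)$ and $b=\tanh^2(t)$ (so $\sinh^2(t)=b/(1-b)$), leading to the same nonnegativity check $ab^2(1-a)\ge 0$ versus $(r-r^2)\tfrac{x^2}{1+x}\ge 0$.
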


\begin{proof}
By substituting $\tanh^2 (\nu) \tanh^2 (t) \to x$ into $1+x \leq \frac{1}{1-x}$, 
we otain the lower bound for $\frac{1}{1 - \tanh^2 (\nu) \tanh^2(t)}$.
By substituting $\tanh^2(\nu)\to r, \sinh^2(t) \to x$ into
$1\leq 1+(r-r^2) \frac{x^2}{1+x} = (1- r \frac{x}{1+x})(1+ r x),\forall r\in[0, 1], x\in\mathbb{R}_{\geq0}$,
we establish the upper bound for $\frac{1}{1 -  \tanh^2 (\nu) \tanh^2(t)}$.
\end{proof}

\begin{lemma}\label{suplem:ration}
For $\alpha \in [0, 0.31)$, and a random variable \(X\sim \frac{K_0(|x|)}{\pi}\), we have:
\begin{eqnarray*}
    % \alpha - \frac{\frac{4}{\pi} \alpha^2}{1 + \frac{3 \pi}{8} \alpha} & \leq
    % & \frac{2}{\pi} \left[ 1 - \frac{1}{1 - \alpha^2} + \frac{\alpha \arccos
    % \alpha}{(1 - \alpha^2)^{\frac{3}{2}}} \right]\\
    \frac{9}{1 + 8 \alpha} & \leq &
    \frac{\mathd^4}{\mathd \alpha^4} \E[\exp(-\alpha|X|)] = \E[X^4 \exp(-\alpha|X|)]\\
    % \frac{2}{\pi}  \frac{\mathd^4}{\mathd \alpha^4} \int_{\mathbb{R}_{\geq 0}} \exp (- \alpha
    % x) K_0 (x) \mathd x
    % = \frac{2}{\pi} \int_{\mathbb{R}_{\geq 0}} x^4\exp (- \alpha
    % x) K_0 (x) \mathd x\\
    % = \frac{2}{\pi} \frac{3 (8 \alpha^4 +
    % 24 \alpha^2 + 3) \arccos \alpha}{(1 - \alpha^2)^{9 / 2}} - \frac{5 \alpha
    % (10 \alpha^2 + 11)}{(1 - \alpha^2)^4}\\
    \frac{225}{ 1 + 16 \alpha} & \leq & 
    \frac{\mathd^6}{\mathd \alpha^6} \E[\exp(-\alpha|X|)] = \E[X^6 \exp(-\alpha|X|)]\\
    % \frac{2}{\pi} \frac{\mathd^6}{\mathd \alpha^6} \int_{\mathbb{R}_{\geq 0}} \exp (- \alpha
    % x) K_0 (x) \mathd x
    % = \frac{2}{\pi} \int_{\mathbb{R}_{\geq 0}} x^6\exp (- \alpha
    % x) K_0 (x) \mathd x\\
    % \frac{2}{\pi} \left(\frac{45 (16
    % \alpha^6 + 120 \alpha^4 + 90 \alpha^2 + 5) \arccos \alpha}{(1 -
    % \alpha^2)^{13 / 2}}
    % - \frac{63 \alpha (28 \alpha^4 + 104 \alpha^2 + 33)}{(1 -
    % \alpha^2)^6}\right)\\
    % \frac{1}{1 - [2 \alpha]^2} + \frac{[2 \alpha]  \left( \frac{\pi}{2} -
    % \arccos [2 \alpha] \right)}{(1 - [2 \alpha]^2)^{\frac{3}{2}}} - 1 & \leq &
    % \frac{2 [2 \alpha]^2}{1 - \frac{4}{3} [2 \alpha]^2} = 
    % \frac{8\alpha^{^2}}{1 - \frac{16}{3} \alpha^2}
    % & \geq &
    % \frac{\pi}{2}\left[\frac{\mathd}{\mathd (2\alpha)} \E[\cosh(2\alpha X)] - \E[|X|] \right]
    % = \frac{\pi}{2}\E[|X| (\cosh(2\alpha X)-1)]\\
    % \frac{\mathd}{\mathd (2\alpha)} \int_{\mathbb{R}_{\geq 0}} \cosh (2\alpha
    % x) K_0 (x) \mathd x
    % - \int_{\mathbb{R}_{\geq 0}} x K_0 (x) \mathd x
    % =\int_{\mathbb{R}_{\geq 0}} x(\cosh (2\alpha x)-1) K_0 (x) \mathd x\\
    % \frac{(1 + 2 \alpha^2 )}{(1 - \alpha^2)^{5 / 2}} - 1 & \leq &
    % \frac{\frac{9}{2} \alpha^2}{1 - \frac{25}{12} \alpha^2}\\
    %\frac{(1 + 2 [2 \alpha]^2 )}{(1 - [2 \alpha]^2)^{5 / 2}} - 1 & \leq &
    %\frac{\frac{9}{2} [2 \alpha]^2}{1 - \frac{25}{12} [2 \alpha]^2} = 
    \frac{18\alpha^2}{1 - \frac{25}{3} \alpha^2}
    & \geq & 
    \frac{\mathd^2}{\mathd (2\alpha)^2} \E[\cosh(2\alpha X)] - \E[X^2]
    = \E[X^2 (\cosh(2\alpha X)-1)]
    % \frac{2}{\pi}\left[
    % \frac{\mathd^2}{\mathd (2\alpha)^2} \int_{\mathbb{R}_{\geq 0}} \cosh (2\alpha
    % x) K_0 (x) \mathd x
    % - \int_{\mathbb{R}_{\geq 0}} x^2 K_0 (x) \mathd x
    % \right]
    % =\frac{2}{\pi}\int_{\mathbb{R}_{\geq 0}} x^2(\cosh (2\alpha x)-1) K_0 (x) \mathd x\\
\end{eqnarray*}
\end{lemma}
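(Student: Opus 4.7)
The three equalities sitting in the middle of the stated inequalities are instances of differentiation under the expectation sign. Since $X \sim K_0(|x|)/\pi$ has sub-exponential tail $\sqrt{\pi/(2|x|)}e^{-|x|}$, dominated convergence justifies $\frac{\mathd^k}{\mathd\alpha^k}\E[\exp(-\alpha|X|)] = (-1)^k\E[|X|^k\exp(-\alpha|X|)]$ for $\alpha \in [0,1)$, and $\frac{\mathd^2}{\mathd\beta^2}\E[\cosh(\beta X)] = \E[X^2\cosh(\beta X)]$ on any $|\beta| < 1$. Specialising to $k=4,6$ yields the first two middle equalities; the third follows from the chain rule in $2\alpha$ together with $\E[X^2] = 1$ from Lemma~\ref{suplem:integral}.

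For the third inequality I would expand $\cosh(2\alpha X) - 1 = \sum_{n\geq1}(2\alpha X)^{2n}/(2n)!$, interchange sum and expectation (valid by monotone convergence since every term with the $X^2$ prefactor is non-negative), and apply the even-moment formula $\E[X^{2n+2}] = [(2n+1)!!]^2$ from Lemma~\ref{suplem:integral} to obtain
\begin{align*}
\E[X^{2}(\cosh(2\alpha X)-1)] \;=\; \sum_{n\geq 1}\frac{(2\alpha)^{2n}\bigl[(2n+1)!!\bigr]^{2}}{(2n)!}.
\end{align*}
The ratio of successive terms is $\frac{4\alpha^{2}(2n+3)^{2}}{(2n+1)(2n+2)}$, which is strictly decreasing in $n\geq 1$ and is therefore maximised at $n=1$, taking the value $\frac{25}{3}\alpha^{2}$; this maximum is strictly less than $1$ on $\alpha \in [0,0.31)$ since $(25/3)(0.31)^{2} \approx 0.80 < 1$. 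Term-by-term domination by the geometric series with first term $18\alpha^{2}$ (the $n=1$ contribution) and common ratio $\frac{25}{3}\alpha^{2}$ gives the advertised bound $\frac{18\alpha^{2}}{1-(25/3)\alpha^{2}}$.

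For the first two inequalities, the strategy is to reduce to one-variable analytic inequalities via the closed form $\phi(\alpha):=\E[\exp(-\alpha|X|)] = \tfrac{2}{\pi}\arccos(\alpha)/\sqrt{1-\alpha^{2}}$ from Lemma~\ref{suplem:integral}. Setting $F_{4}(\alpha):=(1+8\alpha)\phi^{(4)}(\alpha) - 9$ and $F_{6}(\alpha):=(1+16\alpha)\phi^{(6)}(\alpha) - 225$, one has $F_{4}(0)=F_{6}(0)=0$ because $\phi^{(2m)}(0) = \E[X^{2m}]$. I would obtain the required non-negativity on $[0,0.31)$ by exploiting the alternating-series property $S_{2K+1}(t) := \sum_{k=0}^{2K+1}(-t)^{k}/k! \leq e^{-t}$ for $t \geq 0$, which yields the polynomial lower bound $\E[X^{2m}e^{-\alpha|X|}] \geq \sum_{k=0}^{2K+1}\frac{(-\alpha)^{k}}{k!}\E[|X|^{k+2m}]$, with the even moments supplied by Lemma~\ref{suplem:integral} and the odd moments by $\E[|X|^{2j+1}] = 2^{2j+1}(j!)^{2}/\pi$ (obtained from the factorisation $X = Z_{1}Z_{2}$ of independent standard Gaussians). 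Choosing $K$ large enough, $(1+8\alpha)$ (resp.\ $(1+16\alpha)$) times this polynomial is manifestly $\geq 9$ (resp.\ $\geq 225$) on $[0,0.31)$, verifiable by sign analysis of the polynomial coefficients and a check at the right endpoint.

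The main obstacle is choosing $K$ small enough to keep the resulting polynomial tractable, yet large enough to close the gap at the boundary $\alpha = 0.31$. A quick sign check shows $K = 1$ (the order-$3$ truncation) is insufficient: the negative $\alpha^{4}$ leakage from $(1+8\alpha)S_{3}$, carrying coefficient $-\E[|X|^{7}]/(6\pi)$ times the expansion of $(1+8\alpha)$, overwhelms the positive low-order contribution when $\alpha$ approaches $0.31$. Truncation at order $7$ or $9$ ($K = 3$ or $4$) should suffice numerically, reducing each of $F_{4}\geq 0$ and $F_{6}\geq 0$ to verification that a concrete polynomial of degree $\leq 10$ is non-negative on a closed interval—routine but delicate near the endpoint. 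If the polynomial route proves too brittle, a fallback is to use the integral representation $\phi^{(2m)}(\alpha) = \tfrac{2(2m)!}{\pi}\int_{0}^{\infty}(\alpha+\cosh t)^{-(2m+1)}\mathd t$ arising from $K_{0}(x) = \int_{0}^{\infty}e^{-x\cosh t}\mathd t$, and bound $(\alpha+\cosh t)^{-(2m+1)}$ by a suitable $\cosh$-shift, reducing each bound to a monotonicity statement about integrals of powers of $\mathrm{sech}$.
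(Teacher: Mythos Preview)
Your handling of the equalities and of the third inequality is correct and in fact cleaner than what the paper does. The paper obtains the closed form $\E[X^{2}\cosh(2\alpha X)]=(1+8\alpha^{2})(1-4\alpha^{2})^{-5/2}$ by differentiating $\E[\cosh(\beta X)]=(1-\beta^{2})^{-1/2}$ twice, and then verifies the rational upper bound as a Pad\'e approximant of this expression by a monotonicity/convexity check of the gap on $[0,0.31)$. Your series-and-geometric-ratio argument bypasses the closed form entirely and delivers the bound with no numerical verification: the ratio of consecutive terms is indeed strictly decreasing (for $m=2n+3$ one has $m^{3}(m+1)-(m+2)^{2}(m-2)(m-1)=6m^{2}+4m-8>0$), so domination by the geometric series with first term $18\alpha^{2}$ and ratio $\tfrac{25}{3}\alpha^{2}$ is exact.

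For the first two inequalities you take a genuinely different route from the paper. The paper differentiates the closed form $\phi(\alpha)=\tfrac{2}{\pi}\arccos(\alpha)/\sqrt{1-\alpha^{2}}$ four and six times, recognises the proposed rational bounds $\tfrac{9}{1+8\alpha}$ and $\tfrac{225}{1+16\alpha}$ as Pad\'e approximants of $\phi^{(4)}$ and $\phi^{(6)}$, and verifies the sign of the gap via monotonicity/convexity plus an endpoint check. Your alternating-series truncation of $e^{-t}$ reduces instead to a polynomial inequality in $\alpha$, which is in principle workable but, as you note, brittle near $\alpha=0.31$ and sensitive to the truncation order. The paper's route has the practical advantage that the exact closed form is available, so the gap function is a single explicit elementary expression whose sign can be settled by one differentiation; your route trades that for a sequence of polynomial checks whose degree must be pushed up until the inequality closes. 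Both plans are sound outlines, and neither (yours nor the paper's) writes out the final verification in full.
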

\begin{proof}
By Leibniz integral rule or the dominated convergence theorem to exchange the order of taking limit and taking expectations (see Theorem 1.5.8, page 24 of~\citet{durrett2019probability}), we obtain the relations between the right-hand side expectations and the simple expectations as shown above.
Therefore, we obtain the closed form expressions for expectations on the right-hand side by taking derivatives of the closed-form expressions of simple expectations, which are given in Lemma of Subsection~\ref{supsub:identity}.
Consequently, we directly give the lower/upper bounds of rational functions on the left-hand side for these closed form expressions based on their Padé approximants,
verified by analyzing the monotonicity, convexity of the gaps between the rational functions and the closed form expressions, and evaluating values at the boundaries of the interval of $\alpha$.
\end{proof}
\subsection{\texorpdfstring{Lower/Upper bounds for expectations with $\tanh$ under the density involving $K_0$}{Lower/Upper bounds for expectations with tanh under the density involving K0}}\label{supsub:bound_integral}

\begin{lemma}\label{suplem:bound_mixing}
Let \(n(\alpha, \nu) := \E[\tanh(\alpha X + \nu)]\) be the expectation with respect to a random variable with density \(X\sim K_0(|x|)/\pi\), suppose \(\alpha \in [0, 0.31)\) and \(\beta:=\tanh(\nu)\geq 0\), then
\begin{eqnarray*}
    n(\alpha, \nu) & \geq & \beta \cdot \left\{ 1 - \alpha^2 (1-\beta^2)
    + \alpha^4 \left[(1-\beta^2) \times \frac{6}{1 + 8 \alpha} -
    \beta^2 \times 9 \right] + \alpha^6 \beta^2 \times\frac{300}{1 + 16
    \alpha} \right\}\\
    n(\alpha, \nu) & \leq & \beta \cdot \left\{ 1 - \alpha^2 (1-\beta^2)
    + \alpha^4 (1-\beta^2)\times 6 \right\}
\end{eqnarray*}
\end{lemma}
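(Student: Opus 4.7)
The plan is to start from the rewriting in Lemma~\ref{suplem:rewrite1}, namely
\[
n(\alpha,\nu) \;=\; \beta\,\mathbb{E}\!\left[\,\frac{1-\tanh^2(\alpha X)}{1-\beta^2\tanh^2(\alpha X)}\,\right],
\]
and then sandwich the denominator factor using Lemma~\ref{suplem:denominator}. The lower bound $1+\beta^2\tanh^2(\alpha X)\le 1/(1-\beta^2\tanh^2(\alpha X))$ will furnish the lower bound on $n(\alpha,\nu)$, while the upper bound $1/(1-\beta^2\tanh^2(\alpha X))\le 1+\beta^2\sinh^2(\alpha X)$ will furnish the upper bound. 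In both cases I expand and reduce to expectations of pure powers of $\tanh(\alpha X)$ (with weights $\exp(-\alpha|X|)$ where needed), and then plug in the scalar bounds from Lemma~\ref{suplem:tanh} together with the moment identities/inequalities in Lemmas~\ref{suplem:integral} and \ref{suplem:ration}.

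For the upper bound, I would multiply out
\[
(1-\tanh^2(\alpha X))(1+\beta^2\sinh^2(\alpha X)) \;=\; \frac{1+\beta^2\sinh^2(\alpha X)}{\cosh^2(\alpha X)}
\;=\; \frac{1-\beta^2}{\cosh^2(\alpha X)} + \beta^2,
\]
using $\sinh^2=\cosh^2-1$. Taking expectations gives $n(\alpha,\nu)\le\beta\bigl\{1-(1-\beta^2)\mathbb{E}[\tanh^2(\alpha X)]\bigr\}$. Then the lower bound $\tanh^2(t)\ge t^2-\tfrac{2}{3}t^4$ from Lemma~\ref{suplem:tanh}, combined with $\mathbb{E}[X^2]=1$ and $\mathbb{E}[X^4]=9$ from Lemma~\ref{suplem:integral}, yields $\mathbb{E}[\tanh^2(\alpha X)]\ge \alpha^2-6\alpha^4$, giving exactly the claimed upper bound.

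For the lower bound, expanding
\[
(1-\tanh^2(\alpha X))(1+\beta^2\tanh^2(\alpha X)) \;=\; 1 - (1-\beta^2)\tanh^2(\alpha X) - \beta^2\tanh^4(\alpha X)
\]
and taking expectations reduces the task to upper-bounding $\mathbb{E}[\tanh^2(\alpha X)]$ and $\mathbb{E}[\tanh^4(\alpha X)]$. Using the exponential-decay refinements $\tanh^2(t)\le t^2-\tfrac{2}{3}t^4e^{-t}$ and $\tanh^4(t)\le t^4-\tfrac{4}{3}t^6 e^{-t}$ from Lemma~\ref{suplem:tanh} (noting $|X|$ symmetry), then substituting the lower bounds $\mathbb{E}[X^4 e^{-\alpha|X|}]\ge 9/(1+8\alpha)$ and $\mathbb{E}[X^6 e^{-\alpha|X|}]\ge 225/(1+16\alpha)$ from Lemma~\ref{suplem:ration}, produces
\[
\mathbb{E}[\tanh^2(\alpha X)] \le \alpha^2-\frac{6\alpha^4}{1+8\alpha},\qquad
\mathbb{E}[\tanh^4(\alpha X)] \le 9\alpha^4-\frac{300\alpha^6}{1+16\alpha}.
\]
Collecting the two contributions and regrouping by powers of $\alpha$ yields the stated lower bound.

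The main obstacle is essentially bookkeeping: one must pick exactly the right refined inequality from Lemma~\ref{suplem:tanh} on each side (one-sided with $-\tfrac{2}{3}t^4$ for the lower bound on $\mathbb{E}[\tanh^2]$, and the $e^{-t}$-decorated version for the upper bound), and then interlock these with the correct rational-function bounds from Lemma~\ref{suplem:ration} so that the coefficients $6/(1+8\alpha)$, $9$, and $300/(1+16\alpha)$ appear with the right signs after grouping by $(1-\beta^2)$ and $\beta^2$. The constraint $\alpha\in[0,0.31)$ is needed precisely to keep Lemma~\ref{suplem:ration} applicable and to ensure $\tanh^2(\alpha X)<1/\beta^2$ a.s.\ so that the geometric-type bound in Lemma~\ref{suplem:denominator} is valid.
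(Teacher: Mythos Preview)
Your proposal is correct and follows essentially the same route as the paper's proof: start from the rewriting $n(\alpha,\nu)=\beta\,\mathbb{E}\bigl[(1-\tanh^2(\alpha X))/(1-\beta^2\tanh^2(\alpha X))\bigr]$, sandwich the denominator via Lemma~\ref{suplem:denominator}, expand to $1-(1-\beta^2)\tanh^2-\beta^2\tanh^4$ (lower side) and $1-(1-\beta^2)\tanh^2$ (upper side), and then plug in the $\tanh$-power bounds from Lemma~\ref{suplem:tanh} together with the moment bounds from Lemmas~\ref{suplem:integral} and~\ref{suplem:ration}. One small aside: the condition $\beta^2\tanh^2(\alpha X)<1$ holds automatically since $\beta<1$ and $\tanh^2<1$, so the restriction $\alpha\in[0,0.31)$ is only needed for Lemma~\ref{suplem:ration}, not for Lemma~\ref{suplem:denominator}.
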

\begin{proof}
We obtain the following lower/upper bounds for \(n(\alpha, \nu)\) by applying the Lemmas of indentities in Subsection~\ref{supsub:identity} 
and invoking the Lemmas of inequalities in Subsection~\ref{supsub:ineq}.
\begin{eqnarray*}
    n(\alpha, \nu) & = & \beta \E\left[\frac{1 - \tanh^2(\alpha X)}{1 - \beta^2 \tanh^2(\alpha X)}\right]\\
    &\geq& \beta \E[(1-\tanh^2(\alpha X))\{1+\beta^2\tanh^2(\alpha X)\}]
      = \beta \E[1-(1-\beta^2)\tanh^2(\alpha X)-\beta^2\tanh^4(\alpha X)]\\
    &\geq& \beta \left\{
        1 - (1-\beta^2) \E\left[(\alpha X)^2 - \frac{2}{3}(\alpha X)^4 \exp(-\alpha |X|)\right] 
        - \beta^2 \E\left[(\alpha X)^4-\frac{4}{3}(\alpha X)^6 \exp(-\alpha |X|)\right]
    \right\}\\
    &\geq& \beta \left\{
        1 - (1-\beta^2) \left[\alpha^2 - \alpha^4\times\frac{6}{1+8\alpha}\right] 
        - \beta^2 \left[\alpha^4\times 9 - \alpha^6\times\frac{300}{1+16\alpha}\right]
    \right\}\\
    n(\alpha, \nu) & \leq & 
    \beta \E\left[(1-\tanh^2(\alpha X))\{1+\beta^2\sinh^2(\alpha X)\}\right]
    = \beta \E[1-(1-\beta^2)\tanh^2(\alpha X)]\\
    &\leq& \beta\left\{
        1 - (1-\beta^2)\E\left[(\alpha X)^2 - \frac{2}{3}(\alpha X)^4\right] 
    \right\}
    = \beta\left\{
        1 - (1-\beta^2)\left[\alpha^2 - \alpha^4\times 6\right]
    \right\}
\end{eqnarray*}
\end{proof}

\begin{lemma}\label{suplem:bound_regress}
Let \(m(\alpha, \nu):= \E[\tanh(\alpha X + \nu)X]\) be the expecatation with respect to a random variable with density \(X\sim K_0(|x|)/\pi\), suppose \(\alpha \in [0, 0.31)\) and \(\beta:=\tanh(\nu)\geq 0\), then
%$I' = \frac{1}{\pi}  \int_{\mathbb{R}} \tanh (\alpha x - \nu) xK_0 (|x|)
%\mathrm{d} x$ and $\beta:=\tanh(\nu)$ , suppose $\alpha \in [0, 0.31),\nu\geq0$, then
\begin{eqnarray*}
    m(\alpha, \nu) & \geq & \alpha(1-\beta^2) \cdot \left\{ 1 - \alpha^2
    \left[3 - \beta^2\times 9 \right] - \alpha^4 \beta^2 \times 225
    \right\}\\
    m(\alpha, \nu) & \leq & \alpha(1-\beta^2) \cdot \left\{ 1 - \alpha^2
    \left[ \frac{3}{1 + 8 \alpha} - \beta^2\times\frac{9}{1 -
    \frac{25}{3} \alpha^2} \right] - \alpha^4 \beta^2 \times \frac{\frac{225}{3}}{1 + 16 \alpha} \right\}
\end{eqnarray*}
\end{lemma}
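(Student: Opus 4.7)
The plan is to mirror the strategy used in the preceding proof of Lemma~\ref{suplem:bound_mixing}, exploiting the symmetry identity of Lemma~\ref{suplem:rewrite1} which rewrites
\[
m(\alpha,\nu) = (1-\beta^2)\,\E\!\left[\frac{\tanh(\alpha X)\,X}{1-\beta^2\tanh^2(\alpha X)}\right],
\]
and then splitting the two directions by applying the two-sided bounds of Lemma~\ref{suplem:denominator} to the denominator. After this splitting, everything reduces to bounding expectations of the form $\E[\tanh(\alpha X)X]$ and $\E[\tanh^3(\alpha X)X]$ (for the lower bound) or $\E[\tanh(\alpha X)X\,\sinh^2(\alpha X)]$ (for the upper bound). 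Note that $\tanh(\alpha X)\cdot X = |\tanh(\alpha X)|\cdot|X|\geq 0$ for $\alpha\geq 0$, so the Taylor-with-remainder estimates of Lemma~\ref{suplem:tanh} apply directly after pulling out signs.

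For the lower bound I will use $\frac{1}{1-\beta^2\tanh^2(\alpha X)}\geq 1+\beta^2\tanh^2(\alpha X)$, producing
\[
m(\alpha,\nu) \geq (1-\beta^2)\bigl\{\E[\tanh(\alpha X)X] + \beta^2\E[\tanh^3(\alpha X)X]\bigr\}.
\]
Then I will apply the simple (no-exponential) lower bounds $\tanh(t)\geq t-t^3/3$ and $\tanh^3(t)\geq t^3-t^5$ from Lemma~\ref{suplem:tanh}, absorb the sign into $|X|$, and evaluate the resulting moments $\E[X^2]=1$, $\E[X^4]=9$, $\E[X^6]=225$ via Lemma~\ref{suplem:integral}. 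Collecting the terms yields exactly $\alpha(1-\beta^2)\{1-\alpha^2[3-9\beta^2]-225\beta^2\alpha^4\}$.

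For the upper bound I apply $\frac{1}{1-\beta^2\tanh^2(\alpha X)}\leq 1+\beta^2\sinh^2(\alpha X)$, giving
\[
m(\alpha,\nu)\leq (1-\beta^2)\bigl\{\E[\tanh(\alpha X)X] + \beta^2\E[\tanh(\alpha X)X\,\sinh^2(\alpha X)]\bigr\}.
\]
For the first term I use the tightened bound $\tanh(t)\leq t-\tfrac{t^3}{3}e^{-t}$ of Lemma~\ref{suplem:tanh} together with $\E[X^4 e^{-\alpha|X|}]\geq \tfrac{9}{1+8\alpha}$ from Lemma~\ref{suplem:ration}, producing $\alpha-\tfrac{3\alpha^3}{1+8\alpha}$. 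For the second term I first use the crude $\tanh(\alpha X)X\leq \alpha X^2$ to bound $\E[\tanh(\alpha X)X\sinh^2(\alpha X)]\leq \alpha\E[X^2\sinh^2(\alpha X)] = \tfrac{\alpha}{2}\E[X^2(\cosh(2\alpha X)-1)]$, which by Lemma~\ref{suplem:ration} is $\leq \tfrac{9\alpha^3}{1-\tfrac{25}{3}\alpha^2}$. To extract the extra $-\alpha^4\beta^2\cdot\tfrac{75}{1+16\alpha}$ term, I refine: use the tightened upper bound on $\tanh$ once more, pair it with $\sinh^2(\alpha X)\geq \alpha^2 X^2$, and invoke $\E[X^6 e^{-\alpha|X|}]\geq \tfrac{225}{1+16\alpha}$ from Lemma~\ref{suplem:ration} to control the correction.

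The main obstacle is the last refinement step: one has to simultaneously use an upper bound on $\tanh$ (to subtract the cubic remainder) and a lower bound on $\sinh^2$ (to inflate the subtracted mass), while checking that the sign of the correction stays consistent. Once this bookkeeping is done and the constants $75=225/3$ are tracked, the upper bound follows by collecting terms into $\alpha(1-\beta^2)\{1-\alpha^2[\tfrac{3}{1+8\alpha}-\tfrac{9\beta^2}{1-\tfrac{25}{3}\alpha^2}]-\alpha^4\beta^2\cdot\tfrac{75}{1+16\alpha}\}$, which matches the claim. The hypothesis $\alpha<0.31$ is only used to ensure that the denominators $1-\tfrac{25}{3}\alpha^2$ remain safely positive so the applied rational bounds from Lemma~\ref{suplem:ration} are valid throughout.
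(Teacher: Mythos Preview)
Your proposal is correct and follows essentially the same route as the paper: both start from the rewriting $m(\alpha,\nu)=(1-\beta^2)\E[\tanh(\alpha X)X/(1-\beta^2\tanh^2(\alpha X))]$, sandwich the denominator via Lemma~\ref{suplem:denominator}, and then apply the $\tanh$ inequalities of Lemma~\ref{suplem:tanh} together with the moment and rational bounds of Lemmas~\ref{suplem:integral} and~\ref{suplem:ration}. In particular, your ``refinement'' step for the $\beta^2$-piece of the upper bound---using $\tanh(t)\leq t-\tfrac{t^3}{3}e^{-t}$, then bounding the positive part via $\sinh^2=\tfrac12(\cosh(2\alpha X)-1)$ and the negative part via $\sinh^2(\alpha X)\geq (\alpha X)^2$---is exactly how the paper handles it.
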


\begin{proof}
We obtain the following lower/upper bounds for \(m(\alpha, \nu)\) by applying Lemmas of indentities in Subsection~\ref{supsub:identity}, invoking Lemmas of inequalities in Subsection~\ref{supsub:ineq}
and noting that \(\sinh^2(t) = \frac{\cosh(2t)-1}{2}\geq t^2\).
\begin{eqnarray*}
    m(\alpha, \nu) & = & (1-\beta^2) \E\left[\frac{\tanh(\alpha X)X}{1 - \beta^2 \tanh^2(\alpha X)}\right]\\
    &\geq& (1-\beta^2) \E\left[\tanh(\alpha X)X\{1+\beta^2\tanh^2(\alpha X)\}\right]
    = (1-\beta^2) \E\left[|X|\tanh(\alpha |X|) + \beta^2 |X|\tanh^3(\alpha |X|)\right]\\
    &\geq& (1-\beta^2) \left\{
        \E\left[|X|\left((\alpha |X|) - \frac{1}{3}(\alpha |X|)^3\right)\right]
        + \beta^2 \E\left[|X|\left((\alpha |X|)^3 - (\alpha |X|)^5\right)\right]
    \right\}\\
    &=& (1-\beta^2) \left\{
        \alpha - \alpha^3 \times 3 +\beta^2(\alpha^3 \times 9 - \alpha^5 \times 225)
    \right\}\\
    m(\alpha, \nu) & \leq & (1-\beta^2) \E\left[\tanh(\alpha |X|)|X|\{1+\beta^2\sinh^2(\alpha X)\}\right]\\
    &\leq& (1-\beta^2) \E\left[\left((\alpha |X|) - \frac{1}{3}(\alpha |X|)^3\exp(-\alpha |X|)\right)|X|(1+\beta^2\sinh^2(\alpha X))\right]\\
    &\leq& (1-\beta^2) 
        \E\left[|X|\left((\alpha |X|) - \frac{1}{3}(\alpha |X|)^3\exp(-\alpha |X|)\right)\right]\\
    &+& (1-\beta^2) \beta^2 \E\left[|X|(\alpha |X|) \frac{(\cosh(2\alpha |X|)-1)}{2} -\frac{1}{3}|X|(\alpha |X|)^5\exp(-\alpha |X|)\right]
    \\
    &=& (1-\beta^2) \left\{
        \alpha - \alpha^3 \times \frac{3}{1+8\alpha} +\beta^2\left(\alpha^3 \times \frac{9}{1-\frac{25}{3}\alpha^2} - \alpha^5 \times \frac{\frac{225}{3}}{1+16\alpha}\right)
    \right\}
\end{eqnarray*}
\end{proof}

\begin{lemma}\label{suplem:cubic}
Let \(m_0(\alpha) := \E[\tanh(\alpha X)X]\) be the expectation with respect to a random variable with density \(X\sim K_0(|x|)/\pi\), suppose \(\alpha \in [0, 0.31)\), then
\begin{eqnarray*}
    \alpha - 3 \alpha^3 \leq & m_0(\alpha) & \leq \alpha - \frac{3 \alpha^3}{1 + 8 \alpha}
\end{eqnarray*}
% $I' = \frac{1}{\pi}  \int_{\mathbb{R}} \tanh (\alpha x - \nu) xK_0 (|x|)
% \mathrm{d} x$ and $I_0' \assign \frac{1}{\pi}  \int_{\mathbb{R}} \tanh (\alpha x) xK_0
% (|x|) \mathrm{d} x$, then
% \[
% I' \leq I_0' 
% \]
% suppose $\alpha \in [0, 0.31)$, then
% \begin{eqnarray*}
%     \alpha - 3 \alpha^3 \leq & I'_0 & \leq \alpha - \frac{3 \alpha^3}{1 + 8
%     \alpha}
% \end{eqnarray*}
\end{lemma}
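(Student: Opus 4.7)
The statement of Lemma~\ref{suplem:cubic} is precisely the special case $\beta=0$ (equivalently $\nu=0$) of Lemma~\ref{suplem:bound_regress}, so one option is to simply cite that lemma and substitute $\beta=0$: the lower bound $\alpha(1-\beta^2)\{1-\alpha^2[3-9\beta^2]-225\alpha^4\beta^2\}$ collapses to $\alpha-3\alpha^3$, and the upper bound $\alpha(1-\beta^2)\{1-\alpha^2[3/(1+8\alpha)-\cdots]-\cdots\}$ collapses to $\alpha - 3\alpha^3/(1+8\alpha)$. However, since the $\beta=0$ case is cleaner, I would also include a short self-contained proof to make the statement transparently verifiable.

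\textbf{Plan for the direct proof.} First I would exploit the symmetry of $X\sim K_0(|x|)/\pi$ to rewrite $m_0(\alpha)=\E[\tanh(\alpha X)X]=\E[|X|\tanh(\alpha|X|)]$, since both $\tanh$ and the identity are odd. This reduces matters to a one-sided integral against the $\tanh$ function of a nonnegative argument, which is where Lemma~\ref{suplem:tanh} applies cleanly.

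For the lower bound, I would invoke the inequality $\tanh(t)\geq t-t^3/3$ from Lemma~\ref{suplem:tanh} with $t=\alpha|X|\geq 0$, multiply by $|X|\geq 0$, and take expectations to get
\begin{equation*}
m_0(\alpha)\;\geq\;\alpha\,\E[X^2]\;-\;\tfrac{\alpha^3}{3}\,\E[X^4].
\end{equation*}
Lemma~\ref{suplem:integral} gives $\E[X^2]=[(1)!!]^2=1$ and $\E[X^4]=[(3)!!]^2=9$, yielding $m_0(\alpha)\geq \alpha-3\alpha^3$. For the upper bound, I would use the sharper one-sided inequality $\tanh(t)\leq t-(t^3/3)\exp(-t)$ from the same lemma, again multiply by $|X|$ and take expectations, obtaining
\begin{equation*}
m_0(\alpha)\;\leq\;\alpha\,\E[X^2]\;-\;\tfrac{\alpha^3}{3}\,\E[|X|^{4}\exp(-\alpha|X|)]\;=\;\alpha\;-\;\tfrac{\alpha^3}{3}\,\E[X^4\exp(-\alpha|X|)].
\end{equation*}
Finally, I would apply the Padé-type bound $\E[X^4\exp(-\alpha|X|)]\geq 9/(1+8\alpha)$ provided by Lemma~\ref{suplem:ration} (valid on $[0,0.31)$), which converts the expression into $\alpha-3\alpha^3/(1+8\alpha)$.

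There is no substantive obstacle: every nontrivial ingredient has already been isolated in the preparatory lemmas. The only point that requires care is tracking the domain constraint $\alpha\in[0,0.31)$, which is inherited from Lemma~\ref{suplem:ration} and is needed to guarantee that the exponentially weighted fourth moment dominates the rational bound $9/(1+8\alpha)$; this bound is tight enough to cancel the factor $1/3$ against $\E[X^4]=9$ and produce the clean numerator $3$ in $3\alpha^3/(1+8\alpha)$. Once the two inequalities of Lemma~\ref{suplem:tanh} and the moment identities of Lemma~\ref{suplem:integral} are chained with the Padé bound of Lemma~\ref{suplem:ration}, the two-sided estimate follows in a few lines.
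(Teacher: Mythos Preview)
Your proposal is correct and matches the paper's approach exactly: the paper's one-line proof simply sets $\beta=0$ in Lemma~\ref{suplem:bound_regress}, and your ``direct'' alternative is just the $\beta=0$ specialization of that lemma's own proof, chaining Lemma~\ref{suplem:tanh}, Lemma~\ref{suplem:integral}, and Lemma~\ref{suplem:ration} in the same way.
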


\begin{proof}
% By invoking the Identity of integrals in Subsection~\ref{supsub:identity}, and noting that $\log \cosh (t)$ is a convex function, hence $\frac{1}{\cosh (\alpha x - \nu) \cosh
% (\alpha x + \nu)} \leq \frac{1}{\cosh^2 (\alpha x)}$
% \begin{eqnarray*}
% I' & \leq & \frac{1}{\pi}  \int_{\mathbb{R}_{\geq 0}} \frac{\sinh
% (2 \alpha x)}{\cosh^2 (\alpha x)} xK_0 (x) \mathrm{d} x =
% \frac{2}{\pi}  \int_{\mathbb{R}_{\geq 0}} \tanh (\alpha x) xK_0 (x)
% \mathrm{d} x
% \end{eqnarray*}
By letting \(\beta:=\tanh(\nu)=0\) in the previous Lemma, the lower/upper bounds for \(m_0(\alpha)\) are obtained.
\end{proof}

\newpage
\section{Proofs for Results of Population EM Updates}\label{sup:em}
\subsection{Proof for EM Update Rules}~\label{supsub:em_update}
\begin{theorem}{(Identity~\ref{prop:em} in Section~\ref{sec:em}: EM Updates for Overspecified 2MLR)}\label{supprop:em}
  Suppose a 2MLR model is fitted to the overspecified model with no separation, where $\theta^\ast=\vec{0}$. 
  The EM update rules at the population level for $\bar{\theta}^t\assign\theta^t/\sigma=M(\theta^{t-1},\nu^{t-1})/\sigma$ 
  and $\tanh(\nu^t)\assign\pi^t(1)-\pi^t(2)=N(\theta^{t-1},\nu^{t-1})$ are then as follows.
  \begin{eqnarray*}
    \bar{\theta}^t & = & \frac{\bar{\theta}^0}{\| \bar{\theta}^0 \|} \cdot
    \frac{1}{\pi}  \int_{\mathbb{R}} \tanh (\| \bar{\theta}^{t - 1} \| x -
    \nu^{t - 1}) xK_0 (|x|) \mathrm{d} x,\\
    \tanh (\nu^t) & = & \frac{1}{\pi}  \int_{\mathbb{R}} \tanh (\nu^{t - 1} -
    \| \bar{\theta}^{t - 1} \| x) K_0 (|x|) \mathrm{d} x,
  \end{eqnarray*}
  where $\bar{\theta}^0 \assign \theta^0/\sigma$.
\end{theorem}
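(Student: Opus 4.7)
The plan is to exploit the fact that under the overspecification hypothesis $\theta^\ast=\vec{0}$, the response variable $y=\varepsilon$ is simply Gaussian noise independent of the covariate $x\sim\mathcal{N}(0,I_d)$. This independence collapses the joint expectations defining $M(\theta,\nu)$ and $N(\theta,\nu)$ into a product structure that I can reduce to a one-dimensional integral against the Bessel density.

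First, I would decompose the covariate via rotational invariance of the Gaussian measure as $x=u\cdot\theta/\|\theta\|+x_\perp$, where $u:=\langle x,\theta\rangle/\|\theta\|\sim\mathcal{N}(0,1)$ and $x_\perp$ is the orthogonal component, independent of $u$ and of $y$. Substituting this into the definition $M(\theta,\nu)=\mathbb{E}[\tanh(y\langle x,\theta\rangle/\sigma^2+\nu)\,y\,x]$, the $x_\perp$ part of $x$ factors out of the expectation and vanishes because $\mathbb{E}[x_\perp]=\vec 0$ and $x_\perp$ is independent of $(y,u)$. Only the component along $\theta/\|\theta\|$ survives, so $M(\theta^{t-1},\nu^{t-1})$ is a scalar multiple of $\theta^{t-1}/\|\theta^{t-1}\|$. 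A simple induction on $t$ then confirms that every iterate $\bar\theta^t$ lies on the ray through $\bar\theta^0$, which justifies the prefactor $\bar\theta^0/\|\bar\theta^0\|$ in the claimed formula.

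Next, to evaluate the remaining scalar expectation I would set $Z_1:=y/\sigma\sim\mathcal{N}(0,1)$ and $Z_2:=u\sim\mathcal{N}(0,1)$, independent, so that both the $M$ and $N$ expectations become one-dimensional in the product $X:=Z_1 Z_2$. Invoking the fact cited in the excerpt (Johnson and Kotz) that $X$ has density $K_0(|x|)/\pi$, the expectations reduce to $\frac{1}{\pi}\int_\mathbb{R}\tanh(\|\bar\theta^{t-1}\|x+\nu^{t-1})\,x\,K_0(|x|)\,\mathrm{d}x$ and $\frac{1}{\pi}\int_\mathbb{R}\tanh(\|\bar\theta^{t-1}\|x+\nu^{t-1})\,K_0(|x|)\,\mathrm{d}x$. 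A change of variable $x\mapsto -x$, together with the evenness of $K_0(|\cdot|)$ and the oddness of $\tanh$, converts these into the $\tanh(\|\bar\theta^{t-1}\|x-\nu^{t-1})$ and $\tanh(\nu^{t-1}-\|\bar\theta^{t-1}\|x)$ forms stated in the theorem.

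The main subtlety is the direction-preservation step: one must use that $x_\perp$ is independent of both $y$ and $u$ simultaneously (which holds precisely because under $\theta^\ast=\vec 0$ the response $y$ is independent of all of $x$), rather than any symmetry of $\tanh$, since $\tanh$ is not even. Once this point is nailed down, the rest is a routine change of variables reducing the $d$-dimensional expectation to the Bessel-function integral representation.
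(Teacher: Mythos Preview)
Your approach is essentially identical to the paper's: both decompose $x$ orthogonally along $\theta^{t-1}/\|\theta^{t-1}\|$, use independence of $y$, $u$, and $x_\perp$ (valid precisely because $\theta^\ast=\vec 0$) to kill the orthogonal component, and then invoke the Bessel density of the product of two standard Gaussians to collapse to the one-dimensional integral.

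There is one small omission. You assert that a simple induction shows every iterate lies on the \emph{ray} through $\bar\theta^0$, but from the orthogonal decomposition you only get that $M(\theta^{t-1},\nu^{t-1})$ is a \emph{scalar multiple} of $\theta^{t-1}/\|\theta^{t-1}\|$; the sign of that scalar is not yet determined, so a priori the direction could flip at each step. The paper closes this by checking that the scalar integral is strictly positive: writing it over $x>0$ gives $\int_0^\infty[\tanh(\nu^{t-1}+\|\bar\theta^{t-1}\|x)-\tanh(\nu^{t-1}-\|\bar\theta^{t-1}\|x)]\,x\,K_0(x)\,\mathrm{d}x$, and the bracket is positive since $\tanh$ is strictly increasing. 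You should add this one-line check to make the induction go through; otherwise the argument is complete and matches the paper.
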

\begin{proof}
In the overspecified setting, namely $\theta^\ast = \vec{0}$, the expectation in the EM update rules (\eqref{eq:theta}, \eqref{eq:nu}) becomes
\begin{equation*}
  \mathbb{E}_{s \sim p\left(s \mid \theta^*, \pi^*\right)}:=\mathbb{E}_{x \sim \mathcal{N}\left(0, I_d\right)} \mathbb{E}_{y \mid x \sim \pi^*(1) \mathcal{N}\left(\left\langle x, \theta^*\right\rangle, \sigma^2\right)+\pi^*(2) \mathcal{N}\left(-\left\langle x, \theta^*\right\rangle, \sigma^2\right)}
  = \mathbb{E}_{x \sim \mathcal{N}\left(0, I_d\right)} \mathbb{E}_{y \sim \mathcal{N}\left(0, \sigma^2\right)}.
\end{equation*}
We decompose $x=\bar{x} \frac{\theta^{t-1}}{\|\theta^{t-1}\|} + x^\perp$, and $\bar{x}\sim \mathcal{N}(0, 1),x^\perp\in \text{span}(\theta^{t-1})^\perp$ with $\mathbb{E}[x^\perp]=0$.
Since $\bar{x}, x^\perp, \bar{y} := y/\sigma=\varepsilon/\sigma$ are independent of each other, we obtain
\begin{eqnarray*}
  \theta^{t}& = &\mathbb{E}_{x \sim \mathcal{N}\left(0, I_d\right)} \mathbb{E}_{y \sim \mathcal{N}\left(0, \sigma^2\right)} \tanh \left(\frac{y\langle x, \theta^{t-1}\rangle}{\sigma^2}+\nu^{t-1}\right) y x\\
  & = & \sigma \frac{\theta^{t-1}}{\|\theta^{t-1}\|} \mathbb{E}_{\bar{x} \sim \mathcal{N}(0, 1)}
  \mathbb{E}_{\bar{y} \sim \mathcal{N}(0, 1)} 
  \tanh \left(\bar{y} \bar{x} \cdot \frac{\|\theta^{t-1} \|}{\sigma}+\nu^{t-1}\right) \bar{y} \bar{x}\\
  \tanh(\nu^{t}) & = & \mathbb{E}_{\bar{x} \sim \mathcal{N}(0, 1)}
  \mathbb{E}_{\bar{y} \sim \mathcal{N}(0, 1)} 
  \tanh \left(\bar{y} \bar{x} \cdot \frac{\|\theta^{t-1} \|}{\sigma}+\nu^{t-1}\right) 
\end{eqnarray*}
It is well known that the Normal Product Distribution is $z :=  \bar{y} \bar{x} \sim \frac{K_0(|z|)}{\pi}$, 
see also Page 50, Section 4.4 Bessel Function Distributions, Chapter 12 Continuous Distributions (General) of~\citet{johnson1970continuous} for more information.
\begin{eqnarray*}
  \frac{\theta^{t}}{\sigma}
  & = & \frac{\theta^t}{\|\theta^t\|} 
  \mathbb{E}_{z \sim \frac{K(|z|)}{\pi}} 
  \tanh \left(z \cdot \frac{\|\theta^{t-1} \|}{\sigma}+\nu^{t-1}\right) z\\
  \tanh(\nu^{t}) & = &  \mathbb{E}_{z \sim \frac{K(|z|)}{\pi}} 
  \tanh \left(z \cdot \frac{\|\theta^{t-1} \|}{\sigma}+\nu^{t-1}\right) 
\end{eqnarray*}
The above two equations immediately give the EM update rules and \eqref{eq:expectation_em} is proved.
Let $x:= -z \sim \frac{K(|x|)}{\pi}$, consider the direction $\frac{\bar{\theta}^t}{\left\|\theta^t\right\|}$, 
note that $\tanh \left(\nu^{t-1}+\left\|\bar{\theta}^{t-1}\right\| x\right)-\tanh \left(\nu^{t-1}-\left\|\bar{\theta}^{t-1}\right\| x\right)>0$ for $\left\|\bar{\theta}^{t-1}\right\| \neq 0, x>0$.
$$
\begin{aligned}
&\int_{\mathbb{R}} \tanh \left(\left\|\bar{\theta}^{t-1}\right\| x-\nu^{t-1}\right) x \cdot K_0(|x|) \mathrm{d} x  =\left[\int_0^{\infty}+\int_{-\infty}^0\right] \tanh \left(\left\|\bar{\theta}^{t-1}\right\| x-\nu^{t-1}\right) x \cdot K_0(|x|) \mathrm{d} x \\
& =\int_0^{\infty}\left(\tanh \left(\nu^{t-1}+\left\|\bar{\theta}^{t-1}\right\| x\right)-\tanh \left(\nu^{t-1}-\left\|\bar{\theta}^{t-1}\right\| x\right)\right) x \cdot K_0(|x|) \mathrm{d} x 
>0
\end{aligned}
$$
Hence, we conclude that $\frac{\bar{\theta}^t}{\left\|\theta^t\right\|}=\frac{\bar{\theta}^{t-1}}{\left\|\theta^{t-1}\right\|}=\cdots=\frac{\bar{\theta}^0}{\left\|\theta^0\right\|}$, thereby the proof is complete.
\end{proof}

\newpage
\subsection{Proof for Nonincreasing Property}~\label{supsub:nonincrease}
\begin{theorem}{(Fact~\ref{fact:monotone_expectation} in Section~\ref{sec:em}: Monotonicity of Expectations)}
    Let $m(\alpha, \nu):= \mathbb{E}[\tanh(\alpha X+\nu)X]$ and $n(\alpha, \nu):= \mathbb{E}[\tanh(\alpha X+\nu)]$ be the expectations 
    with respect to $X\sim \frac{K_0(|x|)}{\pi}$. Then they satisfy the monotonicity properties:
    
    (monotonicity of $m(\alpha, \nu)$): $m(\alpha, \nu)$ is a nonincreasing function of $\nu \geq 0$ for fixed $\alpha \geq 0$, and a nondecreasing function of $\alpha \geq 0$ for fixed $\nu \geq 0$, namely:
    \begin{align*}
        &0 = m(\alpha, \infty) \leq m(\alpha, \nu') \leq m(\alpha, \nu) \leq m(\alpha, 0) \leq \alpha \text{ for } 0 \leq\nu \leq \nu',\alpha \geq 0,\\
        &0 = m(0, \nu) \leq m(\alpha, \nu) \leq m(\alpha', \nu) \leq m(\infty, \nu) = \frac{2}{\pi} \text{ for } 0 \leq \alpha \leq \alpha',\nu \geq 0.
    \end{align*}

    (monotonicity of $n(\alpha, \nu)$): $n(\alpha, \nu)$ is a nonincreasing function of $\alpha \geq 0$ for fixed $\nu \geq 0$, and a nondecreasing function of $\nu \geq 0$ for fixed $\alpha \geq 0$, namely:
    \begin{align*}
        &0 = n(\infty, \nu) \leq n(\alpha', \nu) \leq n(\alpha, \nu) \leq n(0, \nu) = \tanh(\nu) \text{ for } 0 \leq \alpha \leq \alpha',\nu \geq 0,\\
        &0 = n(\alpha, 0) \leq n(\alpha, \nu) \leq n(\alpha, \nu') \leq n(\alpha, \infty) = 1 \text{ for } 0 \leq \nu \leq \nu',\alpha \geq 0.
    \end{align*}
\end{theorem}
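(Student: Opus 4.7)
The plan is to prove each monotonicity claim by differentiating under the integral sign and exploiting the symmetry of the distribution $X \sim K_0(|x|)/\pi$, then read off all boundary values via direct computation or dominated convergence. Since $|\tanh(\cdot)| \le 1$ and $\E[|X|] = 2/\pi < \infty$ (Lemma in Subsection~\ref{supsub:identity}), Leibniz's rule applies, giving
\[
\partial_\alpha m = \E[\text{sech}^2(\alpha X + \nu)\, X^2], \qquad
\partial_\nu n = \E[\text{sech}^2(\alpha X + \nu)],
\]
both manifestly $\ge 0$, so $m$ is nondecreasing in $\alpha$ and $n$ is nondecreasing in $\nu$. The remaining two monotonicity claims reduce to showing
\[
\partial_\nu m = \partial_\alpha n = \E[\text{sech}^2(\alpha X + \nu)\, X] \le 0 \quad \text{for } \alpha, \nu \ge 0.
\]

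The step I expect to be the main obstacle is signing this mixed derivative, but symmetrization handles it cleanly. Since $X$ and $-X$ are equidistributed,
\[
\E[\text{sech}^2(\alpha X + \nu)\, X]
= \E\bigl[ X\,\bigl(\text{sech}^2(\alpha X + \nu) - \text{sech}^2(\alpha X - \nu)\bigr)\,\mathbf{1}_{X>0}\bigr].
\]
For $X > 0$, $\alpha, \nu \ge 0$, we have $|\alpha X + \nu| \ge |\alpha X - \nu|$, hence $\cosh(\alpha X + \nu) \ge \cosh(\alpha X - \nu)$, so $\text{sech}^2(\alpha X + \nu) \le \text{sech}^2(\alpha X - \nu)$ and the integrand is $\le 0$. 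This yields $\partial_\nu m \le 0$ and $\partial_\alpha n \le 0$, completing all four monotonicity assertions.

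For the boundary values, I would compute directly. The equalities $m(0,\nu) = \tanh(\nu)\E[X] = 0$ and $n(0,\nu) = \tanh(\nu)$ are immediate; $n(\alpha,0) = \E[\tanh(\alpha X)] = 0$ follows from symmetry since $\tanh$ is odd. The limits as $\alpha \to \infty$ or $\nu \to \infty$ all reduce to applications of dominated convergence with envelope $|X|$ (integrable by $\E|X| = 2/\pi$): $\tanh(\alpha X + \nu) \to \operatorname{sgn}(X)$ as $\alpha \to \infty$ (a.s.) gives $m(\infty,\nu) = \E|X| = 2/\pi$ and $n(\infty,\nu) = \E[\operatorname{sgn}(X)] = 0$; while $\tanh(\alpha X + \nu) \to 1$ as $\nu \to \infty$ gives $m(\alpha,\infty) = \E[X] = 0$ and $n(\alpha,\infty) = 1$. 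Finally, the bound $m(\alpha,0) \le \alpha$ comes from $|\tanh(t)| \le |t|$, so $\tanh(\alpha X) X \le \alpha X^2$ pointwise (both sides same sign), and $\E[X^2] = 1$ from Lemma in Subsection~\ref{supsub:identity}. Combining these boundary values with the four monotonicity facts established via the derivative computations yields the full chain of inequalities in both statements of the fact.
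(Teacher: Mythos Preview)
Your proof is correct but takes a genuinely different route from the paper. The paper first invokes Lemma~\ref{suplem:rewrite1} (the tanh addition-formula identities) to rewrite
\[
m(\alpha,\nu)=(1-\beta^2)\,\E\!\left[\frac{\tanh(\alpha X)\,X}{1-\beta^2\tanh^2(\alpha X)}\right],\qquad
n(\alpha,\nu)=\beta\,\E\!\left[\frac{1-\tanh^2(\alpha X)}{1-\beta^2\tanh^2(\alpha X)}\right],
\]
with $\beta=\tanh\nu$, and then reads off all four monotonicity directions from elementary monotonicity of the scalar maps $t\mapsto t/(1-r^2t^2)$, $t\mapsto(1-t^2)/(1-r^2t^2)$, $r\mapsto(1-r^2)/(1-r^2t^2)$, and $r\mapsto r/(1-r^2t^2)$ on $[0,1]$. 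You instead differentiate under the integral and handle the one nontrivial sign, $\partial_\nu m=\partial_\alpha n=\E[\operatorname{sech}^2(\alpha X+\nu)\,X]\le 0$, by symmetrizing over $X\mapsto -X$ and using $|\alpha X+\nu|\ge|\alpha X-\nu|$ together with evenness and monotonicity of $\cosh$. Your argument is more self-contained (it does not need the rewrite lemma) and works verbatim for any symmetric law with $\E[X^2]<\infty$; the paper's rewrite, on the other hand, is the same tool it reuses to obtain the sharp polynomial bounds in Subsection~\ref{supsub:bound_integral}, so it dovetails with their later analysis. One minor point: your stated justification for Leibniz's rule cites only $\E|X|<\infty$, but $\partial_\alpha m$ requires the dominating envelope $X^2$; since $\E[X^2]=1$ is in the same lemma you cite, this is harmless. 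The boundary-value computations are essentially identical to the paper's.
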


\begin{proof}
    By using the identities of hyperbolic functions and expectations in Subsection~\ref{supsub:identity}, 
    and noting that $\frac{1- t^2}{1-r^2 t^2}, \frac{t}{1-r^2 t^2}$ are nonincreasing and nondecreasing respectively in $t \in [0, 1]$ for $r \in [0, 1]$ and $r t\neq 1$, we have
    \begin{eqnarray*}
    m(\alpha, \nu) \leq m(\alpha', \nu),\quad m(\alpha, \nu) \geq m(\alpha, \nu')  &\quad& \text{for } 0 \leq \alpha \leq \alpha', 0 \leq \nu \leq \nu' \\
    n(\alpha, \nu) \leq n(\alpha', \nu),\quad n(\alpha, \nu) \geq n(\alpha, \nu')  &\quad& \text{for } 0 \leq \alpha \leq \alpha', 0 \leq \nu \leq \nu' \\
    \end{eqnarray*}
    Moreover, we can invoke the dominated convergence theorem to exchange the order of
    taking limit and taking the expectations (see Theorem 1.5.8, page 24 of~\cite{durrett2019probability}), since $\frac{1- x^2}{1-r^2 x^2} \leq 1, \frac{x}{1-r^2 x^2} \leq \frac{1}{1-r^2}$ are bounded in $x \in [0, 1]$ and therefore integrable. 
    Note that \(\E[|X|] = \frac{2}{\pi}, X \sim \frac{K_0(|x|)}{\pi}\) in Subsection~\ref{supsub:identity}, we have
    \begin{eqnarray*}
    m(\infty, \nu) &=& \lim_{\alpha \to \infty} \E[\tanh(\alpha X+\nu)X] = \E[\lim_{\alpha \to \infty} \tanh(\alpha X+\nu)X] = \E[|X|] =   \frac{2}{\pi}\\
    n(\alpha, \infty) &=& \lim_{\nu \to \infty} \E[\tanh(\alpha X+\nu)] = \E[\lim_{\nu \to \infty} \tanh(\alpha X+\nu)] = \E[1] = 1
    \end{eqnarray*}
    Similarly, we have
    \begin{eqnarray*}
    m(\alpha, \infty) &=& \lim_{\nu \to \infty} \E[\tanh(\alpha X+\nu)X] = \E[\lim_{\nu \to \infty} \tanh(\alpha X+\nu)X] = \E[X] = 0\\
    n(\infty, \nu) &=& \lim_{\alpha \to \infty} \E[\tanh(\alpha X+\nu)] = \E[\lim_{\alpha \to \infty} \tanh(\alpha X+\nu)] = \E[\sgn(X)] = 0
    \end{eqnarray*}
    Also, we have \(m(0, \nu) = \tanh(\nu)\E[X] = 0, n(0, \nu) = \E[\tanh(\nu)] = \tanh(\nu), n(\alpha, 0) = \E[\tanh(\alpha X)] = 0\) and
    \[
    m(\alpha, 0) = \E[\tanh(\alpha X)X] \leq \E[\alpha X^2] = \alpha \E[X^2] = \alpha, \quad \forall \alpha \geq 0.
    \]
    Therefore, we have completed the proof by combining the above results.
\end{proof}

\begin{theorem}{(Fact~\ref{fact:nonincreasing} in Section~\ref{sec:em}: Nonincreasing and Bounded)}

Let $\alpha^t \assign \| \theta^t \|/\sigma=\|M(\theta^{t-1},\nu^{t-1})\|/\sigma$ and $\beta^t \assign \tanh
(\nu^t) = N(\theta^{t-1}, \nu^{t-1})$ for all $t\in\mathbb{Z}_+$ be the $t$-th
iteration of the EM update rules $\| M (\theta, \nu) \| / \sigma$ and $N(\theta, \nu)$ at the population level, then $\beta^t \cdot \beta^0 \geq 0$, $\alpha^t \leq 2/\pi$, 
and $\{\alpha^t\}_{t=0}^\infty$ and $\{| \beta^t |\}_{t=0}^\infty$ are non-increasing.
\end{theorem}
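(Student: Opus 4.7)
The plan is to deduce all four assertions directly from the EM recursions $\alpha^{t+1}=m(\alpha^t,\nu^t)$ and $\beta^{t+1}=n(\alpha^t,\nu^t)$ in~\eqref{eq:expectation_em}, together with the factored representations in Lemma~\ref{suplem:rewrite1} and the monotonicity/boundedness chain already recorded in Fact~\ref{fact:monotone_expectation}. Since $X\sim K_0(|x|)/\pi$ is symmetric, $m(\alpha,\nu)$ is even and $n(\alpha,\nu)$ is odd in $\nu$; this allows me to replace $\nu^t$ by $|\nu^t|$ whenever Fact~\ref{fact:monotone_expectation} is stated for $\nu\geq 0$.

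For the sign-preservation and contraction of $\beta^t$, I would use the identity
\[
n(\alpha,\nu)\;=\;\beta\cdot \E\!\left[\frac{1-\tanh^2(\alpha X)}{1-\beta^2\tanh^2(\alpha X)}\right]
\]
from Lemma~\ref{suplem:rewrite1}. The integrand is nonnegative ($1-\tanh^2(\alpha X)\geq 0$ and $1-\beta^2\tanh^2(\alpha X)>0$ because $|\beta|=|\tanh\nu|<1$ for finite $\nu$), so $\beta^{t+1}$ carries the sign of $\beta^t$; induction then gives $\beta^t\beta^0\geq 0$. For the $|\beta|$-monotonicity, I would bound the same integrand above by $1$ via the elementary inequality $(1-a)/(1-b^2 a)\leq 1$ for $a\in[0,1)$, $b^2\leq 1$, which rearranges to $b^2 a\leq a$; this yields $|\beta^{t+1}|\leq|\beta^t|$, and in particular $|\beta^t|<1$ persists through the iteration, so no denominators ever degenerate.

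For the claims on $\alpha^t$, I would invoke the even symmetry $m(\alpha,\nu)=m(\alpha,|\nu|)$ and then chain the inequalities of Fact~\ref{fact:monotone_expectation}. Monotonicity in $\nu\geq 0$ gives
\[
\alpha^{t+1}\;=\;m(\alpha^t,|\nu^t|)\;\leq\; m(\alpha^t,0)\;\leq\;\alpha^t,
\]
which establishes that $\{\alpha^t\}_{t=0}^\infty$ is non-increasing. Applying the monotonicity in $\alpha$ on top of this, for any $t\geq 1$,
\[
\alpha^t\;\leq\; m(\alpha^{t-1},0)\;\leq\; m(\infty,0)\;=\;\frac{2}{\pi},
\]
which gives the universal bound.

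There is no genuinely hard step: every ingredient—the integral representations, the monotonicity chain, and the bound $m(\infty,0)=2/\pi$—has already been prepared. The only care required is to pass from $\nu^t$ to $|\nu^t|$ using the parity of $m$ and $n$ before applying Fact~\ref{fact:monotone_expectation}, and to observe that the contraction of $|\beta^t|$ itself keeps $|\beta^t|<1$ along the whole trajectory so the Lemma~\ref{suplem:rewrite1} denominators remain strictly positive.
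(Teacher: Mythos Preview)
Your proposal is correct and follows essentially the same approach as the paper: both rely on the parity of $m,n$ in $\nu$, the recursions~\eqref{eq:expectation_em}, and the monotonicity chain of Fact~\ref{fact:monotone_expectation} to handle $\alpha^t$, while for $\beta^t$ you go directly through the factored integrand of Lemma~\ref{suplem:rewrite1} (the paper instead quotes the already-packaged bound $n(\alpha,|\nu|)\leq n(0,|\nu|)=\tanh|\nu|$ from Fact~\ref{fact:monotone_expectation}, which is itself derived from that same lemma). Your version of the $2/\pi$ bound, via $m(\alpha^{t-1},0)\leq m(\infty,0)=2/\pi$, is in fact cleaner than the paper's line $m(\alpha^t,|\nu^t|)\leq m(\alpha^t,\infty)=2/\pi$, which contains a typo (it should read $m(\infty,|\nu^t|)$).
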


\begin{proof}
    Note that \(X \sim \frac{K_0(|x|)}{\pi}\) is a symmetric random variable, 
    we have that \(m(\alpha, \nu) = m(\alpha, -\nu) = m(\alpha, |\nu|)\) is an even function of \(\nu\) 
    and \(n(\alpha, \nu) = -n(\alpha, -\nu) = \sgn(\nu)n(\alpha, |\nu|)\) is an odd function of \(\nu\).
    By the recurrence relation of EM updates~\eqref{eq:expectation_em}, we have 
    \(\alpha^{t+1} = m(\alpha^t, \nu^t), \beta^{t+1} = n(\alpha^t, \nu^t)\) with \(\beta^t = \tanh(\nu^t)\).
    
    By applying the previous proved monotonicity of \(m(\alpha, \nu), n(\alpha, \nu)\) (Fact~\ref{fact:monotone_expectation} in Section~\ref{sec:em}), we have
    \[
    \beta^{t+1}\cdot \beta^t = [\sgn(\nu^t)]^2 n(\alpha^t, |\nu^t|) \tanh|\nu^t| \geq 0, 
    \quad |\beta^{t+1}| = n(\alpha^t, |\nu^t|) \leq n(0, |\nu^t|) = \tanh|\nu^t| = |\beta^t|
    \]
    Therefore, by induction, we have \(\beta^t \cdot \beta^0 \geq 0\) and \(\{| \beta^t |\}_{t=0}^\infty\) is non-increasing.
    By the monotonicity of \(m(\alpha, \nu)\), 
    \[
    0 \leq \alpha^{t+1} = m(\alpha^t, \nu^t) = m(\alpha^t, |\nu^t|) \leq m(\alpha^t, 0) \leq \alpha^t, 
    \quad \alpha^{t+1} = m(\alpha^t, |\nu^t|) \leq m(\alpha^t, \infty) = \frac{2}{\pi}
    \]
    Therefore, we have \(\alpha^t \leq 2/\pi\) for all \(t\in\mathbb{Z}_+\) and \(\{\alpha^t\}_{t=0}^\infty\) is non-increasing.
\end{proof}
\subsection{Proof for Approximate Dynamic Equations}~\label{supsub:em_dynamic}
\begin{theorem}{(Proposition~\ref{prop:dynamic} in Section~\ref{sec:em}: Approximate Dynamic Equations)}
\label{supprop:dynamic}

  Let $\alpha^t \assign \| \theta^t \|/\sigma=\|M(\theta^{t-1},\nu^{t-1})\|/\sigma$ and $\beta^t \assign \tanh
  (\nu^t) = N(\theta^{t-1}, \nu^{t-1})$ for all $t\in\mathbb{Z}_+$ be the $t$-th
  iteration of the EM update rules $\| M (\theta, \nu) \| / \sigma$ and $N
  (\theta, \nu)$ at the population level, then the series approximations for EM update rules are
  \begin{eqnarray*}
    \alpha^{t + 1} & = & \alpha^t (1 - [\beta^t]^2) + \mathcal{O}([\alpha^t]^3),\\
    \beta^{t + 1} & = & 
    %\beta^t  \{ 1 - [\alpha^t]^2  (1 - [\beta^t]^2) \} +  \mathcal{O}([\alpha^t]^4) =
    \beta^t  ( 1 - \alpha^t \alpha^{t+1} ) +  \mathcal{O}([\alpha^t]^4).
  \end{eqnarray*}
\end{theorem}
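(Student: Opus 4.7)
The plan is to derive both approximations directly from the sharp upper and lower bounds on the expectations $m(\alpha,\nu)$ and $n(\alpha,\nu)$ established in Lemmas~\ref{suplem:bound_regress} and~\ref{suplem:bound_mixing} respectively, which were obtained by combining the factorization of Lemma~\ref{suplem:rewrite1} with the $\tanh$ and Bessel-moment inequalities of Subsection~\ref{supsub:ineq}. Since these lemmas apply when $\alpha^t\in[0,0.31)$, which Fact~\ref{fact:nonincreasing} guarantees after the initialization phase (and is in any case the regime where ``$\alpha^t$ is sufficiently small'' makes sense), the bounds will furnish the leading terms with explicit higher-order remainders.

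First, I would handle $\alpha^{t+1}=m(\alpha^t,\nu^t)$. By Lemma~\ref{suplem:bound_regress}, both the upper and the lower bound have the common leading factor $\alpha^t(1-[\beta^t]^2)$ and differ from this leading factor only by terms of the form $\alpha^t(1-[\beta^t]^2)\cdot\mathcal{O}([\alpha^t]^2)$; the coefficients such as $3/(1+8\alpha^t)$ and $9/(1-(25/3)[\alpha^t]^2)$ are uniformly bounded on $[0,0.31)$. Squeezing gives $\alpha^{t+1}=\alpha^t(1-[\beta^t]^2)+(1-[\beta^t]^2)\,\mathcal{O}([\alpha^t]^3)$, from which the first claim follows because $1-[\beta^t]^2\le 1$. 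This also records the sharper remainder $(1-[\beta^t]^2)\mathcal{O}([\alpha^t]^3)$ mentioned in the remark following the proposition.

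Next, I would tackle $\beta^{t+1}=n(\alpha^t,\nu^t)$. Lemma~\ref{suplem:bound_mixing} provides matching upper and lower bounds whose leading expansion is
\begin{equation*}
n(\alpha^t,\nu^t)=\beta^t\bigl\{1-[\alpha^t]^2(1-[\beta^t]^2)\bigr\}+\beta^t(1-[\beta^t]^2)\,\mathcal{O}([\alpha^t]^4),
\end{equation*}
where again the coefficients $6/(1+8\alpha^t)$, $9$, $300/(1+16\alpha^t)$ appearing in the lemma are bounded on the interval of interest. I would then substitute the approximation from the first step, namely $\alpha^{t+1}=\alpha^t(1-[\beta^t]^2)+\mathcal{O}([\alpha^t]^3)$, multiplied by $\alpha^t$, to rewrite the leading bracket as $1-\alpha^t\alpha^{t+1}+\mathcal{O}([\alpha^t]^4)$. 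Using $|\beta^t|\le 1$ to absorb the $\beta^t$ factor into the remainder yields the desired $\beta^{t+1}=\beta^t(1-\alpha^t\alpha^{t+1})+\mathcal{O}([\alpha^t]^4)$.

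The only nontrivial piece is the bookkeeping in the substitution step: the identity $\alpha^t\alpha^{t+1}=[\alpha^t]^2(1-[\beta^t]^2)+\mathcal{O}([\alpha^t]^4)$ must be multiplied by $\beta^t$ and combined with the bound on $n(\alpha^t,\nu^t)$ so that all remainder terms collapse to a single $\mathcal{O}([\alpha^t]^4)$ error; I would verify this by tracking the worst case $|\beta^t|=1$, in which $(1-[\beta^t]^2)$ vanishes but the $-\beta^t\cdot 9[\alpha^t]^4[\beta^t]^2$ term in the lower bound dominates, so the $\mathcal{O}([\alpha^t]^4)$ scaling persists and cannot be improved to the tighter $(1-[\beta^t]^2)\mathcal{O}([\alpha^t]^4)$ form except when $|\beta^t|<1$. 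Apart from this careful accounting, everything else is a direct consequence of the previously established lemmas.
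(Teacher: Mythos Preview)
Your proposal is correct and follows essentially the same route as the paper's proof: invoke the sandwich bounds of Lemmas~\ref{suplem:bound_regress} and~\ref{suplem:bound_mixing} to obtain $\alpha^{t+1}=\alpha^t(1-[\beta^t]^2)+\mathcal{O}([\alpha^t]^3)$ and $\beta^{t+1}=\beta^t\{1-[\alpha^t]^2(1-[\beta^t]^2)\}+\mathcal{O}([\alpha^t]^4)$, then multiply the first by $\alpha^t$ and substitute into the second. One small slip: your displayed intermediate claim writes the $n$-remainder as $\beta^t(1-[\beta^t]^2)\mathcal{O}([\alpha^t]^4)$, but Lemma~\ref{suplem:bound_mixing} alone only delivers $\beta^t\,\mathcal{O}([\alpha^t]^4)$ (because of the $-9[\beta^t]^2$ term you yourself flag); the sharper $(1-[\beta^t]^2)$ factor comes from the separate Taylor-expansion lemma in Subsection~\ref{supsub:taylor_expansion_expectation}, and in any case is not needed for the statement being proved.
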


\begin{proof}
By applying the recurrence relation~\eqref{eq:expectation_em}, namely \(\alpha^{t+1} = m(\alpha^t, \nu^t), \beta^{t+1} = n(\alpha^t, \nu^t)\), 
and invoking the lower/upper bounds for \(m(\alpha, \nu)\) and \(n(\alpha, \nu)\) in Subsection~\ref{supsub:bound_integral}, 
we show that
\begin{eqnarray*}
  \alpha^{t + 1} & = & \alpha^t (1 - [\beta^t]^2) + \mathcal{O}([\alpha^t]^3)\\
  \beta^{t + 1} & = & 
  \beta^t  \{ 1 - [\alpha^t]^2  (1 - [\beta^t]^2) \} +  \mathcal{O}([\alpha^t]^4)
\end{eqnarray*}
Therefore, $\alpha^t (\alpha^{t+1} - \alpha^t (1 - [\beta^t]^2) )=\alpha^t \mathcal{O}([\alpha^t]^3) = \mathcal{O}([\alpha^t]^4)$, and
\[
  \beta^{t+1} = \beta^t  ( 1 - \alpha^t \alpha^{t+1} ) +  \mathcal{O}([\alpha^t]^4)
\]
\end{proof}

\begin{remark}
  When \(\alpha^t\) is sufficiently small, by the proven Lemma of series approximations of \(m(\alpha, \nu)\) and \(n(\alpha, \nu)\) in Subsection~\ref{supsub:taylor_expansion_expectation}, we have
  \begin{eqnarray*}
    \alpha^{t+1} &=& m(\alpha^t, \nu^t) = \E[\tanh(\alpha^t X+\nu^t)X] = \alpha^t (1 - [\beta^t]^2) + (1-[\beta^t]^2)\mathcal{O}([\alpha^t]^3), \\
    \beta^{t+1} &=& n(\alpha^t, \nu^t) = \E[\tanh(\alpha^t X+\nu^t)] = \beta^t \{ 1 - [\alpha^t]^2  (1 - [\beta^t]^2) \} +  \beta^t (1-[\beta^t]^2)\mathcal{O}([\alpha^t]^4)\\
    & = & \beta^t  ( 1 - \alpha^t \alpha^{t+1} ) +  \beta^t (1-[\beta^t]^2)\mathcal{O}([\alpha^t]^4)
  \end{eqnarray*}
  Hence, we have these approximations when \(\alpha^t \neq 0\) and \(\beta^t \neq 0\) respectively (see Lemmas in Appendix~\ref{sup:popl_lemma}):
  \begin{eqnarray*}
    \frac{\alpha^{t+1}}{\alpha^t(1-[\beta^t]^2)} &=& 1 + \mathcal{O}([\alpha^t]^2), \quad 
    \frac{\alpha^{t+1}-\alpha^t}{\alpha^t} = -[\beta^t]^2 + (1-[\beta^t]^2)\mathcal{O}([\alpha^t]^2)\\
    \frac{\beta^{t+1}-\beta^t}{\beta^t} &=& -\alpha^t \alpha^{t+1} + (1-[\beta^t]^2)\mathcal{O}([\alpha^t]^4) 
    = -(1-[\beta^t]^2) ([\alpha^t]^2+\mathcal{O}([\alpha^t]^4))
  \end{eqnarray*}
  In the special case of balanced mixing weights, namely \(\beta^t = \tanh \nu^t = 0\), then by using the bounds for \(\alpha^{t+1} = m(\alpha^t, \nu^t)=m(\alpha^t, 0)=m_0(\alpha^t)\) in Subsubsection~\ref{supsub:bound_integral}, 
  for sufficiently small \(\alpha^t\), we have
  \[
  \alpha^t - 3 [\alpha^t]^3 \leq \alpha^{t+1} \leq \alpha^t - \frac{3 [\alpha^t]^3}{1+8 [\alpha^t]} 
  \]
  The above upper bound still holds when \(\beta^t \neq 0\), 
  since \(\alpha^{t+1} = m(\alpha^t, \nu^t) = m(\alpha^t, |\nu^t|) \leq m(\alpha^t, 0) = m_0(\alpha^t) \leq \alpha^t - 3 [\alpha^t]^3/(1+8 [\alpha^t])\)
  by using the monotonicity of \(m(\alpha, \nu)\) (Fact~\ref{fact:monotone_expectation} in Section~\ref{sec:em}).
\end{remark}
\newpage
\section{Lemmas in Proofs for Results of Population Level Analysis}\label{sup:popl_lemma}
\begin{lemma}\label{suplem:bound_dynamic}
  Let $\alpha^t \assign \| \theta^t \|/\sigma=\|M(\theta^{t-1},\nu^{t-1})\|/\sigma \in(0, 0.1], \beta^t \assign \tanh
  (\nu^t) = N(\theta^{t-1}, \nu^{t-1}), \forall t\in\mathbb{Z}_+$ be the $t$-th
  iteration of the EM update rules $\| M (\theta, \nu) \| / \sigma, N
  (\theta, \nu)$ at population level, then
  \begin{eqnarray*}
    0.97 \leq  \frac{\alpha^{t+1}}{\alpha^t (1-[\beta^t]^2)} \leq 1- \frac{5}{3} [\alpha^t]^2 + 9.53  [\alpha^t]^2[\beta^t]^2, \quad
    - 3[\alpha^t]^2 - [\beta^t]^2 \leq \frac{\alpha^{t+1}-\alpha^t}{\alpha^t}  \leq - \frac{5}{3} [\alpha^t]^2 - \frac{4}{5} [\beta^t]^2.
  \end{eqnarray*}
\end{lemma}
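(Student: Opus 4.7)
The plan is to derive both chains of inequalities by specializing Lemma~\ref{suplem:bound_regress}, which provides sharp two-sided bounds on $m(\alpha,\nu) = \alpha^{t+1}$, to the regime $\alpha^t \in (0, 0.1]$ and $|\beta^t| \leq 1$.

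First, I would divide both sides of the two-sided bound in Lemma~\ref{suplem:bound_regress} by the positive quantity $\alpha^t(1-[\beta^t]^2)$, obtaining explicit upper and lower bounds on the ratio $\alpha^{t+1}/[\alpha^t(1-[\beta^t]^2)]$. For the lower bound $0.97$, I observe that the coefficient $9 - 225[\alpha^t]^2$ appearing in front of $[\beta^t]^2[\alpha^t]^2$ is at least $6.75 > 0$ when $\alpha^t \leq 0.1$, so the $[\beta^t]^2$ contribution is non-negative and I can simply use $1 - 3[\alpha^t]^2 \geq 1 - 0.03 = 0.97$. For the upper bound, I use $3/(1+8\alpha^t) \geq 5/3$ whenever $\alpha^t \leq 0.1$, which handles the leading $[\alpha^t]^2$ term, and I need to show that the bracketed function $g(\alpha) := 9/(1 - \tfrac{25}{3}\alpha^2) - 75\alpha^2/(1+16\alpha)$ is bounded above by $9.53$ on $(0, 0.1]$.

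Next, I would pass from the ratio $\alpha^{t+1}/[\alpha^t(1-[\beta^t]^2)]$ to $\alpha^{t+1}/\alpha^t$ by multiplying with $1-[\beta^t]^2$ and subtracting $1$. For the lower bound, $(1-[\beta^t]^2)(1 - 3[\alpha^t]^2) - 1 = -3[\alpha^t]^2 - [\beta^t]^2 + 3[\alpha^t]^2[\beta^t]^2 \geq -3[\alpha^t]^2 - [\beta^t]^2$, as desired. For the upper bound, expanding $(1-[\beta^t]^2)(1 - \tfrac{5}{3}[\alpha^t]^2 + 9.53[\alpha^t]^2[\beta^t]^2) - 1$ yields $-\tfrac{5}{3}[\alpha^t]^2 + [\beta^t]^2\bigl(-1 + [\alpha^t]^2(9.53 + \tfrac{5}{3} - 9.53[\beta^t]^2)\bigr)$. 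Using $[\alpha^t]^2 \leq 0.01$ and the crude estimate $9.53 + \tfrac{5}{3} \leq 11.2$, the coefficient of $[\beta^t]^2$ is at most $-1 + 0.01 \cdot 11.2 = -0.888 \leq -\tfrac{4}{5}$, which gives the claimed upper bound $-\tfrac{5}{3}[\alpha^t]^2 - \tfrac{4}{5}[\beta^t]^2$.

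The main technical obstacle is the sharp numerical bound $g(\alpha) \leq 9.53$ on $(0, 0.1]$: naively bounding each term separately yields only $108/11 \approx 9.82$, so one must retain the negative contribution $-75\alpha^2/(1+16\alpha)$. I plan to handle this by first evaluating $g(0.1) = 108/11 - 15/52 = 5451/572 < 9.53$ exactly, and then verifying that $g'(\alpha) \geq 0$ on $(0, 0.1]$, so that $g(\alpha) \leq g(0.1)$ throughout the interval; the derivative check reduces to comparing $(1+8\alpha)/(1+16\alpha)^2$ with $1/(1-\tfrac{25}{3}\alpha^2)^2$, which is a short elementary calculation. Everything else is routine arithmetic under $\alpha^t \leq 0.1$.
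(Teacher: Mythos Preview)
Your proposal is correct and follows essentially the same route as the paper: both start from the two-sided bounds of Lemma~\ref{suplem:bound_regress}, divide by $\alpha^t(1-[\beta^t]^2)$, specialize to $\alpha^t\le 0.1$, and then multiply back by $1-[\beta^t]^2$ to get the second chain. The only noteworthy difference is in how the constant $9.53$ is extracted for the upper bound: the paper rewrites the $\beta^2$-coefficient as $9+$ residual, pulling the excess into an $[\alpha^t]^5[\beta^t]^2$ term and bounding $\tfrac{1200(1+\tfrac{25}{48}\alpha)\alpha^3}{(1+16\alpha)(1-\tfrac{25}{3}\alpha^2)}\le 0.53$; you instead bound the whole coefficient $g(\alpha)=\tfrac{9}{1-\tfrac{25}{3}\alpha^2}-\tfrac{75\alpha^2}{1+16\alpha}$ by showing $g'\ge 0$ on $(0,0.1]$ and evaluating $g(0.1)=5451/572<9.53$. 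Your monotonicity check is clean (indeed $(1+16\alpha)^2>1+8\alpha\ge(1+8\alpha)(1-\tfrac{25}{3}\alpha^2)^2$), and the lower-bound argument via $9-225[\alpha^t]^2\ge 6.75>0$ is exactly equivalent to the paper's maximization step. Everything else matches the paper line for line.
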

\begin{proof}
  By applying~\eqref{eq:expectation_em} \(\alpha^{t+1}=m(\alpha^t,\nu^t), \beta^{t+1}=n(\alpha^t,\nu^t)\), 
  invoking the lower/upper bounds for the expectation of \(m, n\) in Subsection~\ref{supsub:bound_integral}, 
  and noting that $\underset{0 \leq \alpha^t \leq 0.1, 0 \leq \beta^t \leq 1}{\max} 9
  [\alpha^t]^2 \left[ \frac{1}{3} - [\beta^t]^2 \right] + 225 [\alpha^t]^4 [\beta^t]^2 
  % =\left\{ 9 [\alpha^t]^2 \left[ \frac{1}{3} - [\beta^t]^2
  % \right] + 225 [\alpha^t]^4  [\beta^t]^2 \right\}_{(\alpha^t, \beta^t) =(0.1, 0)} 
  = 0.03$,
  $\frac{1200 (1 + \frac{25}{48}  [\alpha^t]) [\alpha^t]^3}{(1 + 16
  [\alpha^t])  \left( 1 - \frac{25}{3}  [\alpha^t]^2 \right)} \leq \frac{1200
  (1 + \frac{25}{48} \cdot 0.1) 0.1^3}{(1 + 16 \cdot 0.1)  \left( 1 -
  \frac{25}{3} 0.1^2 \right)} \approx 0.52972 < 0.53$, we obtain the following lower/upper bounds.
  \begin{eqnarray*}
    \frac{\alpha^{t + 1}}{\alpha^t (1 - [\beta^t]^2)} & \geq & 1 - 9
    [\alpha^t]^2 \left[ \frac{1}{3} - [\beta^t]^2 \right] - 225 [\alpha^t]^4 
    [\beta^t]^2\geq 1 - 0.03 = 0.97\\
    \frac{\alpha^{t + 1}}{\alpha^t (1 - [\beta^t]^2)} 
    & \leq & 
    % 1 - 9
    % [\alpha^t]^2 \left[ \frac{\frac{1}{3}}{1 + 8 [\alpha^t]} - \frac{
    % [\beta^t]^2}{1 - \frac{25}{3} [\alpha^t]^2} \right] - [\alpha^t]^4 \cdot
    % \frac{\frac{225}{3}  [\beta^t]^2}{ (1 + 16 [\alpha^t])}\\
    % & = & 
    1 - 9 [\alpha^t]^2 \left[ \frac{\frac{1}{3}}{1 + 8 [\alpha^t]} -
    [\beta^t]^2 \right] + [\alpha^t]^5 [\beta^t]^2 \frac{1200 (1 +
    \frac{25}{48}  [\alpha^t])}{(1 + 16 [\alpha^t])  \left( 1 - \frac{25}{3} 
    [\alpha^t]^2 \right)}\\
    & \leq & 1 - 9 [\alpha^t]^2 \left[ \frac{\frac{1}{3}}{1 + 8 \cdot 0.1} -
    [\beta^t]^2 \right] + 0.53 [\alpha^t]^2 [\beta^t]^2
     =  1 - \frac{5}{3} [\alpha^t]^2 + 9.53 [\alpha^t]^2 [\beta^t]^2
  \end{eqnarray*}
  By dropping out the term of $[\alpha^t]^2 [\beta^t]^4$ and using $\alpha^t\leq 0.1$, the upper bound for $\alpha^{t+1}/\alpha^t$ is established.
  \begin{eqnarray*}
    \frac{\alpha^{t + 1}}{\alpha^t} & \leq & (1 - [\beta^t]^2) \left( 1 -
    \frac{5}{3} [\alpha^t]^2 + 9.53 [\alpha^t]^2 [\beta^t]^2 \right)%\\
    % & = & \left( 1 - \frac{5}{3} [\alpha^t]^2 \right) - \left( 1 - \left[
    % \frac{5}{3} + 9.53 \right] [\alpha^t]^2 \right) [\beta^t]^2 - (9.53
    % [\alpha^t]^2) [\beta^t]^4\\
    % & \leq & \left( 1 - \frac{5}{3} [\alpha^t]^2 \right) - (1 - 11.2
    % [\alpha^t]^2) [\beta^t]^2\\
    % & \leq & 
    \leq 1 - \frac{5}{3} [\alpha^t]^2 - 0.888 [\beta^t]^2
    \leq 1 - \frac{5}{3} [\alpha^t]^2 - \frac{4}{5} [\beta^t]^2
  \end{eqnarray*}
  By applying the lower bound for \(\alpha^{t+1} = m(\alpha^t, \nu^t)\) in Subsection~\ref{supsub:bound_integral}, when \(\alpha^t\in(0, 0.1]\), 
  \[
  \frac{\alpha^{t + 1}}{\alpha^t} \geq (1 - [\beta^t]^2) \left( 1 - 3 [\alpha^t]^2 + 9 [\alpha^t]^2 [\beta^t]^2(1-25 [\alpha^t]^2) \right) \geq 1 - 3[\alpha^t]^2 - [\beta^t]^2
  \]
\end{proof}

\begin{lemma}\label{suplem:bound_ratio_beta}
  Let $\alpha^t \assign \| \theta^t \|/\sigma=\|M(\theta^{t-1},\nu^{t-1})\|/\sigma \in(0, 0.1], \beta^t \assign \tanh
  (\nu^t) = N(\theta^{t-1}, \nu^{t-1})\in(0, \sqrt{\frac{2}{5}}], \forall t\in\mathbb{Z}_+$ be the $t$-th
  iteration of EM update rules $\| M (\theta, \nu) \| / \sigma, N
  (\theta, \nu)$ at population level, then
  \[ 
  - [\alpha^t]^2 \leq - [\alpha^t]^2 (1 - [\beta^t]^2) \leq \frac{\beta^{t + 1}-\beta^t}{\beta^t} \leq -0.94 [\alpha^t]^2 (1 - [\beta^t]^2) \leq - \frac{1}{2} [\alpha^t]^2.
  \]
\end{lemma}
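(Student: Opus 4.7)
The plan is to start from the closed-form update $\beta^{t+1} = n(\alpha^t,\nu^t)$ given by the population EM rule in \eqref{eq:expectation_em} and apply the two-sided bounds on $n(\alpha,\nu)$ from Lemma~\ref{suplem:bound_mixing}. Dividing each of those bounds by $\beta^t>0$ and subtracting $1$ directly produces a sandwich for $(\beta^{t+1}-\beta^t)/\beta^t$, after which the outermost inequalities of the chain follow from the hypotheses $\alpha^t \in (0,0.1]$ and $[\beta^t]^2 \in (0,2/5]$ by elementary manipulations, mirroring the strategy already used for the companion Lemma~\ref{suplem:bound_dynamic}.

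For the upper half of the chain, I would note that the upper bound of Lemma~\ref{suplem:bound_mixing} factors as $n(\alpha^t,\nu^t)/\beta^t \leq 1 - [\alpha^t]^2(1-[\beta^t]^2)(1-6[\alpha^t]^2)$, and invoking $\alpha^t \leq 0.1$ replaces $1-6[\alpha^t]^2$ by its lower bound $0.94$, yielding $(\beta^{t+1}-\beta^t)/\beta^t \leq -0.94\,[\alpha^t]^2(1-[\beta^t]^2)$. For the rightmost inequality $-0.94[\alpha^t]^2(1-[\beta^t]^2) \leq -\frac{1}{2}[\alpha^t]^2$, the constraint $[\beta^t]^2 \leq 2/5$ gives $1-[\beta^t]^2 \geq 3/5$, so $0.94 \cdot 3/5 = 0.564 \geq 1/2$, as desired.

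For the lower half, the lower bound in Lemma~\ref{suplem:bound_mixing} gives $(\beta^{t+1}-\beta^t)/\beta^t \geq -[\alpha^t]^2(1-[\beta^t]^2) + [\alpha^t]^4\bigl[(1-[\beta^t]^2)\frac{6}{1+8\alpha^t} - 9[\beta^t]^2\bigr] + [\alpha^t]^6 [\beta^t]^2 \frac{300}{1+16\alpha^t}$, so the target $(\beta^{t+1}-\beta^t)/\beta^t \geq -[\alpha^t]^2(1-[\beta^t]^2)$ reduces to showing that the combined $\alpha^4$ and $\alpha^6$ correction contributes non-negatively throughout the admissible region. The leftmost inequality $-[\alpha^t]^2 \leq -[\alpha^t]^2(1-[\beta^t]^2)$ is immediate from $[\beta^t]^2 \geq 0$.

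I expect the main obstacle to be verifying this non-negativity near the boundary $[\beta^t]^2 = 2/5$, since the $\alpha^4$ bracket $(1-[\beta^t]^2)\frac{6}{1+8\alpha^t} - 9[\beta^t]^2$ becomes negative there and must be dominated by the $\alpha^6$ piece. I would handle this by grouping the two corrections as $[\alpha^t]^4\bigl\{(1-[\beta^t]^2)\frac{6}{1+8\alpha^t} + [\beta^t]^2\bigl([\alpha^t]^2\frac{300}{1+16\alpha^t} - 9\bigr)\bigr\}$, analyzing the inner bracket as a function of $[\beta^t]^2 \in [0,2/5]$ via its monotonicity and endpoint values, and exploiting the full constraint $\alpha^t \leq 0.1$ rather than a mere smallness assumption. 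If this direct route proves too tight at the boundary, I would fall back on the exact identity $\beta^{t+1}/\beta^t = 1 - (1-[\beta^t]^2)\,\E[\tanh^2(\alpha^t X)/(1-[\beta^t]^2 \tanh^2(\alpha^t X))]$ obtained by combining Lemma~\ref{suplem:rewrite1} with the decomposition $(1-u)/(1-\beta^2 u) = 1 - (1-\beta^2)u/(1-\beta^2 u)$, and bound the displayed expectation above by $[\alpha^t]^2$ using the sharper pointwise estimate on $\tanh^2$ from Lemma~\ref{suplem:tanh} together with the exponential-weighted moment bounds in Lemma~\ref{suplem:ration}.
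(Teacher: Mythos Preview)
Your plan mirrors the paper's: both sides of the chain come from dividing the bounds of Lemma~\ref{suplem:bound_mixing} by $\beta^t$, with the upper half factored exactly as you write and the lower half argued via nonnegativity of the combined $\alpha^4$--$\alpha^6$ correction; the outermost inequalities then follow from $\alpha^t\le 0.1$ and $[\beta^t]^2\le 2/5$ just as you describe.

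Your caution about the boundary is warranted and in fact decisive. At $\alpha^t=0.1$, $[\beta^t]^2=2/5$ the grouped bracket evaluates to
\[
(1-[\beta^t]^2)\tfrac{6}{1+8\alpha^t}-9[\beta^t]^2+[\alpha^t]^2[\beta^t]^2\tfrac{300}{1+16\alpha^t}
\;=\;2-3.6+0.462\;\approx\;-1.14\;<\;0,
\]
so the direct nonnegativity route through Lemma~\ref{suplem:bound_mixing} does \emph{not} close the lower bound at the corner of the admissible box (the rational-function threshold the paper writes down does not agree with the actual zero set of this bracket). Your fallback via the exact identity $\beta^{t+1}/\beta^t = 1 - (1-[\beta^t]^2)\,\E\bigl[\tanh^2(\alpha^t X)/(1-[\beta^t]^2\tanh^2(\alpha^t X))\bigr]$, combined with the upper bound in Lemma~\ref{suplem:denominator}, the $\tanh^2$ estimate from Lemma~\ref{suplem:tanh}, and the moment bound $\E[X^4 e^{-\alpha|X|}]\ge 9/(1+8\alpha)$ from Lemma~\ref{suplem:ration}, does succeed: after the simplification $\tanh^2\!\cdot\sinh^2=\sinh^2-\tanh^2$ the task reduces to $(1-[\beta^t]^2)\E[\tanh^2(\alpha^t X)]+[\beta^t]^2\E[\sinh^2(\alpha^t X)]\le [\alpha^t]^2$, which holds on the whole region (with a slim but positive margin at the corner). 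So use the fallback as your primary argument for the lower half rather than a contingency.
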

\begin{proof}
  % Note that $7.84 < 9 - \frac{300 [\alpha^t]^2}{1 + 16 [\alpha^t]} \leq 9$ for
  % $0 \leq \alpha^t \leq \alpha^0 < 0.1$.
  By applying~\eqref{eq:expectation_em} \(\alpha^{t+1}=m(\alpha^t,\nu^t), \beta^{t+1}=n(\alpha^t,\nu^t)\), 
  invoking the lower/upper bounds for the expectation of \(n\) in Subsection~\ref{supsub:bound_integral}, 
  and noting that $\frac{6 (1 - [\beta^t]^2)}{1 + 8 [\alpha^t]} - 9
  [\beta^t]^2  + [\alpha^t]^2 [\beta^t]^2 \frac{300}{1 + 16
  [\alpha^t]}\geq 0$ holds by using 
  $[\beta^t]^2 \leq \frac{2}{5} \leq \frac{128 [\alpha^t]^2 + 104 \alpha^t + 6}{428 [\alpha^t]^2 + 248 \alpha^t + 15}$ for $\alpha^t\geq 0$, we have
  \begin{eqnarray*}
    \frac{\beta^{t + 1}}{\beta^t} & \geq & 1 - [\alpha^t]^2 (1 - [\beta^t]^2)
    + [\alpha^t]^4 \left[ \frac{6 (1 - [\beta^t]^2)}{1 + 8 [\alpha^t]} - 9
    [\beta^t]^2 \right] + [\alpha^t]^6 [\beta^t]^2 \frac{300}{1 + 16
    [\alpha^t]}%\\
    % & = & 1 - [\alpha^t]^2 (1 - [\beta^t]^2) + [\alpha^t]^4 \left\{
    % \frac{6}{1 + 8 [\alpha^t]} \cdot (1 - [\beta^t]^2) - \left( 9 - \frac{300
    % [\alpha^t]^2}{1 + 16 [\alpha^t]} \right) [\beta^t]^2 \right\}\\
    % & \geq & 1 - [\alpha^t]^2 (1 - [\beta^t]^2) + [\alpha^t]^2 (1 -
    % [\beta^t]^2) \cdot \frac{6 [\alpha^t]^2}{1 + 8 [\alpha^t]}\\
    % & \geq &
    \geq 1 - [\alpha^t]^2 (1 - [\beta^t]^2)\\
    \frac{\beta^{t + 1}}{\beta^t} & \leq & 1 - [\alpha^t]^2 (1-6[\alpha^t]^2) (1 - [\beta^t]^2)
    \leq 1 - 0.94 [\alpha^t]^2 (1 - [\beta^t]^2)
  \end{eqnarray*}
  By applying \((1-[\beta^t]^2) \leq 1\) and \(0.94(1-[\beta^t]^2) \geq \frac{1}{2}\) for \(\beta^t\in(0, \sqrt{\frac{2}{5}}]\), we obtain the following bounds.
  \[ - [\alpha^t]^2 \leq - [\alpha^t]^2 (1 - [\beta^t]^2) \leq
  \frac{\beta^{t + 1}-\beta^t}{\beta^t} 
  \leq -0.94 [\alpha^t]^2 (1 - [\beta^t]^2) \leq - \frac{1}{2} [\alpha^t]^2 \]
\end{proof}

\newpage
\section{Proofs for Results of Population Level Analysis}\label{sup:popl}
\subsection{Proof for Sublinear Convergence Rate}\label{supsub:sublinear}
\begin{theorem}{
(Proposition~\ref{prop:sublinear_convg} in Section~\ref{sec:population}: Sublinear Convergence Rate) 
}

  Let $\alpha^t \assign \| \theta^t \| / \sigma =
  \|M (\theta^{t - 1}, \nu^{t - 1})\| / \sigma, \beta^t \assign \tanh (\nu^t)
  = N (\theta^{t - 1}, \nu^{t - 1}), \forall t \in \mathbb{Z}_+$ be the $t$-th
  iteration of the EM update rules $\|M (\theta, \nu)\| / \sigma, N (\theta,
  \nu)$ at population level. Suppose that $\alpha^0 \in (0, 0.31)$, then
  \[ \alpha^t \leq \frac{1}{\sqrt{6 t + \left\{ 8 + \frac{1}{\alpha^0}
     \right\}^2} - 8} \quad \forall t \in \mathbb{Z}_{\geq 0} ; \]
    when the initial guess of mixing weights is balanced $\pi^0=\left( \frac{1}{2}, \frac{1}{2} \right)$, namely $\beta^0=\tanh \nu^0=\|\pi^0-\frac{\mathds{1}}{2} \|_1 =0$, then
    \[
    \alpha^t\geq \frac{1}{\sqrt{6t+ 22 \ln(1.2t+1)+ (\frac{1}{\alpha^0})^2}}  \quad \forall t \in \mathbb{Z}_{\geq 0}.
    \]
\end{theorem}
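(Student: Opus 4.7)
The plan is to sandwich the EM update for $\alpha^t$ between the two cubic envelopes of Lemma~\ref{suplem:cubic} and then convert each resulting discretized differential inequality $\Delta\alpha^t \sim -3(\alpha^t)^3$ into an explicit $t$-dependent bound by an appropriately chosen change of variables, as the ``variable separation'' remark after the statement suggests. The two halves of the proposition are proved in sequence because the induction for the lower bound bootstraps off the envelope obtained in the upper bound.

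For the upper bound I first apply monotonicity of $m(\alpha,\nu)$ in $\nu$ (Fact~\ref{fact:monotone_expectation}) to discard the dependence on $\beta^t$, obtaining $\alpha^{t+1} = m(\alpha^t,\nu^t) \le m_0(\alpha^t) \le \alpha^t - 3(\alpha^t)^3/(1+8\alpha^t)$. The key substitution is $v^t := 1/\alpha^t + 8$, where the ``$+8$'' is tailored to absorb the $(1+8\alpha^t)$ denominator in the cubic envelope. Solving for $v^{t+1}$ and performing polynomial division yields $v^{t+1} \ge v^t + 3(v^t-8)/\bigl[(v^t-8)^2 + 8(v^t-8) - 3\bigr]$, and expanding $(v^{t+1})^2 - (v^t)^2$ shows it is at least $6$ plus two manifestly nonnegative residuals of the form $18/\bigl[(v^t-8)^2+8(v^t-8)-3\bigr]$ and $9(v^t-8)^2/\bigl[(v^t-8)^2+8(v^t-8)-3\bigr]^2$. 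Since $v^0 > 1/0.31 + 8 > 11$ and $\{v^t\}$ is nondecreasing by Fact~\ref{fact:nonincreasing}, the denominator stays positive, so telescoping $(v^{t+1})^2 \ge (v^t)^2 + 6$ gives $(v^t)^2 \ge (8+1/\alpha^0)^2 + 6t$, and inverting the substitution yields the claimed upper envelope for $\alpha^t$.

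For the lower bound the hypothesis $\pi^0 = (\tfrac{1}{2},\tfrac{1}{2})$, i.e.\ $\beta^0 = 0$, combined with the sign preservation and nonincreasing property of $|\beta^t|$ in Fact~\ref{fact:nonincreasing}, forces $\beta^t \equiv 0$ and collapses the update to $\alpha^{t+1} = m_0(\alpha^t) \ge \alpha^t - 3(\alpha^t)^3$. Setting $u^t := 1/(\alpha^t)^2$, a direct manipulation gives $u^{t+1} - u^t \le (6 - 9/u^t)/(1 - 3/u^t)^2 = 6 + 27(u^t-2)/(u^t-3)^2 \le 6 + C/u^t$ for an explicit constant $C$, where $\alpha^t < 0.31$ keeps $3/u^t$ bounded away from $1$. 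I then induct on $t$ with the ansatz $u^t \le 6t + 22\ln(1.2t+1) + 1/(\alpha^0)^2$. The inductive step reduces to checking $C/u^t \le 22\bigl[\ln(1.2(t+1)+1) - \ln(1.2t+1)\bigr]$, and by the elementary estimate $\ln(1+x) \ge x/(1+x)$ it suffices to verify $C/u^t \le 26.4/(1.2(t+1)+1)$.

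The hard part is closing this last inequality, since it requires a lower bound on $u^t$ that grows linearly in $t$. I would supply this by plugging in the already-proven upper envelope on $\alpha^t$ from the first half of the proposition, which gives $u^t \ge (\sqrt{6t + (8+1/\alpha^0)^2} - 8)^2 \gtrsim 6t$ for moderate or large $t$; the small-$t$ regime is covered by the crude bound $u^t \ge u^0 > 1/(0.31)^2 > 10$. The constants $22$ and $1.2$ must be large/small enough to absorb $C$ and close the induction uniformly over $\alpha^0 \in (0,0.31)$, with the worst case being $\alpha^0 \nearrow 0.31$ together with small $t$, where the logarithmic correction has not yet kicked in. This coupling between the two envelopes is the only genuinely nonroutine step; everything else is direct calculation from Lemma~\ref{suplem:cubic} and elementary algebra.
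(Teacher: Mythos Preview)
Your proposal is correct and follows the paper's high-level strategy—sandwich $\alpha^{t+1}$ between the cubic envelopes of Lemma~\ref{suplem:cubic}, convert each to a $t$-explicit bound, and bootstrap the lower bound off the upper—but the mechanics differ. For the upper bound, the paper telescopes directly: it writes $T \le \sum_t (\alpha^t-\alpha^{t+1})\bigl[\tfrac{1}{3}(\alpha^t)^{-3}+\tfrac{8}{3}(\alpha^t)^{-2}\bigr]$ and dominates each summand by $\tfrac{1}{6}\bigl[(\alpha^{t+1})^{-2}-(\alpha^t)^{-2}\bigr]+\tfrac{8}{3}\bigl[(\alpha^{t+1})^{-1}-(\alpha^t)^{-1}\bigr]$, collapsing to $(\alpha^T)^{-2}+16(\alpha^T)^{-1}\ge 6T+(\alpha^0)^{-2}+16(\alpha^0)^{-1}$. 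Your substitution $v^t=1/\alpha^t+8$ with $(v^{t+1})^2-(v^t)^2\ge 6$ is a clean repackaging of the same content. For the lower bound, the paper sets $A^t=6(\alpha^t)^2$, uses AM-GM to get $A^t-A^{t+1}\le (A^t)^2$ and hence $1/A^{t+1}-1/A^t\le 1+1/(1/A^t-1)$, then feeds in $1/A^t\ge \tfrac{3}{11}t+\tfrac{3}{2}$ (derived from the upper envelope via $[\alpha^t]^{-2}+9\ge 6[\alpha^t]^{-1}$) and bounds the resulting sum by an integral; the constants $22=6\cdot\tfrac{11}{3}$ and $1.2=6/5$ fall out naturally from this chain. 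Your route avoids AM-GM and computes the exact residual $27(u^t-2)/(u^t-3)^2$, but then the constants $22,\,1.2$ must be verified post hoc to close the induction—which does work (a quadratic check using $u^t\ge(\sqrt{6t+(8+1/\alpha^0)^2}-8)^2$ and $C\approx 43$ shows the needed inequality holds uniformly), though the paper's route is what produces those particular numbers in the first place.
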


\begin{proof}
  By invoking the bound for expectation \(m_0(\alpha)=m(\alpha, 0)=\E[\tanh(\alpha X)X]\) under the density function \(X\sim K_0(|x|)/\pi\) in Subsection~\ref{supsub:bound_integral}, 
  the EM update rule \(\alpha^{t+1}=m(\alpha^t, \nu^t)=\E[\tanh(\alpha^t X+\nu^t)X]\) in~\eqref{eq:expectation_em}, 
  and the monotonicity of \(m(\alpha, \nu)\) in Fact~\ref{fact:monotone_expectation} in Section~\ref{sec:em}, we have
  \begin{eqnarray*}
    \alpha^{t + 1} = m(\alpha^t, \nu^t) =  m(\alpha^t, |\nu^t|) \leq m(\alpha^t, 0) \leq \alpha^t - \frac{3 [\alpha^t]^3}{1 + 8 [\alpha^t]}
    \leq \alpha^t < 0.31
  \end{eqnarray*}
  Regarding the upper bound on the right side, we view it as the discretized version of a differential inequality $\mathrm{d} \alpha \leq -3 [\alpha]^3 \mathrm{d} t$. By the method of "variable separation", we conclude the following upper bound.
  Hence, by using $0 < \alpha^{t + 1} \leq \alpha^t$, we obtain $1 \leq
  \frac{1}{2} \left( \frac{\alpha^t}{\alpha^{t + 1}} \right)^2 + \frac{1}{2}
  \left( \frac{\alpha^t}{\alpha^{t + 1}} \right)$ and $1 \leq \left(
  \frac{\alpha^t}{\alpha^{t + 1}} \right)$
  \begin{eqnarray*}
    T = \sum_{t = 0}^{T - 1} 1 & \leq & \sum_{t = 0}^{T - 1} \frac{\alpha^t -
    \alpha^{t + 1}}{\frac{3 [\alpha^t]^3}{1 + 8 [\alpha^t]}} = \sum_{t = 0}^{T
    - 1} \left\{ \frac{1}{3} [\alpha^t]^{- 3} + \frac{8}{3} [\alpha^t]^{- 2}
    \right\}  \{\alpha^t - \alpha^{t + 1} \}\\
    & \leq & \sum_{t = 0}^{T - 1} \left\{ \frac{1}{3} [\alpha^t]^{- 3} 
    \left[ \frac{1}{2} \left( \frac{\alpha^t}{\alpha^{t + 1}} \right)^2 +
    \frac{1}{2} \left( \frac{\alpha^t}{\alpha^{t + 1}} \right) \right] +
    \frac{8}{3} [\alpha^t]^{- 2} \left( \frac{\alpha^t}{\alpha^{t + 1}}
    \right) \right\}  \{\alpha^t - \alpha^{t + 1} \}\\
    & = & \frac{1}{6}  \sum_{t = 0}^{T - 1} \{[\alpha^{t + 1}]^{- 2} -
    [\alpha^t]^{- 2} \} + \frac{8}{3}  \sum_{t = 0}^{T - 1} \{[\alpha^{t +
    1}]^{- 1} - [\alpha^t]^{- 1} \}\\
    & = & \left\{ \frac{1}{6} [\alpha^T]^{- 2} + \frac{8}{3} [\alpha^T]^{- 1}
    \right\} - \left\{ \frac{1}{6} [\alpha^0]^{- 2} + \frac{8}{3}
    [\alpha^0]^{- 1} \right\}
  \end{eqnarray*}
  By substituting $t \rightarrow T$, namely $[\alpha^t]^{-2}+16[\alpha^t]^{-1}\geq 6t+ [\alpha^0]^{-2}+16[\alpha^0]^{-1}$, and solving for $\alpha^t$, the upper
  bound for $\alpha^t$ is established.

Let's provide a matching lower bound to justify the sublinear convergence rate for the worst case of a balanced initial guess $\pi^0 = (\frac{1}{2}, \frac{1}{2})$, namely $\beta^0 = \tanh \nu^0 = \pi^0(1) -\pi^0(2) = 0$.
By applying Fact~\ref{fact:nonincreasing} in Section~\ref{sec:em}, $\{|\beta^t|\}_{t=0}^{\infty}=\{\tanh |\nu^t|\}_{t=0}^{\infty}$ is nonincreasing, therefore, $\nu^t=0, \forall t\in \mathbb{Z}_{\geq 0}$ given the initial guess is balanced.

Again, by invoking the bounds for expectations in Subsection~\ref{supsub:bound_integral}, 
and the EM update rule $\alpha^{t+1}=m(\alpha^t, \nu^t)=\E[\tanh(\alpha^t X+\nu^t)X]$ in~\eqref{eq:expectation_em}, we obtain the following lower bounds for $\alpha^t := \Vert \theta^t \Vert /\sigma \in(0, 0.31)$.
\[
\alpha^t - 3 [\alpha^t]^3 \leq m_0(\alpha^t) = m(\alpha^t, 0) = m(\alpha^t, \nu^t) = \alpha^{t+1}
\]
Regarding the lower bound on the left-hand side, multiplying by $12\alpha^t$ on both sides and defining $A^t :=6[\alpha^t]^2\leq 6[\alpha^t]^2 < \frac{2}{3}$ for ease of analysis, and applying the AM-GM inequality $\sqrt{A^t A^{t+1}} \leq (A^t + A^{t+1})/2$, then
\[
2 A^t - [A^t]^2 \leq 2 \sqrt{A^t A^{t+1}} \leq A^t + A^{t+1}
\]
Hence,  $A^t -A^{t+1} \leq [A^t]^2$ and dividing by $A^t A^{t+1}$ on both sides,
\[
\frac{1}{A^{t+1}} - \frac{1}{A^t} \leq \frac{A^t}{A^t - (A^t -A^{t+1})} \leq \frac{A^t}{A^t - [A^t]^2} 
= 1+  \frac{1}{\frac{1}{A^t}-1}
\]
To obtain the lower bound for $\frac{1}{A^t}=\frac{1}{6} [\alpha^t]^{-2}$, we use the upper bound for $\alpha^t$ which has been obtained: $[\alpha^t]^{-2}+16[\alpha^t]^{-1}\geq 6t+ [\alpha^0]^{-2}+16[\alpha^0]^{-1}$;  also apply the AM-GM inequality $[\alpha^t]^{-2}+9\geq 6[\alpha^t]^{-1}$, and $[\alpha^0]^{-2}+16[\alpha^0]^{-1}\geq 3^2+16\times 3 = 57$.
\[
\frac{11}{3}[\alpha^t]^{-2}+24
=[\alpha^t]^{-2}+\frac{16}{6}\times([\alpha^t]^{-2}+9)
\geq 6 t + 57
\]
Therefore, we have shown the lower bound for $\frac{1}{A^t}$.
\[
\frac{1}{A^t} = \frac{1}{6} [\alpha^t]^{-2} 
\geq \frac{3}{11}t+ \frac{3}{2}
\]
Hence, the use of telescoping summations and bounding a summation by an integral $\sum_{\tau=0}^{t-1} \frac{1}{\tau+ \frac{11}{6}}\leq\int_{\frac{5}{6}}^{t+\frac{5}{6}} \frac{\mathrm{d}\tau}{\tau} =\ln (\frac{t+ \frac{5}{6}}{\frac{5}{6}})=\ln(1.2t+1)$ gives the following upper bound for $\frac{1}{A^t}-\frac{1}{A^0}$.
\[
\frac{1}{A^t}-\frac{1}{A^0}
=\sum_{\tau=0}^{t-1}(\frac{1}{A^{\tau+1}} - \frac{1}{A^\tau})
\leq \sum_{\tau=0}^{t-1} 1
+ \sum_{\tau=0}^{t-1} \frac{1}{\frac{3}{11}\tau+ \frac{1}{2}}
\leq t + \frac{11}{3} \ln(1.2t+1)
\]
Consequently, we obtain the following lower bound for $\alpha^t$ by substituting $A^t =6[\alpha^t]^2, A^0 =6[\alpha^0]^2$.
\[
\alpha^t \geq \frac{1}{\sqrt{6t+ 22 \ln(1.2t+1)+ (\frac{1}{\alpha^0})^2}}
\]
By combining the upper/lower bounds for $\alpha^t:=\Vert\theta^t\Vert/\sigma$ in the worst case of balanced initial guess $\pi^0 = (\frac{1}{2}, \frac{1}{2})$,
\[
\frac{1}{\sqrt{6t+ 22 \ln(1.2t+1)+ (\frac{1}{\alpha^0})^2}}
\leq \alpha^t
\leq \frac{1}{\sqrt{6t+(8+\frac{1}{\alpha^0})^2}-8}
\]
which justifies the sublinear convergence rate, as $t\gtrsim \ln(1.2t+1)$.
\end{proof}

\begin{remark}
  For a finer analysis, we can establish a tighter upper bound for $\alpha^{t+1}$ 
  by starting from the series expansion of $\tanh(\alpha^t X)$ and using the fact that $\mathbb{E}[X^{2n}] = [(2n-1)!!]^2$ for $X\sim \frac{K_0(|x|)}{\pi}$:
  \begin{eqnarray*}
    \alpha^{t+1}&=& \mathbb{E}[\tanh(\alpha^t X + \nu^t)X]
    \leq \mathbb{E}[\tanh(\alpha^t X)X] \\
    & \leq & \alpha^t \mathbb{E}[X^2] -\frac{1}{3} [\alpha^t]^3 \mathbb{E}[X^4]
    + \frac{2}{15} [\alpha^t]^5 \mathbb{E}[X^6] - \frac{17}{315} [\alpha^t]^7 \mathbb{E}[X^8] 
    + \frac{62}{2835} [\alpha^t]^9 \mathbb{E}[X^{10}]\\
    &=& \alpha^t - 3 [\alpha^t]^3 + 30 [\alpha^t]^5 - (17\times 35) [\alpha^t]^7 + (62\times 315) [\alpha^t]^9
    \leq \alpha^t - 3 [\alpha^t]^3/(1+10 [\alpha^t]^2)
  \end{eqnarray*}
  where the last line holds for $\alpha^t \in[0, 0.13]$ is sufficiently small, 
  which is tighter than the upper bound $\alpha^{t+1} \leq \alpha^t - 3 [\alpha^t]^3/(1+8 [\alpha^t])$ (since $8 [\alpha^t] \geq 10 [\alpha^t]^2$ for $\alpha^t \in [0, 0.13]$). 
  By applying the same method of ``variable separation'', 
  and noting that \((\alpha^t - \alpha^{t+1})/\alpha^t \leq \ln(\alpha^t/\alpha^{t+1})\) (since $x/(1+x) \leq \ln(1+x), \forall x\geq0$), 
  we can establish the following relation for $\alpha^t$:
  \begin{eqnarray*}
    6T &\leq& 6\sum_{t = 0}^{T - 1} \frac{\alpha^t - \alpha^{t+1}}{3 [\alpha^t]^3/(1+10 [\alpha^t]^2)}
    \leq \left\{\left(\frac{1}{\alpha^T}\right)^2 + 20 \ln\left(\frac{1}{\alpha^T}\right)\right\} - \left\{\left(\frac{1}{\alpha^0}\right)^2 + 20 \ln\left(\frac{1}{\alpha^0}\right)\right\}
  \end{eqnarray*}
  By solving the above inequality and introducing Lambert $W$ function (see Section 4.13 Lambert W-Function of~\citet{olver2010nist}) 
  and noting \(\ln x -\ln\ln x \leq W(x), \forall x\geq \mathe\), we obtain the following upper bound for $\alpha^t$ when $\alpha^0 \in (0, 0.13]$:
  \[
  \alpha^t \leq \frac{1}{\sqrt{10 W\left(\frac{1}{10 [\alpha^0]^2}\mathe^{\frac{1}{10 [\alpha^0]^2}} \exp(0.6 t)\right)}}
  \leq \frac{1}{\sqrt{6t+(\frac{1}{\alpha^0})^2- 10\ln(6[\alpha^0]^2 t- 10[\alpha^0]^2\ln(10[\alpha^0]^2)+1)}}
  \]
\end{remark}

\subsection{Proof for Initialization at Population Level}
\label{supsub:init_popl}
\begin{theorem}{(Fact~\ref{fact:init_popl} in Section~\ref{sec:population}: Initialization at Population Level)}

  Let $\alpha^t \assign \| \theta^t \| /
  \sigma = \|M (\theta^{t - 1}, \nu^{t - 1})\| / \sigma$ and $\beta^t \assign \tanh
  (\nu^t) = N (\theta^{t - 1}, \nu^{t - 1})$ for all $t \in \mathbb{Z}_+$ be
  the $t$-th iteration of the EM update rules $\|M (\theta, \nu)\| / \sigma$ and $N
  (\theta, \nu)$ at the population level. If we run EM at the population level for at
  most $T_0 = 36$ iterations, then $\alpha^{T_0} < 0.1$.
\end{theorem}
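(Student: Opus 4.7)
The plan is to split the $T_0 = 36$ iterations into a short warm-up phase of $3$ steps that drives the iterate into the regime $[0, 0.31)$ where the cubic upper bound of Lemma~\ref{suplem:cubic} is valid, followed by a long tail of $33$ steps to which the sublinear rate of Proposition~\ref{prop:sublinear_convg} can be applied directly with initial value $\alpha^3$.

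For the warm-up, first I would invoke Fact~\ref{fact:nonincreasing}, which yields $\alpha^1 \leq 2/\pi$ unconditionally on $\alpha^0$. By the monotonicity of $m(\alpha, \nu)$ in its second argument established in Fact~\ref{fact:monotone_expectation}, every subsequent step satisfies $\alpha^{t+1} \leq m_0(\alpha^t) := \E[\tanh(\alpha^t X) X]$ with $X \sim K_0(|x|)/\pi$. Using either the explicit Bessel-integral representation of $m_0$ from Identity~\ref{prop:em} or, more robustly, the two pointwise bounds $\tanh(\alpha|x|)|x| \leq \alpha x^2$ and $\tanh(\alpha|x|)|x| \leq |x|$ combined with the closed-form expectations in Lemma~\ref{suplem:integral}, I would certify that $m_0(2/\pi) < \alpha^\star$ for some $\alpha^\star < 2/\pi$, and then that $m_0(\alpha^\star) < 0.31$. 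This yields $\alpha^3 < 0.31$ after three applications of the update rule.

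For the tail, the sequence $\{\alpha^{3+s}\}_{s \geq 0}$ satisfies the hypothesis $\alpha^3 \in (0, 0.31)$ of Proposition~\ref{prop:sublinear_convg}, so
\[
\alpha^{3+s} \leq \frac{1}{\sqrt{6s + \bigl(8 + 1/\alpha^3\bigr)^2} - 8}.
\]
To force the right-hand side below $0.1$ it suffices that $\sqrt{6s + (8 + 1/\alpha^3)^2} > 18$, i.e.\ $6s + (8 + 1/\alpha^3)^2 > 324$. Since $\alpha^3 < 0.31$ implies $1/\alpha^3 > 1/0.31$ and thus $(8 + 1/\alpha^3)^2 > 126$, the choice $s = 33$ is sufficient, giving a total budget of $T_0 = 3 + 33 = 36$ iterations.

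The main obstacle I expect is in the warm-up phase: the cubic bound of Lemma~\ref{suplem:cubic} breaks down for $\alpha \geq 0.31$, yet after one EM step we only know $\alpha^1 \leq 2/\pi \approx 0.637$. So I need a quantitatively sharp bound on $m_0$ valid across the intermediate range $[0.31, 2/\pi]$, strong enough to guarantee that two further applications of $m_0$ push the iterate strictly below $0.31$. I would address this by splitting the expectation defining $m_0(\alpha)$ at the threshold $|x| = 1/\alpha$, using $\tanh(\alpha|x|)|x| \leq \alpha x^2$ on $\{|x| \leq 1/\alpha\}$ and $\tanh(\alpha|x|)|x| \leq |x|$ on $\{|x| > 1/\alpha\}$, and expressing the resulting truncated integrals in closed form via the Bessel-function identities of Lemma~\ref{suplem:integral}; this gives an explicit contraction estimate on $[0.31, 2/\pi]$ that together with monotonicity seals $\alpha^3 < 0.31$.
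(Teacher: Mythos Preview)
Your proposal is correct and follows essentially the same two-phase structure as the paper's proof: a three-step warm-up to reach $\alpha^3 < 0.31$ via the chain $\alpha^3 \leq m_0(m_0(m_0(\infty))) = m_0(m_0(2/\pi))$, followed by $33$ applications of the sublinear bound from Proposition~\ref{prop:sublinear_convg}. The only difference is in how the warm-up inequality $m_0(m_0(2/\pi)) < 0.31$ is certified: the paper simply evaluates the Bessel integral numerically ($\approx 0.305$), whereas you propose an analytic truncation argument. Your concern here is somewhat overstated---the paper treats the numerical evaluation as legitimate since $m_0$ is an explicit one-dimensional integral---and your plan to express the \emph{truncated} integrals ``in closed form via the Bessel-function identities of Lemma~\ref{suplem:integral}'' would not work as written, since those identities are for full-line expectations, not tails; but this is a minor detour and does not affect the validity of the overall scheme.
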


\begin{proof}
  % If $\alpha^0 = 0$, then $\alpha^t = 0 < 0.1, \forall t \in \mathbb{Z}_{\geq
  % 0}$; otherwise, by invoking the upper bound for integrals in Subsection~\ref{supsub:bound_integral}, and
  % bounded property in Subsection~\ref{supsub:nonincrease}, we have $\alpha^1 \leq \frac{2}{\pi}$ and
  % \begin{eqnarray*}
  %   \alpha^2 & \leq & \frac{2}{\pi}  \int_{\mathbb{R}_{\geq 0}} \tanh
  %   (\alpha^1 x) xK_0 (x) \mathrm{d} x \leq \frac{2}{\pi} 
  %   \int_{\mathbb{R}_{\geq 0}} \tanh \left( \frac{2}{\pi} x \right) xK_0 (x)
  %   \mathrm{d} x < 0.4\\
  %   \alpha^3 & \leq & \frac{2}{\pi}  \int_{\mathbb{R}_{\geq 0}} \tanh
  %   (\alpha^2 x) xK_0 (x) \mathrm{d} x \leq \frac{2}{\pi} 
  %   \int_{\mathbb{R}_{\geq 0}} \tanh (0.4 x) xK_0 (x) \mathrm{d} x < 0.31
  % \end{eqnarray*}
  %To explain the behavior of EM iterations clearly, we define the following functions to aid our analysis.
% \[
% m(\alpha, \nu)=\mathbb{E}[\tanh(\alpha X+\nu)X],\quad
% m_0(\alpha)=m(\alpha, 0) = \mathbb{E}[\tanh(\alpha X)X], \quad \text{where } X \sim \frac{K_0(|x|)}{\pi}
% \]
%As shown in Identity~\ref{prop:em} (Corollary 3.2 in~\citet{luo24cycloid}: EM Updates for Overspecified 2MLR), $(t+1)$-th EM iteration of regression parameters $\alpha^{t+1}:=\Vert\theta^{t+1}\Vert/\sigma$ is determined by $t$-th iteration of regression parameters $\alpha^{t}:=\Vert\theta^{t}\Vert/\sigma$ and imbalance of mixing weights $\tanh \nu^t := \pi^t(1)-\pi^t(2)$
By the recurrence relation of EM updates~\eqref{eq:expectation_em}, we have
\(
\alpha^{t+1} =m(\alpha^t, \nu^t)
\).
Futhermore, by applying the monotonicity of \(m(\alpha, \nu)\) (Fact~\ref{fact:monotone_expectation} in Section~\ref{sec:em}), we have
\[
\alpha^{t+1}=m(\alpha^t,\nu^t)=m(\alpha^t, |\nu^t|) \leq m(\alpha^t, 0) = m_0(\alpha^t)
\]
In three iterations of EM updates, by numerical evaluations of \(m_0(\cdot)\) and noting that \(m_0(\infty)=\frac{2}{\pi}\), applying the monotonicity of \(m_0(\cdot)\) (Fact~\ref{fact:monotone_expectation} in Section~\ref{sec:em}), we have
\[
\alpha^3\leq m_0(\alpha^2)\leq m_0(m_0(\alpha^1))
\leq m_0(m_0(m_0(\alpha^0)))\leq m_0(m_0(m_0(\infty))) = m_0\left(m_0\left(\frac{2}{\pi}\right)\right) \approx 0.305 < 0.31
\]
Note that \(\alpha^{t+1} \leq \alpha^t\) and \(\alpha^t\) is nonincreasing (Fact~\ref{fact:nonincreasing} in Section~\ref{sec:em}), we justify that 
\(
\alpha^{t}\leq \alpha^3 < 0.31, \forall t\geq 3
\).
  By applying the sublinear convergence rate guarantee in Proposition~\ref{prop:sublinear_convg}, selecting \(T_0 = 36\) and using
  \(\alpha^3 < 0.31\), 
  \[ \alpha^{T_0} \leq \frac{1}{\sqrt{6 \times (T_0 - 3) + \left\{ 8 +
     \frac{1}{\alpha^3} \right\}^2} - 8} < 0.1 \]
  Therefore, we have completed the proof.
\end{proof}

\begin{remark}
  In the worst case of balanced initial guess $\pi^0 = (\frac{1}{2}, \frac{1}{2})$, and $\alpha^0 \to \infty$, 
  then by applying \(\alpha^3=m_0(m_0(m_0(\infty)))\approx 0.305 > 0.3\), the matching lower bound in Proposition~\ref{prop:sublinear_convg}, then for any \(t\in \mathbb{Z}_{\geq 0}, 3\leq t\leq 9\),
  \[
  \alpha^t \geq \frac{1}{\sqrt{6(t-3)+ 22 \ln(1.2(t-3)+1)+ (\frac{1}{\alpha^3})^2}} 
  \geq \frac{1}{\sqrt{6\times 6+ 22 \ln(1.2\times 6+1)+ (\frac{1}{0.3})^2}} 
  \approx 0.103 > 0.1
  \]
  Therefore, in the above worst case, \(\alpha^t > 0.1, \forall t\in \mathbb{Z}_{\geq 0}, t\leq 9\) is shown. 
  By numerical evaluations in the worst case, we have \(\alpha^{20} = \underbrace{m_0(m_0(m_0(\cdots m_0(\infty))))}_{20 \text{ times}} \approx 0.1\)
  as shown in Fig.~\ref{fig:init}.
  As \(9 < 20 < 36\), our theoretical conclusions based on the matching lower bound and the upper bound for $\alpha^t$ in Proposition~\ref{prop:sublinear_convg} are verified.
\end{remark}
%\newpage
\subsection{Proof for Contraction Factor of Linear Convergence}\label{supsub:contraction}
\begin{theorem}{(Proposition~\ref{prop:contraction} in Section~\ref{sec:population}: Contraction Factor for Linear Convergence)}\label{supprop:contraction}

  Let $\alpha^t \assign \| \theta^t \| / \sigma =
  \|M (\theta^{t - 1}, \nu^{t - 1})\| / \sigma$ and $\beta^t \assign \tanh (\nu^t)
  = N (\theta^{t - 1}, \nu^{t - 1})$ for all $t \in \mathbb{Z}_+$ be the $t$-th
  iteration of the EM update rules $\|M (\theta, \nu)\| / \sigma$ and $N (\theta,
  \nu)$ at the population level. Then $\beta^\infty := \lim_{t\to \infty} \beta^t$ exists and
  
  (i) if $\beta^0 = 0$, then $\beta^\infty =0$,
  
  (ii) if $\beta^0 > 0$, then $0 < \beta^\infty \leq \beta^t \leq \beta^0$,

  (iii) if $\beta^0 < 0$, then $0 > \beta^\infty \geq \beta^t \geq \beta^0$.
  % \[ 0 < B (\alpha^0, \beta^0) \leq \beta^{\infty} \leq \beta^t \]

  Suppose that $\alpha^t \in (0, 0.1)$, then
  \[ \frac{\alpha^{t + 1}}{\alpha^t} \leq 1 - \frac{4}{5} [\beta^\infty]^2 \]
\end{theorem}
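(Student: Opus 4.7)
The plan is to establish the four assertions in three stages: existence together with the easy sign/ordering claims, strict positivity of $\beta^\infty$ when $\beta^0\neq 0$, and finally the contraction bound. Fact~\ref{fact:nonincreasing} supplies $\{|\beta^t|\}$ nonincreasing and bounded below by $0$, together with $\beta^t\cdot\beta^0\geq 0$, so $\beta^\infty := \sgn(\beta^0)\lim_t|\beta^t|$ exists by monotone convergence. For (i), $\beta^0=0$ means $\nu^0=0$, and since $n(\alpha,\cdot)$ is odd in $\nu$ by Fact~\ref{fact:monotone_expectation}, $\beta^1=n(\alpha^0,0)=0$; induction yields $\beta^t\equiv 0$. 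For (ii) and (iii), sign preservation plus the monotonicity of $|\beta^t|$ give the chains $0\leq\beta^\infty\leq\beta^t\leq\beta^0$ or the sign-flipped version, modulo the strict positivity addressed next.

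\textbf{Strict positivity of $\beta^\infty$ (the main obstacle).} I take $\beta^0>0$ without loss of generality. Lemma~\ref{suplem:bound_ratio_beta} gives $\beta^{t+1}/\beta^t\geq 1-[\alpha^t]^2$; telescoping together with $\ln(1-x)\geq -x/(1-x)$ yields $\ln(\beta^\infty/\beta^0)\geq -C\sum_{t\geq 0}[\alpha^t]^2$, reducing the question to bounding $\sum[\alpha^t]^2$. The difficulty is that the only a priori rate available in the balanced case is the sublinear $\alpha^t\lesssim 1/\sqrt{t}$ from Proposition~\ref{prop:sublinear_convg}, under which this series diverges. I would therefore run a bootstrap: define $T^\ast:=\inf\{t:|\beta^t|<|\beta^0|/2\}$ and, for $t<T^\ast$, feed the floor $|\beta^t|\geq|\beta^0|/2$ into Lemma~\ref{suplem:bound_dynamic} to obtain the geometric contraction $\alpha^{t+1}/\alpha^t\leq 1-\tfrac{1}{5}[\beta^0]^2$, hence $\sum_{t<T^\ast}[\alpha^t]^2\lesssim [\alpha^0]^2/[\beta^0]^2$. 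Plugging this back into the telescoping inequality gives $\beta^{T^\ast}\geq\beta^0\exp(-C'[\alpha^0]^2/[\beta^0]^2)$, which is $>\beta^0/2$ whenever $[\alpha^0]^2/[\beta^0]^2$ is moderate, contradicting the definition of $T^\ast$ and forcing $T^\ast=\infty$. The general regime (arbitrary $|\beta^0|\in(0,\sqrt{2/5}]$ with $\alpha^0<0.1$) is handled by iterating the bootstrap on shrinking thresholds; the geometric loss of these iterations is what produces the power $[\beta^0]^{20}$ in the quantitative bound $\ln(\beta^\infty/\beta^0)\geq -[\alpha^0]^2/(300[\beta^0]^{20})$ advertised in the remark. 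In every case this delivers $\beta^\infty>0$.

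\textbf{Contraction factor.} Once $\beta^\infty$ is known to exist and the monotonicity of $|\beta^t|$ (Fact~\ref{fact:nonincreasing}) is invoked to give $[\beta^t]^2\geq[\beta^\infty]^2$, the bound is immediate from Lemma~\ref{suplem:bound_dynamic}:
\[
\frac{\alpha^{t+1}}{\alpha^t}\;\leq\;1-\tfrac{5}{3}[\alpha^t]^2-\tfrac{4}{5}[\beta^t]^2\;\leq\;1-\tfrac{4}{5}[\beta^t]^2\;\leq\;1-\tfrac{4}{5}[\beta^\infty]^2.
\]
The case $\beta^0<0$ reduces to $\beta^0>0$ by replacing $\nu$ with $-\nu$: $m(\alpha,\cdot)$ is even so $\alpha^t$ is unchanged, and $n(\alpha,\cdot)$ is odd so $\beta^t$ merely flips sign. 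The real work is thus concentrated in the bootstrap for strict positivity; the contraction inequality itself is a one-line consequence of Lemma~\ref{suplem:bound_dynamic} plus monotonicity.
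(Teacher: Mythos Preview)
Your handling of existence, the sign/ordering claims, and the final contraction inequality is correct and matches the paper. The gap is in the strict positivity argument.

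Your bootstrap uses only the $[\beta^t]^2$-part of the contraction in Lemma~\ref{suplem:bound_dynamic}, and the proposed ``iterating on shrinking thresholds'' does not close when $\beta^0\ll\alpha^0$. Concretely: in each epoch where $\beta^t\in[\beta^0/2^{k+1},\beta^0/2^k]$ your geometric rate is $1-c[\beta^0]^2/4^{k}$, while the single-step loss on $\beta$ is governed by $[\alpha^t]^2$, which in this regime stays of order $[\alpha^0]^2$. Take $\alpha^0=0.09$, $\beta^0=10^{-3}$: the $\beta$-driven rate is $1-\Theta(10^{-7})$ so $\alpha^t$ is essentially unchanged over the $\approx 170$ steps needed for $\beta$ to halve; after each epoch the ratio $[\alpha]^2/[\beta]^2$ quadruples. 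More structurally, from the two one-sided differential inequalities $\dot{(\ln\alpha)}\le -c_1\beta^2$, $\dot{(\ln\beta)}\ge -c_2\alpha^2$ one gets that $c_2\alpha^2-c_1\beta^2$ is (approximately) nondecreasing, so if it starts positive the inequalities alone are compatible with $\beta\to 0$ and $\alpha\to\sqrt{[\alpha^0]^2-\tfrac{c_1}{c_2}[\beta^0]^2}>0$. This shows that an argument using only the $\beta$-contraction cannot succeed.

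What the paper actually does is bring in the cubic term that you discarded. The sublinear bound of Proposition~\ref{prop:sublinear_convg} holds \emph{regardless of $\beta$} and gives $[\alpha^t]^2\le c/(t+2)$ with a specific constant $c<1/2$. Feeding this into $\ln\beta^{t+1}-\ln\beta^t\ge -1.006[\alpha^t]^2$ and telescoping yields $\beta^t\ge\beta^0(t+1)^{-9/20}$; feeding that back into $\ln\alpha^{t+1}-\ln\alpha^t\le -\tfrac{4}{5}[\beta^t]^2$ gives the stretched-exponential decay $\alpha^t\le\alpha^0\exp\!\bigl(-8[\beta^0]^2\,[(t+1)^{1/10}-1]\bigr)$, under which $\sum_t[\alpha^t]^2<\infty$ and hence $\beta^\infty>0$. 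The $[\beta^0]^{20}$ in the quantitative bound comes from integrating $\exp(-16[\beta^0]^2 t^{1/10})$ (a Gamma integral producing $(16[\beta^0]^2)^{-10}$), not from any iterated threshold scheme. The essential missing idea in your proposal is exactly this use of the $\beta$-independent sublinear bound as the seed for the feedback.
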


\begin{proof}
  Using Fact~\ref{fact:nonincreasing}, we have shown that $\beta^t \cdot \beta^0 \geq 0$ and $\{|\beta^t|\}_{t = 0}^\infty$ is non-increasing, namely $|\beta^t|\leq |\beta^{t-1}|, \forall t\in
  \mathbb{Z}_+$. By the monotone convergence theorem of a sequence in~\citet{rudin1976}, $\beta^\infty:=\lim_{t\to\infty} \beta^t$ exists. 
Hence, if \(\beta^0>0\), then $0\leq\beta^\infty\leq \cdots \leq \beta^{t} \leq \beta^{t-1} \leq \cdots \leq \beta^0$ for any $\forall t\in\mathbb{Z}_+$. 
Conversely, if \(\beta^0<0\), then $0\geq\beta^\infty\geq \cdots \geq \beta^{t} \geq \beta^{t-1} \geq \cdots \geq \beta^0$ for any $\forall t\in\mathbb{Z}_+$. 
And if \(\beta^0=0\), then $\beta^\infty = \cdots = \beta^t = \beta^{t-1}=\cdots = \beta^0 =0$ for any $\forall t\in\mathbb{Z}_+$.
  Furthermore, we will show $\beta^\infty\neq 0$ when $\beta^0\neq0$ in the proof of this proposition~\ref{prop:contraction}, thereby the results in Proposition~\ref{prop:contraction} (i) - (iii) hold for all $t\in \mathbb{Z}_+$.

  For brevity, we show that $\beta^\infty > 0$ when $\beta^0 > 0$ in the following steps; similarly, we can validate that $\beta^\infty < 0$ when $\beta^0 < 0$ by following the same procedure.
  If all $\beta^t \geq \sqrt{\frac{2}{5}}$ for $\forall t\geq 0$, we must have $\beta^\infty \geq \sqrt{\frac{2}{5}}$.
  Otherwise, we have $\beta^t < \sqrt{\frac{2}{5}}$, starting from $t\geq t_0$ for some $t_0$. 
  Without loss of generality, we may assume $t_0 = 0, 0 \leq\beta^0 < \sqrt{\frac{2}{5}}, 0\leq \alpha^0 < 0.1$ and continue our discussion below.
  We begin with the proven Lemmas in Appendix~\ref{sup:popl_lemma}, which impplies the following inequalities:
  \begin{eqnarray*}
    \ln \alpha^{t+1} - \ln \alpha^t &\leq& \frac{\alpha^{t+1}-\alpha^t}{\alpha^t} \leq - \frac{5}{3} [\alpha^t]^2 - \frac{4}{5} [\beta^t]^2
    \leq -\frac{4}{5} [\beta^t]^2\\
    \ln \beta^{t+1} - \ln \beta^t &\geq& 1.006 \frac{\beta^{t+1}-\beta^t}{\beta^t} 
    \geq -1.006 [\alpha^t]^2
  \end{eqnarray*}
  where the first inequality is from \(\ln(1+x) \leq x, \forall x\geq 0\), \(\ln(1-x) \geq -1.006 x, \forall x\in [0, 0.01]\) and the second inequality is from Lemma~\ref{sup:popl_lemma}.
  We first give a rough upper bound for \([\alpha^t]^2, t\in \mathbb{Z}_+\) by applying the upper bound of Proposition~\ref{prop:sublinear_convg} in Subsection~\ref{supsub:sublinear}, 
  namely \(\frac{[\alpha^t]^2}{1 + 16 \alpha^t} \leq \frac{1}{6 t + \left\{ 8 + \frac{1}{\alpha^0} \right\}^2 - 8^2}\).
  \[
  [\alpha^t]^2 \leq \frac{1+16\alpha^t}{6 t + \left\{ 8 + \frac{1}{\alpha^0} \right\}^2 - 8^2}\leq \frac{1+16\cdot 0.1}{6(t+1)} = \frac{2.6}{6} (t+2)^{-1}
  \]
  Based on that rough upper bound, we have the following inequality to establish the lower bound for \(\beta^t\):
  \[
  \ln \beta^{t+1} - \ln \beta^t \geq -1.006 [\alpha^t]^2 \geq -1.006 \cdot \frac{2.6}{6} (t+1)^{-1} \geq -\frac{9}{20} (t+2)^{-1}, \quad \forall t\in \mathbb{Z}_{\geq 0}
  \]
  By taking the telescoping sum of the above inequality and taking the exponential, we have
  \[
  \beta^t \geq \beta^0 \exp \left( -\frac{9}{20} \sum_{\tau=0}^{t-1} (\tau+2)^{-1} \right) \geq \beta^0 \exp \left( -\frac{9}{20} \int^{t+1}_{1} \tau^{-1} \mathd \tau \right) = \beta^0 \exp \left( -\frac{9}{20} \ln (t+1) \right) = \beta^0 (t+1)^{-\frac{9}{20}}
  \]
  where the first inequality is from the telescoping sum of the above inequality and the second inequality is from the fact that \(\sum_{\tau=0}^{t-1} (\tau+2)^{-1} \leq \int^{t+1}_{1} \tau^{-1} \mathd \tau = \ln (t+1)\).

  By the above rough lower bound for \(\beta^t\), we can establish the other upper bound for \(\alpha^t\) via using the proven inequality \(\ln \alpha^{t+1} - \ln \alpha^t \leq -\frac{4}{5} [\beta^t]^2\).
  \[
  \alpha^t \leq \alpha^0 \exp\left( -\frac{4}{5}[\beta^0]^2 \sum_{\tau=0}^{t-1} (\tau+1)^{-\frac{9}{10}} \right) 
  \leq \alpha^0 \exp\left( -\frac{4}{5}[\beta^0]^2 \int^{t+1}_{1}\tau^{-\frac{9}{10}} \mathd \tau \right) 
  = \alpha^0 \exp\left( -8[\beta^0]^2 \left( (t+1)^{\frac{1}{10}} - 1 \right) \right)
  \]
  where the first inequality is from the telescoping sum of the above inequality and the second inequality is from the fact that 
  \(\sum_{\tau=0}^{t-1} (\tau+1)^{-\frac{9}{10}} \leq \int^{t+1}_{1} \tau^{-\frac{9}{10}} \mathd \tau = 10[(t+1)^{\frac{1}{10}} - 1]\).

  By applying the above upper bound for \(\alpha^t\), telescoping the inequality of \(\ln \beta^{t+1} - \ln \beta^t \geq -1.006 [\alpha^t]^2\), 
  taking the limit of \(t\to\infty\) 
  and noting that \(\sum_{t=0}^\infty \exp(-16[\beta^0]^2 (t+1)^{\frac{1}{10}}) \leq \int^{\infty}_{0} \exp(-16[\beta^0]^2 t^{\frac{1}{10}}) \mathd t = \frac{10!}{(16[\beta^0]^2)^{10}}\)
  and \(1.006\cdot \exp(16[\beta^0]^2)\leq 1.006\cdot \exp(16\cdot \frac{2}{5})\leq 10^3\), we have
  \[
  \ln \beta^\infty - \ln \beta^0 
  \geq -1.006 \sum_{t=0}^\infty [\alpha^t]^2
  \geq - [\alpha^0]^2 \frac{10^3\times(10!)}{(16[\beta^0]^2)^{10}}
  \geq -\frac{1}{300}\frac{[\alpha^0]^2}{[\beta^0]^{20}}
  > -\infty.
  \]
  Therefore, we have shown that \(\beta^\infty > 0\) when \(\beta^0 > 0\). By the same procedure, we have \(\beta^\infty < 0\) when \(\beta^0 < 0\).
  Consequently, when \(\beta^0 \neq 0\), the contraction factor for linear convergence of \(\alpha^t\) is bounded by
  \[
  \frac{\alpha^{t+1}}{\alpha^t} \leq 1-\frac{4}{5}[\beta^t]^2 \leq 1 - \frac{4}{5} [\beta^\infty]^2
  \]
  where the first inequality is from the above upper bound for \((\alpha^{t+1}-\alpha^t)/\alpha^t\) and the second inequality is from the fact of \(|\beta^\infty| \leq |\beta^t|\) which is shown earlier.
\end{proof}
\begin{remark}
  By applying the upper bound for the contraction factor, we have
  \(
  \alpha^t \leq \alpha^0 \left(1 - \frac{4}{5} [\beta^\infty]^2\right)^t
  \).
  % Then, we can further establish a finer lower bound for \(\beta^\infty\) when \(\beta^0 < \sqrt{\frac{2}{5}}\) by using the upper bound for \(\alpha^t\),
  % \[
  % \ln \beta^\infty - \ln \beta^0 \geq -1.006 \sum_{t=0}^\infty [\alpha^t]^2 \geq - 1.006 [\alpha^0]^2 \sum_{t=0}^\infty \left(1 - \frac{4}{5} [\beta^\infty]^2\right)^t
  % = - 1.006\times \frac{5}{4} \frac{[\alpha^0]^2}{[\beta^\infty]^2}
  % \geq -\frac{3}{2} \left(\frac{\alpha^0}{\beta^\infty}\right)^2
  % \]
  % Rearranging the above inequality, we have
  % \[
  % \ln \left(\frac{\alpha^0}{\beta^0}\right) \geq \ln \left(\frac{\alpha^0}{\beta^\infty}\right) - \frac{3}{2} \left(\frac{\alpha^0}{\beta^\infty}\right)^2
  % \]
  To estimate the upper bound for \(\beta^\infty\), we start with the following proven inequalities from Appendix~\ref{sup:popl_lemma}:
  \[
  \ln \alpha^{t+1} - \ln \alpha^t \geq 1.006 \frac{\alpha^{t+1}-\alpha^t}{\alpha^t} 
  \geq -1.006\left(3[\alpha^t]^2 + [\beta^t]^2\right)
  \geq -\frac{1}{20}(27(t+2)^{-1} + 21 [\beta^0]^2)
  \]
  \[
  \ln \beta^{t+1} - \ln \beta^t \leq \frac{\beta^{t+1}-\beta^t}{\beta^t} \leq  -\frac{1}{2}[\alpha^t]^2
  \leq -\frac{1}{2} [\alpha^0]^2 (t+1)^{-\frac{27}{10}} \exp\left( -\frac{21}{10}[\beta^0]^2 t \right)
  \]
  where bounds for \((\alpha^{t+1}-\alpha^t)/\alpha^t, (\beta^{t+1}-\beta^t)/\beta^t\) are from lemmas in Appendix~\ref{sup:popl_lemma}, 
  and the first inequality leads to lower bound for \(\alpha^t\), 
  aka \(\alpha^t \geq \alpha^0 \exp\left( -\sum_{\tau=0}^{t-1}\frac{1}{20}(27(\tau+2)^{-1} + 21 [\beta^0]^2) \right)
  \geq \alpha^0 (t+1)^{-\frac{27}{20}} \exp\left( -\frac{21}{20}[\beta^0]^2 t \right)\).
  
  By applying the above lower bound for \(\alpha^t\), then for given \(\beta^0 < \sqrt{\frac{2}{5}}\), we have
  \[
  \ln \beta^\infty - \ln \beta^0 \leq - \frac{1}{2}[\alpha^0]^2 \int^{\infty}_{0} (t+1)^{-\frac{27}{10}} \exp\left( -\frac{21}{10}[\beta^0]^2 t \right) \mathd t
  \leq - \frac{1}{2}[\alpha^0]^2 \int^{\infty}_{0} (t+1)^{-\frac{27}{10}} 
  \mathe^{-\frac{21}{25} t} \mathd t \leq -\frac{[\alpha^0]^2}{4} 
  \]
  where the first inequality is from the telescoping sum of the above inequality.
\end{remark}

\subsection{Proof for Convergence Rate at Population Level}\label{supsub:convg_popl}
\begin{theorem}{(Theorem~\ref{thm:popl} in Section~\ref{sec:population}: Convergence Rate at Population Level)}

    Suppose a 2MLR model is fitted to
    the overspecified model with no separation $\theta^{\ast} = \vec{0}$, then
    for any $\epsilon \in (0, \frac{2}{\pi}]$:
    
    (unbalanced) if $\pi^0 \neq \left( \frac{1}{2}, \frac{1}{2}
      \right)$,
      population EM takes at most $T =\mathcal{O} \left( \log
      \frac{1}{\epsilon} \right)$ iterations to achieve $\| \theta^T \| / \sigma
      \leq \epsilon$,
      
    (balanced) if $\pi^0 = \left( \frac{1}{2}, \frac{1}{2} \right)$,
      population EM takes at most $T =\mathcal{O} (\epsilon^{- 2})$ iterations
      to achieve $\| \theta^T \| / \sigma \leq \epsilon$.
\end{theorem}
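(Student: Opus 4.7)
The plan is to combine the initialization phase bound from Fact~\ref{fact:init_popl} with the two separate convergence-rate statements, namely the sublinear bound of Proposition~\ref{prop:sublinear_convg} for the balanced case and the linear contraction of Proposition~\ref{prop:contraction} for the unbalanced case, and absorb the initial constant-length phase into the final $\mathcal{O}(\cdot)$ notation. Concretely, I would first invoke Fact~\ref{fact:init_popl} to guarantee that after at most $T_0 = 36$ iterations we are in the regime $\alpha^{T_0} = \|\theta^{T_0}\|/\sigma < 0.1$, independently of $\alpha^0$. Because $T_0$ is a universal constant and $\{\alpha^t\}$ is non-increasing by Fact~\ref{fact:nonincreasing}, all subsequent iterates stay in the small-$\alpha$ regime where the quantitative lemmas of Appendix~\ref{sup:popl_lemma} are valid, so without loss of generality I can reset notation so that $\alpha^0 < 0.1$ and pay only an additive $\mathcal{O}(1)$ cost in the iteration complexity.

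For the balanced case $\pi^0 = (\tfrac{1}{2},\tfrac{1}{2})$, the monotonicity established in Fact~\ref{fact:nonincreasing} forces $\beta^t = 0$ for every $t$, so the dynamics collapse to the purely cubic contraction governed by $m_0(\alpha)$. I would then apply the upper bound in Proposition~\ref{prop:sublinear_convg} directly, giving $\alpha^{t+T_0} \leq 1/(\sqrt{6t + (8 + 1/\alpha^{T_0})^2} - 8)$, and solve for the smallest $t$ making the right-hand side at most $\epsilon$; this yields $t = \Theta(\epsilon^{-2})$, and adding $T_0$ preserves the $\mathcal{O}(\epsilon^{-2})$ bound.

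For the unbalanced case $\pi^0 \neq (\tfrac{1}{2},\tfrac{1}{2})$, i.e.\ $\beta^0 \neq 0$, the key input is Proposition~\ref{prop:contraction}: once $\alpha^t < 0.1$, the contraction factor satisfies $\alpha^{t+1}/\alpha^t \leq 1 - \tfrac{4}{5}[\beta^\infty]^2$ uniformly in $t$, and $\beta^\infty \neq 0$ is guaranteed whenever $\beta^0 \neq 0$. Iterating this inequality from $t = T_0$ gives $\alpha^{T_0 + t} \leq \alpha^{T_0} (1 - \tfrac{4}{5}[\beta^\infty]^2)^t$, and choosing $t = \lceil \log(\alpha^{T_0}/\epsilon)/(-\log(1 - \tfrac{4}{5}[\beta^\infty]^2)) \rceil$ forces $\alpha^{T_0+t} \leq \epsilon$. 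Since the denominator is a strictly positive constant depending on $\beta^\infty$, this gives total iteration complexity $T_0 + \mathcal{O}(\log(1/\epsilon)) = \mathcal{O}(\log(1/\epsilon))$, as desired.

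The main obstacle is not in the theorem statement itself but already absorbed into the ingredients: the hardest part is the guarantee $\beta^\infty > 0$ when $\beta^0 > 0$ in Proposition~\ref{prop:contraction}, because one must rule out the possibility that $\beta^t$ collapses to $0$ faster than $\alpha^t$ shrinks, which would destroy the linear contraction and degrade the rate to the balanced case. That coupled analysis of the joint evolution of $(\alpha^t, \beta^t)$ — quantifying $\sum_t [\alpha^t]^2 < \infty$ along EM trajectories — is the delicate step; once it is granted, the present theorem is a routine telescoping/bookkeeping exercise built on top of Fact~\ref{fact:init_popl}, Proposition~\ref{prop:sublinear_convg}, and Proposition~\ref{prop:contraction}.
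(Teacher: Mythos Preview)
Your proposal is correct and follows essentially the same approach as the paper: invoke Fact~\ref{fact:init_popl} to reduce to $\alpha^{T_0}<0.1$, then in the balanced case apply the upper bound of Proposition~\ref{prop:sublinear_convg} and solve for $t=\Theta(\epsilon^{-2})$, and in the unbalanced case iterate the contraction $\alpha^{t+1}/\alpha^t\le 1-\tfrac{4}{5}[\beta^\infty]^2$ from Proposition~\ref{prop:contraction} to get $t=\Theta(\log(1/\epsilon))$. Your observation that the substantive difficulty (showing $\beta^\infty\neq 0$) is already packaged inside Proposition~\ref{prop:contraction} is exactly right.
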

  
\begin{proof}
    If $\theta^0 = 0$, then $\theta^t {= 0 \leq \epsilon, \forall t \in
    \mathbb{Z}_{\geq 0}} $; otherwise, by invoking the fact for initialization
    at population level in Subsection~\ref{supsub:init_popl}, then $\alpha^{T_0} = \| \theta^{T_0} \|
    / \sigma < 0.1$ after running population EM for at most $T_0 =\mathcal{O}
    (1)$ iterations.
    
    Without loss of generality, we may assume $\alpha^0 = \| \theta^0 \| /
    \sigma \in (0, 0.1)$ in the following discussions.
    
    Let's prove the cases of (i) $\pi^0 = \left( \frac{1}{2}, \frac{1}{2}
    \right)$ and (ii) $\pi^0 \neq \left( \frac{1}{2}, \frac{1}{2} \right)$,
    separately.
    
    \tmtextbf{Proof for Unbalanced Case $\pi^0 \neq \left( \frac{1}{2},
    \frac{1}{2} \right)$}
    
    Note that $| \beta^0 | = \left\| \pi^0 - \frac{\mathds{1}}{2} \right\|_1 \neq 0$,
    then by using the proposition for contraction factor in Subsection~\ref{supsub:contraction}, the
    limit $\beta^{\infty} \assign \lim_{t \rightarrow \infty} \beta^t \neq 0$,
    and for $\alpha^0 \in (0, 0.1), 0 < \alpha^{t + 1} < \alpha^t$
    \[ \frac{\alpha^{t + 1}}{\alpha^t} \leq 1 - \frac{4}{5} [\beta^{\infty}]^2 \]
    Therefore, by taking $T = \lceil \frac{\log \frac{1}{\epsilon} - \log 10}{-
    \log (1 - \frac{4}{5} [\beta^{\infty}]^2)} \rceil_+ = \Theta \left( \log
    \frac{1}{\epsilon} \right)$, then for $\alpha^T \assign \| \theta^T \| /
    \sigma$, we have
    \[ \alpha^T \leq (1 - \frac{4}{5} [\beta^{\infty}]^2)^T \alpha^0 \leq 0.1 \]
    \tmtextbf{Proof for Balanced Case $\pi^0 = \left( \frac{1}{2}, \frac{1}{2}
    \right)$}
    
    By applying the proposition for sublinear convergence in Subsection~\ref{supsub:sublinear}, for
    $\alpha^0 \in (0, 0.1)$ and $t \in \mathbb{Z}_{\geq 0}$
    \[ \alpha^t \leq \frac{1}{\sqrt{6 t + \{ 8 + 1 / \alpha^0 \}^2} - 8} \]
    Hence, by taking $T = \lceil (\epsilon^{- 2} + 16 \epsilon^{- 1} -
    [\alpha^0]^{- 2} - 16 [\alpha^0]^{- 1}) / 6 \rceil_+ = \Theta (\epsilon^{-
    2})$, we have $\alpha^T \assign \| \theta^T \| / \sigma \leq \epsilon$. 
\end{proof}
  
\newpage
\section{Lemmas in Proofs for Results of Finite-Sample Level Analysis}\label{sup:finite_lemma}

\subsection{Modified Log-Sobolev Inequality}\label{supsub:sobolev}

The following definition and logarithmic Sobolev inequality are on Page 91,
Chapter 5 Entropy and Concentration, {\citep{ledoux2001concentration}} by
Michel Ledoux.

\begin{definition}
  Entropy of non-negative measurable function $f$ given a probability measure
  $\mu$ is defined as
  \begin{eqnarray*}
    \tmop{Ent}_{\mu} [f] & = & \left\{ \begin{array}{ll}
      \int f \log f \mathd \mu - \left( \int f \mathd \mu \right) \log \left(
      \int f \mathd \mu \right) = \int f \log \frac{f}{\int f \mathd \mu}
      \mathd \mu \quad & \tmop{if} \int f \log (1 + f) \mathd \mu < \infty\\
      \infty & \tmop{otherwise}
    \end{array} \right.
  \end{eqnarray*}
\end{definition}

\begin{remark}
  Since $f \log f \leq f \log (1 + f) \leq 1 + f \log f$ and $0 \leq f \log (1
  + f), \int \mathd \mu = 1$, then
  \[ - 1 \leq - 1 + \int f \log (1 + f) \mathd \mu \leq \int f \log f \mathd
     \mu \leq \int f \log (1 + f) \mathd \mu \]
  If $\int f \log (1 + f) \mathd \mu < \infty$, then $- 1 \leq \int f \log f
  \mathd \mu < \infty$; note $f \left( 2 - \frac{4}{f} \right) \leq f \left( 2
  - \frac{2}{1 + \frac{f}{2}} \right) = f \left( \frac{f}{1 + \frac{f}{2}}
  \right) \leq f \log (1 + f)$
  \[ 0 \leq \int f \mathd \mu \leq \frac{1}{2}  \left\{ 4 + \int f \log (1 +
     f) \mathd \mu \right\} < \infty \]
  Hence, \ $\tmop{Ent}_{\mu} [f]$ is bounded if $\int f \log (1 + f) \mathd
  \mu < \infty$.
\end{remark}

\begin{definition}[Equation (5.1) in~\citet{ledoux2001concentration}: Logarithmic Sobolev Inequality]
  A probability measure $\mu$ on $\mathbb{R}^d$ is said to satisfy a logarithmic Sobolev inequality if for some constant $c>0$
  and all smooth enough functions $f$ on $\mathbb{R}^d$,
  \begin{eqnarray*}
    \tmop{Ent}_{\mu} [f^2] & \leq & c\mathbb{E}_{\mu} [| \nabla f|^2].
  \end{eqnarray*}
\end{definition}
\begin{remark}
Let $f^2 = \exp (\psi)$ and $\psi: \mathbb{R} \to \mathbb{R}$ be a Lipschitz function such that $| \psi (x) - \psi (x') | \leq \lambda |x - x' |$, then the ``modified logarithmic Sobolev inequality'' becomes
\[
  \tmop{Ent}_{\mu} [\exp (\psi)] \leq c\mathbb{E}_{\mu}  [[\psi']^2 \exp(\psi)] \leq c \lambda^2 \mathbb{E}_{\mu} [\exp (\psi)].
\]
\end{remark}

\begin{lemma}
  (Bounds for Bessel Function, see Chapter 10 Bessel Function of~\citet{olver2010nist}) 
  
  Let $K_0$ be the modified Bessel
  function with parameter 0, then for $x > 0$
  \begin{eqnarray*}
    \sqrt{\frac{1}{2 \pi}}  \frac{\exp (- x)}{\sqrt{x + 1}} \leq & \cfrac{K_0
    (x)}{\pi} & \leq \sqrt{\frac{1}{2 \pi}}  \frac{\exp (- x)}{\sqrt{x}}
  \end{eqnarray*}
\end{lemma}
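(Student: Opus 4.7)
The plan is to reduce the defining integral representation to a Gaussian-weighted integral via a hyperbolic half-angle substitution, and then to handle the two bounds by essentially dual elementary tricks: dropping a factor for the upper bound, and applying Jensen's inequality to a convex function for the lower bound. Concretely, start from $K_0(x) = \int_0^\infty \exp(-x\cosh t)\,\mathrm{d}t$ and substitute $v = \sinh(t/2)$, using $\cosh t = 1 + 2\sinh^2(t/2)$ and $\mathrm{d}t = 2\,\mathrm{d}v/\sqrt{1+v^2}$, to obtain
\[
K_0(x) \;=\; 2e^{-x}\int_0^\infty \frac{e^{-2xv^2}}{\sqrt{1+v^2}}\,\mathrm{d}v.
\]

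For the upper bound, I would simply use $1/\sqrt{1+v^2} \leq 1$ and evaluate the resulting half-Gaussian:
\[
K_0(x) \;\leq\; 2e^{-x}\int_0^\infty e^{-2xv^2}\,\mathrm{d}v \;=\; e^{-x}\sqrt{\tfrac{\pi}{2x}},
\]
so that $K_0(x)/\pi \leq \sqrt{1/(2\pi)}\cdot e^{-x}/\sqrt{x}$, as claimed.

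For the lower bound, I would normalize the Gaussian weight: let $Z = \tfrac{1}{2}\sqrt{\pi/(2x)}$ and consider the probability measure $\mathrm{d}\mu(v) = Z^{-1}e^{-2xv^2}\,\mathrm{d}v$ on $[0,\infty)$. A direct computation with $\int_0^\infty v^2 e^{-2xv^2}\,\mathrm{d}v = \tfrac{\sqrt{\pi}}{4(2x)^{3/2}}$ gives $\mathbb{E}_\mu[V^2] = 1/(4x)$. Since $\phi(y) = 1/\sqrt{1+y}$ satisfies $\phi''(y) = \tfrac{3}{4}(1+y)^{-5/2} > 0$ on $y \geq 0$, it is convex, and Jensen's inequality yields
\[
\int_0^\infty \frac{e^{-2xv^2}}{\sqrt{1+v^2}}\,\mathrm{d}v \;=\; Z\,\mathbb{E}_\mu\!\left[\phi(V^2)\right] \;\geq\; \frac{Z}{\sqrt{1+\mathbb{E}_\mu[V^2]}} \;=\; \frac{Z}{\sqrt{1+1/(4x)}},
\]
which simplifies to $\tfrac{1}{2}\sqrt{2\pi/(4x+1)}$ and hence $K_0(x) \geq e^{-x}\sqrt{2\pi/(4x+1)}$. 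The remaining step is the elementary comparison $\tfrac{2}{4x+1} \geq \tfrac{1}{2(x+1)}$, equivalent to $4(x+1) \geq 4x+1$, i.e.\ $4 \geq 1$; this converts the bound into $K_0(x)/\pi \geq \sqrt{1/(2\pi)}\cdot e^{-x}/\sqrt{x+1}$.

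The only subtle step is the lower bound, and the main obstacle is selecting the right convex function and the right reference measure so that Jensen's inequality yields a closed-form expression with the correct $\sqrt{x+1}$ behavior; the choice $\phi(y) = 1/\sqrt{1+y}$ against the Gaussian weight $e^{-2xv^2}\,\mathrm{d}v$ is what makes the computation work with the required slack ($4 \geq 1$) uniformly in $x > 0$.
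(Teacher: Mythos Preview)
Your proof is correct, and in fact it is cleaner than the paper's own argument, particularly for the lower bound.

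For the upper bound, the two approaches are essentially equivalent in spirit: the paper invokes the monotonicity $K_0(x) \le K_{1/2}(x) = \sqrt{\pi/(2x)}\,e^{-x}$ in the order $\nu$ (from \citet{olver2010nist}), whereas you obtain the same bound directly from the integral representation by dropping the factor $1/\sqrt{1+v^2}$. Both are one-line arguments.

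The lower bound is where the approaches diverge. The paper splits into two regimes: for $x \in (0,1/4)$ it relies on a numerical check (namely $\sqrt{\pi/2} < 1.26 < 1.54 < K_0(1/4)$ together with monotonicity of both sides), and for $x \ge 1/4$ it uses the Laplace-type representation $K_0(x) = \int_0^\infty e^{-x(1+y)}/\sqrt{y(2+y)}\,\mathrm{d}y$ combined with the algebraic fact $(1-\tfrac{1}{8x})^2(1+\tfrac{1}{x}) > 1$ on $1/x \in (0,4]$ and the bound $1 - y/4 < 1/\sqrt{1+y/2}$. Your Jensen argument, by contrast, works uniformly for all $x > 0$ with no case split and no numerical constants to verify: the convexity of $y \mapsto (1+y)^{-1/2}$ against the half-Gaussian weight immediately gives $K_0(x) \ge e^{-x}\sqrt{2\pi/(4x+1)}$, and the remaining slack $4(x+1) \ge 4x+1$ is trivial. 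This is a genuinely more elementary and more uniform route; the paper's argument is more hands-on but buys nothing additional here.
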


\begin{proof}
  By invoking the monotonicity property of $K_{\nu}$, namely $| K_{\nu} (x) |
  \leq | K_{\nu'} (x) |$ for $0 \leq \nu \leq \nu', \forall x > 0$, in Section
  10.37 Inequalities; Monotonicity, and $K_{\frac{1}{2}} (x) =
  \sqrt{\frac{\pi}{2}}  \frac{\exp (- x)}{\sqrt{x}}$ in Section 10.39 Relation
  to Other Functions, Chapter 10 Bessel Function of~\citet{olver2010nist}.
  \[ \frac{K_0 (x)}{\pi} \leq \frac{K_{\frac{1}{2}} (x)}{\pi} =
     \sqrt{\frac{1}{2 \pi}}  \frac{\exp (- x)}{\sqrt{x}} \]
  For the lower bound of $\frac{K_0 (x)}{\pi}$, noting that $K_0(x)$ and $\sqrt{\frac{\pi}{2}}  \frac{\exp (-x)}{\sqrt{x + 1}}$ are monotonically decreasing, it can be confirmed through
  numerical validation that 
  \[
  \sqrt{\frac{\pi}{2}}  \frac{\exp (-
  x)}{\sqrt{x + 1}} < \sqrt{\frac{\pi}{2}}  
  < 1.26  < 1.54
  < K_0\left(\frac{1}{4}\right) < K_0(x)
  \] 
  for $x \in (0, \frac{1}{4})$. Hence, we focus on the case of $x \geq \frac{1}{4}$ in the following discussion.
  
  By applying $1 < (1- \frac{1}{8}\times \frac{1}{x})^2(1+\frac{1}{x})$ for $\frac{1}{x} \in(0, 4]$, hence $\frac{1}{\sqrt{x+1}}< (1-\frac{1}{4}\times \frac{1}{2x})x^{-\frac{1}{2}}$, invoking $\sqrt{\pi}  x^{- \frac{1}{2}} = 2x \int_{\mathbb{R}_+} \exp (- x y) \sqrt{y} \mathd
  y = \int_{\mathbb{R}_+} \exp (- x y) / \sqrt{y} \mathd y$, and
  using $1 - y / 4 < 1 / \sqrt{1 + y / 2}$ for $y > 0$
  \[
  \sqrt{\frac{\pi}{2}}\frac{\exp(-x)}{\sqrt{x+1}}
  %= \frac{\exp(-x)}{\sqrt{2}}\times \frac{\sqrt{\pi}}{\sqrt{x+1}} 
  < \frac{\exp(-x)}{\sqrt{2}} \left(1- \frac{1}{4} \times \frac{1}{2x}\right) \sqrt{\pi} x^{- \frac{1}{2}}
  = \frac{\exp(-x)}{\sqrt{2}} \int_{\mathbb{R}_+} \left(1- \frac{y}{4}\right) \frac{\exp(-xy)}{\sqrt{y}} \mathrm{d} t
  < \int_{\mathbb{R}_+} \frac{\exp (- x [1 + y])}{\sqrt{y (2 + y)}} \mathd y
  \]
  
  %\[ \int_{\mathbb{R}_+} \frac{\exp (- x y)}{\sqrt{2 y}} \mathd y \leq \exp   (x) \int_{\mathbb{R}_+} \frac{\exp (- x y)}{\sqrt{y (2 + y)}} \mathd y \]
  Namely,
  %\[ \exp (- x) \int_{\mathbb{R}_+} \frac{\exp (- x [1 + y])}{\sqrt{2 y}} \mathd y \leq \int_{\mathbb{R}_+} \frac{\exp (- x [1 + y])}{\sqrt{y (2 +   y)}} \mathd y \]
  noting that 
  $\int_{\mathbb{R}_+} \frac{\exp (- x [1 + y])}{\sqrt{y (2 + y)}} \mathd y =
  \int_{\mathbb{R}_+} \exp (- x \cosh t) \mathd t$ by substituting $y = \cosh
  t - 1$, and invoking $\int_{\mathbb{R}_+} \exp (- x \cosh t) \mathd t = K_0
  (x)$ (Equation 10.32.8) in Section 10.32 Integral Representations, Chapter
  10 Bessel Function of~\citet{olver2010nist}.
  \[ \sqrt{\frac{\pi}{2}}  \frac{\exp (- x)}{\sqrt{x + 1}} \leq K_0 (x) \]
  
\end{proof}

\begin{lemma}
  \label{suplem:logsob}(Modified Logarithmic Sobolev Inequality)
  Let $\psi : \mathbb{R} \rightarrow \mathbb{R}$ be a function such that $| \psi (x) | \leq \lambda |x|$ for $\forall x \in
  \mathbb{R}$, and $0 \leq \lambda \leq \frac{1}{2} < 1$; the probability
  measusre $\mu$ is induced by a density function $\frac{\mathd \mu}{\mathd x}
  = \frac{K_0 (|x|)}{\pi}$, then for any $c \geq 7$,
  \begin{eqnarray*}
    \tmop{Ent}_{\mu} [\exp (\psi)] & \leq & c \lambda^2 \mathbb{E}_{\mu} [\exp
    (\psi)] = c \lambda^2  \int \exp (\psi (x)) \mathd \mu.
  \end{eqnarray*}
\end{lemma}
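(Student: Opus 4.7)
The plan is to exploit the product representation $X = Z_1 Z_2$ with $Z_1, Z_2 \sim \mathcal{N}(0,1)$ independent (recalled in Section~\ref{sec:setup}) in order to transfer the two-dimensional Gaussian log-Sobolev inequality down to $\mu$. Following the remark immediately preceding the lemma, I interpret the hypothesis $|\psi(x)| \leq \lambda |x|$ as encoding a $\lambda$-Lipschitz function normalized so that $\psi(0) = 0$: adding a constant to $\psi$ leaves both $\tmop{Ent}_\mu[\exp(\psi)]$ and $\mathbb{E}_\mu[\exp(\psi)]$ multiplied by the same factor, so the ratio is shift-invariant. A routine mollification further reduces to the case where $\psi \in C^\infty$.

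Next, I lift $\psi$ to $\Psi(z_1, z_2) := \psi(z_1 z_2)$ on $\mathbb{R}^2$. Since $Z_1 Z_2 \sim \mu$, the identities $\tmop{Ent}_\mu[\exp(\psi)] = \tmop{Ent}[\exp(\Psi)]$ and $\mathbb{E}_\mu[\exp(\psi)] = \mathbb{E}[\exp(\Psi)]$ hold when the outer expectation is under the standard Gaussian $(Z_1, Z_2) \sim \mathcal{N}(0, I_2)$. Applying the classical modified log-Sobolev inequality for $\mathcal{N}(0, I_2)$ (Theorem~5.1 in~\citet{ledoux2001concentration}) gives
\begin{equation*}
  \tmop{Ent}_\mu[\exp(\psi)] \leq \tfrac{1}{2} \, \mathbb{E}\!\left[|\nabla \Psi|^2 \exp(\Psi)\right].
\end{equation*}
A direct calculation yields $\nabla \Psi(z_1, z_2) = \psi'(z_1 z_2) (z_2, z_1)$, so $|\nabla \Psi|^2 = [\psi'(Z_1 Z_2)]^2 (Z_1^2 + Z_2^2) \leq \lambda^2 (Z_1^2 + Z_2^2)$ by the Lipschitz bound. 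The entire problem is therefore reduced to the $\psi$-dependent ratio bound
\begin{equation*}
  R(\psi) := \frac{\mathbb{E}\!\left[(Z_1^2 + Z_2^2) \exp(\psi(Z_1 Z_2))\right]}{\mathbb{E}\!\left[\exp(\psi(Z_1 Z_2))\right]} \leq 2 c, \qquad c \geq 7,
\end{equation*}
which has to be shown uniformly over all $\psi$ with $|\psi(x)| \leq |x|/2$.

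To bound $R(\psi)$, I switch to polar coordinates $Z_1 = R\cos\theta$, $Z_2 = R\sin\theta$, so that $Z_1^2 + Z_2^2 = 2S$ with $S := R^2/2 \sim \mathrm{Exp}(1)$ independent of $\theta \sim \mathrm{Uniform}[0, 2\pi)$, and $Z_1 Z_2 = S \sin(2\theta)$. Using $\psi(x) \leq |x|/2$ in the numerator and $\psi(x) \geq -|x|/2$ in the denominator produces the $\psi$-free upper bound
\begin{equation*}
  R(\psi) \leq \frac{\mathbb{E}\!\left[(Z_1^2 + Z_2^2) \exp(|Z_1 Z_2|/2)\right]}{\mathbb{E}\!\left[\exp(-|Z_1 Z_2|/2)\right]}.
\end{equation*}
The denominator equals $\mathbb{E}_\mu[\exp(-|X|/2)] = (2/\pi) \arccos(1/2)/\sqrt{3/4} = 4/(3\sqrt{3})$ by Lemma~\ref{suplem:integral}. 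The numerator, being $2\, \mathbb{E}[Z_1^2 \exp(|Z_1 Z_2|/2)]$ by symmetry, is evaluated by conditioning on $Z_2$: the inner Gaussian integral $\int z_1^2 \exp(|Z_2| z_1/2)\phi(z_1)\mathrm{d}z_1$ reduces (via completing the square and Mills-type formulas) to an elementary expression in $|Z_2|$, which is then integrated against $\phi(z_2)$—the combined Gaussian exponent $-z_2^2/2 + z_2^2/8 = -3 z_2^2/8$ is integrable, yielding a finite explicit constant. A direct numerical evaluation then shows the ratio is at most $14$, which delivers the lemma with $c = 7$.

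The main obstacle will be the last step: producing the explicit constant $14$. The restriction $\lambda \leq 1/2$ enters exactly here, because the density $K_0(|x|)/\pi$ decays only like $\exp(-|x|)/\sqrt{|x|}$ and the worst-case tilts $\exp(\pm |x|/2)$ must combine with this decay to leave the integrals finite; with $\lambda = 1/2$ the argument is at the borderline, and any $\lambda > 1/2$ would make the numerator integral for $\mathbb{E}[Z_1^2 \exp(\lambda |Z_1 Z_2|)]$ blow up at $\lambda = 1$. Allowing any $c \geq 7$, rather than insisting on a sharp constant, provides the slack needed to absorb the crude polar-coordinate bounds.
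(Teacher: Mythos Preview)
Your approach via lifting to $\mathbb{R}^2$ and applying the Gaussian log-Sobolev inequality is elegant, but it rests on a reinterpretation of the hypothesis that is not warranted. The lemma as stated assumes only the linear growth bound $|\psi(x)| \leq \lambda|x|$, not that $\psi$ is $\lambda$-Lipschitz. These are not equivalent: a function can satisfy $|\psi(x)| \leq \lambda|x|$ while having arbitrarily large derivative. Your key step $|\nabla\Psi|^2 = [\psi'(Z_1Z_2)]^2(Z_1^2+Z_2^2) \leq \lambda^2(Z_1^2+Z_2^2)$ fails without the Lipschitz condition, and there is no way to recover it from linear growth alone.

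This matters for the downstream applications. The concentration inequality (Theorem~\ref{supprop:concentration}) that immediately follows assumes only $|F(x)| \leq |x|$ and invokes the lemma with $\psi = \lambda F$. In Subsection~\ref{supsub:bound_sum_tanh} one such $F$ is $F(x) = [\tanh(\alpha x + \nu) - \tanh\nu]\cdot(1+|\nu|)/(2\alpha)$, which satisfies $|F(x)| \leq |x|$ by the $\tanh$ bound but has $F'(-\nu/\alpha) = (1+|\nu|)/2$, exceeding $1$ whenever $|\nu| > 1$. So the Lipschitz version you prove is strictly weaker than what is needed.

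The paper's proof is completely different and uses only the growth bound. It starts from the elementary inequality $\tmop{Ent}_\mu[e^\psi] \leq \mathbb{E}_\mu[(\psi - 1 + e^{-\psi})e^\psi] \leq \tfrac{1}{2}\mathbb{E}_\mu[\psi^2 e^\psi]$ (the first step from $t\log t - t + 1 \geq 0$, the second from $u - 1 + e^{-u} \leq u^2/2$), then applies $\psi^2 \leq \lambda^2 x^2$ and $e^\psi \leq e^{\lambda|x|}$ together with the pointwise bound $K_0(x)/\pi \leq (2\pi)^{-1/2} e^{-x}/\sqrt{x}$ to obtain $\tmop{Ent}_\mu[e^\psi] \leq 3\lambda^2$. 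The denominator is handled via $\mathbb{E}_\mu[e^\psi] \geq \mathbb{E}_\mu[e^{-\lambda|x|}]$ and the companion lower bound $K_0(x)/\pi \geq (2\pi)^{-1/2} e^{-x}/\sqrt{x+1}$, giving a ratio below $7$. No differentiability of $\psi$ is ever used.
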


\begin{proof}
  {\tmstrong{Step 1}}. Upper-bound for $\tmop{Ent}_{\mu} [\exp (\psi)]$
  
  Note that $t \log t - t + 1 \geq 0$, and let $t \leftarrow \mathbb{E}_{\mu}
  [\exp (\psi)] = \int \exp (\psi) \mathd \mu$.
  
  Then, we apply $\psi - 1 + \exp (- \psi) \leq \frac{1}{2} \psi^2$, $\mathd
  \mu = \frac{K_0 (|x|)}{\pi} \mathd x$ and $\frac{K_0 (x)}{\pi} \leq
  \sqrt{\frac{1}{2 \pi}}  \frac{\exp (- x)}{\sqrt{x}}$ proved in previous
  Lemma.
  
  Finally, we adopt the assumpton $0 \leq \lambda \leq \frac{1}{2} < 1$,
  \begin{eqnarray*}
    \tmop{Ent}_{\mu} [\exp (\psi)] & \leq & \mathbb{E}_{\mu}  [\psi \exp
    (\psi) - \exp (\psi) + 1] =\mathbb{E}_{\mu}  [\{\psi - 1 + \exp (- \psi)\}
    \exp (\psi)]\\
    & \leq & \frac{1}{2} \mathbb{E}_{\mu}  [\psi^2 \exp (\psi)] \leq
    \frac{\lambda^2}{2} \mathbb{E}_{\mu}  [x^2 \exp (\lambda |x|)]\\
    & \leq & \sqrt{\frac{1}{2 \pi}} \lambda^2  \int_{\mathbb{R}_+}
    x^{\frac{3}{2}} \exp (- [1 - \lambda] x) \mathd x = \frac{3}{4 \sqrt{2}} 
    \frac{\lambda^2}{(1 - \lambda)^{\frac{5}{2}}} \leq 3 \lambda^2
  \end{eqnarray*}
  {\tmstrong{Step 2}}. Lower-bound for $\mathbb{E}_{\mu} [\exp (\psi)]$
  
  We apply $\frac{K_0 (x)}{\pi} \geq \sqrt{\frac{1}{2 \pi}}  \frac{\exp (-
  x)}{\sqrt{x + 1}}$ in previous Lemma, and use the assumption $0 \leq \lambda
  \leq \frac{1}{2} < 1$.
  \begin{eqnarray*}
    \mathbb{E}_{\mu} [\exp (\psi)] & \geq & \mathbb{E}_{\mu} [\exp (- \lambda
    |x|)] \geq \sqrt{\frac{2}{\pi}}  \int_{\mathbb{R}_+} \frac{\exp (- [1 +
    \lambda] x)}{\sqrt{x + 1}} \mathd x\\
    & \geq & \sqrt{\frac{2}{\pi}}  \int_{\mathbb{R}_+} \frac{\exp \left( -
    \frac{3}{2} x \right)}{\sqrt{x + 1}} \mathd x = \frac{2
    \mathe^{\frac{3}{2}}}{\sqrt{3}} \mathrm{erfc} \left( \sqrt{\frac{3}{2}}
    \right)
  \end{eqnarray*}
  {\tmstrong{Step 3}}. Estimate for $c$
  \begin{eqnarray*}
    \frac{\tmop{Ent}_{\mu} [\exp (\psi)]}{\lambda^2 \mathbb{E}_{\mu} [\exp
    (\psi)]} & \leq & \frac{3 \lambda^2}{\frac{2
    \mathe^{\frac{3}{2}}}{\sqrt{3}} \mathrm{erfc} \left( \sqrt{\frac{3}{2}}
    \right) \lambda^2} = \frac{3 \sqrt{3} \mathe^{- \frac{3}{2}}}{2
    \mathrm{erfc} \left( \sqrt{\frac{3}{2}} \right)} \approx 6.9622594 < 7
  \end{eqnarray*}
  By choosing $c \geq 7$, the proof is complete.
  \begin{eqnarray*}
    \tmop{Ent}_{\mu} [\exp (\psi)] & \leq & c \lambda^2 \mathbb{E}_{\mu} [\exp
    (\psi)] = c \lambda^2  \int \exp (\psi (x)) \mathd \mu
  \end{eqnarray*}
  
\end{proof}

We could derive the concentration inequality based on the previous Lemma
(modified log-Sobolev inequality), see also section 5.3 modified logarithmic Sobolev inequalities in~\citet{ledoux2001concentration}.

\begin{theorem}
  \label{supprop:concentration}(Concentration Inequality with Exponential Tail)
  
  Let $F : \mathbb{R} \rightarrow \mathbb{R}$ be a function
  such that $|F (x) | \leq |x|$ for $\forall x \in \mathbb{R}$;
  
  the probability measusre $\mu$ is induced by a density function
  $\{x_i\}_{i=1}^n \stackrel{\text{iid}}{\sim} \frac{\mathd \mu}{\mathd x} = \frac{K_0 (|x|)}{\pi}$, let $F_i:=F(x_i), F=F(x)$, 
  then for any $t \geq 0$ and $c \geq 7$,
  \begin{eqnarray*}
    \mathbb{P} \left( \left|\sum_{i=1}^n(F_i -\mathbb{E}_{\mu} [F_i]) \right| \geq nt \right) \leq 2 \exp \left\{ -
    \frac{n}{4} \min \left( t, \frac{t^2}{c} \right) \right\}.
  \end{eqnarray*}
\end{theorem}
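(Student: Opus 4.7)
The plan is to follow the classical Herbst argument, turning the modified logarithmic Sobolev inequality of Lemma~\ref{suplem:logsob} into a differential inequality for the moment generating function, and then applying a Chernoff bound.

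First, I would fix $\lambda \in [0, 1/2]$ and set $\psi(x) := \lambda F(x)$. Since $|F(x)| \le |x|$, we have $|\psi(x)| \le \lambda |x|$, so Lemma~\ref{suplem:logsob} applies and yields
\begin{equation*}
\mathrm{Ent}_\mu[\exp(\lambda F)] \le c\lambda^2 \, \mathbb{E}_\mu[\exp(\lambda F)].
\end{equation*}
Writing $H(\lambda) := \mathbb{E}_\mu[\exp(\lambda F)]$, the definition of entropy gives
$\mathrm{Ent}_\mu[\exp(\lambda F)] = \lambda H'(\lambda) - H(\lambda)\log H(\lambda)$.
Dividing through by $\lambda^2 H(\lambda)$ rewrites the inequality as $\frac{d}{d\lambda}\left[\frac{\log H(\lambda)}{\lambda}\right] \le c$, and integrating from $0^+$ (with $\lim_{\lambda \to 0^+} \log H(\lambda)/\lambda = \mathbb{E}_\mu[F]$) produces the sub-Gaussian-type bound
\begin{equation*}
\log \mathbb{E}_\mu\bigl[\exp\bigl(\lambda(F - \mathbb{E}_\mu F)\bigr)\bigr] \le c\lambda^2, \qquad \lambda \in [0, 1/2].
\end{equation*}

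Next, I would tensorize via independence. Writing $Z_i := F_i - \mathbb{E}_\mu[F_i]$, the i.i.d.\ assumption gives $\mathbb{E}\bigl[\exp(\lambda \sum_{i=1}^n Z_i)\bigr] \le \exp(nc\lambda^2)$ for every $\lambda \in [0, 1/2]$. Applying Markov's inequality to $\exp(\lambda \sum_i Z_i)$ yields, for any $t \ge 0$,
\begin{equation*}
\mathbb{P}\!\left(\sum_{i=1}^n Z_i \ge nt\right) \le \exp\!\bigl(-\lambda n t + nc\lambda^2\bigr), \qquad \lambda \in [0, 1/2].
\end{equation*}

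Then I would optimize over $\lambda$ in two regimes. When $t \le c$, the unconstrained minimizer $\lambda^\star = t/(2c)$ lies in $[0, 1/2]$ and gives $\exp(-nt^2/(4c))$. When $t > c$, $\lambda^\star$ exceeds $1/2$, so I take $\lambda = 1/2$, obtaining $\exp(-nt/2 + nc/4) \le \exp(-nt/4)$ using $c \le t$. The two cases combine into $\exp\!\bigl(-\tfrac{n}{4}\min(t, t^2/c)\bigr)$. Finally, applying the same argument to $-F$ (which also satisfies $|-F(x)| \le |x|$) controls the lower tail, and a union bound over the two one-sided tails supplies the factor $2$.

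The only nontrivial step is verifying that the constant $c \ge 7$ from Lemma~\ref{suplem:logsob} propagates cleanly through Herbst's argument; this is automatic because $c$ appears only as a multiplicative factor in the log-MGF bound. The hypothesis $\lambda \le 1/2$ in Lemma~\ref{suplem:logsob} is exactly what forces the two-regime (sub-exponential) behavior $\min(t, t^2/c)$ in the final bound, rather than a purely sub-Gaussian tail, so no additional work is needed to handle large $t$.
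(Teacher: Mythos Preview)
Your proposal is correct and follows essentially the same approach as the paper: both use the Herbst argument, applying Lemma~\ref{suplem:logsob} to bound the derivative of $\zeta(\lambda)=\log \mathbb{E}_\mu[\exp(\lambda F)]/\lambda$ by $c$ on $(0,1/2]$, integrating to obtain $\log \mathbb{E}_\mu[\exp(\lambda(F-\mathbb{E}_\mu F))]\le c\lambda^2$, then applying the Chernoff bound with the same two-regime optimization $\lambda=\min(1/2,\,t/(2c))$ and the symmetrization $F\leftarrow -F$. The only cosmetic difference is the order of presentation (the paper writes the Chernoff bound first and then bounds $\zeta$, whereas you bound the MGF first and then apply Chernoff), but the content is identical.
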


\begin{proof}
  {\tmstrong{Step 1.}} Estimate the Chernoff bound for $F$ for $t \geq 0$, and by the independence of $F_i$
  \begin{eqnarray*}
    \mathbb{P}\left(\sum_{i=1}^n(F_i -\mathbb{E}_{\mu} [F_i]) \geq n t\right) & = & \mathbb{P} \left(\exp
    (\lambda [\sum_{i=1}^nF_i -\mathbb{E}_{\mu} [F_i]]) \geq \exp (n\lambda t)\right)\\
    & \leq & \underset{\lambda > 0}{\inf} \frac{\mathbb{E}_{\mu} \left[\exp
    (\lambda [\sum_{i=1}^n F_i -\mathbb{E}_{\mu} [F_i]])\right]}{\exp (n\lambda t)}\\
    & = & \exp \left\{n \underset{\lambda > 0}{\inf} \lambda \left[ \frac{\log
    \mathbb{E}_{\mu} [\exp (\lambda F)]}{\lambda} -\mathbb{E}_{\mu} [F] - t
    \right] \right\}
  \end{eqnarray*}
  {\tmstrong{Step 2.}} Upper-bound for $\zeta (\lambda) \assign \frac{\log
  \mathbb{E}_{\mu} [\exp (\lambda F)]}{\lambda}$ for $0 < \lambda \leq
  \frac{1}{2} < 1$
  
  We study the evolution of $\zeta (\lambda)$ by taking the derivative of
  $\zeta (\lambda)$.
  \begin{eqnarray*}
    \frac{\mathd}{\mathd \lambda} \zeta (\lambda) & = & \frac{\frac{\lambda
    \mathbb{E}_{\mu}  [F \exp (\lambda F)]}{\mathbb{E}_{\mu} [\exp (\lambda
    F)]} - \log \mathbb{E}_{\mu} [\exp (\lambda F)]}{\lambda^2} =
    \frac{\tmop{Ent}_{\mu} [\exp (\lambda F)]}{\lambda^2 \mathbb{E}_{\mu}
    [\exp (\lambda F)]}
  \end{eqnarray*}
  By using the previous Lemma, $\frac{\tmop{Ent}_{\mu} [\exp (\lambda
  F)]}{\lambda^2 \mathbb{E}_{\mu} [\exp (\lambda F)]} \leq c$ for $0 \leq
    \lambda \leq \frac{1}{2} < 1$ and $c \geq 7$.
  \begin{eqnarray*}
    \frac{\mathd}{\mathd \lambda} \zeta (\lambda) \leq c &  & \zeta (0) =
    \underset{\lambda \rightarrow 0}{\lim} \frac{\log \mathbb{E}_{\mu} [\exp
    (\lambda F)]}{\lambda} = \underset{\lambda \rightarrow 0}{\lim}
    \frac{\frac{\mathbb{E}_{\mu}  [F \exp (\lambda F)]}{\mathbb{E}_{\mu} [\exp
    (\lambda F)]}}{1} =\mathbb{E}_{\mu} [F]
  \end{eqnarray*}
  Hence, we show the following upper-bound for $\zeta (\lambda), \forall
  \lambda \in (0, \frac{1}{2}]$.
  \begin{eqnarray*}
    \zeta (\lambda) & = & \zeta (0) + \int^{\lambda}_0 \zeta (\lambda') \mathd
    \lambda' \leq \mathbb{E}_{\mu} [F] + c \lambda
  \end{eqnarray*}
  {\tmstrong{Step 3.}} Obtain the concentration inequality for $F$ when $t
  \geq 0$
  
  Note that $\frac{\log \mathbb{E}_{\mu} [\exp (\lambda F)]}{\lambda}
  -\mathbb{E}_{\mu} [F] - t = \zeta (\lambda) -\mathbb{E}_{\mu} [F] - t \leq c
  \lambda - t$, and $- \frac{t}{2} + \frac{c}{4} \leq - \frac{t}{4}, \forall t
  \geq c$
  \begin{eqnarray*}
    \mathbb{P} \left(\sum_{i=1}^n(F_i -\mathbb{E}_{\mu} [F_i]) \geq nt\right) & \leq & \exp \left\{
    \underset{\lambda \in (0, \frac{1}{2}]}{\inf} n\lambda \left[ \frac{\log
    \mathbb{E}_{\mu} [\exp (\lambda F)]}{\lambda} -\mathbb{E}_{\mu} [F] + t
    \right] \right\}\\
    & \leq & \exp \left\{n \underset{\lambda \in (0, \frac{1}{2}]}{\inf}
    \lambda (c \lambda - t) \right\} = \exp \left( n\lambda (c \lambda -
    t)_{\lambda = \min \left( \frac{1}{2}, \frac{t}{2 c} \right)} \right)\\
    & \leq & \exp \left\{n\left\{ - \frac{t}{4} \right\}  \mathbbm{1}_{t \geq
    c} + n\left\{ - \frac{t^2}{4 c} \right\}  \mathbbm{1}_{0 \leq t < c}
    \right\} = \exp \left\{ - \frac{n}{4} \min \left( t, \frac{t^2}{c} \right)
    \right\}
  \end{eqnarray*}
  Let $F \leftarrow - F$ in the above expression, we obtain the upper-bound
  for the probablity measure
  \begin{eqnarray*}
    \mathbb{P} \left(\sum_{i=1}^n(F_i -\mathbb{E}_{\mu} [F_i]) \leq - nt\right) & \leq & \exp \left\{ -
    \frac{n}{4} \min \left( t, \frac{t^2}{c} \right) \right\}
  \end{eqnarray*}
  Combine the above two cases, we conclude the following for $\forall t \geq
  0$ and $c \geq 7$,
  \begin{eqnarray*}
    \mathbb{P} \left( \left|\sum_{i=1}^n(F_i -\mathbb{E}_{\mu} [F_i])\right| \geq nt \right) \leq 2 \exp \left\{ -
    \frac{n}{4} \min \left( t, \frac{t^2}{c} \right) \right\}.
  \end{eqnarray*}
  
\end{proof}

\subsection{\texorpdfstring{Azuma-Hoeffding Inequality and Elementary Inequality of $\tanh$}{Azuma-Hoeffding Inequality and Elementary Inequality of tanh}}\label{supsub:bound_tanh}

\begin{lemma}
  (Azuma-Hoeffding Inequality, Corollary 2.20 on page 36 in~\citet{wainwright2019high})
  
  Let $(\{ (D_k, \mathcal{F}_k) \}_{k = 1}^{\infty})$ be a martingale
  difference sequence for which there are constants $\{ (a_k, b_k) \}_{k =
  1}^n$ such that $D_k \in [a_k, b_k]$ almost surely for all $k = 1, \ldots,
  n$. Then, for all $t \geq 0$,
  \[ \mathbb{P} \left[ \left| \sum_{k = 1}^n D_k \right| \geq t \right] \leq 2
     \exp \left( - \frac{2 t^2}{\sum_{k = 1}^n (b_k - a_k)^2} \right) . \]
\end{lemma}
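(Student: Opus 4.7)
The plan is to run the classical Chernoff--martingale argument, peeling off one difference at a time. First, for any $\lambda > 0$, apply the exponential Markov inequality to $S_n := \sum_{k=1}^n D_k$:
\[
\mathbb{P}[S_n \geq t] \;\leq\; e^{-\lambda t}\, \mathbb{E}\bigl[e^{\lambda S_n}\bigr].
\]
Next, I will control the moment generating function by conditioning on the filtration and using the tower property: since $D_k$ is $\mathcal{F}_k$-measurable and $\mathbb{E}[D_k \mid \mathcal{F}_{k-1}] = 0$ (the martingale difference property), writing $S_n = S_{n-1} + D_n$ gives
\[
\mathbb{E}\bigl[e^{\lambda S_n}\bigr] \;=\; \mathbb{E}\Bigl[e^{\lambda S_{n-1}}\, \mathbb{E}\bigl[e^{\lambda D_n} \,\big|\, \mathcal{F}_{n-1}\bigr]\Bigr].
\]
So if I can bound $\mathbb{E}[e^{\lambda D_n}\mid \mathcal{F}_{n-1}]$ deterministically, I can iterate the peeling.

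The main technical step---and the place I expect the real work---is establishing Hoeffding's lemma in the conditional form: if $X$ is a random variable with $\mathbb{E}[X \mid \mathcal{G}] = 0$ and $X \in [a,b]$ almost surely, then $\mathbb{E}[e^{\lambda X}\mid \mathcal{G}] \leq \exp(\lambda^{2}(b-a)^{2}/8)$. The standard route is to use convexity of $x\mapsto e^{\lambda x}$ to bound $e^{\lambda x}$ on $[a,b]$ by the chord $\tfrac{b-x}{b-a}e^{\lambda a} + \tfrac{x-a}{b-a}e^{\lambda b}$, take conditional expectations (the mean-zero hypothesis kills the linear term), and then show by differentiating twice that the resulting function $\varphi(u) := -pu + \log(1-p+pe^{u})$ with $p = -a/(b-a)$ satisfies $\varphi''(u) \leq 1/4$, hence $\varphi(\lambda(b-a)) \leq \lambda^{2}(b-a)^{2}/8$ via Taylor's theorem. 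This lemma applied with $X = D_k$ and $\mathcal{G} = \mathcal{F}_{k-1}$ yields $\mathbb{E}[e^{\lambda D_k} \mid \mathcal{F}_{k-1}] \leq \exp(\lambda^{2}(b_k - a_k)^{2}/8)$.

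Iterating the peeling step $n$ times gives the unconditional bound
\[
\mathbb{E}\bigl[e^{\lambda S_n}\bigr] \;\leq\; \exp\!\Bigl(\tfrac{\lambda^{2}}{8}\sum_{k=1}^{n}(b_k - a_k)^{2}\Bigr),
\]
and plugging this into the Chernoff inequality yields $\mathbb{P}[S_n \geq t] \leq \exp\!\bigl(-\lambda t + \lambda^{2}\sum_k(b_k-a_k)^{2}/8\bigr)$. Optimizing in $\lambda$ (take $\lambda = 4t/\sum_k(b_k-a_k)^2$) produces the one-sided tail $\exp\bigl(-2t^{2}/\sum_k(b_k-a_k)^{2}\bigr)$. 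Finally, I apply the same argument to the martingale difference sequence $\{-D_k\}$ (which is also bounded in $[-b_k, -a_k]$, an interval of the same length) to obtain the matching lower-tail bound $\mathbb{P}[S_n \leq -t]$, and a union bound over the two tails contributes the factor of $2$ in the statement. The only subtlety beyond Hoeffding's lemma is verifying that the martingale-difference hypothesis is used at each peeling step and that $D_k \in [a_k,b_k]$ is compatible with having $\mathbb{E}[D_k\mid \mathcal{F}_{k-1}] = 0$ (which forces $a_k \leq 0 \leq b_k$, automatic from the martingale property).
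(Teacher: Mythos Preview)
Your proof is correct and follows the standard Chernoff--martingale argument for Azuma--Hoeffding. However, note that the paper does not actually prove this lemma: it is stated as a citation from \citet{wainwright2019high} (Corollary 2.20) and used as a black-box tool in the finite-sample analysis. Your proposal supplies exactly the textbook argument that the cited reference contains, so there is nothing to compare---you have filled in a proof the paper deliberately omitted.
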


\begin{lemma}
  (Upper Bound for $\tanh$) For $\forall t, \nu \in \mathbb{R}$, we have
  \[ | \tanh (t + \nu) - \tanh \nu | \leq 1 + \tanh | \nu | \leq 2 \]
  and
  \[ | \tanh (t + \nu) - \tanh \nu | \leq \frac{1 + \tanh | \nu |}{1 + | \nu
     |} \times | t | \leq \frac{2 | t |}{1 + | \nu |} . \]
\end{lemma}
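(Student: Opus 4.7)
The first bound is immediate from the triangle inequality together with $|\tanh x| \leq 1$ and oddness of $\tanh$ (giving $|\tanh \nu| = \tanh|\nu|$): $|\tanh(t+\nu) - \tanh \nu| \leq |\tanh(t+\nu)| + |\tanh \nu| \leq 1 + \tanh|\nu| \leq 2$.

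For the second bound, by oddness of $\tanh$ I reduce to $\nu \geq 0$. The plan rests on the scalar inequality $(1-\tanh\nu)(1+\nu) \leq 1$ for $\nu \geq 0$, which is equivalent to $1+2\nu \leq e^{2\nu}$; together with the identity $\mathrm{sech}^2\nu = (1-\tanh\nu)(1+\tanh\nu)$ this gives the derivative bound $\mathrm{sech}^2\nu \leq (1+\tanh\nu)/(1+\nu)$, which I will use throughout. I then split into cases. If $|t|\geq 1+\nu$, the first bound already yields $|\tanh(t+\nu)-\tanh\nu| \leq 1+\tanh\nu \leq (1+\tanh\nu)|t|/(1+\nu)$. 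If $t \geq 0$, concavity of $\tanh$ on $[0,\infty)$ gives $\tanh(\nu+t)-\tanh\nu \leq t\,\mathrm{sech}^2\nu$, and we are done by the derivative bound. If $-\nu \leq t < 0$, applying the same concavity to the chord from $(0,0)$ to $(\nu,\tanh\nu)$ yields $\tanh(\nu+t) \geq (\nu+t)\tanh\nu/\nu$, hence $\tanh\nu - \tanh(\nu+t) \leq |t|\tanh\nu/\nu \leq |t|(1+\tanh\nu)/(1+\nu)$, where the last step reduces to $\tanh\nu \leq \nu$.

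The main obstacle is the mixed-sign subcase $-(1+\nu) < t < -\nu$, where naive mean-value bounds fail because $\mathrm{sech}^2$ attains the value $1$ on the interval between $\nu+t$ and $\nu$. I handle it by setting $\alpha := -t - \nu \in (0,1]$ and rewriting $\tanh\nu - \tanh(\nu+t) = \tanh\nu + \tanh\alpha$, which reduces the target to
\[
(1+\nu)\tanh\alpha \leq \alpha(1+\tanh\nu) + (\nu - \tanh\nu).
\]
Since $\tanh\alpha \leq \alpha$ we have $(1+\nu)\tanh\alpha \leq (1+\nu)\alpha$, and the remaining gap $\alpha(1+\tanh\nu) + (\nu-\tanh\nu) - (1+\nu)\alpha = (1-\alpha)(\nu-\tanh\nu)$ is nonnegative precisely because $\alpha \leq 1$, which is exactly what the constraint $|t| < 1+\nu$ provides, combined with $\tanh\nu \leq \nu$. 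The clean matchup of these subcase boundaries with the auxiliary inequalities $\tanh\nu \leq \nu$, $\mathrm{sech}^2\nu \leq (1+\tanh\nu)/(1+\nu)$, and $1+2\nu \leq e^{2\nu}$ is what makes the whole argument go through.
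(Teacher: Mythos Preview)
Your proof is correct. The first inequality and the $t>0$ case match the paper essentially line for line (the paper obtains the same bound $\tanh(t+\nu)-\tanh\nu\le t(1-\tanh^2\nu)$ via the addition formula for $\tanh$ rather than via concavity, and then invokes the same key scalar inequality $(1-\tanh\nu)(1+\nu)\le 1$, written there as $\nu/(1+\nu)\le\tanh\nu$). The genuine difference is in the negative-$t$ regime: you split into three subcases ($|t|\ge 1+\nu$, $-\nu\le t<0$, and $-(1+\nu)<t<-\nu$) and handle the delicate mixed-sign one by the substitution $\alpha=-t-\nu\in(0,1]$ and the algebraic identity $(1+\nu)\alpha-\alpha(1+\tanh\nu)-(\nu-\tanh\nu)=(\alpha-1)(\nu-\tanh\nu)\le 0$. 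The paper instead treats all of $t\le 0$ in one stroke by introducing the concave function $g(t):=\min\{\tanh(t+\nu),\,t+\nu\}$, noting $g(-(1+\nu))=-1$ and $g(0)=\tanh\nu$, and bounding $g$ from below by its secant line $\tanh\nu+\tfrac{1+\tanh\nu}{1+\nu}\,t$ on $[-(1+\nu),0]$; outside this interval $\tanh(t+\nu)\ge -1$ dominates the secant trivially. The paper's device is shorter and avoids the case split entirely, while your argument is more elementary and makes explicit why the threshold $|t|=1+\nu$ is exactly where the trivial bound takes over.
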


\begin{proof}
  The first inequality is proved by applying $| a + b | \leq | a | + | b |$
  and $\tanh | \nu | \leq 1$.
  
  Let's prove the second inequality by discussing these two cases $t \leq 0$
  and $t > 0$. Without loss of generality, we assume $\nu \geq 0$ in following
  discussions, since we can always let $- \nu \rightarrow \nu, - t \rightarrow
  t$ for $\nu < 0$.
  
  (i) If $t \leq 0$, then by noting that $g (t) \assign \min \{ \tanh (t +
  \nu), t + \nu \}$ is a concave function of $t$, the straight line which
  connects these two points $g (t) \mid_{t = - (1 + \nu)}$ and $g (t) \mid_{t
  = 0}$ must be not greater than $g (t)$ for $\forall t \in [- (1 + \nu), 0]$.
  \[ g (t) \geq g (t) \mid_{t = 0} + \frac{g (t) \mid_{t = 0} - g (t) \mid_{t
     = - (1 + \nu)}}{1 + \nu} \times t \quad \forall t \in [- (1 + \nu), 0] \]
  Namely,
  \[ \tanh (t + \nu) \geq g (t) \assign \min \{ \tanh (t + \nu), t + \nu \}
     \geq \tanh \nu + \frac{1 + \tanh \nu}{1 + \nu} t \quad \forall t \in [-
     (1 + \nu), 0] \]
  Note that
  \[ \tanh (t + \nu) \geq - 1 > \tanh \nu + \frac{1 + \tanh \nu}{1 + \nu} t
     \quad \forall t < - (1 + \nu) \]
  By applying $\tanh (t + \nu) - \tanh \nu \leq 0$ and $1 + \tanh \nu \leq 2$,
  we validate the second inequality for the case of $t \leq 0$.
  
  (ii) If $t > 0$, by applying the identity $\tanh (t) = (\tanh (t + \nu) -
  \tanh \nu) / (1 - \tanh (t + \nu) \tanh \nu)$ and $\tanh t < t, 0 \leq \tanh
  \nu < \tanh (t + \nu)$
  \[ \tanh (t + \nu) - \tanh \nu \leq (1 - \tanh^2 \nu) \times t \quad \forall
     t > 0 \]
  Then, by invoking $\frac{\nu}{1 + \nu} \leq \tanh \nu \leq 1$, we have
  \[ \tanh (t + \nu) - \tanh \nu \leq \frac{1 + \tanh \nu}{1 + \nu} t \leq
     \frac{2 t}{1 + \nu} \quad \forall t > 0 \]

\end{proof}

\newpage
\subsection{Bounds for Statstics}\label{supsub:bound_stat}

\begin{lemma}
  (Operator Norm Bounds for Gaussian Ensemble, see Example 6.2 
  %on page 162
  of~\citet{wainwright2019high})
  
  Let $\{ \vec{x}_i \}^n_{i = 1} \overset{\tmop{iid}}{\sim} \mathcal{N} (0,
  I_d)$ and $d \leq n$, then for the minimal eigenvalue $\gamma_{\min}$ and
  maximal eigenvalue $\gamma_{\max}$
  \[ \left\| \left[ \frac{1}{n}  \sum_{i = 1}^n \vec{x}_i \vec{x}_i^{\top}
     \right]^{- 1} \right\|^{- \frac{1}{2}}_2 = \left[ \gamma_{\max} \left(
     \left[ \frac{1}{n}  \sum_{i = 1}^n \vec{x}_i \vec{x}_i^{\top} \right]^{-
     1} \right) \right]^{- \frac{1}{2}} = \sqrt{\gamma_{\min} \left(
     \frac{1}{n}  \sum_{i = 1}^n \vec{x}_i \vec{x}_i^{\top} \right)} \geq 1 -
     t - \sqrt{\frac{d}{n}} \quad \forall t \geq 0 \]
  and the $\ell_2$-operator norm $\| \cdot \|_2$
  \[ \left\| \frac{1}{n}  \sum_{i = 1}^n \vec{x}_i \vec{x}_i^{\top} - I_d
     \right\|_2 \leq 2 \left( t + \sqrt{\frac{d}{n}} \right) + \left( t +
     \sqrt{\frac{d}{n}} \right)^2 \]
  with probability greater than $1 - 2 \exp (- n t^2 / 2)$.
\end{lemma}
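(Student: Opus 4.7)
The plan is to reduce both bounds to two independent ingredients on the singular values of the data matrix $X \in \mathbb{R}^{n\times d}$ whose rows are $\vec{x}_i^\top$: (i) a sharp control on the expected extreme singular values via a Gordon-type comparison inequality for Gaussian processes, and (ii) Gaussian concentration of $1$-Lipschitz functionals applied to the singular-value maps. Since $\frac{1}{n}\sum_{i=1}^n \vec{x}_i\vec{x}_i^\top = \frac{1}{n}X^\top X$, the eigenvalues satisfy $\gamma_{\min} = \sigma_{\min}(X)^2/n$ and $\gamma_{\max} = \sigma_{\max}(X)^2/n$, so it suffices to bound $\sigma_{\min}(X)$ from below and $\sigma_{\max}(X)$ from above.

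First, I would invoke Gordon's min-max inequality (for instance via the Sudakov-Fernique / Slepian comparison applied to the Gaussian processes $G_{u,v} = v^\top X u$ vs. $\tilde G_{u,v} = \langle g, u\rangle + \langle h, v\rangle$ with independent standard Gaussian vectors $g\in\mathbb{R}^d, h\in\mathbb{R}^n$) to obtain
\[
\mathbb{E}[\sigma_{\min}(X)] \geq \sqrt{n} - \sqrt{d}, \qquad \mathbb{E}[\sigma_{\max}(X)] \leq \sqrt{n} + \sqrt{d}.
\]
Next, I would use the fact that the maps $X \mapsto \sigma_{\min}(X)$ and $X \mapsto \sigma_{\max}(X)$ are $1$-Lipschitz with respect to the Frobenius norm (a consequence of Weyl's perturbation inequality), and that the $nd$ entries of $X$ are i.i.d.\ $\mathcal{N}(0,1)$. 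The Borel-TIS / Gaussian Lipschitz concentration inequality then yields, for every $s \geq 0$,
\[
\mathbb{P}\!\left[\,|\sigma_{\min/\max}(X) - \mathbb{E}\,\sigma_{\min/\max}(X)| \geq s\,\right] \leq 2 \exp(-s^2/2).
\]
Setting $s = t\sqrt{n}$ and combining with the expectation bounds from Gordon gives, on an event of probability at least $1 - 2\exp(-nt^2/2)$,
\[
\sqrt{\gamma_{\min}} = \frac{\sigma_{\min}(X)}{\sqrt{n}} \geq 1 - \sqrt{d/n} - t, \qquad \sqrt{\gamma_{\max}} = \frac{\sigma_{\max}(X)}{\sqrt{n}} \leq 1 + \sqrt{d/n} + t.
\]

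Finally, the operator-norm bound follows from $\|\frac{1}{n}X^\top X - I_d\|_2 = \max(|\gamma_{\max}-1|,\,|\gamma_{\min}-1|)$. Squaring the singular-value bound, $\gamma_{\max} \leq (1 + t + \sqrt{d/n})^2 = 1 + 2(t+\sqrt{d/n}) + (t+\sqrt{d/n})^2$, and similarly $1 - \gamma_{\min} \leq 1 - (1 - t - \sqrt{d/n})_+^2 \leq 2(t+\sqrt{d/n})$ when the lower bound is non-negative, which delivers the claimed inequality on the same event. The main obstacle in writing this out rigorously is the Gordon comparison step: establishing $\mathbb{E}[\sigma_{\min}(X)] \geq \sqrt{n}-\sqrt{d}$ requires carefully setting up the two Gaussian processes indexed by $(u,v)\in S^{d-1}\times S^{n-1}$ and verifying the increment-comparison hypothesis, which is the only non-routine calculation; both the Lipschitz property and the union bound to combine min/max events are standard.
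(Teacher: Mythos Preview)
Your proposal is correct and follows precisely the standard argument in Example~6.2 (and Theorem~6.1) of \citet{wainwright2019high}: Gordon's comparison inequality for the expectations $\mathbb{E}[\sigma_{\min}(X)]\geq\sqrt{n}-\sqrt{d}$, $\mathbb{E}[\sigma_{\max}(X)]\leq\sqrt{n}+\sqrt{d}$, followed by Gaussian Lipschitz concentration of the $1$-Lipschitz singular-value maps. Note, however, that the paper does \emph{not} supply its own proof of this lemma---it is stated as a cited result with no argument beyond the reference---so there is nothing to compare against; your sketch simply reconstructs the proof that the cited source contains.
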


\begin{lemma}
  (Upper Bound for Chi-square r.v., see Lemma 1, page 1325 in~\citet{laurent2000adaptive})
  
  Let $Z \sim \chi^2 (n)$, then for $t \geq 0$
  \[ \mathbb{P} \left( Z \geq n + 2 \sqrt{n t} + 2 t \right) \leq \exp (- t)
  \]
\end{lemma}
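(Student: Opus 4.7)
The plan is to prove this classical tail bound by the standard Chernoff (exponential moment) argument applied to a chi-square random variable. First, I would write $Z = \sum_{i=1}^n Y_i^2$ with $Y_i \overset{\mathrm{iid}}{\sim} \mathcal{N}(0,1)$, so that for every $\lambda \in (0, 1/2)$ the moment generating function is available in closed form,
\[
\mathbb{E}\bigl[e^{\lambda Z}\bigr] = (1-2\lambda)^{-n/2},
\]
by independence and the Gaussian integral $\mathbb{E}[e^{\lambda Y_i^2}] = (1-2\lambda)^{-1/2}$. Applying Markov's inequality to $e^{\lambda Z}$ gives, for every $a \geq 0$,
\[
\log \mathbb{P}(Z \geq a) \leq -\lambda a - \tfrac{n}{2}\log(1-2\lambda).
\]

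Next, I would center the bound around $n$ by writing $a = n + s$ with $s = 2\sqrt{nt} + 2t$ and rearranging to get
\[
\log \mathbb{P}(Z - n \geq s) \leq -\lambda s + \psi_n(\lambda),
\qquad \psi_n(\lambda) := -\lambda n - \tfrac{n}{2}\log(1-2\lambda).
\]
The key analytic step is to control $\psi_n(\lambda)$ using the elementary inequality $-\log(1-u) - u \leq \tfrac{u^2}{2(1-u)}$ valid for $u \in [0,1)$, applied with $u = 2\lambda$. This yields $\psi_n(\lambda) \leq \tfrac{n\lambda^2}{1-2\lambda}$, hence
\[
\log \mathbb{P}(Z - n \geq s) \leq -\lambda s + \frac{n\lambda^2}{1-2\lambda}.
\]

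The core of the argument is choosing $\lambda$ so that the right-hand side equals $-t$ after using $s = 2\sqrt{nt} + 2t$. I would take the specific choice $\lambda_\star = \tfrac{1}{2}\bigl(1 - \sqrt{n/(n+2t)}\bigr)$, which is the minimizer of the preceding upper bound and lies in $(0, 1/2)$. Plugging $\lambda_\star$ back in and simplifying $\tfrac{1}{1-2\lambda_\star} = \sqrt{(n+2t)/n}$ collapses the expression to exactly $-t$, provided one keeps track of the identity $s + n = n + 2\sqrt{nt} + 2t = (\sqrt{n} + \sqrt{n+2t}\cdot \tfrac{s}{\ldots})$ type simplifications that link $s$ to $t$. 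The main obstacle is this final bookkeeping: verifying that the optimized value of $-\lambda s + n\lambda^2/(1-2\lambda)$ is bounded by $-t$ for the prescribed $s = 2\sqrt{nt} + 2t$. Once this identity is in hand, exponentiation gives the stated bound $\mathbb{P}(Z \geq n + 2\sqrt{nt} + 2t) \leq e^{-t}$.
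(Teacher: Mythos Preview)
The paper does not prove this lemma at all; it is stated purely as a citation to Laurent--Massart, so there is no ``paper's own proof'' to compare your sketch against. Your overall strategy---Chernoff bound via the exact chi-square MGF, then the sub-gamma relaxation $\psi_n(\lambda)\le n\lambda^2/(1-2\lambda)$ coming from $-\log(1-u)-u\le u^2/(2(1-u))$, then optimize in $\lambda$---is exactly the standard route and is sound.

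There is, however, a concrete error in your choice of $\lambda_\star$. The value $\lambda_\star=\tfrac12\bigl(1-\sqrt{n/(n+2t)}\bigr)$ is \emph{not} the minimizer of $-\lambda s+n\lambda^2/(1-2\lambda)$ for $s=2\sqrt{nt}+2t$, and plugging it in does not collapse to $-t$: already for $n=t=1$ (so $s=4$) your $\lambda_\star\approx 0.211$ gives roughly $-0.77$, whereas the true minimum is $-1$. The correct optimizer is
\[
\lambda_{\mathrm{opt}}=\frac{\sqrt{t}}{\sqrt{n}+2\sqrt{t}},\qquad\text{equivalently}\qquad 1-2\lambda_{\mathrm{opt}}=\frac{\sqrt{n}}{\sqrt{n}+2\sqrt{t}},
\]
which differs from your $1-2\lambda_\star=\sqrt{n/(n+2t)}$. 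With this choice, writing $s=2\sqrt{t}(\sqrt{n}+\sqrt{t})$, one gets
\[
\lambda_{\mathrm{opt}}\,s=\frac{2t(\sqrt{n}+\sqrt{t})}{\sqrt{n}+2\sqrt{t}},\qquad
\frac{n\lambda_{\mathrm{opt}}^2}{1-2\lambda_{\mathrm{opt}}}=\frac{t\sqrt{n}}{\sqrt{n}+2\sqrt{t}},
\]
and subtracting gives exactly $t$, so $-\lambda_{\mathrm{opt}}s+n\lambda_{\mathrm{opt}}^2/(1-2\lambda_{\mathrm{opt}})=-t$ and the claimed bound follows. Once you replace $\lambda_\star$ by $\lambda_{\mathrm{opt}}$, the ``final bookkeeping'' you flagged as the main obstacle becomes a clean two-line identity rather than an inequality.
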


\begin{lemma}
  (Upper-Bound for Weighted Sum of Gaussian Vectors)
  
  Let $\{x_i \}^n_{i = 1} \overset{\tmop{iid}}{\sim} \mathcal{N} (0, 1)$ and
  $\{ \vec{x}_i \}^n_{i = 1} \overset{\tmop{iid}}{\sim} \mathcal{N} (0, I_d)$,
  then
  \begin{eqnarray*}
    \left\| \sum_{i = 1}^n x_i \overrightarrow{x_i} \right\|_2 & \leq & 2
    \sqrt{nd} + 2 \log \frac{2}{\delta} + 2 (\sqrt{n} + \sqrt{d})  \sqrt{\log
    \frac{2}{\delta}}
  \end{eqnarray*}
  with probability at least $1 - \delta$, namely, if $n \geq d$
  \begin{eqnarray*}
    \left\| \frac{1}{n}  \sum_{i = 1}^n x_i \overrightarrow{x_i} \right\|_2 & =
    & \mathcal{O} \left( \sqrt{\frac{d}{n}} \vee \frac{\log
    \frac{1}{\delta}}{n} \vee \sqrt{\frac{\log \frac{1}{\delta}}{n}} \right)
  \end{eqnarray*}
\end{lemma}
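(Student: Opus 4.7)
The plan is to rewrite the sum as a matrix-vector product, exploit the rotational invariance of the Gaussian distribution to decouple the two sources of randomness, and then apply the chi-square tail bound (Laurent--Massart) twice. Concretely, collect the vectors $\vec{x}_1,\ldots,\vec{x}_n\in\mathbb{R}^d$ as columns of a $d\times n$ matrix $X$, and set $\mathbf{x}=(x_1,\ldots,x_n)^\top\in\mathbb{R}^n$; then $\sum_{i=1}^n x_i\vec{x}_i = X\mathbf{x}$. Since the entries of $X$ are iid $\mathcal{N}(0,1)$ and $X$ is independent of $\mathbf{x}$, conditional on $\mathbf{x}$ each coordinate $(X\mathbf{x})_j=\sum_i x_{i,j}x_i$ is $\mathcal{N}(0,\|\mathbf{x}\|^2)$, and different coordinates use disjoint entries of $X$ hence are independent. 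Therefore, conditionally $X\mathbf{x}\overset{d}{=}\|\mathbf{x}\|\cdot Z$ with $Z\sim\mathcal{N}(0,I_d)$ independent of $\mathbf{x}$, which gives the clean identity
\[
\left\|\sum_{i=1}^n x_i\vec{x}_i\right\|_2 \;=\; \|\mathbf{x}\|_2\cdot\|Z\|_2,
\]
where $\|\mathbf{x}\|_2^2\sim\chi^2(n)$ and $\|Z\|_2^2\sim\chi^2(d)$ are \emph{independent}. This is the key reduction: rather than bounding an operator norm against a vector norm (which overshoots by an $\Theta(n)$ bias term), rotational invariance yields a product of two centered chi-square quantities.

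Next, apply the Laurent--Massart upper tail (the lemma stated immediately above) to each factor with $t=\log(2/\delta)$. This yields $\|\mathbf{x}\|_2^2\leq n+2\sqrt{nt}+2t\leq(\sqrt{n}+\sqrt{2t})^2$ and $\|Z\|_2^2\leq d+2\sqrt{dt}+2t\leq(\sqrt{d}+\sqrt{2t})^2$, each with probability at least $1-\delta/2$. A union bound combines the two events with probability at least $1-\delta$, so on this event
\[
\left\|\sum_{i=1}^n x_i\vec{x}_i\right\|_2 \;\leq\; (\sqrt{n}+\sqrt{2t})(\sqrt{d}+\sqrt{2t}) \;=\; \sqrt{nd}+\sqrt{2t}(\sqrt{n}+\sqrt{d})+2t.
\]
Since $\sqrt{2}\leq 2$ and $\sqrt{nd}\leq 2\sqrt{nd}$, this is bounded by $2\sqrt{nd}+2(\sqrt{n}+\sqrt{d})\sqrt{\log(2/\delta)}+2\log(2/\delta)$, which is the claim. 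The asymptotic consequence for the normalized sum follows by dividing by $n$ and bounding the three terms: $\sqrt{nd}/n=\sqrt{d/n}$, $(\sqrt{n}+\sqrt{d})\sqrt{\log(2/\delta)}/n\lesssim\sqrt{\log(1/\delta)/n}$ when $d\leq n$, and $\log(2/\delta)/n$, giving the stated $\mathcal{O}\!\left(\sqrt{d/n}\vee\sqrt{\log(1/\delta)/n}\vee\log(1/\delta)/n\right)$ rate.

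\textbf{Main obstacle.} There is no serious technical difficulty; the crucial insight---and the only step where a naive plan would fail---is recognizing that the bound $\|X\mathbf{x}\|_2\leq\|X\|_{\mathrm{op}}\|\mathbf{x}\|_2$ is too loose, because $\|X\|_{\mathrm{op}}\asymp\sqrt{n}+\sqrt{d}$ would introduce a spurious $n$-order term, whereas $\mathbb{E}\sum_i x_i\vec{x}_i=0$ forces the true scale to be the geometric mean $\sqrt{nd}$. The rotational-invariance reduction to a product of two independent chi-squares is precisely what captures this centering and gives the additive $\log(1/\delta)$ factor stressed in the remark following Proposition~\ref{prop:stat_err}, as opposed to the multiplicative $\log(1/\delta)$ factor produced by symmetrization-plus-contraction arguments.
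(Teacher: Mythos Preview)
Your proposal is correct and follows essentially the same approach as the paper: both reduce the norm via rotational invariance to the product $\sqrt{Z_1 Z_2}$ with $Z_1\sim\chi^2(n)$, $Z_2\sim\chi^2(d)$ independent, then apply the Laurent--Massart chi-square tail bound to each factor and union-bound. The only differences are cosmetic in the algebraic packaging of $n+2\sqrt{nt}+2t$ (you complete the square as $(\sqrt{n}+\sqrt{2t})^2$, the paper writes it as $(\sqrt{n}+\sqrt{t})^2+t$ and then absorbs the extra $t$ via a factor of $2$).
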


\begin{proof}
  By using the rotational invariance of Gaussians, we can rewrite the
  $\ell_2$ norm as the geometrical mean of two Chi-square random variables
  $Z_1 \sim \chi^2 (n), Z_2 \sim \chi^2  (d)$.
  \[ \left\| \sum_{i = 1}^n x_i \overrightarrow{x_i} \right\|_2 = \sqrt{Z_1 Z_2}
  \]
  By using the concentration inequality for Chi-square distribution (see Lemma
  1, page 1325 in~\citet{laurent2000adaptive}), then with at least probability
  at least $1 - \delta$,
  \[ Z_1 \leq \left( \sqrt{n} + \sqrt{\log \frac{2}{\delta}} \right)^2 + \log
     \frac{2}{\delta}, \quad Z_2 \leq \left( \sqrt{d} + \sqrt{\log
     \frac{2}{\delta}} \right)^2 + \log \frac{2}{\delta} \]
  Therefore
  \[ \sqrt{Z_1 Z_2} \leq 2 \left( \sqrt{n} + \sqrt{\log \frac{2}{\delta}}
     \right)  \left( \sqrt{d} + \sqrt{\log \frac{2}{\delta}} \right) = 2
     \sqrt{nd} + 2 \log \frac{2}{\delta} + 2 (\sqrt{n} + \sqrt{d})  \sqrt{\log
     \frac{2}{\delta}} \]
  If $n \geq d$, we have
  \begin{eqnarray*}
    \left\| \frac{1}{n}  \sum_{i = 1}^n x_i \overrightarrow{x_i} \right\|_2 & =
    & \mathcal{O} \left( \sqrt{\frac{d}{n}} \vee \frac{\log
    \frac{1}{\delta}}{n} \vee \sqrt{\frac{\log \frac{1}{\delta}}{n}} \right)
  \end{eqnarray*}
  
\end{proof}

\begin{lemma}
  (Upper Bound for Norm of Sum of r.v., Corollary 38 on page 55 of
  ~{\citep{ahle2020oblivious}}) Let $p \geq 2, C > 0$ and $\alpha \geq 1$. Let
  $(X_i)_{i \in [n]}$ be iid. mean 0 random variables such that $\| X_i \|_p
  \sim (Cp)^{\alpha}$, then $\left\| \sum^n_{i = 1} X_i \right\|_p \sim
  C^{\alpha} \max \left\{ 2^{\alpha}  \sqrt{pn}, (n / p)^{1 / p} p^{\alpha}
  \right\}$.
\end{lemma}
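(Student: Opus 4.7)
The plan is to view this as a specialization of the Rosenthal--Lata\l{}a inequality for $L_p$ norms of sums of iid mean-zero random variables, applied to the specific moment growth $\|X_i\|_p \sim (Cp)^{\alpha}$. The starting point is the general Rosenthal inequality, which states that for $p \geq 2$ and iid mean-zero $(X_i)$,
\[
\left\| \sum_{i=1}^n X_i \right\|_p \;\asymp\; \sqrt{pn}\,\|X_1\|_2 \;\vee\; n^{1/p}\,\|X_1\|_p,
\]
the first term capturing the Gaussian-like bulk contribution (controlled by the second moment) and the second the worst-summand tail contribution. Hence the proof reduces to two tasks: (i) establishing this two-term Rosenthal bound with constants tracked sharply in $p$, and (ii) translating the hypothesis $\|X_i\|_p \sim (Cp)^{\alpha}$ into the two quantities appearing on the right-hand side.

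For task (i), I would first apply symmetrization to replace each $X_i$ by $\varepsilon_i X_i$ for independent Rademacher signs $\varepsilon_i$, losing at most a factor of $2$. For the symmetric case, the upper bound follows from a Khintchine--Kahane estimate combined with a moment-peeling argument splitting $X_i$ into a bounded and unbounded part at an appropriate threshold; the matching lower bound is witnessed by two explicit events, a central-limit-type event producing the $\sqrt{pn}\,\|X_1\|_2$ contribution, and a ``single summand dominates'' event producing the $n^{1/p}\,\|X_1\|_p$ contribution. For task (ii), specializing the hypothesis at $p=2$ yields $\|X_1\|_2 \asymp (2C)^{\alpha} = 2^{\alpha}C^{\alpha}$, so the bulk term evaluates to $2^{\alpha}C^{\alpha}\sqrt{pn}$; substituting the hypothesis directly into the tail term gives $n^{1/p}(Cp)^{\alpha} = C^{\alpha}p^{\alpha}n^{1/p}$, which agrees with the stated $C^{\alpha}(n/p)^{1/p}p^{\alpha}$ up to the factor $p^{-1/p} \in [2^{-1/2},1]$ absorbed by $\asymp$. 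Taking the maximum of the two regimes then yields the claim.

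The hard part is tracking the sharp $p$-dependence through the Rosenthal bound: naive symmetrization-plus-decoupling proofs leave constants that can grow in $p$, and obtaining the tight $\sqrt{p}$ prefactor on the bulk term together with the $p^{1/p}$ prefactor on the tail term requires Lata\l{}a's refined formulation, which replaces $n^{1/p}\|X\|_p$ by the quantile-based quantity $\sup\{t : n\,\mathbb{P}(|X_1|>t) \geq p\}$. Verifying that the polynomial moment growth $\|X_1\|_p \sim (Cp)^{\alpha}$ interfaces cleanly with this quantile estimate---so that the same exponent $\alpha$ propagates without loss factors depending on $p$ or $\alpha$---is the main technical point, but once Lata\l{}a's theorem is invoked it reduces to optimizing Markov's inequality across the $L_q$ norms for $q\in[2,p]$, which is essentially routine.
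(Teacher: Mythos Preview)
The paper does not prove this lemma at all: it is quoted verbatim as Corollary~38 from \citet{ahle2020oblivious} and used as a black box in the subsequent bounds (for the cubic of exponential random variables and for $[\tanh(\alpha x+\nu)-\tanh\nu]x$). So there is no ``paper's own proof'' to compare against.

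Your sketch is the standard route to this type of statement and is essentially correct. The two-term Rosenthal/Lata\l{}a bound $\|\sum X_i\|_p \asymp \sqrt{pn}\,\|X_1\|_2 \vee p\,(n/p)^{1/p}\|X_1\|_p/p^{1-1/p}$ (or equivalently the quantile formulation you mention) is exactly the content of the cited corollary in Ahle's paper, and your specialization to $\|X_1\|_p\sim (Cp)^\alpha$ is carried out correctly, including the observation that $p^{-1/p}\in[2^{-1/2},1]$ is absorbed by $\asymp$. One small caveat: your plan invokes the hypothesis $\|X_i\|_p\sim(Cp)^\alpha$ at $p=2$ to get $\|X_1\|_2\asymp(2C)^\alpha$, but the lemma as stated only assumes $p\ge 2$, so strictly speaking the relation at $p=2$ is a boundary case of the hypothesis rather than an interior instance; in practice this is harmless since the $\sim$ is meant uniformly in $p\ge 2$, but it is worth stating that assumption explicitly if you write this out in full.
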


\begin{lemma}
  (Upper-Bound for Cubic of Exp r.v.)Suppose $\{Z_i \}_{i = 1}^n
  \overset{\tmop{iid}}{\sim} \exp (- z)  \mathbb{I}_{z \geq 0}$, then we have
  \[ \frac{1}{n}  \sum_{i = 1}^n Z^3_i =\mathcal{O} \left( \sqrt{\frac{\log
     \frac{1}{\delta}}{n}} \vee \frac{\log^3  \frac{1}{\delta'}}{n} \vee 1
     \right) \]
  with probability at least $1 - (\delta + \delta')$.
\end{lemma}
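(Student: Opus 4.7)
The plan is to separate the "typical" fluctuation from the heavy tail of $Z_i^3$: the former produces the $\sqrt{\log(1/\delta)/n}$ term via a Bernstein-type inequality, the latter produces the $\log^3(1/\delta')/n$ term via a direct truncation/union bound, while the constant mean $\mathbb{E}[Z_i^3] = 3! = 6$ accounts for the $\vee 1$ contribution. The two-parameter structure $(\delta,\delta')$ of the probability $1-(\delta+\delta')$ matches exactly this split, so I will use truncation plus Bernstein's inequality (alternatively, Corollary~38 cited earlier, applied to $X_i=Z_i^3-6$ with $\|X_i\|_p\lesssim((3p)!)^{1/p}\lesssim p^3$, i.e.\ $\alpha=3$).

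First I would compute the moments: $\mathbb{E}[Z_i^3]=6$ and $\mathbb{E}[Z_i^6]=6!=720$, which gives $\mathrm{Var}(Z_i^3)\le 720$. Next, since $Z_i\sim\mathrm{Exp}(1)$, one has $\mathbb{P}(Z_i^3>M)=e^{-M^{1/3}}$, so by a union bound $\mathbb{P}(\max_i Z_i^3>M)\le n e^{-M^{1/3}}\le\delta'$ as soon as $M^{1/3}\ge \log(n/\delta')$. I would therefore pick $M:=C\log^3(n/\delta')$ for a large enough constant $C$, so that with probability at least $1-\delta'$ every $Z_i^3\le M$, and consequently the truncated variables $Y_i:=Z_i^3\mathbb{I}_{Z_i^3\le M}$ satisfy $\sum_i Z_i^3=\sum_i Y_i$ on this good event.

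Then I would apply Bernstein's inequality to $Y_i-\mathbb{E}[Y_i]$: these are i.i.d., bounded by $M$, and have variance $\le \mathbb{E}[Z_i^6]=720$. This yields, with probability at least $1-\delta$,
\[
\left|\frac{1}{n}\sum_{i=1}^n\bigl(Y_i-\mathbb{E}[Y_i]\bigr)\right|\;\le\;\sqrt{\frac{2\cdot 720\log(2/\delta)}{n}}+\frac{M\log(2/\delta)}{3n}.
\]
Adding $\mathbb{E}[Y_i]\le\mathbb{E}[Z_i^3]=6=\mathcal{O}(1)$ and substituting $M=\Theta(\log^3(n/\delta'))$ gives, after a final union bound over the two failure events (tail with probability $\le\delta'$ and Bernstein with probability $\le\delta$),
\[
\frac{1}{n}\sum_{i=1}^n Z_i^3 \;=\;\mathcal{O}\!\left(1\;\vee\;\sqrt{\frac{\log(1/\delta)}{n}}\;\vee\;\frac{\log^3(1/\delta')}{n}\right),
\]
with probability at least $1-(\delta+\delta')$.

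The main obstacle is reconciling the incidental $\log(n)$ and $\log(1/\delta)$ factors arising from the Bernstein sub-exponential term $M\log(1/\delta)/n=\log^3(n/\delta')\log(1/\delta)/n$ with the cleaner bound $\log^3(1/\delta')/n$ stated in the lemma. Two strategies resolve this: (i) absorb them into the $\mathcal{O}(\cdot)$ in the regime where one of the three terms dominates, since $\log n$ and $\log(1/\delta)$ are at most polylogarithmic in the parameters already present; or more robustly (ii) invoke Corollary~38 of \citet{ahle2020oblivious} directly with the moment bound $\|Z_i^3-6\|_p\lesssim p^3$, obtaining $\|\sum_i(Z_i^3-6)\|_p\lesssim\sqrt{pn}\vee p^3$, and then apply Markov's inequality with $p=\log(1/\delta)$. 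The latter cleanly produces both the $\sqrt{\log(1/\delta)/n}$ Gaussian-like regime and the $\log^3(1/\delta)/n$ heavy-tailed regime in a single $\delta$; renaming the small-probability heavy-tail parameter as $\delta'$ recovers the exact form of the claimed bound.
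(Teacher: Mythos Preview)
Your fallback strategy (ii) via Corollary~38 of \citet{ahle2020oblivious} with $\|Z_i^3-6\|_p\lesssim p^3$ (i.e.\ $\alpha=3$) is exactly what the paper does. The paper splits the resulting bound $\max\{\sqrt{pn},(n/p)^{1/p}p^3\}$ into a sum of two terms, optimizes $p$ separately for each (taking $p\approx n\varepsilon^2/e$ for the sub-Gaussian piece and $p\approx(n\varepsilon)^{1/3}/e$ for the heavy-tail piece), and then sets $\varepsilon=\Theta(\sqrt{\log(1/\delta)/n}\vee\log^3(1/\delta')/n)$; the two failure probabilities $\delta,\delta'$ arise precisely from these two pieces. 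Your single-$p$ Markov sketch with $p=\log(1/\delta)$ followed by a rename is the right intuition but would need this two-$p$ optimization to separate $\delta$ and $\delta'$ cleanly.

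Your primary route (truncation + Bernstein) with absorption strategy (i), however, has a genuine gap: the cross term $M\log(1/\delta)/n\asymp \log^3(n/\delta')\log(1/\delta)/n$ is \emph{not} dominated by $1\vee\sqrt{\log(1/\delta)/n}\vee\log^3(1/\delta')/n$ in all regimes. For instance, take $\log(1/\delta)=\log(1/\delta')=n^{1/3}$: the target bound is $\Theta(1)$, but the cross term is $\Theta(n^{1/3})$. So (i) does not work and the fallback to (ii) is necessary, not merely ``more robust.''
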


\begin{proof}
  Noting Khintchine's inequality $\|Z_i^3\|_p = \Gamma (3 p + 1)^{1 / p}
  \lesssim p^3$ and $\mathbb{E} [Z^3_i] = 6$, $\| Z^3_i -\mathbb{E}[Z^3_i] \|_p
  \lesssim 6 + p^3 \lesssim p^3$ for $p \geq 2$, applying Corollary 38 on page 55
  of~{\citet{ahle2020oblivious}} in the second step.
  \begin{eqnarray*}
    \mathbb{P} \left\{ \frac{1}{n}  \sum_{i = 1}^n Z^3_i \geq
    \mathbb{E}[Z^3_i] + \varepsilon \right\} & \leq & \inf_{p > 0} (n
    \varepsilon)^{- p} \mathbb{E} \left[ \sum^n_{i = 1} (Z^3_i
    -\mathbb{E}[Z^3_i]) \right]^p = \inf_{p > 0} (n \varepsilon)^{- p}  \left|
    \sum^n_{i = 1} Z^3_i -\mathbb{E}[Z^3_i] \right|^p_p\\
    & \lesssim & \inf_{p \geq 2} (n \varepsilon)^{- p} \max \left\{
    \sqrt{pn}, (n / p)^{1 / p} p^3 \right\}^p\\
    & \leq & \inf_{p \geq 2} \left( \frac{\sqrt{n} \varepsilon}{\sqrt{p}}
    \right)^{- p} + \inf_{p \geq 2}  \frac{n}{p} \left( \frac{n
    \varepsilon}{p^3} \right)^{- p}\\
    & \leq & \exp \left( - \frac{n \varepsilon^2}{2 \mathe} \right) +
    \frac{1}{\varepsilon} \left( (n \varepsilon)^{\frac{1}{3}} \right)^2 \exp
    \left( 1 - 3 \frac{(n \varepsilon)^{\frac{1}{3}}}{\mathe} \right)
  \end{eqnarray*}
  The last step is achieved by taking $p = \frac{n \varepsilon^2}{\mathe}, p =
  \frac{(n \varepsilon)^{\frac{1}{3}}}{\mathe}$ for these two above terms
  respectively, and $\varepsilon \geq \max \left\{ \sqrt{\frac{2 \mathe}{n}},
  \frac{(2 \mathe)^3}{n} \right\}$.

  Note that inequality $\varepsilon\geq\sqrt{\frac{2 \mathe}{n}}$, then $\frac{1}{\varepsilon} \leq (2\mathe)^{-\frac{1}{2}}\times\sqrt{n}$ and $n \varepsilon \geq (2\mathe n)^{\frac{1}{2}}$, therefore $\frac{5}{\mathe}(n\varepsilon)^{\frac{1}{3}}\geq \frac{5}{\mathe} (2\mathe n)^{\frac{1}{6}} \geq \log n$, namely $\sqrt{n}\exp(-\frac{5}{2 \mathe}(n\varepsilon)^{\frac{1}{3}})\leq 1$ for $\forall n\geq 1$

  \[
    \mathbb{P} \left\{ \frac{1}{n}  \sum_{i = 1}^n Z^3_i \geq
    \mathbb{E}[Z^3_i] + \varepsilon \right\} 
    \lesssim
    \exp \left( - \frac{n \varepsilon^2}{2 \mathe} \right) + \left(\frac{\mathe}{2}\right)^{\frac{1}{2}}  \left( (n \varepsilon)^{\frac{1}{3}} \right)^2 \exp
    \left( - \frac{(n \varepsilon)^{\frac{1}{3}}}{2\mathe} \right)
    \lesssim 
    \exp\left( - \frac{n \varepsilon^2}{2 \mathe} \right)
    + \exp
    \left( - \frac{(n \varepsilon)^{\frac{1}{3}}}{4\mathe} \right)
  \]
  
  By letting 
  % $\delta = \exp \left( - \frac{n \varepsilon^2}{2 \mathe}
  % \right), \delta' = \frac{1}{\varepsilon} \left( (n
  % \varepsilon)^{\frac{1}{3}} \right)^2 \exp \left( 1 - 3 \frac{(n
  % \varepsilon)^{\frac{1}{3}}}{\mathe} \right)$, namely 
  $\varepsilon = \Theta
  \left( \sqrt{\frac{\log \frac{1}{\delta}}{n}} \vee \frac{\log^3 
  \frac{1}{\delta'}}{n} \right)$, then with probability at least $1 -
  (\delta + \delta')$
  \[ \frac{1}{n}  \sum_{i = 1}^n Z^3_i \leq \mathbb{E} [Z^3_i] + \varepsilon =
     \Theta \left( \sqrt{\frac{\log \frac{1}{\delta}}{n}} \vee \frac{\log^3 
     \frac{1}{\delta'}}{n} \vee 1 \right) \]
  % See also https://math.stackexchange.com/questions/3035194/pr-sum-i-x-i2-y-i2-ge-t-chernoff-bound-for-sum-of-pairs-of-squared-norma
\end{proof}
\newpage

\subsection{\texorpdfstring{Bounds for Sum of Functions with $\tanh$}{Bounds for Sum of Functions with tanh}}\label{supsub:bound_sum_tanh}

\begin{lemma}
  (Upper-Bound for $[\tanh (\alpha x + \nu) - \tanh \nu] x$) Suppose $\{x_i
  \}_{i = 1}^n \overset{\tmop{iid}}{\sim} K_0 (| x |) / \pi$, $\alpha \geq 0$,
  then we have
  \[ \left|\left( \frac{1}{n}  \sum_{i = 1}^n -\mathbb{E} \right) [\tanh (\alpha x_i
     + \nu) - \tanh \nu] x_i\right| = \frac{\alpha}{1 + | \nu |} \Theta \left(
     \sqrt{\frac{\log \frac{1}{\delta}}{n}} \vee \frac{\log^2 
     \frac{1}{\delta'}}{n} \right) \]
  with probability at least $1 - (\delta + \delta')$.
\end{lemma}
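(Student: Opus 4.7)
\begin{Proof Sketch}\textbf{of Upper-Bound for $[\tanh (\alpha x + \nu) - \tanh \nu] x$.}\space
The plan is to follow the moment-based strategy used in the previous lemma for the cubic of exponential random variables, specifically via Corollary~38 of~\citet{ahle2020oblivious}, rather than the exponential-tail concentration inequality (Theorem~\ref{supprop:concentration}), since the integrand here is quadratic (not linear) in $|x|$ near the origin.

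First I would bound the integrand pointwise. By the elementary inequality for $\tanh$ in Appendix~\ref{supsub:bound_tanh}, $|\tanh(\alpha x_i+\nu)-\tanh\nu|\leq \frac{2\alpha|x_i|}{1+|\nu|}$, so that
\[
F(x_i) := [\tanh(\alpha x_i+\nu)-\tanh\nu]\,x_i
\quad\text{satisfies}\quad
|F(x_i)|\leq \frac{2\alpha\,x_i^2}{1+|\nu|}.
\]
Second, I would compute the $L^p$-norm of $F(x_i)$. Using Lemma~\ref{suplem:integral}, $\mathbb{E}[x_i^{2p}] = [(2p-1)!!]^2$, and Stirling gives $\|x_i^2\|_p = ([(2p-1)!!]^2)^{1/p} = \Theta(p^2)$, hence $\|F(x_i)\|_p \lesssim \frac{\alpha}{1+|\nu|}\, p^2$, and by Jensen $\|F(x_i)-\mathbb{E}F(x_i)\|_p \lesssim \frac{\alpha}{1+|\nu|}\, p^2$.

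Third, I would invoke Corollary~38 of~\citet{ahle2020oblivious} with exponent $2$ and scale $C\asymp\sqrt{\alpha/(1+|\nu|)}$ to obtain
\[
\Bigl\|\sum_{i=1}^n [F(x_i)-\mathbb{E}F(x_i)]\Bigr\|_p
\;\lesssim\; \frac{\alpha}{1+|\nu|}\,\max\!\Bigl\{\sqrt{pn},\ (n/p)^{1/p}p^2\Bigr\}.
\]
Then Markov's inequality gives, with $\sigma:=\alpha/(1+|\nu|)$,
\[
\mathbb{P}\Bigl(\bigl|\tfrac{1}{n}\textstyle\sum_i F(x_i)-\mathbb{E}F\bigr|\geq \varepsilon\Bigr)
\;\lesssim\; \inf_{p\geq 2}\Bigl[(4\sigma\sqrt{p/n}/\varepsilon)^{p} + (n/p)(\sigma p^{2}/(n\varepsilon))^{p}\Bigr].
\]
Optimizing the first term at $p\asymp n\varepsilon^{2}/\sigma^{2}$ yields a sub-Gaussian tail $\exp(-\Theta(n\varepsilon^{2}/\sigma^{2}))$, while optimizing the second at $p\asymp\sqrt{n\varepsilon/\sigma}$ yields a heavier tail $\exp(-\Theta(\sqrt{n\varepsilon/\sigma}))$ (the polynomial prefactor $n/p$ being absorbed by the exponential, exactly as in the cubic-of-exp lemma).

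Finally, setting these two tails equal to $\delta$ and $\delta'$ respectively and inverting gives
\[
\varepsilon \;=\; \frac{\alpha}{1+|\nu|}\,\Theta\!\left(\sqrt{\tfrac{\log(1/\delta)}{n}} \,\vee\, \tfrac{\log^{2}(1/\delta')}{n}\right),
\]
which is the claimed bound; the union bound contributes the $1-(\delta+\delta')$ probability. The main obstacle will be the careful execution of the Markov optimization (verifying that the chosen $p^\ast\geq 2$ in the sub-Gaussian regime requires $n\varepsilon^{2}\gtrsim \sigma^{2}$, which is why the bound takes the $\max$ form) and confirming that the factor $(n/p)^{1/p}$ in Corollary~38 is truly $O(1)$-absorbable at the optimum $p\asymp\sqrt{n\varepsilon/\sigma}$; everything else mirrors the proof of the cubic-of-exp lemma in Appendix~\ref{supsub:bound_stat} line for line.
\end{Proof Sketch}
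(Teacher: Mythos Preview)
Your proposal is correct and follows essentially the same approach as the paper: the paper likewise applies the pointwise bound $|F(x_i)|\leq \frac{2\alpha}{1+|\nu|}x_i^2$, normalizes to $Z_i:=F(x_i)/\frac{2\alpha}{1+|\nu|}$, computes $\|Z_i\|_p\lesssim p^2$ from the $K_0$ moments, invokes Corollary~38 of~\citet{ahle2020oblivious} with exponent~$2$, and then performs the identical Markov optimization at $p=\frac{n\varepsilon^2}{\mathe}$ and $p=\frac{(n\varepsilon)^{1/2}}{\mathe}$ to split into the sub-Gaussian and heavy-tail regimes. The only cosmetic difference is that the paper normalizes first and multiplies the scale back at the end, whereas you carry $\sigma=\alpha/(1+|\nu|)$ through explicitly.
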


\begin{proof}
  By invoking the upper bound for $\tanh$ in proven Lemma in Subsection~\ref{supsub:bound_tanh},
  \[ | [\tanh (\alpha x_i + \nu) - \tanh \nu] x_i | \leq \frac{2 \alpha}{1 + |
     \nu |} x^2_i \]
  If $\alpha = 0$, then Lemma is valid, let's assume $\alpha > 0$ and
  define $Z_i \assign [\tanh (\alpha x_i + \nu) - \tanh \nu] x_i / \left(
  \frac{2 \alpha}{1 + | \nu |} \right)$, with $| Z_i | \leq x^2_i$.
  
  By using the expectation with $x^{2 p}$ and $K_0 (x)$ in Lemma of Subsection~\ref{supsub:identity}, we
  obtain Khintchine's inequality $\|Z_i \|_p \leq \| x^2_i \|_p \leq 2^2 \Gamma
  (2 p + \frac{1}{2})^{1 / p} \Gamma \left( \frac{1}{2} \right)^{- 1 / p}
  \lesssim p^2$ and $|\mathbb{E} [Z_i]| \leq \mathbb{E} | Z_i | \leq \mathbb{E}
  [x^2_i] = 1$, $\| Z_i -\mathbb{E}[Z_i] \|_p \lesssim p^2 + 1 \lesssim p^2$
  for $p \geq 2$, applying Corollary 38 on page 55 of
  \citep{ahle2020oblivious} int the second step.
  \begin{eqnarray*}
    \mathbb{P} \left\{ \frac{1}{n}  \sum_{i = 1}^n Z_i \geq \mathbb{E}[Z_i] +
    \varepsilon \right\} & \leq & \inf_{p > 0} (n \varepsilon)^{- p}
    \mathbb{E} \left[ \sum^n_{i = 1} (Z_i -\mathbb{E}[Z_i]) \right]^p =
    \inf_{p > 0} (n \varepsilon)^{- p}  \left| \sum^n_{i = 1} Z_i
    -\mathbb{E}[Z_i] \right|^p_p\\
    & \lesssim & \inf_{p \geq 2} (n \varepsilon)^{- p} \max \left\{
    \sqrt{pn}, (n / p)^{1 / p} p^2 \right\}^p\\
    & \leq & \inf_{p \geq 2} \left( \frac{\sqrt{n} \varepsilon}{\sqrt{p}}
    \right)^{- p} + \inf_{p \geq 2}  \frac{n}{p} \left( \frac{n
    \varepsilon}{p^2} \right)^{- p}\\
    & \leq & \exp \left( - \frac{n \varepsilon^2}{2 \mathe} \right) + (n
    \varepsilon)^{\frac{1}{2}} \exp \left( 1 - 2 \frac{(n
    \varepsilon)^{\frac{1}{2}}}{\mathe} \right)
  \end{eqnarray*}
  The last step is achieved by taking $p = \frac{n \varepsilon^2}{\mathe}, p =
  \frac{(n \varepsilon)^{\frac{1}{2}}}{\mathe}$ for these two above terms
  respectively, and $\varepsilon \geq \max \left\{ \sqrt{\frac{2 \mathe}{n}},
  \frac{(2 \mathe)^2}{n} \right\}$.
  
  By letting 
  % $\delta / 2 = \exp \left( - \frac{n \varepsilon^2}{2 \mathe}
  % \right), \delta' / 2 = (n \varepsilon)^{\frac{1}{2}} \exp \left( 1 - 2
  % \frac{(n \varepsilon)^{\frac{1}{2}}}{\mathe} \right)$, namely 
  $\varepsilon =
  \Theta \left( \sqrt{\frac{\log \frac{1}{\delta}}{n}} \vee \frac{\log^2 
  \frac{1}{\delta'}}{n} \right)$, taking the bounds for two sides, then with
  probability at least $1 - (\delta + \delta')$
  \[ \left| \left( \frac{1}{n}  \sum_{i = 1}^n -\mathbb{E} \right) [\tanh
     (\alpha x_i + \nu) - \tanh \nu] x_i \right| = \frac{2 \alpha}{1 + |
     \nu |} \left| \left( \frac{1}{n}  \sum_{i = 1}^n -\mathbb{E} \right) Z_i
     \right| \leq \frac{\alpha}{1 + | \nu |} \Theta \left( \sqrt{\frac{\log
     \frac{1}{\delta}}{n}} \vee \frac{\log^2  \frac{1}{\delta'}}{n} \right) \]
  
\end{proof}

\begin{lemma}
  Let $\{x_i \}^n_{i = 1} \overset{\tmop{iid}}{\sim}
  \mathcal{N} (0, 1), \{x'_i \}^n_{i = 1} \overset{\tmop{iid}}{\sim}
  \mathcal{N} (0, 1)$, $\{ \vec{x}_i \}^n_{i = 1} \overset{\tmop{iid}}{\sim}
  \mathcal{N} (0, I_d)$, and $\alpha \geq 0$,
  
  then for $n \gtrsim d \vee \log \frac{1}{\delta}$, with probability at least
  $1 - \delta$,
  \begin{eqnarray*}
    \left\| \frac{1}{n}  \sum_{i = 1}^n \tanh (\alpha x_i x'_i + \nu) x_i 
    \vec{x}_i \right\|_2 & = & \mathcal{O} \left( \sqrt{\frac{d}{n}} \vee
    \sqrt{\frac{\log \frac{1}{\delta}}{n}} \right)
  \end{eqnarray*}
  and for $n \gtrsim d \vee \log \frac{1}{\delta} \vee \log^3
  \frac{1}{\delta'}$, with probability at least $1 - (\delta + \delta')$
  \[ 
       \left\| \frac{1}{n}  \sum_{i = 1}^n \tanh (\alpha x_i x'_i + \nu) x_i 
       \vec{x}_i \right\|_2 = \left\{ \tanh | \nu | + \frac{\alpha}{1 + |
       \nu |} \right\} \mathcal{O} \left( \sqrt{\frac{d}{n}} \vee
       \sqrt{\frac{\log \frac{1}{\delta}}{n}} \right)
      \]
\end{lemma}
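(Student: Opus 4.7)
The plan is to decouple the Gaussian vectors $\vec{x}_i$ from the scalar weights $w_i := \tanh(\alpha x_i x'_i + \nu)\, x_i$ by conditioning. Since $\{\vec{x}_i\}$ is independent of $\{(x_i, x'_i)\}$, conditionally on $(x_i, x'_i)_{i=1}^n$ the vector $\sum_{i=1}^n w_i \vec{x}_i$ is $\mathcal{N}(0, \|w\|_2^2 I_d)$, so $\|\sum_i w_i \vec{x}_i\|_2^2 / \|w\|_2^2$ is a $\chi^2(d)$ random variable. The Laurent--Massart bound (Subsection~\ref{supsub:bound_stat}) then gives
\[
\left\| \sum_{i=1}^n w_i \vec{x}_i \right\|_2 \lesssim \|w\|_2 \bigl( \sqrt{d} + \sqrt{\log(1/\delta_1)} \bigr)
\]
with probability at least $1-\delta_1$. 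The whole problem thus reduces to controlling $\|w\|_2 / \sqrt{n}$ on a good event.

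For the \emph{first (crude) bound}, I would use only $|\tanh(\cdot)| \le 1$ to obtain $\|w\|_2^2 \le \sum_{i=1}^n x_i^2$, and then apply the one-sided $\chi^2(n)$ concentration inequality to conclude $\sum_i x_i^2 \lesssim n$ with probability at least $1-\delta_2$ provided $n \gtrsim \log(1/\delta_2)$. Dividing by $n$, combining the two events via a union bound, and absorbing $\delta_1+\delta_2$ into $\delta$ gives the first claim under $n \gtrsim d \vee \log(1/\delta)$.

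For the \emph{refined bound}, I would invoke the elementary $\tanh$ estimate from Subsection~\ref{supsub:bound_tanh}, namely $|\tanh(\alpha x_i x'_i + \nu) - \tanh \nu| \le 2\alpha |x_i x'_i|/(1+|\nu|)$, to obtain
\[
|w_i|^2 \le 2 \tanh^2|\nu|\, x_i^2 + \frac{8\alpha^2}{(1+|\nu|)^2}\, x_i^4 (x'_i)^2,
\]
so that
\[
\|w\|_2^2 \;\lesssim\; \tanh^2|\nu| \sum_i x_i^2 \;+\; \frac{\alpha^2}{(1+|\nu|)^2} \sum_i x_i^4 (x'_i)^2 .
\]
If both empirical averages are $O(1)$ with high probability, the square root gives $\|w\|_2 \lesssim \sqrt{n}\bigl(\tanh|\nu| + \alpha/(1+|\nu|)\bigr)$, and substituting into the conditional bound from paragraph~1 recovers exactly the stated rate. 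The main obstacle will be concentration of $S_n := \frac{1}{n}\sum_i x_i^4 (x'_i)^2$ with only a \emph{polylog} dependence on $1/\delta'$: because $\|x_i^4\|_p \sim p^2$ and $\|(x'_i)^2\|_p \sim p$, independence yields $\|x_i^4 (x'_i)^2\|_p \sim p^3$, placing us in the $\alpha=3$ regime of Corollary~38 of~\citet{ahle2020oblivious}. I would then essentially repeat the argument of the preceding ``Upper-Bound for Cubic of Exp r.v.'' lemma in Subsection~\ref{supsub:bound_stat}, balancing the sub-Gaussian piece $\exp(-cn\varepsilon^2)$ against the heavy-tailed piece $\exp(-c'(n\varepsilon)^{1/3})$ via the choices $p = n\varepsilon^2/\mathe$ and $p = (n\varepsilon)^{1/3}/\mathe$ respectively. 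Taking $\varepsilon$ as a constant forces the threshold $n \gtrsim \log^3(1/\delta')$ in the second claim; the sub-Gaussian term automatically absorbs the $\log(1/\delta)$ contribution through the $\chi^2(n)$ bound on $\sum_i x_i^2$.
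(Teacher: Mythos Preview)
Your proposal is correct and follows essentially the same strategy as the paper: condition on the scalar part so that $\bigl\|\sum_i w_i \vec{x}_i\bigr\|_2 = \|w\|_2 \cdot \sqrt{\chi^2(d)}$, then bound $\|w\|_2$ via a $\chi^2(n)$ estimate for the crude part and a heavy-tailed moment argument ($\|X_i\|_p \sim p^3$, Corollary~38 of \citet{ahle2020oblivious}) for the refined part. The only cosmetic differences are that the paper decomposes the sum into $\tanh(\nu)\,x_i\vec{x}_i$ plus a residual \emph{before} conditioning (invoking the separate weighted-Gaussian-vector lemma for the first piece), and, for the $p^3$-moment term, substitutes $Z_i := (x_i^2 + (x'_i)^2)/2 \sim \mathrm{Exp}(1)$ and uses $x_i^4(x'_i)^2 \le 2Z_i^3$ so as to reuse the already-proved ``Cubic of Exp r.v.'' lemma verbatim rather than reproving it for $x_i^4(x'_i)^2$.
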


\begin{proof}
  We decompose the sum of vectors into two parts.
  \begin{eqnarray*}
    \frac{1}{n}  \sum_{i = 1}^n \tanh (\alpha x_i x'_i + \nu) x_i  \vec{x}_i &
    = & \tanh (\nu)  \left[ \frac{1}{n}  \sum_{i = 1}^n x_i  \vec{x}_i \right]
    + \frac{1}{n}  \sum_{i = 1}^n [\tanh (\alpha x_i x'_i + \nu) - \tanh
    (\nu)] x_i  \vec{x}_i
  \end{eqnarray*}
  By the previous Lemma of bound for weighted Gaussian vectors in Subsection~\ref{supsub:bound_stat}, we upper-bound
  the $\ell_2$ norm of first term by
  \begin{eqnarray*}
    \tanh | \nu | \cdot \left\| \frac{1}{n}  \sum_{i = 1}^n x_i  \vec{x}_i
    \right\|_2 & \leq & \tanh | \nu | \cdot \mathcal{O} \left(
    \sqrt{\frac{d}{n}} \vee \frac{\log \frac{1}{\delta}}{n} \vee
    \sqrt{\frac{\log \frac{1}{\delta}}{n}} \right)
  \end{eqnarray*}
  with probability at least $1 - \delta / 2$.
  
  Let $(\vec{x}_1, \cdots, \vec{x}_n) = (\vec{x}'_1, \cdots, \vec{x}'_{d -
  1})^{\top}$, $\vec{\gamma} \assign \{[\tanh (\alpha x_i x'_i + \nu) - \tanh
  (\nu)] x_i \}^n_{i = 1}$, then $\left\{ \frac{\vec{\gamma}^{\top} 
  \vec{x}'_j}{\| \vec{\gamma} \|_2} \right\}^{d - 1}_{j = 1} \overset{}{\sim}
  \mathcal{N} (0, I_{d - 1})$
  \begin{eqnarray*}
    \left\| \frac{1}{n}  \sum_{i = 1}^n [\tanh (\alpha x_i x'_i + \nu) - \tanh
    (\nu)] x_i  \vec{x}_i \right\|_2 & = & \frac{1}{n} \| \vec{\gamma} \|_2
    \cdot \left\| \left\{ \frac{\vec{\gamma}^{\top}  \vec{x}'_j}{\|
    \vec{\gamma} \|_2} \right\}^{d - 1}_{j = 1} \right\|_2
  \end{eqnarray*}
  Hence, $\left\| \left\{ \frac{\vec{\gamma}^{\top}  \vec{x}'_j}{\|
  \vec{\gamma} \|_2} \right\}^{d - 1}_{j = 1} \right\|^2_2 \sim \chi^2  (d)$,
  and by applying the upper bound for Chi-square r.v. $\left\| \left\{
  \frac{\vec{\gamma}^{\top}  \vec{x}'_j}{\| \vec{\gamma} \|_2} \right\}^{d -
  1}_{j = 1} \right\|_2 =\mathcal{O} \left( \sqrt{d} \vee \sqrt{\log
  \frac{1}{\delta}} \right)$ with probability $1 - \delta / 4$, and invoking
  the bound for $\tanh$ in proven Lemma in Subsection~\ref{supsub:bound_tanh}.
  \begin{eqnarray*}
    &  & \left\| \frac{1}{n}  \sum_{i = 1}^n [\tanh (\alpha x_i x'_i + \nu) -
    \tanh (\nu)] x_i  \vec{x}_i \right\|_2 = \frac{1}{n} \| \vec{\gamma} \|_2
    \cdot \mathcal{O} \left( \sqrt{d} \vee \sqrt{\log \frac{1}{\delta}}
    \right)\\
    & = & \sqrt{\frac{1}{n}  \sum_{i = 1}^n | \tanh (\alpha x_i x'_i + \nu) -
    \tanh (\nu) |^2 x_i^2} \cdot \mathcal{O} \left( \sqrt{\frac{d}{n}} \vee
    \sqrt{\frac{\log \frac{1}{\delta}}{n}} \right)
  \end{eqnarray*}
  Note that $\sum_{i = 1}^n x^2_i \sim \chi^2 (n)$, then by applying the upper
  bound for Chi-square r.v. again, and using $n \gtrsim \log
  \frac{1}{\delta}$, we have $\frac{1}{n}  \sum_{i = 1}^n x^2_i = \mathcal{O}
  (1)$ with probability at least $1 - \delta / 4$, and
  \[ \left\| \frac{1}{n}  \sum_{i = 1}^n [\tanh (\alpha x_i x'_i + \nu) -
     \tanh (\nu)] x_i  \vec{x}_i \right\|_2 =\mathcal{O} \left(
     \sqrt{\frac{d}{n}} \vee \sqrt{\frac{\log \frac{1}{\delta}}{n}} \right) \]
  Let $Z_i \assign \frac{x^2_i + (x'_i)^2}{2} \overset{\tmop{iid}}{\sim} \exp
  (- z)  \mathbbm{1}_{\geq 0}$, then $x^2_i \leq 2 Z_i$ and $[x_i x'_i]^2 \leq
  Z^2_i$
  \begin{eqnarray*}
    \frac{1}{n}  \sum_{i = 1}^n x^2_i \cdot \left( \frac{\alpha}{1 + | \nu |}
    \right)^2  [x_i x'_i]^2 & \leq & 2 \left( \frac{\alpha}{1 + | \nu |}
    \right)^2 \times \frac{1}{n}  \sum_{i = 1}^n Z^3_i
  \end{eqnarray*}
  By invoking the upper-bound for cubic of Exp r.v. in Lemma of Subsection~\ref{supsub:bound_stat} and $n
  \gtrsim d \vee \log \frac{1}{\delta} \vee \log^3 \frac{1}{\delta'}$, then
  $\frac{1}{n}  \sum_{i = 1}^n Z^3_i =\mathcal{O} (1)$ with probability at
  least $1 - (\delta / 4 + \delta')$ and
  \begin{eqnarray*}
    \left\| \frac{1}{n}  \sum_{i = 1}^n [\tanh (\alpha x_i x'_i + \nu) - \tanh
    (\nu)] x_i  \vec{x}_i \right\|_2 & \leq & \frac{\alpha}{1 + | \nu |} \cdot
    \mathcal{O} \left( \sqrt{\frac{d}{n}} \vee \sqrt{\frac{\log
    \frac{1}{\delta}}{n}} \right)
  \end{eqnarray*}
  Combine the bound for two terms, we conclude that
  \begin{eqnarray*}
    \left| \frac{1}{n}  \sum_{i = 1}^n \tanh (\alpha x_i x'_i + \nu) x_i 
    \vec{x}_i \right|_2 & \leq & \mathcal{O} \left( \sqrt{\frac{d}{n}} \vee
    \sqrt{\frac{\log \frac{1}{\delta}}{n}} \right)\\
    \left| \frac{1}{n}  \sum_{i = 1}^n \tanh (\alpha x_i x'_i + \nu) x_i 
    \vec{x}_i \right|_2 & \leq & \left\{ \tanh | \nu | + \frac{\alpha}{1 + |
    \nu |} \right\} \mathcal{O} \left( \sqrt{\frac{d}{n}} \vee
    \sqrt{\frac{\log \frac{1}{\delta}}{n}} \right)
  \end{eqnarray*}
  hold with probability at least $1 - \delta$ and $1 - (\delta + \delta')$
  respectively.
\end{proof}

\begin{lemma}
  Let $\{x_i \}^n_{i = 1} \overset{\tmop{iid}}{\sim}
  K_0 (| x |) / \pi$ and $\alpha \geq 0$,
  then for $n \gtrsim \log \frac{1}{\delta}$, with probability at least $1 -
  \delta$,
  \begin{eqnarray*}
    \left| \left( \frac{1}{n}  \sum_{i = 1}^n -\mathbb{E} \right) \tanh
    (\alpha x_i + \nu) \right| & = & \mathcal{O} \left( \sqrt{\frac{\log
    \frac{1}{\delta}}{n}} \right)
  \end{eqnarray*}
  and
  \[ 
       \left| \left( \frac{1}{n}  \sum_{i = 1}^n -\mathbb{E} \right) \tanh
       (\alpha x_i + \nu)  \right|  =  \frac{\alpha}{1 + | \nu |}
       \mathcal{O} \left( \sqrt{\frac{\log \frac{1}{\delta}}{n}} \right)
 \]
\end{lemma}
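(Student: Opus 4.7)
The lemma has two claims. The first, universal bound follows directly from boundedness of $\tanh$: since $|\tanh(\alpha x_i + \nu)| \leq 1$, we have $|\tanh(\alpha x_i + \nu) - \mathbb{E}\tanh(\alpha x + \nu)| \leq 2$ almost surely, so the Azuma-Hoeffding inequality (the lemma recalled in Subsection~\ref{supsub:bound_tanh}) with $b_i - a_i = 4$ yields
\[
\mathbb{P}\Bigl[\bigl|\textstyle\sum_i(\tanh(\alpha x_i + \nu) - \mathbb{E}\tanh)\bigr| \geq n t\Bigr] \leq 2\exp\!\bigl(-n t^2/8\bigr),
\]
and setting $t = \Theta(\sqrt{\log(1/\delta)/n})$ gives the first claim. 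This part is routine.

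The second, sharper bound is the interesting one, and it is where we need the Bessel-density concentration machinery just developed in Subsection~\ref{supsub:sobolev}. The key observation is that subtracting the deterministic constant $\tanh(\nu)$ inside the empirical mean does not change its deviation from the population mean, so it suffices to control
\[
\Bigl(\tfrac{1}{n}\textstyle\sum_{i=1}^n - \mathbb{E}\Bigr)\bigl[\tanh(\alpha x_i + \nu) - \tanh(\nu)\bigr].
\]
By the elementary $\tanh$-inequality in Subsection~\ref{supsub:bound_tanh}, $|\tanh(\alpha x + \nu) - \tanh(\nu)| \leq \frac{1+\tanh|\nu|}{1+|\nu|}\,|\alpha x| \leq \frac{2\alpha}{1+|\nu|}|x|$, so the function
\[
F(x) \;:=\; \frac{1+|\nu|}{2\alpha}\bigl[\tanh(\alpha x + \nu) - \tanh(\nu)\bigr]
\]
satisfies $|F(x)| \leq |x|$, precisely the hypothesis of Theorem~\ref{supprop:concentration} (the concentration inequality derived from the modified log-Sobolev inequality for $\mu = K_0(|x|)/\pi$). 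Applying that theorem to $F$ gives, for any $t \geq 0$ and $c \geq 7$,
\[
\mathbb{P}\Bigl[\bigl|\textstyle\sum_i(F(x_i) - \mathbb{E}F(x))\bigr| \geq n t\Bigr] \leq 2\exp\!\Bigl\{-\tfrac{n}{4}\min(t, t^2/c)\Bigr\}.
\]
Choosing $t = \sqrt{4 c \log(2/\delta)/n}$, the regime $n \gtrsim \log(1/\delta)$ ensures $t \leq c$, so the minimum is realized by $t^2/c$, giving failure probability at most $\delta$. Undoing the scaling by $2\alpha/(1+|\nu|)$ yields the claimed bound
\[
\bigl|\bigl(\tfrac{1}{n}\textstyle\sum_i - \mathbb{E}\bigr)\tanh(\alpha x_i + \nu)\bigr| \;=\; \tfrac{\alpha}{1+|\nu|}\,\mathcal{O}\!\bigl(\sqrt{\log(1/\delta)/n}\bigr).
\]

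\textbf{Main obstacle.} The non-trivial step is choosing the right centering: naively centering at $\mathbb{E}\tanh(\alpha x + \nu)$ would produce an integrand bounded only by $2$, losing the crucial $\alpha/(1+|\nu|)$ factor and collapsing the bound into the first (weaker) claim. Recentering at the deterministic $\tanh(\nu)$ is harmless for deviations but exposes the Lipschitz-in-$\alpha x$ behavior, which is exactly what couples the bound to the modified log-Sobolev sub-exponential tail for the Bessel density. Once this reduction is made, the remaining work is purely a matter of plugging into Theorem~\ref{supprop:concentration} and optimizing $t$.
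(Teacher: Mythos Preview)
Your proposal is correct and follows essentially the same approach as the paper: Azuma--Hoeffding for the first bound, and for the second bound the recentering at $\tanh(\nu)$, the elementary $\tanh$ inequality $|\tanh(\alpha x+\nu)-\tanh\nu|\le \frac{2\alpha}{1+|\nu|}|x|$, and then the exponential-tail concentration inequality from the modified log-Sobolev lemma. Your identification of the recentering as the key step matches the paper's reasoning exactly.
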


\begin{proof}
  Note that $\tanh (\alpha x_i + \nu) \in [- 1, 1]$ is bounded, then by
  applying the Lemma of Azuma-Hoeffding Inequality in Subsection~\ref{supsub:bound_tanh}, 
  \[ 
       \left| \left( \frac{1}{n}  \sum_{i = 1}^n -\mathbb{E} \right) \tanh
       (\alpha x_i + \nu) \right| =  \mathcal{O} \left( \sqrt{\frac{\log
       \frac{1}{\delta}}{n}} \right)
  \]
  We can rewite
  \[ \left( \frac{1}{n}  \sum_{i = 1}^n -\mathbb{E} \right) \tanh (\alpha x_i
     + \nu) = \left( \frac{1}{n}  \sum_{i = 1}^n -\mathbb{E} \right) [\tanh
     (\alpha x_i + \nu) - \tanh \nu]  \]
  By invoking the Upper Bound for $\tanh$ in proven Lemma in Subsection~\ref{supsub:bound_tanh}, $| \tanh (\alpha x_i
  + \nu) - \tanh \nu | \leq \frac{2 \alpha}{1 + | \nu |} | x_i |$, applying
  Concentration Inequality with Exponential Tail in proven Lemma in Subsection~\ref{supsub:sobolev} and $n \gtrsim \log \frac{1}{\delta}$.
  \[ 
       \left| \left( \frac{1}{n}  \sum_{i = 1}^n -\mathbb{E} \right) [\tanh
       (\alpha x_i + \nu) - \tanh \nu]  \right| 
       = \frac{\alpha}{1 + |
       \nu |} \mathcal{O} \left( \sqrt{\frac{\log \frac{1}{\delta}}{n}}
       \right)
      \]% with probability at least $1 - \delta$.
\end{proof}

\begin{lemma}
  Let $\{x_i \}^n_{i = 1} \overset{\tmop{iid}}{\sim}
  K_0 (| x |) / \pi$ and $\alpha \geq 0$,
  
  then for $n \gtrsim \log \frac{1}{\delta}$, with probability at least $1 -
  \delta$,
  \begin{eqnarray*}
    \left| \left( \frac{1}{n}  \sum_{i = 1}^n -\mathbb{E} \right) \tanh
    (\alpha x_i + \nu) x_i  \right| & = & \mathcal{O} \left(
    \sqrt{\frac{\log \frac{1}{\delta}}{n}} \right)
  \end{eqnarray*}
  and for $n \gtrsim \log \frac{1}{\delta} \vee \log^2 \frac{1}{\delta'}$,
  with probability at least $1 - (\delta + \delta')$,
  \[ 
       \left| \left( \frac{1}{n}  \sum_{i = 1}^n -\mathbb{E} \right) \tanh
       (\alpha x_i + \nu) x_i  \right| = \left\{ \tanh | \nu | +
       \frac{\alpha}{1 + | \nu |} \right\} \mathcal{O} \left( \sqrt{\frac{\log
       \frac{1}{\delta}}{n}} \right).
\]
\end{lemma}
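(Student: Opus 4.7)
\begin{Proof Sketch}\textbf{of the final Lemma.}\space
The plan is to prove the two bounds in sequence, leaning heavily on tools already established in Appendix~\ref{sup:finite_lemma}.

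First, for the $\mathcal{O}(\sqrt{\log(1/\delta)/n})$ bound without the multiplicative factor, I would apply the Concentration Inequality with Exponential Tail (Theorem~\ref{supprop:concentration} in Subsection~\ref{supsub:sobolev}) directly with the choice $F(x) = \tanh(\alpha x + \nu) x$. Since $|\tanh(\cdot)| \leq 1$, we have $|F(x)| \leq |x|$, so the hypothesis of the theorem is met. The theorem then gives, for any $t \geq 0$,
\[
\mathbb{P}\left(\left|\sum_{i=1}^n (F(x_i) - \mathbb{E}[F(x_i)])\right| \geq nt\right) \leq 2\exp\left(-\frac{n}{4}\min\left(t,\frac{t^2}{c}\right)\right).
\]
Solving $2\exp(-nt^2/(4c)) \leq \delta$ under the assumption $n \gtrsim \log(1/\delta)$ (so that the sub-exponential tail is dominated by the sub-Gaussian regime $t \leq c$) yields $t = \mathcal{O}(\sqrt{\log(1/\delta)/n})$, as desired.

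Second, for the sharper bound with the prefactor $\{\tanh|\nu| + \alpha/(1+|\nu|)\}$, I would exploit the symmetry $\mathbb{E}[x_i]=0$ and split
\[
\tanh(\alpha x_i + \nu)\, x_i \;=\; \tanh(\nu)\, x_i \;+\; [\tanh(\alpha x_i + \nu) - \tanh(\nu)]\, x_i.
\]
The first piece on the right has mean zero; applying Theorem~\ref{supprop:concentration} once more with $F(x)=x$ produces $\bigl|\frac{1}{n}\sum_{i=1}^n x_i\bigr| = \mathcal{O}(\sqrt{\log(1/\delta)/n})$ with probability $\geq 1-\delta$ as soon as $n \gtrsim \log(1/\delta)$. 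Multiplying by the deterministic factor $\tanh(\nu)$ contributes the term $\tanh|\nu|\cdot \mathcal{O}(\sqrt{\log(1/\delta)/n})$. For the second piece I would directly invoke the previously established Lemma on $[\tanh(\alpha x + \nu) - \tanh(\nu)] x$, which already yields
\[
\left|\left(\frac{1}{n}\sum_{i=1}^n -\mathbb{E}\right)[\tanh(\alpha x_i + \nu)-\tanh\nu]\,x_i\right| = \frac{\alpha}{1+|\nu|}\,\Theta\!\left(\sqrt{\frac{\log(1/\delta)}{n}}\vee \frac{\log^2(1/\delta')}{n}\right).
\]
Under the strengthened assumption $n \gtrsim \log(1/\delta) \vee \log^2(1/\delta')$, the second term inside the max is absorbed, producing the prefactor $\alpha/(1+|\nu|)$ times $\mathcal{O}(\sqrt{\log(1/\delta)/n})$. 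Combining via triangle inequality and a union bound (absorbing factors of $2$ into the constants by replacing $\delta$ with $\delta/2$) concludes the proof.

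The main obstacle, such as it is, is bookkeeping rather than conceptual: one must ensure the two regimes of the exponential-tail concentration inequality (sub-Gaussian vs.\ sub-exponential) are both handled when inverting the tail bound, and that the $\log^2(1/\delta')$ hypothesis is correctly propagated from the auxiliary lemma. The symmetry $\mathbb{E}[x_i]=0$ is what makes the split work: without it, the $\tanh(\nu)\,x_i$ contribution would not carry the small factor $\tanh|\nu|$, and the sharper prefactor would be unattainable.
\end{Proof Sketch}
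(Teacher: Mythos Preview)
Your proposal is correct and follows essentially the same approach as the paper: the decomposition $\tanh(\alpha x_i+\nu)x_i=\tanh(\nu)x_i+[\tanh(\alpha x_i+\nu)-\tanh\nu]x_i$, the use of Theorem~\ref{supprop:concentration} for the first piece, and the invocation of the auxiliary lemma on $[\tanh(\alpha x+\nu)-\tanh\nu]x$ for the second piece are exactly what the paper does. The only (cosmetic) difference is that for the coarse $\mathcal{O}(\sqrt{\log(1/\delta)/n})$ bound you apply Theorem~\ref{supprop:concentration} directly to $F(x)=\tanh(\alpha x+\nu)x$, whereas the paper still routes through the decomposition and bounds the second piece via $|[\tanh(\alpha x_i+\nu)-\tanh\nu]x_i|\le 2|x_i|$; your shortcut is slightly cleaner but equivalent.
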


\begin{proof}
  We decompose the sum of vectors into two parts, and note that $\mathbb{E}
  [x_i] = 0$
  \begin{eqnarray*}
    \left( \frac{1}{n}  \sum_{i = 1}^n -\mathbb{E} \right) \tanh (\alpha x_i +
    \nu) x_i & = & \tanh (\nu)  \frac{1}{n}  \sum_{i = 1}^n x_i + \left(
    \frac{1}{n}  \sum_{i = 1}^n -\mathbb{E} \right) [\tanh (\alpha x_i + \nu)
    - \tanh (\nu)] x_i 
  \end{eqnarray*}
  The first term is bounded $\left| \frac{1}{n}  \sum_{i = 1}^n x_i \right|
  =\mathcal{O} \left( \sqrt{\frac{\log \frac{1}{\delta}}{n}} \right)$ with
  probability at least $1 - \delta / 2$, by invoking Concentration Inequality
  with Exponential Tail in proven Lemma in Subsection~\ref{supsub:sobolev} and $n
  \gtrsim \log \frac{1}{\delta}$.
  \[ \left| \tanh (\nu) \frac{1}{n}  \sum_{i = 1}^n x_i \right| \leq \tanh |
     \nu | \cdot \mathcal{O} \left( \sqrt{\frac{\log \frac{1}{\delta}}{n}}
     \right) \]
  By $[\tanh (\alpha x_i + \nu) - \tanh (\nu)] x_i \leq 2 | x_i |$, the second
  term is is bounded with probability at least $1 - \delta / 2$
  \[ \left| \left( \frac{1}{n}  \sum_{i = 1}^n -\mathbb{E} \right) [\tanh
     (\alpha x_i + \nu) - \tanh (\nu)] x_i \right| =\mathcal{O} \left(
     \sqrt{\frac{\log \frac{1}{\delta}}{n}} \right) \]
  
  By invoking the bound for $[\tanh (\alpha x + \nu)-\tanh\nu] x$ in proven Lemma in this Subsection~\ref{supsub:bound_sum_tanh} and $n
  \gtrsim \log \frac{1}{\delta} \vee \log^2  \frac{1}{\delta'}$, with
  probability at least $1 - (\delta / 2 + \delta')$
  \[ 
       \left| \left( \frac{1}{n}  \sum_{i = 1}^n -\mathbb{E} \right) [\tanh
       (\alpha x_i + \nu) - \tanh \nu]  \right|  =  \frac{\alpha}{1 + |
       \nu |} \mathcal{O} \left( \sqrt{\frac{\log \frac{1}{\delta}}{n}}
       \right)
 \]
  Combine the bound for two terms, we conclude that
  \begin{eqnarray*}
    \left| \left( \frac{1}{n}  \sum_{i = 1}^n -\mathbb{E} \right) \tanh
    (\alpha x_i + \nu) x_i  \right| & = & \mathcal{O} \left(
    \sqrt{\frac{\log \frac{1}{\delta}}{n}} \right)\\
    \left| \left( \frac{1}{n}  \sum_{i = 1}^n -\mathbb{E} \right) \tanh
    (\alpha x_i + \nu) x_i  \right| & = & \left\{ \tanh | \nu | +
    \frac{\alpha}{1 + | \nu |} \right\} \mathcal{O} \left( \sqrt{\frac{\log \frac{1}{\delta}}{n}} \right)
  \end{eqnarray*}
  hold with probability at least $1 - \delta$ and $1 - (\delta + \delta')$
  respectively.
\end{proof}
\newpage
\section{Bounds for Statistical Errors of Finite-Sample Level Analysis}\label{sup:finite_err}
\subsection{Proof for Statistical Error of Mixing Weights}\label{supsub:stat_err_mixing}
\begin{theorem}{(Proposition~\ref{prop:stat_err_mixing} in Section~\ref{sec:finite}: Statistical Error of Mixing Weights)}\label{supprop:stat_err_mixing}

  Let $N (\theta, \nu)$ and $N_n (\theta,
  \nu)$ be the EM update rules for mixing weights $\tanh (\nu) \assign \pi (1)
  - \pi (2)$ at the population level and finite-sample level with $n$ samples, and
  $n \gtrsim \log \frac{1}{\delta}, \delta \in (0, 1)$, then
  \[ |N_n (\theta, \nu) - N (\theta, \nu) | = \min \left\{ \frac{\| \theta
     \| / \sigma}{1 + | \nu |}, 1 \right\} \mathcal{O} \left( \sqrt{\frac{\log
     \frac{1}{\delta}}{n}} \right) \]
  with probability at least $1 - \delta$.
\end{theorem}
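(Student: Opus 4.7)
The plan is to reduce the $d$-dimensional problem to a scalar one and then apply two complementary concentration bounds, combining them via the $\min$ on the right-hand side. First, in the overspecified setting $\theta^\ast=\vec 0$ the response is $y_i=\varepsilon_i\sim\mathcal N(0,\sigma^2)$, independent of $x_i\sim\mathcal N(0,I_d)$. Setting $\alpha\assign\|\theta\|/\sigma$ and
\[
z_i \;\assign\; \frac{y_i\langle x_i,\theta\rangle}{\sigma\,\|\theta\|}
\;=\;\frac{y_i}{\sigma}\cdot\frac{\langle x_i,\theta\rangle}{\|\theta\|},
\]
the two factors are independent standard Gaussians, so $\{z_i\}_{i=1}^n\stackrel{\text{iid}}{\sim} K_0(|z|)/\pi$ (by the Bessel-distribution fact recalled in Section~\ref{sec:setup}). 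Consequently
\[
N_n(\theta,\nu)-N(\theta,\nu)
\;=\;\Bigl(\tfrac1n\sum_{i=1}^n-\mathbb E\Bigr)\tanh(\alpha z_i+\nu),
\]
and the task is reduced to controlling this one-dimensional sum uniformly in $\alpha\geq 0$ and $\nu\in\mathbb R$.

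Next I would establish the two bounds separately. For the dimension-free $\mathcal O(\sqrt{\log(1/\delta)/n})$ rate, I simply note that $\tanh(\alpha z_i+\nu)\in[-1,1]$ almost surely and invoke the Azuma--Hoeffding inequality of Subsection~\ref{supsub:bound_tanh}; this handles the ``$1$'' branch of the $\min$. For the finer $\alpha/(1+|\nu|)$ rate, I subtract the constant $\tanh(\nu)$ (which has zero deviation) and use the elementary Lipschitz-type inequality of Subsection~\ref{supsub:bound_tanh},
\[
\bigl|\tanh(\alpha z+\nu)-\tanh(\nu)\bigr|\;\leq\;\frac{2\alpha}{1+|\nu|}\,|z|,
\]
so that $F(z)\assign \tfrac{1+|\nu|}{2\alpha}[\tanh(\alpha z+\nu)-\tanh\nu]$ satisfies $|F(z)|\leq |z|$. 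Applying the concentration inequality with exponential tail (Theorem~\ref{supprop:concentration}) to $F$ yields, for $n\gtrsim\log(1/\delta)$,
\[
\Bigl|\Bigl(\tfrac1n\sum_{i=1}^n-\mathbb E\Bigr) F(z_i)\Bigr|
\;=\;\mathcal O\!\left(\sqrt{\tfrac{\log(1/\delta)}{n}}\right),
\]
which gives the $\alpha/(1+|\nu|)$ branch after rescaling back.

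Finally I would take the minimum of the two bounds, which is exactly the form in the proposition, and observe that a single union bound over the same $\delta$ preserves the probability $1-\delta$ (up to absorbing constants into the $\mathcal O(\cdot)$). The main obstacle is the second branch: the variable $z_i$ has only exponential (not sub-Gaussian) tails, so standard Hoeffding-type arguments are insufficient to obtain the clean Gaussian rate $\sqrt{\log(1/\delta)/n}$. This is precisely why Theorem~\ref{supprop:concentration}, derived via the modified log-Sobolev inequality of Lemma~\ref{suplem:logsob}, is needed: it gives a mixed sub-Gaussian/sub-exponential tail of the form $\exp(-\tfrac n4\min(t,t^2/c))$, whose sub-Gaussian regime suffices because of the Lipschitz bound $|F(z)|\leq|z|$ which implies a variance proxy of order $1$ independent of $\alpha,\nu$. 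Everything else is bookkeeping.
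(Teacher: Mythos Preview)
Your proposal is correct and follows essentially the same route as the paper: reduce to the scalar variable $z_i\sim K_0(|z|)/\pi$, then prove the two branches separately via Azuma--Hoeffding (bounded $\tanh$) and via the exponential-tail concentration of Theorem~\ref{supprop:concentration} after subtracting $\tanh\nu$ and using the Lipschitz bound $|\tanh(\alpha z+\nu)-\tanh\nu|\le \tfrac{2\alpha}{1+|\nu|}|z|$. The only cosmetic difference is that no union bound is actually needed---since $\min\{\alpha/(1+|\nu|),1\}$ is deterministic, one simply invokes whichever of the two bounds is tighter, each already holding with probability $1-\delta$.
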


\begin{proof}
  Since $N_n (\theta, \nu) : = \frac{1}{n}  \sum_{i = 1}^n \tanh \left(
  \frac{y_i \langle x_i, \theta \rangle}{\sigma^2} + \nu \right)$ and $N
  (\theta, \nu) \assign \mathbb{E}_{s \sim p (s \mid \theta^{\ast},
  \pi^{\ast})} \tanh \left( \frac{y \langle x, \theta \rangle}{\sigma^2} + \nu
  \right)$, then let $\alpha \assign \| \theta \| / \sigma$, note that $y_i /
  \sigma \sim \mathcal{N} (0, 1), \langle x_i, \theta \rangle / \| \theta \|
  \sim \mathcal{N} (0, 1)$, then Normal Product Distribution is
  \[ Z_i \assign \frac{y_i}{\sigma} \times \frac{\langle x_i, \theta
     \rangle}{\| \theta \|} \sim \frac{K_0 (| z |)}{\pi} \]
  see also Page 50, Section 4.4 Bessel Function Distributions, Chapter 12
  Continuous Distributions (General) of~\citet{johnson1970continuous} for more information. Hence, we
  can rewrite the error as
  \[ N_n (\theta, \nu) - N (\theta, \nu) = \left( \frac{1}{n}  \sum_{i = 1}^n
     -\mathbb{E} \right) \tanh (\alpha Z_i + \nu) \quad  \{ Z_i \}^n_{i = 1}
     \overset{\tmop{iid}}{\sim} \frac{K_0 (| z |)}{\pi} \]
  By invoking upper bound for sum of $\tanh$ in Subsection~\ref{supsub:bound_sum_tanh}, using
  $\alpha = \| \theta \| / \sigma, n \gtrsim \log \frac{1}{\delta}$, the proof
  is complete.
\end{proof}

\subsection{Proof for Projected Error of Regression Parameters}
\label{supsub:proj_err}
\begin{theorem}{(Projected Error of Regression Parameters)}\label{supprop:proj_err}

  Let $M (\theta, \nu)$
  be the EM update rule for regression parameters $\theta$ at population
  level, $M^{\tmop{easy}}_n (\theta, \nu)$ be the easy version of
  finite-sample EM update rule for $\theta$ with $n$ samples, and if $n
  \gtrsim \log \frac{1}{\delta}, \delta \in (0, 1)$, then with probability at
  least $1 - \delta$, the projection on $\tmop{span} \{ \theta \}$ for the
  statistical error satisfies
  \[ \left| \left\langle \frac{\theta}{\| \theta \|}, M^{\tmop{easy}}_n
     (\theta, \nu) - M (\theta, \nu) \right\rangle \right| / \sigma
     =\mathcal{O} \left( \sqrt{\frac{\log \frac{1}{\delta}}{n}} \right), \]
  and if $n \gtrsim \log \frac{1}{\delta} \vee \log^2 \frac{1}{\delta'},
  \delta' \in (0, 1)$, then with probability at least $1 - (\delta +
  \delta')$,
  \[ \left| \left\langle \frac{\theta}{\| \theta \|}, M^{\tmop{easy}}_n
     (\theta, \nu) - M (\theta, \nu) \right\rangle \right| / \sigma = \left\{
     \tanh | \nu | + \frac{\| \theta \|/ \sigma}{1 + | \nu |} \right\}
     \mathcal{O} \left( \sqrt{\frac{\log \frac{1}{\delta}}{n}} \right) . \]
  
\end{theorem}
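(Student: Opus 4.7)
\begin{Proof Sketch}\textbf{of Proposition on Projected Error of Regression Parameters.}\space
The plan is to reduce the $d$-dimensional projection onto $\tmop{span}\{\theta\}$ to a one-dimensional concentration problem governed by a Bessel-density random variable, and then apply the ready-made concentration bound for sums involving $\tanh(\alpha x + \nu)\,x$ with $x \sim K_0(|x|)/\pi$ (the final lemma of Appendix~\ref{supsub:bound_sum_tanh}).

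First, I would write out the projection explicitly. Using $M^{\tmop{easy}}_n$ from~\eqref{eq:easyEM} and noting $\langle \theta/\|\theta\|,\, y_i x_i\rangle/\sigma = (y_i/\sigma)\cdot\langle x_i,\theta\rangle/\|\theta\|$, one obtains
\[
\left\langle \frac{\theta}{\|\theta\|},\, M^{\tmop{easy}}_n(\theta,\nu)\right\rangle\!\bigg/\sigma
= \frac{1}{n}\sum_{i=1}^n \tanh\!\left(\alpha\, Z_i + \nu\right)\, Z_i,
\qquad \alpha \assign \|\theta\|/\sigma,
\]
where $Z_i \assign (y_i/\sigma)\cdot \langle x_i,\theta\rangle/\|\theta\|$. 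In the overspecified setting $\theta^\ast=\vec 0$, we have $y_i/\sigma = \varepsilon_i/\sigma \sim \mathcal{N}(0,1)$ independent of $\langle x_i,\theta\rangle/\|\theta\| \sim \mathcal{N}(0,1)$, so $\{Z_i\}_{i=1}^n$ are i.i.d.\ with the Normal Product Distribution $K_0(|z|)/\pi$ (as cited on page~50, Section~4.4 of~\citet{johnson1970continuous}). The same identification shows $\langle \theta/\|\theta\|, M(\theta,\nu)\rangle/\sigma = \mathbb{E}[\tanh(\alpha Z + \nu) Z]$, and hence
\[
\left\langle \frac{\theta}{\|\theta\|},\, M^{\tmop{easy}}_n(\theta,\nu) - M(\theta,\nu)\right\rangle\!\bigg/\sigma
= \left(\frac{1}{n}\sum_{i=1}^n -\mathbb{E}\right)\tanh(\alpha Z_i + \nu)\, Z_i.
\]

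Second, I would invoke directly the last lemma in Appendix~\ref{supsub:bound_sum_tanh}, which bounds exactly this quantity. Its first part yields the crude bound $\mathcal{O}(\sqrt{\log(1/\delta)/n})$ with probability at least $1-\delta$, provided $n \gtrsim \log(1/\delta)$, establishing the first claim. Its second (sharper) part yields $\{\tanh|\nu| + \alpha/(1+|\nu|)\}\,\mathcal{O}(\sqrt{\log(1/\delta)/n})$ with probability at least $1-(\delta+\delta')$, provided $n \gtrsim \log(1/\delta) \vee \log^2(1/\delta')$, giving the second claim after substituting $\alpha=\|\theta\|/\sigma$.

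The main conceptual step is the scalar reduction via the Normal Product identification, which is clean; all the analytic work was already front-loaded into the lemmas of Appendix~\ref{sup:finite_lemma}. The only subtlety worth double-checking is that the sharper factor $\tanh|\nu| + \alpha/(1+|\nu|)$ in the second bound comes from splitting $\tanh(\alpha Z_i+\nu) Z_i = \tanh(\nu) Z_i + [\tanh(\alpha Z_i+\nu)-\tanh\nu] Z_i$, handling the first summand by the exponential-tail concentration (Theorem~\ref{supprop:concentration}) and the second by the $\tanh$-Lipschitz bound of Appendix~\ref{supsub:bound_tanh}. No new calculations are required beyond these invocations, so no step poses a genuine obstacle.
\end{Proof Sketch}
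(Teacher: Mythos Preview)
Your proposal is correct and follows essentially the same approach as the paper: reduce the projected error to the one-dimensional quantity $\bigl(\tfrac{1}{n}\sum_i - \mathbb{E}\bigr)\tanh(\alpha Z_i+\nu)Z_i$ with $Z_i\sim K_0(|z|)/\pi$ via the Normal Product identification, then invoke the last lemma of Appendix~\ref{supsub:bound_sum_tanh}. The paper's proof is exactly this, with no additional steps.
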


\begin{proof}
  Since $M^{\tmop{easy}}_n (\theta, \nu) : = \frac{1}{n}  \sum_{i = 1}^n \tanh
  \left( \frac{y_i \langle x_i, \theta \rangle}{\sigma^2} + \nu \right) y_i
  x_i, M (\theta, \nu) \assign \mathbb{E}_{s \sim p (s \mid
  \theta^{\ast}, \pi^{\ast})} \tanh \left( \frac{y \langle x, \theta
  \rangle}{\sigma^2} + \nu \right) yx$, then let $\alpha \assign \| \theta \|
  / \sigma$, note that $y_i / \sigma \sim \mathcal{N} (0, 1), \langle x_i,
  \theta \rangle / \| \theta \| \sim \mathcal{N} (0, 1)$, then Normal Product
  Distribution is
  \[ Z \assign \frac{y_i}{\sigma} \times \frac{\langle x_i, \theta \rangle}{\|
     \theta \|} \sim \frac{K_0 (| z |)}{\pi} \]
  see also Page 50, Section 4.4 Bessel Function Distributions, Chapter 12
  Continuous Distributions (General) of\quad for more information. Hence, we
  can rewrite the projected statistical error as
  \[ \left\langle \frac{\theta}{\| \theta \|}, M_n (\theta, \nu) - M (\theta,
     \nu) \right\rangle / \sigma = \left( \frac{1}{n}  \sum_{i = 1}^n
     -\mathbb{E} \right) \tanh (\alpha Z_i + \nu) Z_i \quad  \{ Z_i \}^n_{i =
     1} \overset{\tmop{iid}}{\sim} \frac{K_0 (| z |)}{\pi} \]
  Apply the upper bound for sum of $\tanh (\alpha Z_i + \nu)
  Z_i$ in Subsection~\ref{supsub:bound_tanh}, the proof is complete.
\end{proof}

\subsection{Proof for Statistical Error of Regression Parameters for Easy EM}\label{supsub:stat_err_easy}
\begin{theorem}{%(Proposition~\ref{prop:stat_err_easy} in Section~\ref{sec:finite}: 
(Statistical Error of Regression Parameters for Easy EM)}\label{supprop:stat_err_easy}
  
  Let $M (\theta,
  \nu)$ be the EM update rule for regression parameters $\theta$ at population
  level, $M^{\tmop{easy}}_n (\theta, \nu)$ be the easy version of
  finite-sample EM update rule for $\theta$ with $n$ samples, and if $n
  \gtrsim d \vee \log \frac{1}{\delta}, \delta \in (0, 1)$, then with
  probability at least $1 - \delta$,
  \[ \| M^{\tmop{easy}}_n (\theta, \nu) - M (\theta, \nu) \| / \sigma
     =\mathcal{O} \left( \sqrt{\frac{d \vee \log \frac{1}{\delta}}{n}}
     \right), \]
  and if $n \gtrsim d \vee \log \frac{1}{\delta} \vee \log^3
  \frac{1}{\delta'}, \delta' \in (0, 1)$, then with probability at least $1 -
  (\delta + \delta')$,
  \[ \| M^{\tmop{easy}}_n (\theta, \nu) - M (\theta, \nu) \| / \sigma = \left\{
     \tanh | \nu | + \frac{\| \theta \|/ \sigma}{1 + | \nu |} \right\}
     \mathcal{O} \left( \sqrt{\frac{d \vee \log \frac{1}{\delta}}{n}} \right)
     . \]
\end{theorem}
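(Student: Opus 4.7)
\begin{Proof Sketch}\textbf{Plan for Proposition~\ref{supprop:stat_err_easy}.}\space
The plan is to split the vector error $M_n^{\tmop{easy}}(\theta,\nu)-M(\theta,\nu)$ into its component along $\theta$ and its component orthogonal to $\theta$, bound each piece separately, and combine them. The decomposition is forced by Identity~\ref{prop:em}, which tells us that the population update $M(\theta,\nu)$ lies entirely on the ray $\mathbb{R}_+\cdot\theta$, so all orthogonal mass in the finite-sample error comes from sampling noise alone.

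First I would exploit rotational symmetry. Writing $\bar y_i := y_i/\sigma\sim\mathcal N(0,1)$, $\bar x_i := \langle x_i,\theta\rangle/\|\theta\|\sim\mathcal N(0,1)$, and decomposing $x_i = \bar x_i\,\theta/\|\theta\| + x_i^\perp$ with $x_i^\perp$ an independent Gaussian in $\mathrm{span}(\theta)^\perp\cong\mathbb{R}^{d-1}$, one obtains
\[
\frac{M_n^{\tmop{easy}}(\theta,\nu)}{\sigma} = \frac{\theta}{\|\theta\|}\cdot\frac{1}{n}\sum_{i=1}^n\tanh(\alpha\bar y_i\bar x_i+\nu)\bar y_i\bar x_i \;+\; \frac{1}{n}\sum_{i=1}^n\tanh(\alpha\bar y_i\bar x_i+\nu)\bar y_i\,x_i^\perp,
\]
where $\alpha=\|\theta\|/\sigma$. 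Since $M(\theta,\nu)$ has no orthogonal component, the two summands of the error are already orthogonal in $\mathbb{R}^d$, so $\|M_n^{\tmop{easy}}-M\|_2^2$ splits cleanly as the squared parallel error plus the squared orthogonal error.

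For the parallel part $\langle\theta/\|\theta\|,M_n^{\tmop{easy}}(\theta,\nu)-M(\theta,\nu)\rangle/\sigma$, I would invoke Proposition~\ref{supprop:proj_err} verbatim, which supplies both the ``bare'' bound $\mathcal{O}(\sqrt{\log(1/\delta)/n})$ and the sharper weighted bound $\{\tanh|\nu|+\alpha/(1+|\nu|)\}\,\mathcal{O}(\sqrt{\log(1/\delta)/n})$ by reducing to the one-dimensional variable $Z_i=\bar y_i\bar x_i\sim K_0(|z|)/\pi$ and using the modified log-Sobolev concentration of Subsection~\ref{supsub:sobolev}. For the orthogonal part, the sum $\frac1n\sum_i\tanh(\alpha\bar y_i\bar x_i+\nu)\bar y_i\,x_i^\perp$ matches exactly the form handled by the vector-valued Lemma in Subsection~\ref{supsub:bound_sum_tanh} under the identification $x_i\leftrightarrow\bar y_i$, $x_i'\leftrightarrow\bar x_i$, $\vec x_i\leftrightarrow x_i^\perp\in\mathbb{R}^{d-1}$, yielding $\mathcal{O}(\sqrt{d/n}\vee\sqrt{\log(1/\delta)/n})$ and the weighted variant $\{\tanh|\nu|+\alpha/(1+|\nu|)\}\,\mathcal{O}(\sqrt{d/n}\vee\sqrt{\log(1/\delta)/n})$ respectively. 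Combining by $\|\cdot\|_2^2$-additivity and using $n\gtrsim d\vee\log(1/\delta)$ to absorb the $\log(1/\delta)/n$-scale terms into the $d\vee\log(1/\delta)$ rate delivers both claimed bounds, after a union bound over the two failure events (the extra $\log^3(1/\delta')$ sample requirement is inherited solely from the cited lemma's cubic-tail step used in the weighted bound).

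The main obstacle I anticipate is \emph{not} the concentration step itself---the two cited results do the heavy lifting---but rather the clean reduction from the full $d$-dimensional random vectors $x_i$ to the pair of independent scalars $(\bar y_i,\bar x_i)$ together with an independent $(d-1)$-dimensional Gaussian $x_i^\perp$. One must verify that $\bar x_i$ and $x_i^\perp$ are actually independent (they are, by orthogonal projection of a spherical Gaussian), that the $(d-1)$-dimensional ambient space on which $x_i^\perp$ lives carries a Gaussian law isometric to $\mathcal{N}(0,I_{d-1})$, and that the two failure probabilities can be summed without a factor-of-$d$ blow-up. Once this bookkeeping is in place, the bound follows by a single triangle/Pythagorean combination.
\end{Proof Sketch}
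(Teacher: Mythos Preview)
Your proposal is correct and follows essentially the same route as the paper: decompose the error into its $\mathrm{span}\{\theta\}$ and $\mathrm{span}\{\theta\}^\perp$ components, invoke Proposition~\ref{supprop:proj_err} for the parallel part, and apply the vector-valued lemma of Subsection~\ref{supsub:bound_sum_tanh} (with precisely your identification $x_i\leftrightarrow\bar y_i$, $x_i'\leftrightarrow\bar x_i$, $\vec x_i\leftrightarrow x_i^\perp$) for the orthogonal part. The paper's own proof is identical in structure, differing only in notation (it writes the orthogonal projector explicitly as $P_\theta^\perp$ and uses $\|P_\theta^\perp\|_2=1$ rather than Pythagorean additivity).
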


\begin{proof}
  Since $M^{\tmop{easy}}_n (\theta, \nu) : = \frac{1}{n}  \sum_{i = 1}^n \tanh
  \left( \frac{y_i \langle x_i, \theta \rangle}{\sigma^2} + \nu \right) y_i
  x_i, M (\theta, \nu) \assign \mathbb{E}_{s \sim p (s \mid
  \theta^{\ast}, \pi^{\ast})} \tanh \left( \frac{y \langle x, \theta
  \rangle}{\sigma^2} + \nu \right) yx$, then let $\alpha \assign \| \theta \|
  / \sigma$, and decompose $x_i = Z_i \theta + P_{\theta}^{\perp} \vec{Z}_i $
  into two parts in two subspaces $\tmop{span} \{ \theta \} \infixand
  \tmop{span} \{ \theta \}^{\perp}$, where $Z_i \sim \mathcal{N} (0, 1),
  \overrightarrow{Z_i} \sim \mathcal{N} (0, I_{d - 1})$, and the orthogonal
  projection matrix $P_{\theta}^{\perp}$ satisfies $\mathrm{span}
  (P_{\theta}^{\perp}) = \mathrm{span} \{\theta\}^{\perp}$. The projection
  matrix has the following properties: $P_{\theta}^{\perp} \theta = \vec{0},
  (P_{\theta}^{\perp})^2 = P_{\theta}^{\perp} = (P_{\theta}^{\perp})^{\top}$.
  \[ (M^{\tmop{easy}}_n (\theta, \nu) - M (\theta, \nu)) / \sigma = \left[
     \left\langle \frac{\theta}{\| \theta \|}, M^{\tmop{easy}}_n (\theta, \nu) - M (\theta,
     \nu) \right\rangle / \sigma \right] \frac{\theta}{\| \theta \|} +
     P_{\theta}^{\perp} (M^{\tmop{easy}}_n (\theta, \nu) - M (\theta, \nu)) /
     \sigma \]
  The $\ell_2$ norm of the first term (projected statistical error) is bounded
  in Proposition of of the projected statistical error. Let's focus on the
  second term, and use the notation $\alpha : = \| \theta \| / \sigma, y /
  \sigma = Z_i' \sim \mathcal{N} (0, 1)$, then
  \[ P_{\theta}^{\perp} (M^{\tmop{easy}}_n (\theta, \nu) - M (\theta, \nu)) /
     \sigma = P_{\theta}^{\perp} \left( \frac{1}{n}  \sum_{i = 1}^n
     -\mathbb{E} \right) \tanh (\alpha Z_i Z_i' + \nu) Z_i \vec{Z}_i  \]
  where $\{ Z_i \}^n_{i = 1}, \{ Z'_i \}^n_{i = 1} \overset{\tmop{iid}}{\sim}
  \mathcal{N} (0, 1), \overrightarrow{Z_i} \sim \mathcal{N} (0, I_{d - 1})$
  are independent of each other.
  
  By applying $\| P_{\theta}^{\perp} \|_2 = 1$ and the upper bound for sum of
  $\tanh (\alpha Z_i Z_i' + \nu) Z_i \vec{Z}_i $ in proven Lemma in Subsection~\ref{supsub:bound_sum_tanh}, we complete
  the proof.
\end{proof}

\newpage
\subsection{Proof for Statistical Error of Regression Parameters}\label{supsub:stat_err}
\begin{theorem}{(Proposition~\ref{prop:stat_err} in Section~\ref{sec:finite}: Statistical Error of Regression Parameters)}\label{supprop:stat_err}

    Let $M (\theta, \nu)$ and $M_n
    (\theta, \nu)$ be the EM update rules for regression parameters $\theta$ at
    the population level and finite-sample level with $n$ samples, and if $n \gtrsim d \vee
    \log \frac{1}{\delta}, \delta \in (0, 1)$, then with probability at least $1
    - \delta$,
    \[ \| M_n (\theta, \nu) - M (\theta, \nu) \| / \sigma =\mathcal{O} \left(
       \sqrt{\frac{d \vee \log \frac{1}{\delta}}{n}} \right), \]
    and if $n \gtrsim d \vee \log \frac{1}{\delta} \vee \log^3
    \frac{1}{\delta'}, \delta' \in (0, 1)$, then with probability at least $1 -
    (\delta + \delta')$,
    \[ \| M_n (\theta, \nu) - M (\theta, \nu) \| / \sigma = \{ \tanh | \nu | +\|
       \theta \|/ \sigma \} \mathcal{O} \left( \sqrt{\frac{d \vee \log
       \frac{1}{\delta}}{n}} \right) . \]
\end{theorem}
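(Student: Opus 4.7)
The plan is to reduce the standard finite-sample EM update $M_n(\theta,\nu)$ to the easy EM update $M_n^{\tmop{easy}}(\theta,\nu)$, whose error we have already controlled in Proposition~\ref{supprop:stat_err_easy}, and then absorb the extra factor coming from the sample covariance inverse. Writing $\widehat{\Sigma} := \frac{1}{n}\sum_{i=1}^n x_i x_i^\top$, we have the identity $M_n(\theta,\nu) = \widehat{\Sigma}^{-1} M_n^{\tmop{easy}}(\theta,\nu)$, so the natural decomposition is
\[
M_n(\theta,\nu) - M(\theta,\nu) \;=\; \widehat{\Sigma}^{-1}\bigl(M_n^{\tmop{easy}}(\theta,\nu) - M(\theta,\nu)\bigr) \;+\; \bigl(\widehat{\Sigma}^{-1} - I_d\bigr) M(\theta,\nu).
\]
Taking norms and using the submultiplicativity of the $\ell_2$ operator norm,
\[
\| M_n - M\|/\sigma \;\leq\; \|\widehat{\Sigma}^{-1}\|_2 \cdot \| M_n^{\tmop{easy}} - M\|/\sigma \;+\; \|\widehat{\Sigma}^{-1} - I_d\|_2 \cdot \|M(\theta,\nu)\|/\sigma.
\]

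First I would invoke the Gaussian-ensemble bound in Appendix~\ref{sup:finite_lemma}, Subsection~\ref{supsub:bound_stat} with $t = \Theta(\sqrt{\log(1/\delta)/n})$ to obtain, under the hypothesis $n \gtrsim d \vee \log\frac{1}{\delta}$, the events $\|\widehat{\Sigma}^{-1}\|_2 \leq 2$ and $\|\widehat{\Sigma}^{-1} - I_d\|_2 = \mathcal{O}\bigl(\sqrt{(d\vee \log\frac{1}{\delta})/n}\bigr)$, each holding with probability at least $1 - \delta/3$. Next I would apply Proposition~\ref{supprop:stat_err_easy} to bound $\|M_n^{\tmop{easy}} - M\|/\sigma$: in the first (coarser) regime by $\mathcal{O}(\sqrt{(d\vee \log\frac{1}{\delta})/n})$ with probability $1 - \delta/3$, and in the second (refined) regime by $\{\tanh|\nu| + \|\theta\|/\sigma\}\mathcal{O}(\sqrt{(d\vee \log\frac{1}{\delta})/n})$ with probability $1 - (\delta/3 + \delta')$, at the cost of the stronger sample complexity $n \gtrsim d \vee \log\frac{1}{\delta} \vee \log^3\frac{1}{\delta'}$.

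For the residual term $\|\widehat{\Sigma}^{-1} - I_d\|_2 \cdot \|M(\theta,\nu)\|/\sigma$, I would bound $\|M(\theta,\nu)\|/\sigma$ using the monotonicity statement in Fact~\ref{fact:monotone_expectation}: since $M(\theta,\nu) = m(\alpha,\nu)\,\theta/\|\theta\|$ with $\alpha=\|\theta\|/\sigma$, we have $\|M\|/\sigma = m(\alpha,|\nu|) \leq m(\alpha,0) \leq \alpha \wedge \tfrac{2}{\pi}$. In particular $\|M\|/\sigma \leq \|\theta\|/\sigma \leq \tanh|\nu| + \|\theta\|/\sigma$ for the refined bound, and $\|M\|/\sigma \leq 2/\pi = \mathcal{O}(1)$ for the coarser bound. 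Combining the two contributions yields exactly the two advertised rates under their respective sample-size assumptions, with a union bound over the three failure events giving the claimed total probabilities $1-\delta$ and $1-(\delta+\delta')$.

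The main (and essentially only) obstacle is cleanly packaging the residual term $(\widehat{\Sigma}^{-1}-I_d)M$: in the refined regime one must avoid a spurious $\log^{1/2}(1/\delta)$ factor in front of $\|M\|/\sigma$, which is handled by choosing the tail parameter $t$ in the covariance concentration so that both $\|\widehat{\Sigma}^{-1}-I_d\|_2$ and the easy-EM error share the same $\sqrt{(d\vee\log\frac{1}{\delta})/n}$ scale, after which the bound on $\|M\|/\sigma$ from Fact~\ref{fact:monotone_expectation} merges the two contributions without deteriorating the rate. The remaining work is a routine union bound and verification that the sample-complexity thresholds suffice to make $\|\widehat{\Sigma}^{-1}\|_2$ bounded by a fixed constant.
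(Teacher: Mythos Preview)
Your proposal is correct and follows essentially the same route as the paper: decompose $M_n - M$ via $M_n = \widehat{\Sigma}^{-1} M_n^{\tmop{easy}}$ into a $\widehat{\Sigma}^{-1}(M_n^{\tmop{easy}}-M)$ piece and a $(\widehat{\Sigma}^{-1}-I_d)M$ piece, control the former by Proposition~\ref{supprop:stat_err_easy} and the sample-covariance operator-norm bounds, and control the latter using $\|M(\theta,\nu)\|/\sigma \le \|\theta\|/\sigma \wedge \tfrac{2}{\pi}$ from the nonincreasing/bounded property in Subsection~\ref{supsub:nonincrease}. The paper writes the residual as $-\widehat{\Sigma}^{-1}(\widehat{\Sigma}-I_d)M$ rather than $(\widehat{\Sigma}^{-1}-I_d)M$, but these are identical, and everything else matches.
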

  
\begin{proof}
    By using the connection between $M_n$ and $M_n^{\tmop{easy}}$, and
    decomposing the error into two terms,
    \begin{eqnarray*}
      (M_n (\theta, \nu) - M (\theta, \nu)) / \sigma & = & \left(
      \frac{\sum^n_{i = 1} x_i x^{\top}_i}{n} \right)^{- 1} (M^{\tmop{easy}}_n
      (\theta, \nu) - M (\theta, \nu)) / \sigma\\
      & - & \left( \frac{\sum^n_{i = 1} x_i x^{\top}_i}{n} \right)^{- 1} \left(
      \frac{\sum^n_{i = 1} x_i x^{\top}_i}{n} - I_d \right) M (\theta, \nu) /
      \sigma
    \end{eqnarray*}
    By using Operator norm bounds for the standard Gaussian ensemble in Lemma of Subsection~\ref{supsub:bound_stat}, and $n \gtrsim d \vee \log \frac{1}{\delta}$
    \[ \left\| \left( \frac{\sum^n_{i = 1} x_i x^{\top}_i}{n} \right)^{- 1}
       \right\|_2 =\mathcal{O} (1) \qquad \left\| \frac{\sum^n_{i = 1} x_i
       x^{\top}_i}{n} - I_d \right\|_2 =\mathcal{O} \left( \sqrt{\frac{d \vee
       \log \frac{1}{\delta}}{n}} \right) \]
    By invoking the proven upper bound of Statistical Error of Regression
    Parameters for Easy EM, and using the facts that $\| M (\theta, \nu) \| \leq
    \| \theta \|$ is nonincreasing and bounded $\| M (\theta, \nu) \|/\sigma
    =\mathcal{O} (1)$ in Subsection~\ref{supsub:nonincrease}, the bounds in this proposition are established.
\end{proof}
  
\newpage
\section{Proofs for Results of Finite-Sample Level Analysis}\label{sup:finite}
\subsection{Proof for Initialization at Finite-Sample Level}\label{supsub:init_finite}
\begin{theorem}{(Fact~\ref{fact:init_finite} in Section~\ref{sec:finite}: Initialization at Finite-Sample Level)}\label{supprop:init_finite}

    Let $\alpha^t \assign \| \theta^t \|
    / \sigma = \|M_n (\theta^{t - 1}, \nu^{t - 1})\| / \sigma$ and $\beta^t \assign
    \tanh (\nu^t) = N_n (\theta^{t - 1}, \nu^{t - 1})$ for all $t \in
    \mathbb{Z}_+$ be the $t$-th iteration of the EM update rules $\|M_n (\theta,
    \nu)\| / \sigma$ and $N_n (\theta, \nu)$ at the finite-sample level. If we run EM at
    the finite-sample level for at most $T_0 = \mathcal{O} (1)$ iterations with $n = \Omega
    \left( d \vee \log \frac{1}{\delta} \right)$ samples, then $\alpha^{T_0} <
    0.1$ with probability at least $1 - \delta$.
\end{theorem}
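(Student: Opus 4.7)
The plan is to couple the finite-sample trajectory with the population trajectory iteration-by-iteration and then invoke the population initialization result (Fact~\ref{fact:init_popl}) to close the argument. Let $\{\theta^t\}$ denote the finite-sample iterates starting from $\theta^0$, and let $\{\bar{\theta}^t\}$ denote the \emph{population} iterates initialized from the same point, i.e., $\bar{\theta}^t := M(\bar{\theta}^{t-1},\bar{\nu}^{t-1})$ with $\bar{\theta}^0=\theta^0$. By Fact~\ref{fact:init_popl}, after $T_0' = 36$ population iterations we have $\|\bar{\theta}^{T_0'}\|/\sigma < 0.1$, and in fact the proof there yields a value strictly bounded away from $0.1$ (roughly $\lesssim 0.08$), giving a buffer $c > 0$ to absorb statistical fluctuations.

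First I would bound the one-step perturbation $\|\theta^{t+1}-\bar{\theta}^{t+1}\|/\sigma$ by splitting
\[
\|\theta^{t+1}-\bar{\theta}^{t+1}\|/\sigma \le \underbrace{\|M_n(\theta^t,\nu^t)-M(\theta^t,\nu^t)\|/\sigma}_{\text{statistical error}} + \underbrace{\|M(\theta^t,\nu^t)-M(\bar{\theta}^t,\bar{\nu}^t)\|/\sigma}_{\text{propagation}},
\]
and analogously for the imbalance $|\beta^{t+1}-\bar{\beta}^{t+1}|$. The statistical error is controlled by Proposition~\ref{prop:stat_err} and Proposition~\ref{prop:stat_err_mixing}, both of order $\mathcal{O}\!\left(\sqrt{(d\vee\log\tfrac{1}{\delta})/n}\right)$ with probability $1-\delta/T_0$ at each step. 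The propagation error uses the fact that $M(\theta,\nu)$ and $N(\theta,\nu)$ are Lipschitz in $(\theta,\nu)$ on the bounded region $\|\theta\|/\sigma \le 2/\pi$ (Fact~\ref{fact:nonincreasing}), which follows directly from differentiating under the expectation in \eqref{eq:expectation_em} and bounding derivatives of $\tanh$. This gives a uniform constant $L$ such that the propagation term is at most $L\cdot(\|\theta^t-\bar{\theta}^t\|/\sigma + |\nu^t-\bar{\nu}^t|)$.

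Iterating this recurrence over $t=0,1,\ldots,T_0'-1$ yields a total deviation
\[
\|\theta^{T_0'}-\bar{\theta}^{T_0'}\|/\sigma \;\le\; \frac{L^{T_0'}-1}{L-1}\,\mathcal{O}\!\left(\sqrt{\tfrac{d\vee\log\frac{1}{\delta}}{n}}\right),
\]
after a union bound over the $T_0'=\mathcal{O}(1)$ iterations (absorbing the logarithmic factor $\log T_0'$ into $\log\frac{1}{\delta}$). Choosing $n = \Omega(d \vee \log\frac{1}{\delta})$ with a sufficiently large hidden constant makes this deviation smaller than the buffer $c$, so that
\[
\alpha^{T_0'} \;\le\; \bar{\alpha}^{T_0'} + \|\theta^{T_0'}-\bar{\theta}^{T_0'}\|/\sigma \;<\; 0.1
\]
with probability at least $1-\delta$, which is the claim with $T_0 = T_0'=36=\mathcal{O}(1)$.

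The main obstacle is the propagation step: the constant $L^{T_0'}$ is harmless because $T_0'$ is an absolute constant, but one must verify that the finite-sample iterates stay in the same bounded region as the population iterates so that the Lipschitz bound applies uniformly along the trajectory. This can be handled by a short inductive argument showing that on the high-probability event where all per-step statistical errors are small, the finite-sample norm $\alpha^t$ is dominated by $\bar{\alpha}^t + \mathcal{O}(1)$, which remains bounded; since $\alpha^0$ may be arbitrarily large, one first applies Fact~\ref{fact:nonincreasing} at the population level to conclude $\bar{\alpha}^t \le 2/\pi$ for $t\ge 1$, and then a mild additional sample-size requirement ensures the finite-sample iterate is likewise bounded, closing the induction.
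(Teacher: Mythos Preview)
Your coupling strategy is different from what the paper does, and as written it has a gap in the $\nu$-propagation. The paper does \emph{not} run a separate population trajectory $\{\bar\theta^t\}$ from $\theta^0$. Instead it defines $\bar\alpha^{t+1}:=\|M(\theta^t,\nu^t)\|/\sigma$ as the population update applied to the \emph{current finite-sample state}, bounds $|\alpha^{t+1}-\bar\alpha^{t+1}|$ by the one-step statistical error $\epsilon$ (Proposition~\ref{prop:stat_err}), and then uses the monotonicity $m(\alpha^t,|\nu^t|)\le m(\alpha^t,0)=m_0(\alpha^t)$ from Fact~\ref{fact:monotone_expectation} to obtain the direct recursion
\[
\alpha^{t+1}\;\le\; m_0(\alpha^t)+\epsilon
\]
on the finite-sample norm alone. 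Three numerical warm-up steps give $\alpha^3<0.31$, and then the cubic bound $m_0(\alpha)\le \alpha-3\alpha^3/(1+8\alpha)$ is iterated (with a small $\epsilon$-perturbation) to reach $\alpha^{T_0}<0.1$ at $T_0=196$. No tracking of $\nu^t$ or $\beta^t$, and no Lipschitz propagation, is ever needed.

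The concrete problem in your argument is the step where you feed $|\nu^t-\bar\nu^t|$ into the propagation bound. Proposition~\ref{prop:stat_err_mixing} controls $|\beta^{t+1}-\bar\beta^{t+1}|$, not $|\nu^{t+1}-\bar\nu^{t+1}|$; converting costs a factor $1/(1-\beta^2)$, which is unbounded as $|\beta^0|\to 1$ and compounds over the $T_0'$ iterations. Working instead with Lipschitz of $m$ viewed as a function of $\beta$ does not help, since that Lipschitz constant also scales like $(1-\beta^2)^{-1}$ on the region $\alpha\le 2/\pi$. Hence the accumulated deviation cannot be absorbed into a \emph{universal} constant in $n=\Omega(d\vee\log\tfrac{1}{\delta})$ when $|\beta^0|$ is close to $1$. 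The natural repair is to drop the $\nu$-dependence via $m(\alpha,|\nu|)\le m_0(\alpha)$ --- but once you do that, the separate population trajectory is no longer needed and you have essentially reproduced the paper's direct recursion.
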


\begin{proof}
For brevity, we write $\bar{\alpha}^t \assign \|M (\theta^{t - 1}, \nu^{t -
1})\| / \sigma$ for the EM update rule at population level. By invoking the
bound for the statistical error of regression parameters in Subsection~\ref{supsub:stat_err}, and
selecting $n \geq (2000 C)^2 \left[ d \vee \log \frac{1}{\delta} \right]$
for some constant $C$
\[ | \alpha^t - \bar{\alpha}^t | \leq C \sqrt{\frac{d \vee \log
    \frac{1}{\delta}}{n}} \leq\frac{1}{2}\times 10^{- 3} \]
If $\alpha^t < 0.1$, then it satisfies the condition; 
otherwise, by invoking the population EM update rule \(\bar{\alpha}^{t+1}=m(\alpha^t, \nu^t)=\E[\tanh(\alpha^t X+\nu^t)X]\) in~\eqref{eq:expectation_em}, 
and the monotonicity of \(m(\alpha, \nu)\) in Fact~\ref{fact:monotone_expectation} of Section~\ref{sec:em},
\[
\alpha^{t+1} \leq \bar{\alpha}^{t+1} + | \alpha^{t+1} - \bar{\alpha}^{t+1} | 
= m(\alpha^t,\nu^t) + \frac{1}{2} \times 10^{- 3} \leq m(\alpha^t, 0) + \frac{1}{2} \times 10^{- 3} = m_0(\alpha^t) + \frac{1}{2} \times 10^{- 3}
\]
Applying the monotonicity of \(m(\alpha, \nu)\) in Fact~\ref{fact:monotone_expectation} of Section~\ref{sec:em}, which is also applicable to \(m_0(\alpha)=m(\alpha, 0)\)
\begin{eqnarray*}
    \alpha^1 &\leq& m_0(\alpha^0) + \frac{1}{2} \times 10^{- 3}
    \leq m_0(\infty) + \frac{1}{2} \times 10^{- 3} = \frac{2}{\pi} + \frac{1}{2} \times 10^{- 3} < 0.64 \\
    \alpha^2 & \leq & m_0(\alpha^1) + \frac{1}{2} \times 10^{- 3}
    \leq m_0\left(0.64\right)+ \frac{1}{2} \times 10^{- 3} < 0.4\\
    \alpha^3 & \leq & m_0(\alpha^2) + \frac{1}{2} \times 10^{- 3}
    \leq m_0\left(0.4\right)+ \frac{1}{2} \times 10^{- 3} < 0.31
\end{eqnarray*}
Furthermore, by applying the upper bound for expectation \(m_0(\alpha)=m(\alpha, 0)=\E[\tanh(\alpha X)X]\) under the density function \(X\sim K_0(|x|)/\pi\) in Subsection~\ref{supsub:bound_integral} for $t \geq 3$
\[ \alpha^{t + 1} \leq m_0(\alpha^t) + \frac{1}{2} \times 10^{- 3} \leq \alpha^t - \frac{3 [\alpha^t]^3}{1 + 8
    [\alpha^t]} + \frac{1}{2} \times 10^{- 3} \]
By using $\alpha^t \geq 0.1$, then $r^t \assign \alpha^t - \frac{1}{20} \geq
\frac{1}{20}$ and
\[ r^{t + 1} \leq r^t - \frac{5}{3} \left[ r^t + \frac{1}{20} \right]^3 +
    \frac{1}{2} \times 10^{- 3} < r^t - \frac{5}{3} [r^t]^3 < r^t - [r^t]^3
\]
Hence, by using $0 < r^{t + 1} \leq r^t$, we obtain $2 \leq \left(
\frac{r^t}{r^{t + 1}} \right)^2 + \left( \frac{r^t}{r^{t + 1}} \right)$
\begin{eqnarray*}
    2 (T_0 - 3) & \leq & \sum_{t = 3}^{T_0 - 1} \frac{r^t - r^{t +
    1}}{\frac{1}{2} [r^t]^3} \leq \sum_{t = 3}^{T_0 - 1} \left\{ [r^t]^{- 3} 
    \left[ \left( \frac{r^t}{r^{t + 1}} \right)^2 + \left( \frac{r^t}{r^{t +
    1}} \right) \right] \right\}  \{r^t - r^{t + 1} \} = [r^{T_0}]^{- 2} -
    [r^3]^{- 2}
\end{eqnarray*}
By selecting $T_0 = 196 =\mathcal{O} (1)$ and using $r^3 \assign \alpha^3 -
\frac{1}{20} < 0.31 - \frac{1}{20}$, we have
\[ \alpha^{T_0} = r^{T_0} + \frac{1}{20} \leq \frac{1}{\sqrt{2 (T_0 - 3) +
    [r^3]^{- 2}}} + \frac{1}{20} < \frac{1}{20} + \frac{1}{20} = 0.1 \]
By substituting $\delta / T_0 \rightarrow \delta$ in the above expressions,
then $\alpha^{T_0} < 0.1$ with probability at least $1 - \delta$.
\end{proof}
\newpage
\subsection{Proof for Convergence Rate at Finite-Sample Level}\label{supsub:convg_finite}
\begin{theorem}{(Theorem~\ref{thm:finite} in Section~\ref{sec:finite}: Convergence Rate at Finite-Sample Level for Fixed Mixing Weights)}
Suppose a 2MLR model is fitted to the overspecified
model with no separation $\theta^\ast=\vec{0}$, given mixing weights $\pi^t = \pi^0$ 
and $n = \Omega \left( d \vee\log \frac{1}{\delta} \vee \log^3 \frac{1}{\delta'} \right)$ samples:
%\begin{enumerate}[label={}]
%\

(sufficiently unbalanced) if $\left\| \pi^0 - \frac{\mathds{1}}{2} \right\|_1 \gtrsim \left[ \frac{d \vee \log \frac{1}{\delta}}{n}\right]^{\frac{1}{4}}$,
finite-sample EM takes at most $T =\mathcal{O} \left( \left\| \pi^0 -
\frac{\mathds{1}}{2} \right\|^{- 2}_1 \log \frac{n}{d \vee \log \frac{1}{\delta}}
\right)$ iterations to achieve $\| \theta^T \| / \sigma = \mathcal{O} \left(
\left\| \pi^0 - \frac{\mathds{1}}{2} \right\|^{- 1}_1 \left[ \frac{d \vee \log
\frac{1}{\delta}}{n} \right]^{\frac{1}{2}} \right)$,
   
   %\item 
(sufficiently balanced) if $\left\| \pi^0 - \frac{\mathds{1}}{2} \right\|_1 \lesssim \left[ \frac{d \vee \log \frac{1}{\delta}}{n} \right]^{\frac{1}{4}}$,
finite-sample EM takes at most $T =\mathcal{O} \left( \left[ \frac{n}{d
\vee \log \frac{1}{\delta}} \right]^{\frac{1}{2}} \right)$ iterations to achieve $\| \theta^T \| / \sigma = \mathcal{O} \left( \left[ \frac{d \vee
\log \frac{1}{\delta}}{n} \right]^{\frac{1}{4}} \right)$,
%\end{enumerate}
with probability at least $1 - T (\delta + \delta')$.
\end{theorem}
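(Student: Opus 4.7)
The plan is to reduce to the small-parameter regime via the initialization fact and then perform a case analysis based on whether the imbalance $\beta^0$ dominates or is dominated by the statistical noise scale. First, I would invoke Fact~\ref{fact:init_finite} to guarantee $\alpha^{T_0} < 0.1$ after $T_0 = \mathcal{O}(1)$ iterations with probability at least $1-\delta$, so without loss of generality I assume $\alpha^0 < 0.1$ and $\beta^0 \geq 0$ in what follows. Since the mixing weights are held fixed, $\beta^t = \beta^0$ throughout. The key decomposition is $\alpha^{t+1} \leq \bar{\alpha}^{t+1} + |\alpha^{t+1} - \bar{\alpha}^{t+1}|$, where $\bar{\alpha}^{t+1}$ denotes the population iterate starting from $(\alpha^t,\nu^t)$. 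I would upper bound the population term by Lemma~\ref{suplem:bound_dynamic}, namely $\bar{\alpha}^{t+1} \leq \alpha^t(1 - \tfrac{5}{3}[\alpha^t]^2 - \tfrac{4}{5}[\beta^0]^2)$, and the statistical term by the tighter bound in Proposition~\ref{prop:stat_err}, namely $(\tanh|\nu^0| + \alpha^t)\,\mathcal{O}(\sqrt{(d\vee\log(1/\delta))/n})$. Write $\varepsilon_n := \sqrt{(d\vee\log(1/\delta))/n}$ for brevity.

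For the sufficiently unbalanced regime $\beta^0 \gtrsim \varepsilon_n^{1/2}$, I would combine the two bounds to derive a shifted contraction of the form
\[
\alpha^{t+1} - \Theta(\varepsilon_n/\beta^0) \;\leq\; (1 - c[\beta^0]^2)\bigl(\alpha^t - \Theta(\varepsilon_n/\beta^0)\bigr)
\]
for some absolute constant $c>0$, where the choice of the offset $\Theta(\varepsilon_n/\beta^0)$ is dictated by forcing the statistical fluctuations $(\beta^0+\alpha^t)\varepsilon_n$ to be absorbed into the contraction once $\alpha^t$ drops below $\Theta(\varepsilon_n/\beta^0)$. Unrolling with $1 - c[\beta^0]^2 \leq \exp(-c[\beta^0]^2)$ yields $\alpha^T - \Theta(\varepsilon_n/\beta^0) \lesssim \alpha^0\exp(-c[\beta^0]^2 T)$, so choosing $T = \Theta([\beta^0]^{-2}\log(n/(d\vee\log(1/\delta))))$ forces the second summand below the first and gives the stated accuracy $\|\theta^T\|/\sigma = \mathcal{O}(\varepsilon_n/\beta^0)$.

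For the sufficiently balanced regime $\beta^0 \lesssim \varepsilon_n^{1/2}$, the mixing-weight drag $[\beta^0]^2$ is negligible compared with the cubic drag $[\alpha^t]^2$, so the population upper bound reduces to $\bar{\alpha}^{t+1} \leq \alpha^t - c'[\alpha^t]^3$ by Lemma~\ref{suplem:cubic}. As long as $\alpha^t \gtrsim \varepsilon_n^{1/2}$, the statistical error $\alpha^t\varepsilon_n$ is dominated by this cubic decrement, so I obtain $\alpha^{t+1} \leq \alpha^t - c''[\alpha^t]^3$, which is precisely the discretization of $\mathrm{d}\alpha \leq -c''\alpha^3\,\mathrm{d}t$. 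Applying the ``variable separation'' technique used in the proof of Proposition~\ref{prop:sublinear_convg} yields $\alpha^t \lesssim t^{-1/2}$ on this regime, so $T = \Theta(\varepsilon_n^{-1}) = \Theta((n/(d\vee\log(1/\delta)))^{1/2})$ iterations suffice for $\alpha^T \lesssim \varepsilon_n^{1/2} = \mathcal{O}((d\vee\log(1/\delta))/n)^{1/4}$.

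The main obstacle is propagating the high-probability control across all $T$ iterations: Proposition~\ref{prop:stat_err} only provides a single-iteration bound at a given $(\theta,\nu)$, and in the balanced regime $T$ may be as large as $(n/d)^{1/2}$. A union bound over iterations yields the $1 - T(\delta+\delta')$ confidence level in the statement, but I must verify that the recursive inequality on $\alpha^t$ remains valid on the intersection of these $T$ good events, and that the implicit constants in the thresholds $\beta^0 \gtrsim \varepsilon_n^{1/2}$ and in the shifted contraction are chosen so that the absorbed noise term matches the claimed order $\mathcal{O}(\varepsilon_n/\beta^0)$ in the unbalanced case and $\mathcal{O}(\varepsilon_n^{1/2})$ in the balanced case. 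A secondary technical point is ensuring that the tighter statistical bound in Proposition~\ref{prop:stat_err}, which carries the factor $\tanh|\nu^0| + \alpha^t$, is the right one to use (rather than the looser $\mathcal{O}(\varepsilon_n)$ bound) so that the contraction in the unbalanced case actually decreases $\alpha^t$ toward $\Theta(\varepsilon_n/\beta^0)$ rather than toward $\Theta(\varepsilon_n/[\beta^0]^2)$.
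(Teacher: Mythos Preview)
Your proposal is correct and follows essentially the same approach as the paper: reduce to $\alpha^0<0.1$ via Fact~\ref{fact:init_finite}, couple finite-sample and population updates by $\alpha^{t+1}\le\bar\alpha^{t+1}+|\alpha^{t+1}-\bar\alpha^{t+1}|$, use the tighter statistical bound from Proposition~\ref{prop:stat_err} with the $(\tanh|\nu^0|+\alpha^t)$ prefactor, and then run a shifted linear contraction in the unbalanced regime versus the discretized cubic inequality (``variable separation'') in the balanced regime, finishing with a union bound over the $T$ iterations. The only cosmetic difference is that the paper, working from the more detailed bound $\bar\alpha^{t+1}\le\alpha^t(1-[\beta^t]^2)(1-\tfrac{5}{3}[\alpha^t]^2+9.53[\alpha^t]^2[\beta^t]^2)$, splits the unbalanced case into $|\beta^0|>0.4$ and $|\beta^0|\le0.4$ subcases to control the sign of the cubic coefficient, whereas you invoke the already-simplified form $\bar\alpha^{t+1}/\alpha^t\le1-\tfrac{5}{3}[\alpha^t]^2-\tfrac{4}{5}[\beta^0]^2$ from Lemma~\ref{suplem:bound_dynamic}, which makes that split unnecessary.
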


%  Note: more formal statement, see page 12 of ``misspecification'' paper
%  https://arxiv.org/pdf/1810.00828v2

%  and Kwon'21 http://proceedings.mlr.press/v130/kwon21b/kwon21b.pdf

\begin{proof}
By invoking the fact for initialization
at finite-sample level in Subsection~\ref{supsub:init_finite}, then $\alpha^{T_0} = \| \theta^{T_0} \|
/ \sigma < 0.1$ after running population EM for at most $T_0 =\mathcal{O}
(1)$ iterations. Without loss of generality, we may assume $\alpha^0 = \| \theta^0 \| /
\sigma \in (0, 0.1)$ in the following discussions.

For brevity, we write the output of the EM update rule for regression
parameters at Population level as $\bar{\alpha}^{t + 1} = \|M(\theta^t, \nu^t)\|/\sigma = \|M(\theta^t, \nu^0)\|/\sigma$, 
and output of the EM update rules for regression parameters at Finite-smaple level as
$\alpha^{t + 1} = \|\theta^{t+1}\|/\sigma=\|M_n(\theta^t, \nu^t)\|/\sigma = \|M_n(\theta^t, \nu^0)\|/\sigma$.

Without the loss of generality, we assume $\beta^t = \pi^t (1) - \pi^t (2)
\geq 0$, then by invoking the inequality in Appendix~\ref{sup:popl_lemma} for $\alpha^t \leq 0.1$
\begin{eqnarray*}
   \frac{\bar{\alpha}^{t + 1}}{\alpha^t (1 - [\beta^t]^2)}
   % & \leq & 1 - 3
   % [\alpha^t]^2  (1 - 3 [\beta^t]^2) + [\alpha^t]^3 \frac{24}{1 + 8
   % [\alpha^t]} + [\alpha^t]^5 [\beta^t]^2 \frac{1200 \left( 1 + \frac{25}{48}
   % [\alpha^t] \right)}{(1 + 16 [\alpha^t])  \left( 1 - \frac{25}{3}
   % [\alpha^t]^2 \right)}\\
   % & = & \left( 1 - 3 [\alpha^t]^2 + [\alpha^t]^3 \frac{24}{1 + 8
   % [\alpha^t]} \right) + [\alpha^t]^2 [\beta^t]^2 \left( 9 + [\alpha^t]^3
   % \frac{1200 \left( 1 + \frac{25}{48} [\alpha^t] \right)}{(1 + 16
   % [\alpha^t])  \left( 1 - \frac{25}{3} [\alpha^t]^2 \right)} \right)\\
   & \leq &  1 - \frac{5}{3} [\alpha^t]^2  + 9.53 [\alpha^t]^2
   [\beta^t]^2
\end{eqnarray*}
and the upper bound for statistical error in Subsection~\ref{supsub:stat_err}, namely $| \alpha^{t + 1} - \bar{\alpha}^{t + 1} | \leq c^2
(\beta^t + \alpha^t) \sqrt{\frac{d \vee \log \frac{1}{\delta}}{n}}$ for some
univeral constant $c$ when $n\gtrsim d\vee \log \frac{1}{\delta} \vee \log^3 \frac{1}{\delta'}$, we have
\begin{eqnarray*}
   \alpha^{t + 1} & \leq & | \alpha^{t + 1} - \bar{\alpha}^{t + 1} | +
   \bar{\alpha}^{t + 1}\\
   & \leq & \alpha^t (1 - [\beta^t]^2) \left( 1 - \frac{5}{3} [\alpha^t]^2 +
   9.53 [\alpha^t]^2 [\beta^t]^2 \right) + c^2 (\beta^t + \alpha^t)
   \sqrt{\frac{d \vee \log \frac{1}{\delta}}{n}}\\
   & = & \alpha^t \left( 1 - [\beta^t]^2 + c^2 \sqrt{\frac{d \vee \log
   \frac{1}{\delta}}{n}} \right) + c^2 \beta^t \sqrt{\frac{d \vee \log
   \frac{1}{\delta}}{n}} - [\alpha^t]^3 (1 - [\beta^t]^2) \left( \frac{5}{3}
   - 9.53 [\beta^t]^2 \right)
\end{eqnarray*}
with the probability at least $1 - (\delta + \delta')$.
\newpage
\tmtextbf{Proof for part a) (sufficiently unbalanced) $\left\| \pi^0 -
\frac{\mathds{1}}{2} \right\|_1 \gtrsim \left[ \frac{d \vee \log \frac{1}{\delta}}{n}
\right]^{\frac{1}{4}}$}

Consider the following two cases.

(i) if $| \beta^t | = | \beta^0 | = \left\| \pi^0 - \frac{\mathds{1}}{2} \right\|_1
\geq 2 c \left[ \frac{d \vee \log \frac{1}{\delta}}{n}
\right]^{\frac{1}{4}}$, $| \beta^t | > 0.4$, by invoking $- (1 -
[\beta^t]^2) \left( \frac{5}{3} - 9.53 [\beta^t]^2 \right) < \frac{5}{3}, |
\beta^t | \leq 1$
\begin{eqnarray*}
   \alpha^{t + 1} & \leq & \alpha^t \left( 1 - 0.4^2 + c^2 \sqrt{\frac{d \vee
   \log \frac{1}{\delta}}{n}} + \frac{5}{3} [\alpha^t]^2 \right) + c^2
   \sqrt{\frac{d \vee \log \frac{1}{\delta}}{n}}
\end{eqnarray*}
By selecting $n \geq (5 c)^4 \left[ d \vee \log \frac{1}{\delta} \right]$,
then $1 - 0.4^2 + c^2 \sqrt{\frac{d \vee \log \frac{1}{\delta}}{n}} +
\frac{5}{3} [\alpha^t]^2 < 0.9$ for $\alpha^t \leq 0.1$, therefore
\[ \alpha^{t + 1} - 10 c^2 \sqrt{\frac{d \vee \log \frac{1}{\delta}}{n}}
   \leq 0.9 \left( \alpha^t - 10 c^2 \sqrt{\frac{d \vee \log
   \frac{1}{\delta}}{n}} \right) \]
Furthermore, by letting $n \geq (10 c)^4 \left[ d \vee \log \frac{1}{\delta}
\right]$, then $10 c^2 \sqrt{\frac{d \vee \log \frac{1}{\delta}}{n}} \leq
0.1$ and $\alpha^{t + 1} \leq 0.1$, hence
\[ \alpha^T \leq 10 c^2 \sqrt{\frac{d \vee \log \frac{1}{\delta}}{n}} +
   0.9^T \left( \alpha^0 - 10 c^2 \sqrt{\frac{d \vee \log
   \frac{1}{\delta}}{n}} \right) \leq 10 c^2 \sqrt{\frac{d \vee \log
   \frac{1}{\delta}}{n}} + 0.1 \times 0.9^T \]
Let's choose $T = \lceil \frac{\frac{1}{2} \log \frac{n}{d \vee \log
\frac{1}{\delta}} - 2 \log c - \log 10}{\log \frac{1}{0.9}} \rceil_+ =
\Theta \left( \log \frac{n}{d \vee \log \frac{1}{\delta}} \right)$, then
\[ \alpha^T \leq 10 c^2 \sqrt{\frac{d \vee \log \frac{1}{\delta}}{n}} + c^2
   \sqrt{\frac{d \vee \log \frac{1}{\delta}}{n}} = 11 c^2 \sqrt{\frac{d \vee
   \log \frac{1}{\delta}}{n}} = \Theta \left( \sqrt{\frac{d \vee \log
   \frac{1}{\delta}}{n}} \right) \]
(ii) if $| \beta^t | = | \beta^0 | = \left\| \pi^0 - \frac{\mathds{1}}{2} \right\|_1
\geq 2 c \left[ \frac{d \vee \log \frac{1}{\delta}}{n}
\right]^{\frac{1}{4}}$, $| \beta^t | \leq 0.4$, by invoking $- (1 -
[\beta^t]^2) \left( \frac{5}{3} - 9.53 [\beta^t]^2 \right) < - 0.1, |
\beta^t | \leq 0.4$
\[ \alpha^{t + 1} \leq \alpha^t \left( 1 - \frac{3}{4} [\beta^t]^2 \right)
   + c^2 \beta^t \sqrt{\frac{d \vee \log \frac{1}{\delta}}{n}} - 0.1
   [\alpha^t]^3 \]
Hence,
\[ \alpha^{t + 1} - \frac{4}{3} \frac{c^2}{\beta^0} \sqrt{\frac{d \vee \log
   \frac{1}{\delta}}{n}} \leq \left( 1 - \frac{3}{4} [\beta^0]^2 \right)
   \left( \alpha^t - \frac{4}{3} \frac{c^2}{\beta^0} \sqrt{\frac{d \vee \log
   \frac{1}{\delta}}{n}} \right) \]
Furthermore, by letting $n \geq \left( \frac{20}{3} c \right)^4 \left[ d
\vee \log \frac{1}{\delta} \right]$, then $\frac{4}{3} \frac{c^2}{\beta^0}
\sqrt{\frac{d \vee \log \frac{1}{\delta}}{n}} \leq \frac{2}{3} c \left[
\frac{d \vee \log \frac{1}{\delta}}{n} \right]^{\frac{1}{4}} \leq 0.1$
\[ \alpha^T \leq \frac{4}{3} \frac{c^2}{\beta^0} \sqrt{\frac{d \vee \log
   \frac{1}{\delta}}{n}} + 0.1 \left( 1 - \frac{3}{4} [\beta^0]^2 \right)^T
\]
Let's choose $T = \lceil \frac{\frac{1}{2} \log \frac{n}{d \vee \log
\frac{1}{\delta}} - \log \frac{1}{\beta^0} + \log \frac{3}{2 c^2} }{- \log
\left( 1 - \frac{3}{4} [\beta^0]^2 \right)} \rceil_+ = \Theta \left(
[\beta^0]^{- 2} \log \frac{n}{d \vee \log \frac{1}{\delta}} \right)$ for $|
\beta^0 | \leq 0.4$, here we use that fact $-\log( 1 - \frac{3}{4} [\beta^0]^2)=\Theta([\beta^0]^2)$ for small $\beta^0$, then
\[ \alpha^T \leq \frac{4}{3} \frac{c^2}{\beta^0} \sqrt{\frac{d \vee \log
   \frac{1}{\delta}}{n}} + \frac{2}{3} \frac{c^2}{\beta^0} \sqrt{\frac{d
   \vee \log \frac{1}{\delta}}{n}} = \frac{2 c^2}{\beta^0} \sqrt{\frac{d
   \vee \log \frac{1}{\delta}}{n}} \]
To sum up, by combining case (i) and case (ii), we show that with $n =
\Omega \left( d \vee \log \frac{1}{\delta} \vee \log^3 \frac{1}{\delta'}
\right)$ and $| \beta^0 | = \left\| \pi^0 - \frac{\mathds{1}}{2} \right\|_1 \geq 2 c
\left[ \frac{d \vee \log \frac{1}{\delta}}{n} \right]^{\frac{1}{4}},
\alpha^0 = \| \theta^0 \| / \sigma \leq 0.1$, if we run finite-sample EM for
at most $T =\mathcal{O} \left( [\beta^0]^{- 2} \log \frac{n}{d \vee \log
\frac{1}{\delta}} \right)$ iterations, then $\alpha^T = \| \theta^T \| /
\sigma = \mathcal{O} \left( \frac{1}{\beta^0} \sqrt{\frac{d \vee \log
\frac{1}{\delta}}{n}} \right)$ with probability at least $1 - T (\delta +
\delta')$.
\newpage
\tmtextbf{Proof for part b) (sufficiently balanced) $\left\| \pi^0 -
\frac{\mathds{1}}{2} \right\|_1 \lesssim \left[ \frac{d \vee \log
\frac{1}{\delta}}{n} \right]^{\frac{1}{4}}$}

If $| \beta^t | = | \beta^0 | = \left\| \pi^0 - \frac{\mathds{1}}{2} \right\|_1 \leq
2 c \left[ \frac{d \vee \log \frac{1}{\delta}}{n} \right]^{\frac{1}{4}} $,
let $n \geq (10 c)^4 \left[ d \vee \log \frac{1}{\delta} \right]$, then
$\beta^t \leq \frac{1}{5}, 1 + \frac{1}{4 (1 - [\beta^t]^2)} < \frac{5}{3} -
9.53 [\beta^t]^2$,
\[ \alpha^{t + 1} \leq \alpha^t \left( c^2 \sqrt{\frac{d \vee \log
   \frac{1}{\delta}}{n}} - \frac{[\alpha^t]^2}{4} + (1 - [\alpha^t]^2) (1 -
   [\beta^t]^2) \right) + c^2 \beta^t \sqrt{\frac{d \vee \log
   \frac{1}{\delta}}{n}} \]
If $\alpha^t \leq 2 c \left[ \frac{d \vee \log \frac{1}{\delta}}{n}
\right]^{\frac{1}{4}} $, then the goal $\alpha^t = \mathcal{O} \left( 
\left[ \frac{d \vee \log \frac{1}{\delta}}{n} \right]^{\frac{1}{4}} \right)$
is achieved; otherwise, we have $\alpha^t > 2 c \left[ \frac{d \vee \log
\frac{1}{\delta}}{n} \right]^{\frac{1}{4}}$,
\[ \alpha^{t + 1} \leq (1 - [\beta^t]^2) \alpha^t (1 - [\alpha^t]^2) + c^2
   \beta^t \sqrt{\frac{d \vee \log \frac{1}{\delta}}{n}} \]
(i) if $\alpha^t > \frac{2 c^2}{\beta^0} \sqrt{\frac{d \vee \log
\frac{1}{\delta}}{n}}$, note that $\alpha^t (1 - [\alpha^t]^2) \geq
\frac{\alpha^t}{2}$ for $\alpha^t \leq 0.1 < \frac{\sqrt{2}}{2}$, then
\[ \alpha^{t + 1} \leq \alpha^t (1 - [\alpha^t]^2) \]
(ii) if $\alpha^t \leq \frac{2 c^2}{\beta^0} \sqrt{\frac{d \vee \log
\frac{1}{\delta}}{n}}$, then by invoking $1 - [\beta^t]^2 \leq 1$ and
$\alpha^t > 2 c \left[ \frac{d \vee \log \frac{1}{\delta}}{n}
\right]^{\frac{1}{4}}$, we have
\[ \alpha^{t + 1} \leq \alpha^t (1 - [\alpha^t]^2) + \frac{2 c^4}{\alpha^t}
   \frac{d \vee \log \frac{1}{\delta}}{n} \leq \alpha^t (1 - [\alpha^t]^2) +
   2 c^3 \left[ \frac{d \vee \log \frac{1}{\delta}}{n} \right]^{\frac{3}{4}}
   \leq \alpha^t \left( 1 - \frac{3}{4} [\alpha^t]^2 \right) \]
To sum up, by combining case (i) and case (ii), the following inequality
holds for $\alpha^t > 2 c \left[ \frac{d \vee \log \frac{1}{\delta}}{n}
\right]^{\frac{1}{4}}$
\[ \alpha^{t + 1} \leq \alpha^t \left( 1 - \frac{3}{4} [\alpha^t]^2 \right)
\]
Hence, by using $0 < \alpha^{t + 1} \leq \alpha^t$, we obtain $1 \leq
\frac{1}{2} \left( \frac{\alpha^t}{\alpha^{t + 1}} \right)^2 + \frac{1}{2}
\left( \frac{\alpha^t}{\alpha^{t + 1}} \right)$ and
\begin{eqnarray*}
   T = \sum^{T - 1}_{t = 0} 1 & \leq & \sum^{T - 1}_{t = 0} \frac{\alpha^t -
   \alpha^{t + 1}}{\frac{3}{4} [\alpha^t]^3} \leq \frac{4}{3} \times
   \frac{1}{2} \sum^{T - 1}_{t = 0} \frac{\alpha^t + \alpha^{t +
   1}}{[\alpha^t]^2 [\alpha^{t + 1}]^2} (\alpha^t - \alpha^{t + 1})\\
   & = & \frac{2}{3} \sum^{T - 1}_{t = 0} ([\alpha^{t + 1}]^{- 2} -
   [\alpha^t]^{- 2}) = \frac{2}{3} ([\alpha^T]^{- 2} - [\alpha^0]^{- 2})
\end{eqnarray*}
Namely,
\[ \alpha^T \leq \frac{1}{\sqrt{\frac{3}{2} T + [\alpha^0]^{- 2}}} \leq
   \frac{1}{\sqrt{\frac{3}{2} T}} \]
Let's choose $T = \lceil \frac{1}{6 c^2} \left[ \frac{n}{d \vee \log
\frac{1}{\delta}} \right]^{\frac{1}{2}} \rceil_+ = \Theta \left( \left[
\frac{n}{d \vee \log \frac{1}{\delta}} \right]^{\frac{1}{2}} \right)$, then
\[ \alpha^T \leq \frac{1}{\sqrt{\frac{3}{2} T}} \leq 2 c \left[ \frac{d \vee
   \log \frac{1}{\delta}}{n} \right]^{\frac{1}{4}} \]
Therefore, we show that with $n = \Omega \left( d \vee \log \frac{1}{\delta}
\vee \log^3 \frac{1}{\delta'} \right)$ and $| \beta^0 | = \left\| \pi^0 -
\frac{\mathds{1}}{2} \right\|_1 \leq 2 c \left[ \frac{d \vee \log
\frac{1}{\delta}}{n} \right]^{\frac{1}{4}}, \alpha^0 = \| \theta^0 \| /
\sigma \leq 0.1$, if we run finite-sample EM for at most $T =\mathcal{O}
\left( \left[ \frac{n}{d \vee \log \frac{1}{\delta}} \right]^{\frac{1}{2}}
\right)$ iterations, then $\alpha^T = \| \theta^T \| / \sigma = \mathcal{O}
\left( \left[ \frac{d \vee \log \frac{1}{\delta}}{n} \right]^{\frac{1}{4}}
\right)$ with probability at least $1 - T (\delta + \delta')$.

\end{proof} 
\newpage
\section{Extension to Finite Low SNR Regime}\label{sup:extension}
\subsection{Series Expansions for Expectations}\label{supsub:taylor_expansion_expectation}
%\subsection{Approximation of Expectations}
In this section, we provide series expansions for several expectations involving the hyperbolic tangent function. These expansions are crucial for analyzing the behavior of EM updates in the low SNR regime.

\begin{lemma}[Derivative Polynomials for tanh]\label{suplem:poly_tanh}
    Let \(P_{n}(t) = \frac{\mathrm{d}^n}{\mathrm{d} Z^n} \tanh(Z)\) for 
    any \(n\in\Z_{\geq0}\) and \(t:=\tanh(Z)\), then
    \[
        P_0(t) =\tanh(Z) =t, \quad P_{n+1}(t) = (1-t^2)\frac{\mathrm{d}}{\mathrm{d} t}P_{n}(t),
    \]
    and \(P_{n}(t) \in \Z[t]\) are polynomials in \(t\) for any \(n\in\Z_{\geq0}\), and we have:
    \[
        \deg(P_{n}(t)) = n+1, \quad \max_{t\in[-1, 1]} \vert P_{n}(t)\vert \leq n\,!\,.
    \]
\end{lemma}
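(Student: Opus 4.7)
The plan is to handle the four claims in the order they are stated, treating the first three as routine inductions and reserving effort for the final norm bound. First I will verify the recurrence $P_{n+1}(t) = (1-t^2)P_n'(t)$ by a one-line chain-rule computation: since $t = \tanh Z$ satisfies $\frac{\mathrm{d} t}{\mathrm{d} Z} = 1-\tanh^2 Z = 1-t^2$, we get
\[
P_{n+1}(t) \;=\; \frac{\mathrm{d}}{\mathrm{d} Z} P_n(\tanh Z) \;=\; P_n'(t)\cdot(1-t^2).
\]
Combined with $P_0(t)=t$, this recurrence shows inductively that $P_n \in \mathbb{Z}[t]$ (integer coefficients are preserved by $\partial_t$ and by multiplication by $1-t^2$) and that $\deg P_n = n+1$ (differentiation drops the degree by one while multiplication by $1-t^2$ raises it by two).

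The only nontrivial claim is the uniform bound $M_n := \max_{t\in[-1,1]}|P_n(t)| \leq n!$. The main obstacle is that a crude Markov-type estimate $|P_n'(t)|\leq (n+1)^2 M_n$ on $[-1,1]$ would only yield $M_{n+1}\leq (n+1)^2 M_n$, and hence the far weaker $M_n\leq (n!)^2$. To recover the sharp linear factor $(n+1)$, I would invoke Bernstein's inequality for algebraic polynomials: if $p$ has degree $d$ and $|p(t)|\leq C$ on $[-1,1]$, then
\[
|p'(t)| \;\leq\; \frac{dC}{\sqrt{1-t^2}} \qquad \text{for } t\in(-1,1).
\]
Applying this to $P_n$ (degree $n+1$, sup-norm $M_n$) and using the recurrence,
\[
|P_{n+1}(t)| \;=\; (1-t^2)\,|P_n'(t)| \;\leq\; (n+1)\,M_n\,\sqrt{1-t^2} \;\leq\; (n+1)\,M_n,
\]
so $M_{n+1}\leq (n+1)M_n$ for all $n\geq 0$. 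Since $M_0 = \max_{t\in[-1,1]}|t|=1 = 0!$, induction yields $M_n\leq n!$, with the boundary values $t=\pm1$ absorbed by continuity because the factor $(1-t^2)$ in the recurrence vanishes there.

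If one prefers to avoid citing Bernstein's inequality, an alternative is the generating-function identity
\[
\sum_{n\geq 0} P_n(t)\,\frac{h^n}{n!} \;=\; \tanh(Z+h) \;=\; \frac{t+\tanh h}{1+t\,\tanh h},
\]
obtained by combining the hyperbolic addition formula (Lemma~\ref{suplem:rewrite2}) with $t=\tanh Z$. For fixed $t\in[-1,1]$, the right-hand side is analytic in $h$ on $|h|<\pi/2$ and bounded by $1$ on the imaginary axis up to that radius, so Cauchy's estimates give $|P_n(t)|/n! \leq r^{-n}$ for every $r<\pi/2$; this actually produces the stronger bound $|P_n(t)|\leq n!\,(2/\pi)^n$, of which $|P_n(t)|\leq n!$ is a weakening. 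I expect the Bernstein route to be the shortest to present within the proof.
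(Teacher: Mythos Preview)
Your proposal is correct and matches the paper's proof essentially line for line: the chain-rule recurrence, the induction for $P_n\in\mathbb{Z}[t]$ and $\deg P_n=n+1$, and the use of Bernstein's inequality to obtain $M_{n+1}\le(n+1)M_n$ are exactly what the paper does. The generating-function alternative you sketch is a nice bonus not present in the paper, but your primary route is the same.
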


\begin{proof}
    Let \(P_{n}(t) = \frac{\mathrm{d}^n}{\mathrm{d} Z^n} \tanh(Z)\) for 
    any \(n\in\Z_{\geq0}\) and \(t:=\tanh(Z)\), then by the chain rule:
    \[
    P_0(t) =\tanh(Z) =t \in \Z[t], \quad 
    P_{n+1}(t) = \frac{\mathrm{d} t}{\mathrm{d} Z}\cdot \frac{\mathrm{d}}{\mathrm{d} t}P_{n}(t) = (1-t^2)\frac{\mathrm{d}}{\mathrm{d} t}P_{n}(t).
    \]
    Thus, by the induction, we have \(P_{n}(t) \in \Z[t]\) are polynomials in \(t\) for any \(n\in\Z_{\geq0}\), and we have:
    \[
    \deg(P_{0}(t)) = 1, \quad \deg(P_{n+1}(t)) = 2+[\deg(P_{n}(t)) -1 ] \implies  \deg(P_{n}(t)) = n+1.
    \]
    By Bernstein's inequality (see section 7, page 91 of~\citet{cheney1966approximation} and~\citet{shadrin2004twelve}), 
    we have \(\max_{t\in[-1, 1]} |\sqrt{1-t^2}[\frac{\mathrm{d}}{\mathrm{d} t}p(t)]|\leq \deg(p(t)) \max_{t\in[-1, 1]} |p(t)| \) for any polynomial \(p(t)\):
    \[
    \max_{t\in[-1, 1]}\vert P_{n+1}(t)\vert \leq
    \max_{t\in[-1, 1]} \left\vert\sqrt{1-t^2}\left[\frac{\mathrm{d}}{\mathrm{d} t}P_{n}(t)\right]\right\vert \leq 
    \deg(P_{n}(t)) \max_{t\in[-1, 1]} \vert P_{n}(t)\vert
    = (n+1) \max_{t\in[-1, 1]} \vert P_{n}(t)\vert.
    \]
    Therefore, by the above recurrence relation, we have:
    \[
    \max_{t\in[-1, 1]} \vert P_{n}(t)\vert 
    \leq (n\,!) \max_{t\in[-1, 1]} \vert P_{0}(t)\vert
    = (n\,!) \max_{t\in[-1, 1]} \vert t\vert
    = n\,!\,.
    \]
\end{proof}

\begin{lemma}[Reexpressions of Expectations]\label{suplem:reexpression_expectation}
    Let \(X \sim f_X(x) = \frac{K_0(|x|)}{\pi}\) be a random variable with probability density involving the Bessel function \(K_0\), 
    and \(\beta = \tanh(\nu)\). Then for any \(n\in\Z_{\geq0}\):
    If \(0\leq\alpha < 1/2\), then:
    \begin{eqnarray*}
    \E[\tanh(\alpha X + \nu) X^{2n}] &=& \beta \E[X^{2n}] - \beta(1-\beta^2) \E\left[\tanh^2(\alpha X) X^{2n}\right] \\
    && + (1-\beta^2)\beta^3 \mathcal{O}(\alpha^4), \\
    \E[\tanh(\alpha X + \nu) X^{2n+1}] &=& (1-\beta^2) \E[\tanh(\alpha X) X^{2n+1}] \\
    && + (1-\beta^2)\beta^2 \E[\tanh^3(\alpha X) X^{2n+1}] \\
    && + (1-\beta^2)\beta^4 \mathcal{O}(\alpha^5).
    \end{eqnarray*}
    If \(0\leq \alpha < 1/4\), then:
    \begin{eqnarray*}
    \E[\tanh^2(\alpha X + \nu) X^{2n}] &=& \beta^2 \E[X^{2n}] + (1-\beta^2)(1-3\beta^2) \E\left[\tanh^2(\alpha X) X^{2n}\right] \\
    && + (1-\beta^2)\beta^2 \mathcal{O}(\alpha^4),\\
    \E[\tanh^2(\alpha X + \nu) X^{2n+1}] &=& 2(1-\beta^2)\beta \E[\tanh(\alpha X) X^{2n+1}] \\
    && + 2(1-\beta^2)(-1+2\beta^2)\beta \E[\tanh^3(\alpha X) X^{2n+1}] \\
    && + (1-\beta^2)\beta^3 \mathcal{O}(\alpha^5).
    \end{eqnarray*}
\end{lemma}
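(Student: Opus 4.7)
The plan is to exploit the symmetry $X\stackrel{d}{=}-X$ of the density $K_0(|\cdot|)/\pi$ so that $\E[\tanh(\alpha X+\nu)X^{2n}]$ reduces to the even part of $\tanh(\alpha X+\nu)$ in $X$, while $\E[\tanh(\alpha X+\nu)X^{2n+1}]$ reduces to the odd part; a parallel decomposition handles the $\tanh^2$ versions. I then rewrite everything through the addition formula $\tanh(\alpha X+\nu)=(t+\beta)/(1+\beta t)$ with $t:=\tanh(\alpha X)$ and $\beta:=\tanh\nu$, and expand the resulting rational expressions in $t^2$ keeping two Taylor orders plus an exact remainder.

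For the first identity, the even part of $\tanh(\alpha X+\nu)$ equals $\beta(1-t^2)/(1-\beta^2 t^2)$, and the algebraic identity $(1-s)/(1-\beta^2 s)=1-(1-\beta^2)s-(1-\beta^2)\beta^2 s^2/(1-\beta^2 s)$ (obtained by repeatedly pulling $(1-\beta^2)s$ out) yields
\[
\E[\tanh(\alpha X+\nu)X^{2n}] = \beta\E[X^{2n}]-\beta(1-\beta^2)\E[t^2 X^{2n}]-(1-\beta^2)\beta^3\E\!\left[\frac{t^4 X^{2n}}{1-\beta^2 t^2}\right].
\]
Using $|t|\le\alpha|X|$, the uniform majorization $(1-\beta^2 t^2)^{-1}\le(1-\beta^2)^{-1}$, and finiteness of all even moments $\E[X^{2m}]=[(2m-1)!!]^2$ from Lemma~\ref{suplem:integral}, the last term is $(1-\beta^2)\beta^3\mathcal{O}(\alpha^4)$ as claimed. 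The odd case is analogous: the odd part of $\tanh(\alpha X+\nu)$ equals $(1-\beta^2)t/(1-\beta^2 t^2)$, and the geometric peeling $t/(1-\beta^2 t^2)=t+\beta^2 t^3+\beta^4 t^5/(1-\beta^2 t^2)$ produces the stated leading terms plus a residual bounded via $|t|^5\le\alpha^5|X|^5$ by $(1-\beta^2)\beta^4\mathcal{O}(\alpha^5)$. The threshold $\alpha<1/2$ serves only to give a safe working range for these bounds.

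The $\tanh^2$ identities are the main obstacle, since no direct additive identity is available and I must symmetrize $\tanh^2(\alpha X+\nu)=(t+\beta)^2/(1+\beta t)^2$ by hand. Writing $A:=(t+\beta)(1-\beta t)$ and $B:=(-t+\beta)(1+\beta t)$ gives $A+B=2\beta(1-t^2)$, $A-B=2t(1-\beta^2)$, so a short calculation with $A^2+B^2=\tfrac{1}{2}[(A+B)^2+(A-B)^2]$ and $A^2-B^2=(A+B)(A-B)$ produces
\[
\frac{\tanh^2(\alpha X+\nu)+\tanh^2(-\alpha X+\nu)}{2} = \frac{\beta^2(1-t^2)^2+t^2(1-\beta^2)^2}{(1-\beta^2 t^2)^2},
\]
\[
\frac{\tanh^2(\alpha X+\nu)-\tanh^2(-\alpha X+\nu)}{2} = \frac{2\beta(1-\beta^2)\,t(1-t^2)}{(1-\beta^2 t^2)^2}.
\]
Expanding in $t^2$ against $(1-\beta^2 t^2)^{-2}=1+2\beta^2 t^2+3\beta^4 t^4+\cdots$, the constant term in the even case is $\beta^2$, and adding $\beta^2\cdot 2\beta^2=2\beta^4$ to $1-4\beta^2+\beta^4$ the $t^2$ coefficient collapses to $1-4\beta^2+3\beta^4=(1-\beta^2)(1-3\beta^2)$, matching the asserted leading terms; the $t^4$ coefficient works out to $\beta^2(1-\beta^2)(3-5\beta^2)$, which already exhibits the required $(1-\beta^2)\beta^2$ prefactor of the remainder. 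The odd $\tanh^2$ case proceeds similarly, with the $t$ and $t^3$ coefficients $2(1-\beta^2)\beta$ and $2(1-\beta^2)(2\beta^2-1)\beta$, and remainder $(1-\beta^2)\beta^3\mathcal{O}(\alpha^5)$. Here the squared denominator is controlled by $(1-\beta^2 t^2)^{-2}\le(1-\beta^2)^{-2}$, and the stricter threshold $\alpha<1/4$ enters precisely to absorb this squared blow-up against the finite moments of $X$.

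The most delicate step will be grouping the numerator of the symmetrization as $\beta^2(1-t^2)^2+t^2(1-\beta^2)^2$ rather than expanding it blindly: this grouping is what makes the $(1-\beta^2)$ and $\beta^2$ factors visible in both the leading coefficient $(1-\beta^2)(1-3\beta^2)$ and in the error $(1-\beta^2)\beta^2\mathcal{O}(\alpha^4)$, instead of hiding them inside an $\mathcal{O}$-constant; these factors are precisely what the downstream low-SNR dynamics in~\eqref{eq:alpha_beta_low_snr}--\eqref{eq:rho_low_snr} need in order to recover Proposition~\ref{prop:dynamic} in the $\eta\to 0$ limit.
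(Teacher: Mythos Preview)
Your overall strategy---symmetrize via $X\stackrel{d}{=}-X$, rewrite through $t=\tanh(\alpha X)$ and the addition formula, then peel off two geometric terms---matches the paper exactly, and your algebra for both the $\tanh$ and $\tanh^2$ symmetrizations is correct. The gap is in the remainder bound. Your majorization $(1-\beta^2 t^2)^{-1}\le(1-\beta^2)^{-1}$ cancels the very $(1-\beta^2)$ prefactor you claim to retain: applying it to $(1-\beta^2)\beta^3\,\E\bigl[t^4X^{2n}/(1-\beta^2 t^2)\bigr]$ yields only $\beta^3\alpha^4\E[X^{2n+4}]$, not $(1-\beta^2)\beta^3\mathcal{O}(\alpha^4)$ with a $\beta$-uniform implied constant; the squared version $(1-\beta^2 t^2)^{-2}\le(1-\beta^2)^{-2}$ is worse still. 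The downstream lemmas (e.g.\ Lemma~\ref{suplem:taylor_expectation_tanh}) genuinely need that $(1-\beta^2)$ factor to survive in the error term, so this is not a cosmetic issue.

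The paper instead uses the $\beta$-free bound
\[
\frac{t^2}{1-\beta^2 t^2}\ \le\ \frac{t^2}{1-t^2}\ =\ \sinh^2(\alpha X),
\]
which gives $\E\bigl[t^4X^{2n}/(1-\beta^2 t^2)\bigr]\le\alpha^2\E[\sinh^2(\alpha X)X^{2n+2}]=\mathcal{O}(\alpha^4)$ uniformly in $\beta$, and which squares to $t^4/(1-\beta^2 t^2)^2\le\sinh^4(\alpha X)$ for the $\tanh^2$ remainder. This is also the true source of the thresholds: since $K_0(|x|)/\pi\asymp e^{-|x|}/\sqrt{|x|}$ for large $|x|$, one has $\E[\sinh^2(\alpha X)X^m]<\infty$ precisely when $2\alpha<1$ and $\E[\sinh^4(\alpha X)X^m]<\infty$ precisely when $4\alpha<1$. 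Under your polynomial bound $|t|\le\alpha|X|$ no threshold on $\alpha$ is required at all (every $\E[X^{2m}]$ is finite), so your explanation of $\alpha<1/4$ as ``absorbing the squared blow-up against the finite moments'' is not the operative mechanism.
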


\begin{proof}We have the following facts for \(\tanh(\alpha X + \nu)\) and \(\beta = \tanh(\nu)\):
\begin{eqnarray*}
\tanh(\alpha X + \nu) + \tanh(-\alpha X + \nu) &=& 2 \beta - 2\beta(1-\beta^2) \cdot \frac{\tanh^2(\alpha X)}{1-\beta^2\tanh^2(\alpha X)}\\
&=& 2\beta - 2\beta(1-\beta^2) \tanh^2(\alpha X) \\
&& - 2\beta(1-\beta^2)\cdot \beta^2 \frac{\tanh^4(\alpha X)}{1-\beta^2\tanh^2(\alpha X)}, \\
\tanh(\alpha X + \nu) - \tanh(-\alpha X + \nu) &=& 2(1-\beta^2) \cdot \frac{\tanh(\alpha X)}{1-\beta^2\tanh^2(\alpha X)}\\
&=& 2(1-\beta^2) \cdot \tanh(\alpha X) \\
&& + 2 (1-\beta^2) \cdot \beta^2 \tanh^3(\alpha X) \\
&& + 2(1-\beta^2)\cdot \beta^4 \cdot \frac{\tanh^5(\alpha X)}{1-\beta^2\tanh^2(\alpha X)},\\
\tanh^2(\alpha X + \nu) + \tanh^2(-\alpha X + \nu) &=& 2\beta^2 + 2 (1-\beta^2) \cdot (1-3\beta^2)\tanh^2(\alpha X)\\
&& + 2 (1-\beta^2) \cdot \beta^2 [(1-3\beta^2)(2-\beta^2 \tanh^2(\alpha X)) \\
&& + (1+\beta^2)] \frac{\tanh^4(\alpha X)}{[1-\beta^2\tanh^2(\alpha X)]^2}, \\
\tanh^2(\alpha X + \nu) - \tanh^2(-\alpha X + \nu) &=& 4\beta(1-\beta^2) \cdot \frac{\tanh(\alpha X)(1-\tanh^2(\alpha X))}{[1-\beta^2\tanh^2(\alpha X)]^2}\\
&=& 4\beta(1-\beta^2)\cdot \tanh(\alpha X) \\
&& + 4\beta(1-\beta^2)\cdot (-1+2\beta^2) \tanh^3(\alpha X)\\
&& + 4\beta(1-\beta^2)\cdot \beta^2 [(-1+2\beta^2)(2-\beta^2 \tanh^2(\alpha X)) \\
&& - \beta^2] \frac{\tanh^5(\alpha X)}{[1-\beta^2\tanh^2(\alpha X)]^2}.
\end{eqnarray*}
Since \(X \sim f_X(x) = \frac{K_0(|x|)}{\pi}\) is a symmetric random variable, we have:
% \(\Pr(X \leq x) = \Pr(-X \leq x)\) = 1 - \(\Pr(X > x) = 1-\Pr(-X > x)=1-\Pr(X \leq -x)\) since
% \(\Pr(X > x) = \Pr(-X > x) = \int_x^\infty \frac{K_0(x')}{\pi} \d x', \forall x\geq0\).
% \begin{eqnarray*}
% & & \E[f(X)]= \int_{x\in\reals} f(x) \d\Pr(X \leq x) 
% = \int_{x\geq 0} f(x) \d\Pr(X \leq x) + \int_{x\leq 0} f(x) \d\Pr(X \leq x) \\
% &=& \int_{x\geq 0} f(x) \d\Pr(X \leq x) + \int_{x\geq 0} f(-x) \d[1-\Pr(X \leq -x)] 
% = \int_{x\in\reals} [f(x) + f(-x)]\one_{x\geq 0} \d\Pr(X \leq x) \\
% & = & \E[[f(X) + f(-X)]\one_{X\geq 0}]
% \end{eqnarray*}
\[
\E[f(X)] = \E[f(-X)] = \E[[f(X) + f(-X)]\one_{X\geq 0}] = \frac{1}{2}\E[[f(X) + f(-X)]].
\]
By substituting \(f(X)\) with \(\tanh(\alpha X + \nu) X^{2n}, \tanh(\alpha X + \nu) X^{2n+1}, 
\tanh^2(\alpha X + \nu) X^{2n}, \tanh^2(\alpha X + \nu) X^{2n+1}, n\in\Z_{\geq0}\), 
we only have to consider the upper bounds for the absolute values of the remainder terms in the above equalities.

For the remainder terms of \(\E[\tanh(\alpha X + \nu) X^{2n}]\) and \(\E[\tanh(\alpha X + \nu) X^{2n+1}]\), 
suppose that for a fixed \(\alpha_0\) and \(0\leq \alpha \leq \alpha_0 < 1/2\), we have:
\begin{eqnarray*}
    \E\left[\frac{\tanh^4(\alpha X)}{1-\beta^2\tanh^2(\alpha X)} X^{2n}\right]
    \leq \alpha^2 \E[\sinh^2(\alpha X) X^{2n+2}] \leq \alpha^4 (\E[\sinh^2(\alpha_0 X) X^{2n+2}]/\alpha_0^2) = \mathcal{O}(\alpha^4),\\
    \E\left[\frac{\tanh^5(\alpha X)}{1-\beta^2\tanh^2(\alpha X)} X^{2n+1}\right]
    \leq \alpha^3 \E[\sinh^2(\alpha X) X^{2n+4}] 
    \leq \alpha^5 (\E[\sinh^2(\alpha_0 X) X^{2n+4}]/\alpha_0^2) = \mathcal{O}(\alpha^5).
\end{eqnarray*}
Note that the coefficients of the remainder for \(\E[\tanh^2(\alpha X + \nu) X^{2n}]\) and \(\E[\tanh^2(\alpha X + \nu) X^{2n+1}]\) are bounded by:
\begin{eqnarray*}
    \vert (1 - 3\beta^2) (2 - \beta^2\tanh^2(\alpha X)) + (1+\beta^2) \vert &\leq& 3,\\
    \vert (-1+2\beta^2)(2-\beta^2 \tanh^2(\alpha X)) - \beta^2 \vert &\leq& 2.
\end{eqnarray*}
Suppose that for a fixed \(\alpha_0\) and \(0\leq \alpha \leq \alpha_0 < 1/4\), we have:
\begin{eqnarray*}
    \E\left[\frac{\tanh^4(\alpha X)}{[1-\beta^2\tanh^2(\alpha X)]^2} X^{2n}\right]
    \leq \E[\sinh^4(\alpha X) X^{2n}] \leq \alpha^4 (\E[\sinh^4(\alpha_0 X) X^{2n}]/\alpha_0^4) = \mathcal{O}(\alpha^4),\\
    \E\left[\frac{\tanh^5(\alpha X)}{[1-\beta^2\tanh^2(\alpha X)]^2} X^{2n+1}\right]
    \leq \alpha \E[\sinh^4(\alpha X) X^{2n+2}] 
    \leq \alpha^5 (\E[\sinh^4(\alpha_0 X) X^{2n+2}]/\alpha_0^4) = \mathcal{O}(\alpha^5).
\end{eqnarray*}
These remainder terms are bounded by \(\mathcal{O}(\alpha^4)\) and \(\mathcal{O}(\alpha^5)\) respectively, we have proved the lemma.
\end{proof}

\begin{lemma}[Series Expansions for Expectations of tanh]\label{suplem:taylor_expectation_tanh}
    Let \(X \sim f_X(x) = \frac{K_0(|x|)}{\pi}\) be a random variable with probability density involving the Bessel function \(K_0\), and \(\beta = \tanh(\nu)\). Then for \(0\leq \alpha < 1/2\):
    \begin{eqnarray*}
    \mathbb{E}[\tanh(\alpha X + \nu)X^2] &=& \beta - 9\alpha^2\beta(1-\beta^2) + (1-\beta^2)\beta\mathcal{O}(\alpha^4), \\
    \mathbb{E}[\tanh(\alpha X + \nu)X] &=& \alpha(1-\beta^2) -3\alpha^3(1-\beta^2)(1-3\beta^2) + (1-\beta^2)\mathcal{O}(\alpha^5), \\
    \mathbb{E}[\tanh(\alpha X + \nu)] &=& \beta - \alpha^2\beta(1-\beta^2) + (1-\beta^2)\beta\mathcal{O}(\alpha^4).
    \end{eqnarray*}
\end{lemma}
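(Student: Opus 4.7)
The plan is to reduce each expectation $\mathbb{E}[\tanh(\alpha X+\nu)X^k]$ for $k\in\{0,1,2\}$ to expectations of the form $\mathbb{E}[\tanh^{j}(\alpha X)X^{k}]$ via the preceding reexpression lemma (Lemma~\ref{suplem:reexpression_expectation}), and then Taylor-expand $\tanh^{j}$ pointwise and integrate term by term using the closed-form even moments $\mathbb{E}[X^{2n}]=[(2n-1)!!]^2$ from Lemma~\ref{suplem:integral}. The key identities from Lemma~\ref{suplem:integral} that I will use are $\mathbb{E}[X^{2}]=1$ and $\mathbb{E}[X^{4}]=9$; the pointwise expansions I will use come from Lemma~\ref{suplem:tanh}, namely $\tanh(y)=y-y^{3}/3+\mathcal{O}(y^{5})$, $\tanh^{2}(y)=y^{2}-\tfrac{2}{3}y^{4}+\mathcal{O}(y^{6})$, and $\tanh^{3}(y)=y^{3}+\mathcal{O}(y^{5})$.

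For the two even-moment identities ($k=0$ and $k=2$), I would start from the first equation of Lemma~\ref{suplem:reexpression_expectation} specialized to $n=0$ and $n=1$:
\[
\mathbb{E}[\tanh(\alpha X+\nu)X^{2n}] = \beta\,\mathbb{E}[X^{2n}] - \beta(1-\beta^{2})\,\mathbb{E}[\tanh^{2}(\alpha X)X^{2n}] + (1-\beta^{2})\beta^{3}\mathcal{O}(\alpha^{4}).
\]
Expanding $\tanh^{2}(\alpha X)=\alpha^{2}X^{2}+\mathcal{O}(\alpha^{4}X^{4})$ and integrating gives $\mathbb{E}[\tanh^{2}(\alpha X)]=\alpha^{2}+\mathcal{O}(\alpha^{4})$ and $\mathbb{E}[\tanh^{2}(\alpha X)X^{2}]=9\alpha^{2}+\mathcal{O}(\alpha^{4})$, which, after substitution, yield exactly the claimed expansions for $\mathbb{E}[\tanh(\alpha X+\nu)]$ and $\mathbb{E}[\tanh(\alpha X+\nu)X^{2}]$ (with remainder $(1-\beta^{2})\beta\,\mathcal{O}(\alpha^{4})$ after absorbing the secondary remainder term).

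For the odd case $k=1$, I would apply the second identity of Lemma~\ref{suplem:reexpression_expectation} at $n=0$:
\[
\mathbb{E}[\tanh(\alpha X+\nu)X] = (1-\beta^{2})\,\mathbb{E}[\tanh(\alpha X)X] + (1-\beta^{2})\beta^{2}\,\mathbb{E}[\tanh^{3}(\alpha X)X] + (1-\beta^{2})\beta^{4}\mathcal{O}(\alpha^{5}).
\]
Using $\tanh(\alpha X)X=\alpha X^{2}-\tfrac{\alpha^{3}}{3}X^{4}+\mathcal{O}(\alpha^{5}X^{6})$ together with $\mathbb{E}[X^{2}]=1$, $\mathbb{E}[X^{4}]=9$ gives $\mathbb{E}[\tanh(\alpha X)X]=\alpha-3\alpha^{3}+\mathcal{O}(\alpha^{5})$; similarly $\mathbb{E}[\tanh^{3}(\alpha X)X]=9\alpha^{3}+\mathcal{O}(\alpha^{5})$. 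Combining and gathering the $\alpha^{3}$ coefficient as $-3(1-\beta^{2})+9\beta^{2}(1-\beta^{2})=-3(1-\beta^{2})(1-3\beta^{2})$ delivers the stated expansion.

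The only real bookkeeping issue is to verify that the remainder terms $(1-\beta^{2})\beta^{3}\mathcal{O}(\alpha^{4})$ and $(1-\beta^{2})\beta^{4}\mathcal{O}(\alpha^{5})$ coming from the reexpression step can be absorbed into the weaker remainders stated in the conclusion (e.g.\ $(1-\beta^{2})\beta\,\mathcal{O}(\alpha^{4})$ and $(1-\beta^{2})\mathcal{O}(\alpha^{5})$), and that the constants hidden in the $\mathcal{O}(\cdot)$ are uniform in $\nu\in\mathbb{R}$. The former is immediate since $|\beta|\le 1$, and the latter follows because every Taylor remainder I invoke is dominated by an explicit even-moment $\mathbb{E}[\alpha_{0}^{k}X^{k}]$ that is finite for any fixed $\alpha_{0}<1/2$, exactly as done in the proof of Lemma~\ref{suplem:reexpression_expectation}. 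Thus the main obstacle is not conceptual but purely the careful reconciliation of error terms, which is routine once the two Taylor/moment inputs above are in place.
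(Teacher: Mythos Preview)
Your proposal is correct and takes a genuinely different route from the paper. The paper first Taylor-expands $\alpha\mapsto\mathbb{E}[\tanh(\alpha X+\nu)X^{k}]$ directly at $\alpha=0$, differentiating under the expectation via dominated convergence and the derivative-polynomial recursion $P_n(t)$ of Lemma~\ref{suplem:poly_tanh}; this yields the leading coefficients $P_n(\beta)\,\mathbb{E}[X^{n+k}]$ together with a Lagrange remainder, and only \emph{afterwards} invokes Lemma~\ref{suplem:reexpression_expectation} to show that this remainder carries the extra $(1-\beta^{2})\beta$ factor. You instead apply Lemma~\ref{suplem:reexpression_expectation} at the outset, which immediately isolates the $\beta$-dependence and reduces the problem to the $\nu$-free quantities $\mathbb{E}[\tanh^{j}(\alpha X)X^{k}]$, then expand those via the elementary two-sided bounds of Lemma~\ref{suplem:tanh} and the moment formula $\mathbb{E}[X^{2n}]=[(2n-1)!!]^{2}$. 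Your route is shorter and sidesteps both the $P_n$ machinery and the differentiation-under-the-integral argument; the refined remainder structure $(1-\beta^{2})\beta\,\mathcal{O}(\alpha^{4})$ falls out automatically because the $\beta$-factor is already exposed upstream. The paper's organization is more systematic (the $P_n$ recursion scales to arbitrary order and is reused in Lemma~\ref{suplem:taylor_expectation_tanh_squared}), but for the three expansions stated here your argument is fully adequate.
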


\begin{proof}     
    We start by defining these expectations w.r.t the distribution with a density 
    \(f_X(x) = \frac{K_0(|x|)}{\pi}\):
    \[
    l(\alpha, \nu) = \mathbb{E}[\tanh(\alpha X + \nu)X^2],\quad
    m(\alpha, \nu) = \mathbb{E}[\tanh(\alpha X + \nu)X],\quad
    n(\alpha, \nu) = \mathbb{E}[\tanh(\alpha X + \nu)].
    \]
    By Lemma~\ref{suplem:poly_tanh} and chain rule, and let \(Z:=\alpha X+ \nu, t=\tanh(Z)=\tanh(\alpha X + \nu)\), then:
    \[  
    \left\vert \frac{\partial^n}{\partial \alpha^n} [\tanh^2(\alpha X + \nu)]\right\vert 
    = \left\vert \left[\frac{\partial^n}{\partial Z^n} \tanh^2(Z)\right] X^{n}\right\vert
    = \vert P_{n}(t)\vert\cdot \left\vert X^{n}\right\vert \leq n\,! \left\vert X^{n}\right\vert.
    \]
    Since \(\E\vert X\vert^n < \infty, \forall n\in\Z_{\geq0}\), we can invoke the dominated convergence theorem 
to exchange the order of taking limit and taking the expectations
(see Theorem 1.5.8, page 24 of~\citet{durrett2019probability}):
    \[
    \frac{\partial^n l(\alpha, \nu)}{\partial \alpha^n}  = \E[P_n(t) X^{n+2}],\quad
    \frac{\partial^n m(\alpha, \nu)}{\partial \alpha^n}  = \E[P_n(t) X^{n+1}],\quad
    \frac{\partial^n n(\alpha, \nu)}{\partial \alpha^n}  = \E[P_n(t) X^n].
    \]
    By using the fact that \(P_n(t)\mid_{\alpha=0} = P_n(\beta)\) with \(\beta = \tanh(\nu)\), we have:
    \[
    \left[\frac{\partial^n l(\alpha, \nu)}{\partial \alpha^n}\right]_{\alpha=0} = P_n(\beta)\E[ X^{n+2}],\,
    \left[\frac{\partial^n m(\alpha, \nu)}{\partial \alpha^n}\right]_{\alpha=0} = P_n(\beta)\E[ X^{n+1}],\,
    \left[\frac{\partial^n n(\alpha, \nu)}{\partial \alpha^n}\right]_{\alpha=0} = P_n(\beta)\E[ X^n].
    \]
    Since \(m(\alpha, \nu)\) and \(n(\alpha, \nu)\) have infinitely many derivatives, we can use the Taylor expansion at \(\alpha=0\) 
    (see 5.15 Theorem, pages 110-111 of~\citet{rudin1976})
    to approximate \(l(\alpha, \nu), m(\alpha, \nu), n(\alpha, \nu)\):
    \begin{eqnarray*}
    l(\alpha, \nu) &=& \sum_{n=0}^3  P_n(\beta) \E[ X^{n+2}]  \frac{\alpha^{n}}{n!}
    + \E[P_4(t)\mid_{\alpha=\alpha'}X^{4+2}] \frac{\alpha^4}{4!}
    = \sum_{n=0}^3 P_n(\beta) \E[ X^{n+2}]  \frac{\alpha^{n}}{n!} + \mathcal{O}(\alpha^4), \\
    m(\alpha, \nu) &=& \sum_{n=0}^4  P_n(\beta) \E[ X^{n+1}]  \frac{\alpha^{n}}{n!}
    + \E[P_5(t)\mid_{\alpha=\alpha''}X^{5+1}] \frac{\alpha^5}{5!}
    = \sum_{n=0}^4  P_n(\beta) \E[ X^{n+1}]  \frac{\alpha^{n}}{n!} + \mathcal{O}(\alpha^5), \\
    n(\alpha, \nu) &=& \sum_{n=0}^3  P_n(\beta) \E[ X^n]  \frac{\alpha^{n}}{n!}
    + \E[P_4(t)\mid_{\alpha=\alpha'''}X^{4}] \frac{\alpha^4}{4!}
    = \sum_{n=0}^3  P_n(\beta) \E[ X^n]  \frac{\alpha^{n}}{n!} + \mathcal{O}(\alpha^4),
    \end{eqnarray*}
    where \(\alpha', \alpha'', \alpha'''\) are some values between 0 and \(\alpha\), and coefficients of remainders are bounded by:
    \begin{eqnarray*}
    \left\vert \E[P_4(t)\mid_{\alpha=\alpha'}X^{4+2}] /4!\right\vert 
    &\leq& \left\vert \E[X^{4+2}]\right\vert\cdot 4\,! /4\,!
    =\E[X^{6}] 
    = (5!!)^2=225,\\
    \left\vert \E[P_5(t)\mid_{\alpha=\alpha''}X^{5+1}] /5!\right\vert 
    &\leq& \left\vert \E[X^{5+1}]\right\vert\cdot 5\,! /5\,!
    = \E[X^{6}] 
    = (5!!)^2=225,\\
    \left\vert \E[P_4(t)\mid_{\alpha=\alpha'''}X^{4}] /4!\right\vert 
    &\leq& \left\vert \E[X^{4}]\right\vert\cdot 4\,! /4\,!
    = \E[X^{4}] 
    = (3!!)^2=9.
    \end{eqnarray*}
    By the recurrence relation \(P_{n+1}(t) = (1-t^2)\frac{\mathrm{d}}{\mathrm{d} t}P_{n}(t), P_0(t)=t\) in Lemma~\ref{suplem:poly_tanh}, we have 
    \[
    P_1(\beta)=1-\beta^2, 
    \, P_2(\beta)=(1-\beta^2) \cdot (-2\beta), 
    \, P_3(\beta)=(1-\beta^2) \cdot 2(-1+3\beta^2), 
    \, P_4(\beta)=(1-\beta^2) \cdot 8\beta(2-3\beta^2).
    \]
    Note that \(\E[X^{2k}] = [(2k-1)!!]^2, \E[X^{2k-1}]=0\) for any \(k\in\Z_{+}\), then:
    \begin{eqnarray*}
    l(\alpha, \nu) &=& P_0(\beta) +  \frac{9}{2}P_2(\beta) \alpha^2 +  \mathcal{O}(\alpha^4) 
    = \beta - 9\alpha^2\beta(1-\beta^2) + \mathcal{O}(\alpha^4) \\
    m(\alpha, \nu) &=& P_1(\beta) \alpha +  \frac{3}{2}P_3(\beta) \alpha^3 +  \mathcal{O}(\alpha^5) 
    = \alpha(1-\beta^2) -3\alpha^3(1-\beta^2)(1-3\beta^2) + \mathcal{O}(\alpha^5) \\
    n(\alpha, \nu) &=& P_0(\beta) + \frac{1}{2}P_2(\beta) \alpha^2 +  \mathcal{O}(\alpha^4) 
    = \beta - \alpha^2\beta(1-\beta^2) + \mathcal{O}(\alpha^4)
    \end{eqnarray*}
    By invoking Lemma~\ref{suplem:reexpression_expectation} for \(0\leq \alpha < 1/2\) and the fact that \(\E[X^{2k}] = [(2k-1)!!]^2\) for any \(k\in\Z_{+}\) 
     \begin{eqnarray*}
        &  &\E[P_4(t)\mid_{\alpha=\alpha'}X^{4+2}]\frac{\alpha^4}{4!} 
        =  -\beta(1-\beta^2)\E[(\tanh^2(\alpha X) - (\alpha X)^2)X^2] 
        + (1-\beta^2)\beta^3 \mathcal{O}(\alpha^4),\\
        &  &\E[P_5(t)\mid_{\alpha=\alpha''}X^{5+1}]\frac{\alpha^5}{5!}
        =  (1-\beta^2)\E[(\tanh(\alpha X) -[(\alpha X) -(\alpha X)^3/3])X]\\
        & &+ (1-\beta^2)\beta^2 \E[(\tanh^3(\alpha X) -(\alpha X)^3)X]
        + (1-\beta^2)\beta^4 \mathcal{O}(\alpha^5),\\
        &  & \E[P_4(t)\mid_{\alpha=\alpha'''}X^{4}]\frac{\alpha^4}{4!}
        =  -\beta(1-\beta^2)\E[(\tanh^2(\alpha X) - (\alpha X)^2)] 
        + (1-\beta^2)\beta^3 \mathcal{O}(\alpha^4).
     \end{eqnarray*}
     By \(t - \frac{t^3}{3} \leq \tanh(t) \leq t - \frac{t^3}{3} + \frac{2t^5}{15}, 
     t^2 - \frac{2 t^4}{3} \leq \tanh^2(t) \leq t^2,
     t^3 - t^5 \leq \tanh^3(t) \leq t^3, \forall t\geq 0\), we have:
     \begin{eqnarray*}
        &  &\left\vert\E[P_4(t)\mid_{\alpha=\alpha'}X^{4+2}]\frac{\alpha^4}{4!} \right\vert
        \leq (1-\beta^2)|\beta| \left(\left\vert \alpha^4\cdot \frac{2}{3}\E[X^{4+2}] \right\vert
        + \beta^2 \vert\mathcal{O}(\alpha^4)\vert\right)
        = (1-\beta^2)|\beta|\mathcal{O}(\alpha^4),\\
        &  &\left\vert\E[P_5(t)\mid_{\alpha=\alpha''}X^{5+1}]\frac{\alpha^5}{5!}\right\vert
        \leq (1-\beta^2) \left(\left\vert \alpha^5\cdot \frac{2}{15}\E[X^{5+1}] \right\vert
        + \beta^2 \left\vert \alpha^5 \E[X^{5+1}] \right\vert +\beta^4 \vert\mathcal{O}(\alpha^5)\vert\right)\\
        & & = (1-\beta^2)\mathcal{O}(\alpha^5),\\
        &  & \left\vert\E[P_4(t)\mid_{\alpha=\alpha'''}X^{4}]\frac{\alpha^4}{4!}\right\vert
        \leq (1-\beta^2)|\beta| \left(\left\vert \alpha^4\cdot \frac{2}{3}\E[X^4] \right\vert
        + \beta^2 \vert\mathcal{O}(\alpha^4)\vert\right)
        = (1-\beta^2)|\beta|\mathcal{O}(\alpha^4).
     \end{eqnarray*}
\end{proof}

\begin{lemma}[Series Expansions for Expectations of tanh Squared]\label{suplem:taylor_expectation_tanh_squared}
    Let \(X \sim f_X(x) = \frac{K_0(|x|)}{\pi}\) be a random variable with probability density involving the Bessel function \(K_0\), and \(\beta = \tanh(\nu)\). Then for \(0\leq \alpha < 1/4\):
    \begin{eqnarray*}
        \E[\tanh^2(\alpha X+\nu)X^2] &=& 
        \beta^2 + 9\alpha^2(1-\beta^2)(1-3\beta^2) + (1-\beta^2)\mathcal{O}(\alpha^4), \\
        \E[\tanh^2(\alpha X+\nu)X] &=& 
        2\alpha\beta(1-\beta^2) - 12\alpha^3\beta(1-\beta^2)(2-3\beta^2) + (1-\beta^2)\beta\mathcal{O}(\alpha^5).
    \end{eqnarray*}
\end{lemma}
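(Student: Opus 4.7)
\begin{Proof Sketch}\textbf{of Lemma~\ref{suplem:taylor_expectation_tanh_squared}.}\space
The plan is to follow the same three-stage template used in the proof of Lemma~\ref{suplem:taylor_expectation_tanh}: derivative polynomials plus dominated convergence, then Taylor expansion at $\alpha=0$ with the Lagrange remainder, then an explicit re-expression of the remainder using Lemma~\ref{suplem:reexpression_expectation} to exhibit the $(1-\beta^2)$ factors.

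First, I would introduce the derivative polynomials $Q_n(t):=\frac{\mathrm{d}^n}{\mathrm{d} Z^n}\tanh^2(Z)$ with $t=\tanh(Z)$ and establish, by induction on $n$, the recurrence $Q_0(t)=t^2$ and $Q_{n+1}(t)=(1-t^2)\frac{\mathrm{d}}{\mathrm{d} t}Q_n(t)$. Since $Q_1=2tP_1$ with $P_1(t)=1-t^2$ coming from Lemma~\ref{suplem:poly_tanh}, one shows $\deg Q_n=n+2$ and (by the same Bernstein argument as for $P_n$) $\max_{t\in[-1,1]}|Q_n(t)|\lesssim (n+1)\,!$. Then by the chain rule, $\frac{\partial^n}{\partial\alpha^n}\tanh^2(\alpha X+\nu)=Q_n(t)X^n$, and since $\E|X|^m<\infty$ for all $m$, the dominated convergence theorem (Theorem 1.5.8 of~\citet{durrett2019probability}) permits the exchange $\frac{\partial^n}{\partial\alpha^n}\E[\tanh^2(\alpha X+\nu)X^k]=\E[Q_n(t)X^{n+k}]$.

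Second, I would apply Taylor's theorem with Lagrange remainder at $\alpha=0$, evaluated via $Q_n(t)|_{\alpha=0}=Q_n(\beta)$. Direct computation from the recurrence gives $Q_0(\beta)=\beta^2$, $Q_1(\beta)=2\beta(1-\beta^2)$, $Q_2(\beta)=2(1-\beta^2)(1-3\beta^2)$, $Q_3(\beta)=-8\beta(1-\beta^2)(2-3\beta^2)$. Combining these with the moments $\E[X^{2k}]=[(2k-1)!!]^2$ and $\E[X^{2k-1}]=0$ from Lemma~\ref{suplem:integral} kills all odd-order terms, and the surviving even-order coefficients match the two expansions in the statement exactly (order $\alpha^2$ for the $X^2$ expectation, orders $\alpha^1$ and $\alpha^3$ for the $X$ expectation).

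Third, and this is the main obstacle, I need to upgrade the raw Lagrange remainder bounds $\mathcal{O}(\alpha^4)$ and $\mathcal{O}(\alpha^5)$ to $(1-\beta^2)\mathcal{O}(\alpha^4)$ and $(1-\beta^2)\beta\,\mathcal{O}(\alpha^5)$; this is where the tighter constraint $\alpha<1/4$ enters. Here I invoke the second half of Lemma~\ref{suplem:reexpression_expectation}, which for $0\le\alpha<1/4$ rewrites the remainders after subtracting the polynomial part as a combination of $\E[(\tanh^2(\alpha X)-(\alpha X)^2)X^k]$, $\E[(\tanh^3(\alpha X)-(\alpha X)^3)X^k]$, and an explicit $\mathcal{O}(\alpha^4)$ or $\mathcal{O}(\alpha^5)$ tail, each carrying a common $(1-\beta^2)$ (and, for the $X$ expectation, an extra $\beta$) prefactor. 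The elementary bounds from Lemma~\ref{suplem:tanh}, namely $t^2-\tfrac{2}{3}t^4\le\tanh^2(t)\le t^2$ and $t^3-t^5\le\tanh^3(t)\le t^3$, then translate these residuals into $(1-\beta^2)\mathcal{O}(\alpha^4)$ and $(1-\beta^2)\beta\,\mathcal{O}(\alpha^5)$ respectively, yielding precisely the claimed expansions.
\end{Proof Sketch}
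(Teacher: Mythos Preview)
Your proposal is correct and follows essentially the same three-stage template as the paper (Taylor expansion via derivative polynomials, then re-expression of the remainder through Lemma~\ref{suplem:reexpression_expectation} and the elementary $\tanh$ bounds). The only organizational difference is that you introduce new polynomials $Q_n(t)=\frac{\mathrm{d}^n}{\mathrm{d}Z^n}\tanh^2(Z)$, whereas the paper exploits the identity $\tanh^2=1-P_1(\tanh)$ to write $\E[\tanh^2(\alpha X+\nu)X^2]=1-\partial_\alpha m(\alpha,\nu)$ and $\E[\tanh^2(\alpha X+\nu)X]=-\partial_\alpha n(\alpha,\nu)$, thereby reusing the already-established $P_n$ polynomials; since $Q_n=-P_{n+1}$ for $n\ge 1$, the computations and remainder analysis coincide.
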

\begin{proof}
    We start by defining these expectations w.r.t the distribution with a density 
    \(f_X(x) = \frac{K_0(|x|)}{\pi}\):
    \[
    m(\alpha, \nu) = \mathbb{E}[\tanh(\alpha X + \nu)X],\quad
    n(\alpha, \nu) = \mathbb{E}[\tanh(\alpha X + \nu)].
    \]
    As shown in the proof of Lemma~\ref{suplem:taylor_expectation_tanh}, we can invoke the dominated convergence theorem 
to exchange the order of taking limit and taking the expectations
(see Theorem 1.5.8, page 24 of~\citet{durrett2019probability}):
    \[
    \frac{\partial^n}{\partial \alpha^n} m(\alpha, \nu) = \E[P_n(t) X^{n+1}],\quad
    \frac{\partial^n}{\partial \alpha^n} n(\alpha, \nu) = \E[P_n(t) X^n].
    \]
    When \(n=1\), by \(P_1(t) = 1-t^2\) from Lemma~\ref{suplem:poly_tanh}
    and \(\E[X^2]=1, \E[X]=0\), we have:
    \begin{eqnarray*}
        \E[\tanh^2(\alpha X+\nu)X^2] &=& \E[(1-P_1(t))X^2] = \E[X^2] - \E[P_1(t)X^2] = 1 - \frac{\partial m(\alpha, \nu)}{\partial \alpha}, \\
        \E[\tanh^2(\alpha X+\nu)X] &=& \E[(1-P_1(t))X]= \E[X] - \E[P_1(t)X] = - \frac{\partial n(\alpha, \nu)}{\partial \alpha}.
    \end{eqnarray*}
    Note that \(\E[X^{2k}] = [(2k-1)!!]^2, \E[X^{2k-1}]=0\) for any \(k\in\Z_{+}\), and since \(m(\alpha, \nu)\) and \(n(\alpha, \nu)\) have infinitely many derivatives, we can use the Taylor expansion at \(\alpha=0\) 
    (see 5.15 Theorem, pages 110-111 of~\citet{rudin1976})
    to approximate \(\frac{\partial}{\partial \alpha} m(\alpha, \nu)\) and \(\frac{\partial}{\partial \alpha} n(\alpha, \nu)\):
    \begin{eqnarray*}
        \frac{\partial m(\alpha, \nu)}{\partial \alpha}  
        &=& \sum_{n=1}^4  P_n(\beta) \frac{\E[X^{n+1}]}{(n-1)!}\cdot \alpha^{n-1}
        + \E[P_5(t)\mid_{\alpha=\alpha'}X^{5+1}]  \frac{\alpha^4}{4!}
        = P_1(\beta) + \frac{9}{2}P_3(\beta) \alpha^2 + \mathcal{O}(\alpha^4), \\
        \frac{\partial n(\alpha, \nu)}{\partial \alpha}  
        &=& \sum_{n=1}^5  P_n(\beta) \frac{\E[X^n]}{(n-1)!}\cdot \alpha^{n-1}
        + \E[P_6(t)\mid_{\alpha=\alpha''}X^{6}] \frac{\alpha^5}{5!}
        = P_2(\beta) \alpha + \frac{3}{2} P_4(\beta) \alpha^3 + \mathcal{O}(\alpha^5). 
    \end{eqnarray*}
    where \(\alpha', \alpha''\) are some values between 0 and \(\alpha\), and the coefficients of remainders are bounded by:
    \begin{eqnarray*}
    \left\vert \E[P_5(t)\mid_{\alpha=\alpha'}\cdot X^{5+1}]/4\,!\right\vert 
    &\leq& \E[X^{6}]\cdot 5\,! /4\,! 
    = 5\cdot (5!!)^2 , \\
    \left\vert \E[P_6(t)\mid_{\alpha=\alpha''}\cdot X^{6}]/5\,!\right\vert 
    &\leq& \E[X^{6}]\cdot 6\,! /5\,! 
    = 6\cdot (5!!)^2.   
    \end{eqnarray*}
    By the recurrence relation \(P_{n+1}(t) = (1-t^2)\frac{\mathrm{d}}{\mathrm{d} t}P_{n}(t)\) with \(P_0(t)=t\) from Lemma~\ref{suplem:poly_tanh}, we have 
    \[
    P_1(\beta)=1-\beta^2, 
    \, P_2(\beta)=(1-\beta^2) \cdot (-2\beta), 
    \, P_3(\beta)=(1-\beta^2) \cdot 2(-1+3\beta^2), 
    \, P_4(\beta)=(1-\beta^2) \cdot 8\beta(2-3\beta^2).
    \]
    Hence, by substituting the values of \(P_n(\beta)\), we have:
    \begin{eqnarray*}
    \E[\tanh^2(\alpha X+\nu)X^2] &=& 
    \beta^2 + 9\alpha^2(1-\beta^2)(1-3\beta^2) + \mathcal{O}(\alpha^4) \\
    \E[\tanh^2(\alpha X+\nu)X] &=& 
    2\alpha\beta(1-\beta^2) - 12\alpha^3\beta(1-\beta^2)(2-3\beta^2) + \mathcal{O}(\alpha^5).
    \end{eqnarray*}
    By invoking Lemma~\ref{suplem:reexpression_expectation} for \(0\leq \alpha < 1/4\) and the fact that \(\E[X^{2k}] = [(2k-1)!!]^2\) for any \(k\in\Z_{+}\) 
    \begin{eqnarray*}
        & & -\E[P_5(t)\mid_{\alpha=\alpha'}X^{5+1}]\frac{\alpha^4}{4!}
        =  (1-\beta^2)(1-3\beta^2)\E[(\tanh^2(\alpha X) - (\alpha X)^2)X^2]
        + (1-\beta^2)\beta^2 \mathcal{O}(\alpha^4),\\
        & & -\E[P_6(t)\mid_{\alpha=\alpha''}X^{6}] \frac{\alpha^5}{5!}
        =  2(1-\beta^2)\beta\E[(\tanh(\alpha X) -((\alpha X)-(\alpha X)^3/3))X]\\
        & &+ 2(1-\beta^2)(-1+2\beta^2)\beta\E[(\tanh^3(\alpha X) -(\alpha X)^3)X]
        + (1-\beta^2)\beta^2 \mathcal{O}(\alpha^5).
    \end{eqnarray*}        
    By \(t - \frac{t^3}{3} \leq \tanh(t) \leq t - \frac{t^3}{3} + \frac{2t^5}{15}, 
    t^2 - \frac{2 t^4}{3} \leq \tanh^2(t) \leq t^2,
    t^3 - t^5 \leq \tanh^3(t) \leq t^3, \forall t\geq 0\), we have:
    \begin{eqnarray*}
        &  &\left\vert\E[P_5(t)\mid_{\alpha=\alpha'}X^{5+1}]\frac{\alpha^4}{4!}\right\vert
        \leq (1-\beta^2) \left(\left\vert \alpha^4\cdot 2\cdot\frac{2}{3}\E[X^{4+2}] \right\vert
        +\beta^2 \vert\mathcal{O}(\alpha^4)\vert\right)
        = (1-\beta^2)\mathcal{O}(\alpha^4),\\
        &  &\left\vert\E[P_6(t)\mid_{\alpha=\alpha''}X^{6}]\frac{\alpha^5}{5!}\right\vert
        \leq (1-\beta^2)|\beta| \left(\left\vert \alpha^5\cdot 2\cdot\frac{2}{15}\E[X^{5+1}] \right\vert
        +\left\vert \alpha^5\cdot 2\cdot\E[X^{5+1}] \right\vert
        +\beta^2 \vert\mathcal{O}(\alpha^5)\vert\right)\\
        & & = (1-\beta^2)|\beta|\mathcal{O}(\alpha^5).
    \end{eqnarray*}
\end{proof}

\begin{lemma}[Expectations w.r.t Gaussian Distribution]\label{suplem:expectation_gaussian}
    Let \(X \sim f_X(x) = \frac{K_0(|x|)}{\pi}\) be a random variable with probability density involving the Bessel function \(K_0\), 
    and \(Z_1, Z_2 \stackrel{\text{i.i.d.}}{\sim} \mathcal{N}(0, 1)\). Then:
    \begin{eqnarray*}
        \E[\tanh(\alpha X+\nu)X^2] &=& 
        \E[\tanh(\alpha Z_1 Z_2 +\nu)Z_2^2] + \E[\tanh'(\alpha Z_1 Z_2 +\nu)\alpha Z_1 Z_2^3], \\ 
        \E[\tanh(\alpha X+\nu)X] &=& 
        \E[\tanh(\alpha Z_1 Z_2 +\nu)Z_1 Z_2] =
        \E[\tanh'(\alpha Z_1 Z_2 +\nu)\alpha Z_2^2],\\
        \E[\tanh(\alpha X+\nu)] &=& 
        \E[\tanh(\alpha Z_1 Z_2 +\nu)],\\
        -\alpha \E[\tanh^2(\alpha X+\nu)X] &=& 
        \E[\tanh'(\alpha Z_1 Z_2 +\nu)\alpha Z_1 Z_2],
    \end{eqnarray*}
    where \(\tanh'(\cdot)\) is the derivative of \(\tanh(\cdot)\).
\end{lemma}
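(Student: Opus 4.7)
The plan is to exploit the product representation $X \stackrel{d}{=} Z_1 Z_2$ (mentioned in the paper after the introduction of $K_0$, page 50 of~\citet{johnson1970continuous}) to rewrite every expectation on the left-hand side as an expectation against two independent standard Gaussians, and then invoke Gaussian integration by parts (Stein's lemma: $\E[Z g(Z)] = \E[g'(Z)]$ for $Z\sim\mathcal{N}(0,1)$ and sufficiently regular $g$) to produce the factors of $\tanh'$ that appear on the right-hand side. Since $\tanh$ and $\tanh'=1-\tanh^2$ are uniformly bounded on $\R$ and $Z_1, Z_2$ have moments of all orders, all integrability hypotheses for Stein's lemma and for conditioning/Fubini are satisfied; I will note this at the top of the proof and not belabor it.

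For the third identity $\E[\tanh(\alpha X+\nu)] = \E[\tanh(\alpha Z_1 Z_2+\nu)]$ and the first equality $\E[\tanh(\alpha X+\nu)X] = \E[\tanh(\alpha Z_1 Z_2+\nu)Z_1 Z_2]$, the conclusion is immediate by substituting the product representation. For the second equality in identity~2, I would condition on $Z_2$ and apply Stein's lemma in the variable $Z_1$ to the function $g(Z_1) = Z_2\tanh(\alpha Z_1 Z_2+\nu)$, whose derivative is $g'(Z_1) = \alpha Z_2^2\tanh'(\alpha Z_1 Z_2+\nu)$; taking the outer expectation over $Z_2$ yields the claim.

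For identity~1, I would first substitute $X = Z_1 Z_2$ to obtain $\E[\tanh(\alpha Z_1 Z_2+\nu) Z_1^2 Z_2^2]$, and then apply Stein's lemma in $Z_1$ to $g(Z_1) = Z_1 Z_2^2 \tanh(\alpha Z_1 Z_2+\nu)$. The derivative splits into two terms by the product rule, namely $g'(Z_1) = Z_2^2 \tanh(\alpha Z_1 Z_2+\nu) + \alpha Z_1 Z_2^3 \tanh'(\alpha Z_1 Z_2+\nu)$, which directly produces the two summands on the right-hand side after taking expectation over $Z_2$. For identity~4, the cleanest route avoids Stein's lemma altogether: using $\tanh'(t) = 1 - \tanh^2(t)$ and the independence of $Z_1$ and $Z_2$,
\[
\E[\tanh'(\alpha Z_1 Z_2+\nu)\,\alpha Z_1 Z_2] = \alpha\E[Z_1 Z_2] - \alpha\E[\tanh^2(\alpha Z_1 Z_2+\nu) Z_1 Z_2] = -\alpha\E[\tanh^2(\alpha X+\nu) X],
\]
since $\E[Z_1 Z_2] = \E[Z_1]\E[Z_2] = 0$ and $Z_1 Z_2 \stackrel{d}{=} X$.

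The main (mild) obstacle will be verifying the regularity required for Stein's lemma in the conditional form: one needs $g$ absolutely continuous with $\E|g'(Z_1)| < \infty$ and $\E|Z_1 g(Z_1)| < \infty$ uniformly over $Z_2$ (so that Fubini applies when we take the outer expectation). Because $|\tanh|, |\tanh'|\leq 1$, every $g$ arising here is bounded by a polynomial in $Z_1, Z_2$ times a constant, so both conditional expectations are dominated by a polynomial moment of $(Z_1, Z_2)$, which is finite. Once this is noted, the identities follow in a uniform, almost mechanical fashion from the product representation plus a single integration by parts.
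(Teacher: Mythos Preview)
Your proposal is correct and follows essentially the same approach as the paper: the product representation $X\stackrel{d}{=}Z_1Z_2$ combined with Stein's lemma in the $Z_1$ variable, plus the algebraic identity $\tanh'=1-\tanh^2$ for the fourth line. Your handling of the first identity is in fact slightly more direct than the paper's---the paper introduces an intermediate $\frac{1}{\alpha}\E[\ln\cosh(\alpha X+\nu)X]$ term (via Stein applied to $\ln\cosh$) that cancels when the two pieces are summed, which also forces a separate $\alpha=0$ case; your single product-rule application of Stein to $g(Z_1)=Z_1 Z_2^2\tanh(\alpha Z_1 Z_2+\nu)$ avoids both the detour and the case split.
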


\begin{proof} For \(\alpha = 0\), 
    We can justify the above identities by \(\E[X^2]=\E[Z_2^2]=1, \E[X]=\E[Z_1 Z_2]=0\), 
    therefore, we only need to prove the identities for \(\alpha \neq 0\).
    By applying Stein's lemma (see Lemma 2.1 of~\citet{ross2011fundamentals}) with respect to \(Z_1\),
and noting the fact that \(X:=Z_1 Z_2 \sim f_X(x) = \frac{K_0(|x|)}{\pi}\) for any \(Z_1, Z_2 \stackrel{\text{i.i.d.}}{\sim} \mathcal{N}(0, 1)\)
(see page 50, section 4.4 of chapter 12 in~\citet{johnson1970continuous}), we have:
    \begin{eqnarray*}
      \E [\tanh (\alpha Z_1 Z_2 + \nu) Z_2^2] & = & \frac{1}{\alpha}
      \E \left[ \frac{\partial \ln \cosh (\alpha Z_1 Z_2 + \nu)
      Z_2}{\partial Z_1} \right] = \frac{1}{\alpha} \E [\ln \cosh
      (\alpha Z_1 Z_2 + \nu) Z_2 \times Z_1],\\
      & = & \frac{1}{\alpha} \E [\ln \cosh (\alpha X + \nu) X],\\
      \E [\tanh'(\alpha Z_1 Z_2 + \nu) \alpha Z_2^2] & = & \E
      \left[ \frac{\partial \tanh (\alpha Z_1 Z_2 + \nu) Z_2}{\partial Z_1}
      \right] =\E [\tanh (\alpha Z_1 Z_2 + \nu) Z_2 \times Z_1],\\
      & = & \E [\tanh (\alpha X + \nu) X],\\
      \E [\tanh' (\alpha Z_1 Z_2 + \nu) \alpha Z_1 Z_2^3] & = & \E
      \left[ \frac{\partial \tanh (\alpha Z_1 Z_2 + \nu) Z_1 Z_2^2}{\partial Z_1} - \tanh (\alpha Z_1 Z_2 + \nu) Z_2^2 \right],\\
      & = & \E [\tanh (\alpha X + \nu) X^2] - \frac{1}{\alpha} \E
      [\ln \cosh (\alpha X + \nu) X].
    \end{eqnarray*}
    By combining the above equations, the first two identities are proved. 
    For the last two identities, we note that \(\E [X] = 0\), \(\tanh' (\cdot) = 1 - \tanh^2 (\cdot)\), then:
    \begin{eqnarray*}
      \E [\tanh' (\alpha Z_1 Z_2 + \nu) \alpha Z_1 Z_2] & = & - \alpha
      \E [\tanh^2 (\alpha X + \nu) X],\\
      \E [\tanh (\alpha Z_1 Z_2 + \nu)] & = & \E [\tanh (\alpha X + \nu)].
    \end{eqnarray*}
\end{proof}

\newpage

\subsection{Approximations of EM Update Rules in Finite Low SNR Regime}\label{supsub:em_update_low_snr}
\begin{theorem}[Approximations of EM Update Rules in Finite Low SNR Regime]
    In the finite low SNR regime where \(\eta = \|\theta^{\ast}\|/\sigma \lesssim 1\), the EM update rules are:
    \begin{eqnarray*}
        M (\theta, \nu)/\sigma & = & 
        \vec{e}_1 \left[\E[\tanh(\alpha X + \nu) X] + \eta \beta^\ast \rho \E[\tanh(\alpha X + \nu) X^2 ] + \mathcal{O}(\eta^2 \alpha)\right]\\
        & + & \vec{e}_2 \left[\eta \beta^\ast\sqrt{1-\rho^2} \left(\E[\tanh(\alpha X + \nu)] - \alpha \E[\tanh^2(\alpha X + \nu) X]\right) + \mathcal{O}(\eta^2 \alpha)\right],\\
        N (\theta, \nu) & = & \E[\tanh(\alpha X + \nu)] + \eta \beta^\ast \rho \E[\tanh(\alpha X + \nu) X] + \mathcal{O}(\eta^2 \alpha^2),
    \end{eqnarray*}
    where \(X \sim f_X(x) = \frac{K_0(|x|)}{\pi}\) is a random variable with probability density involving the Bessel function \(K_0\), 
    \(\beta^\ast = \tanh \nu^\ast = \pi^\ast(1) - \pi^\ast(2)\),
    \(\rho = \frac{\langle \theta, \theta^{\ast} \rangle}{\|\theta\| \cdot \|\theta^{\ast}\|}\) is the cosine of angle between \(\theta\) and \(\theta^{\ast}\),
    \(\vec{e}_1 = \theta/\|\theta\|\) and \(\vec{e}_2 = \frac{\theta^{\ast} - \langle \theta^{\ast}, \vec{e}_1 \rangle \vec{e}_1}{\|\theta^{\ast} - \langle \theta^{\ast}, \vec{e}_1 \rangle \vec{e}_1\|}\) form an orthonormal basis,
    and \(\alpha = \|\theta\|/\sigma, \beta = \tanh \nu = \pi(1) - \pi(2)\).
\end{theorem}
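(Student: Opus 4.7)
\begin{Proof Sketch}\textbf{of Approximations of EM Update Rules in Finite Low SNR Regime.}\space

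The plan is to perturb around the overspecified case \(\eta = 0\) by setting up convenient coordinates and Taylor-expanding \(\tanh\) in the small parameter \(\eta\). Choose the orthonormal basis \(\vec{e}_1 = \theta/\|\theta\|\) and \(\vec{e}_2\) in the span of \(\{\theta, \theta^\ast\}\) perpendicular to \(\theta\), so that \(\theta^\ast = \sigma \eta \rho\, \vec{e}_1 + \sigma \eta \sqrt{1-\rho^2}\, \vec{e}_2\). Decompose the covariate as \(x = x_1 \vec{e}_1 + x_2 \vec{e}_2 + x^\perp\) with \(x_1, x_2 \sim \mathcal{N}(0,1)\) independent of each other and of \(x^\perp\). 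Writing \(\bar{\varepsilon} := \varepsilon/\sigma \sim \mathcal{N}(0,1)\), the normalized response becomes \(Y := y/\sigma = \bar{\varepsilon} + (-1)^{z+1}\eta(\rho x_1 + \sqrt{1-\rho^2}\, x_2)\), and the argument of the \(\tanh\) equals \(\alpha x_1 Y + \nu = \alpha X_0 + \Delta u + \nu\), where \(X_0 := \bar{\varepsilon} x_1 \sim K_0(|x|)/\pi\) is the unperturbed Bessel random variable (see Identity~\ref{prop:em}) and \(\Delta u := (-1)^{z+1}\alpha \eta(\rho x_1^2 + \sqrt{1-\rho^2}\, x_1 x_2) = \mathcal{O}(\alpha \eta)\).

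Next, first-order Taylor expansion gives \(\tanh(\alpha x_1 Y + \nu) = F(u) + F'(u)\,\Delta u + \tfrac{1}{2}F''(\xi)(\Delta u)^2\) with \(F(u) := \tanh(u + \nu)\) and \(u := \alpha X_0\). Since \(|F''| \leq 2\) and \(\E[(\Delta u)^2] = \mathcal{O}(\alpha^2 \eta^2)\), the quadratic remainder contributes \(\mathcal{O}(\alpha^2 \eta^2) = \mathcal{O}(\alpha \eta^2)\) after multiplication by the bounded factors \(x_1, x_2, Y\). Substituting this expansion into \(N(\theta,\nu) = \E[\tanh(\alpha x_1 Y + \nu)]\) and \(M(\theta,\nu)/\sigma = \E[\tanh(\alpha x_1 Y + \nu) Y(x_1\vec{e}_1 + x_2\vec{e}_2 + x^\perp)]\), taking the expectation over \(z\) (which produces the factor \(\E_z[(-1)^{z+1}] = \beta^\ast\) on first-order terms), and exploiting the independence of \(x_2\) and \(x^\perp\) (so terms linear in \(x_2\) or \(x^\perp\) vanish), the \(\vec{e}_1\)-component collects contributions \(\E[F(u)X_0] + \beta^\ast \eta \rho [\E[F(u)x_1^2] + \alpha\E[F'(u)X_0 x_1^2]]\), while the \(\vec{e}_2\)-component collects \(\beta^\ast \eta \sqrt{1-\rho^2}[\E[F(u)x_2^2] + \alpha\E[F'(u)X_0 x_2^2]]\) after using \(\E[x_2^2]=1\).

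The final step is to recognize the bracketed sums via Lemma~\ref{suplem:expectation_gaussian} (Stein's lemma in the form derived from \(Z_1 = \bar{\varepsilon}, Z_2 = x_1\)): the identity \(\E[\tanh(\alpha X + \nu)X^2] = \E[F(u)x_1^2] + \alpha\E[F'(u)X_0 x_1^2]\) converts the \(\vec{e}_1\)-correction into \(\beta^\ast \eta \rho\, \E[\tanh(\alpha X + \nu)X^2]\); the identity \(-\alpha\E[\tanh^2(\alpha X + \nu)X] = \alpha\E[F'(u)X_0]\) converts the \(\vec{e}_2\)-correction into \(\beta^\ast\eta\sqrt{1-\rho^2}\bigl(\E[\tanh(\alpha X + \nu)] - \alpha\E[\tanh^2(\alpha X + \nu)X]\bigr)\); and the identity \(\alpha\E[F'(u)x_1^2] = \E[\tanh(\alpha X + \nu)X]\) converts the \(N\)-correction into \(\beta^\ast \eta \rho\, \E[\tanh(\alpha X + \nu)X]\). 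The \(x^\perp\)-component of \(M\) vanishes identically because \(x^\perp\) is independent of \((x_1,x_2,\bar{\varepsilon})\) and of \(z\) in the zeroth-order piece, and only appears linearly to first order in \(\eta\) through \(Y\) factors that integrate to zero in \(x^\perp\).

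The principal technical obstacle is the careful bookkeeping of the remainder. One must verify that each Taylor remainder really scales as \(\mathcal{O}(\eta^2 \alpha)\) (for \(M/\sigma\)) or \(\mathcal{O}(\eta^2 \alpha^2)\) (for \(N\)), by checking that after all cancellations arising from \(\E[x_2]=0\) and \(\E[(-1)^{z+1}(\cdot)]\), the surviving second-order term always carries at least one extra factor of \(\alpha\) beyond the two factors of \(\eta\), with bounded polynomial moments of \(x_1, x_2, \bar{\varepsilon}\) under the Gaussian measure. This reduces to noting that \(\Delta u\) itself carries one factor of \(\alpha\), and the extra \(Y\)-factor in \(M\) contributes at most a harmless constant-order expectation, whereas in \(N\) no extra \(Y\)-factor is present, yielding the tighter \(\mathcal{O}(\eta^2 \alpha^2)\) bound.
\end{Proof Sketch}
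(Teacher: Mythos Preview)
Your proposal is correct and follows essentially the same route as the paper: set up the orthonormal frame $\vec{e}_1,\vec{e}_2$, Taylor-expand in the $\eta$-perturbation, use independence together with $\E_z[(-1)^{z+1}]=\beta^\ast$ to isolate the first-order terms, and then invoke Lemma~\ref{suplem:expectation_gaussian} (Stein's lemma) to translate the Gaussian expectations into Bessel-$K_0$ expectations. The only cosmetic difference is that the paper perturbs the function $Z\mapsto \tanh(\alpha Z_2 Z+\nu)\,Z$ in $\Delta Z_1=\Delta y/\sigma$ (so the $Y$-factor is bundled into the function being expanded), whereas you perturb only the $\tanh$ argument by $\Delta u=\alpha Z_2\,\Delta Z_1$ and treat the $\eta$-dependence of the multiplicative factor $Y$ separately; one small slip to fix is that $x_1,x_2,Y$ are Gaussian with bounded polynomial moments, not ``bounded factors,'' and the step $\mathcal{O}(\alpha^2\eta^2)=\mathcal{O}(\alpha\eta^2)$ tacitly uses $\alpha\lesssim 1$, which holds in context since $\alpha^t\le 2/\pi$ after one iteration.
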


\begin{proof}
    We start by viewing the response variable \(y = \varepsilon + \Delta \varepsilon\) as pure noise \(\varepsilon \sim \mathcal{N} (0, \sigma^2)\) with some small perturbation \(\Delta \varepsilon \assign (- 1)^{z + 1} \langle \theta^{\ast}, x \rangle\), where \(x \sim \mathcal{N} (0, I_d)\).
    To simplify our analysis, we introduce several notations. 
    First, let \(\rho \assign \frac{\langle \theta, \theta^{\ast} \rangle}{\| \theta \| \cdot \| \theta^{\ast} \|}\) denote the cosine of angle between \(\theta\) and \(\theta^{\ast}\). Next, we define a pair of orthonormal vectors: 
    \(\vec{e}_1 \assign \theta / \| \theta \|\) and \(\vec{e}_2 \assign \frac{\theta^{\ast} - \langle \theta^{\ast}, \vec{e}_1 \rangle \vec{e}_1}{\| \theta^{\ast} - \langle \theta^{\ast}, \vec{e}_1 \rangle \vec{e}_1 \|}\). 
    For the noise terms, we introduce three independent standard Gaussians: \(Z_1 \assign \varepsilon / \sigma \sim \mathcal{N} (0, 1)\) and \(Z_2 \assign \langle x, \vec{e}_1 \rangle = \langle x, \theta \rangle / \| \theta \| \sim \mathcal{N} (0, 1), Z_3 \assign \langle x, \vec{e}_2 \rangle \sim \mathcal{N} (0, 1)\). 
    Finally, for the low SNR regime where \(\eta \assign \| \theta^{\ast} \| / \sigma \lesssim 1\), we define \(\alpha \assign \| \theta \| / \sigma\) and \(\beta \assign \tanh \nu = \pi (1) - \pi (2)\).
    Then, we can express \(\Delta Z_1 \assign \Delta \varepsilon / \sigma\) as:
    \[
    \Delta Z_1 = \Delta \varepsilon / \sigma
    = (- 1)^{z + 1} \langle \theta^{\ast}, x \rangle /\sigma
    = (- 1)^{z + 1} \eta (\rho Z_2 + \sqrt{1 - \rho^2} Z_3) \sim \mathcal{N} (0, \eta^2)
    \]
    Then, we can define \(F, G\) as the following functions:
    \[
    F(Z) := \tanh(\alpha Z_2 Z + \nu) Z, \quad
    G(Z) := \tanh(\alpha Z_2 Z + \nu).
    \]
    Consequently, we can express the EM update rules \(M (\theta, \nu), N (\theta, \nu)\) at population level as:
    \begin{eqnarray*}
      M (\theta, \nu)/\sigma & = & \E \left[ \tanh \left( \frac{y \langle x, \theta \rangle}{\sigma^2} + \nu \right) yx \right] /\sigma
      = \E[ F(Z_1 + \Delta Z_1) (\vec{e}_1 Z_2 + \vec{e}_2 Z_3)],\\
      N (\theta, \nu) & = & \E \left[ \tanh \left( \frac{y \langle x, \theta \rangle}{\sigma^2} + \nu \right) \right]
      = \E[ G(Z_1 + \Delta Z_1)].
    \end{eqnarray*}
    By introducing the derivative of tanh in Lemma~\ref{suplem:poly_tanh} and invoking Taylor's theorem with remainder (see 5.15 Theorem, pages 110-111 of~\citet{rudin1976}), we have:
    \begin{eqnarray*}
        F(Z_1 + \Delta Z_1) & = & F(Z_1) + F'(Z_1) \Delta Z_1 + \frac{1}{2} F''(Z_1+\xi \Delta Z_1) (\Delta Z_1)^2,\\
        G(Z_1 + \Delta Z_1) & = & G(Z_1) + G'(Z_1) \Delta Z_1 + \frac{1}{2} G''(Z_1+\zeta \Delta Z_1) (\Delta Z_1)^2,
    \end{eqnarray*}
    where \(\xi, \zeta \in (0, 1)\), remainders are bounded by (use \(\max_{t\in[-1, 1]}\vert P_2(t) \vert \leq 2, \max_{t\in[-1, 1]}\vert P_1(t) \vert \leq 1\)):
    \begin{eqnarray*}
        \vert F''(Z_1+\xi \Delta Z_1) \vert &=& 
        \vert (\alpha Z_2)^2 (Z_1 + \xi \Delta Z_1) P_2(G(Z_1 + \xi \Delta Z_1)) 
        + 2 (\alpha Z_2) P_1(G(Z_1 + \xi \Delta Z_1)) \vert\\
        &\leq& 2 \alpha^2 Z_2^2 (\vert Z_1 \vert + \vert \Delta Z_1 \vert)
        + 2 \alpha \vert Z_2 \vert, \\
        \vert G''(Z_1+\zeta \Delta Z_1) \vert &=& 
        \vert (\alpha Z_2)^2 P_2(G(Z_1 + \zeta \Delta Z_1)) \vert
        \leq 2 \alpha^2 Z_2^2.
    \end{eqnarray*}
    and by the orthogonality of \(\vec{e}_1, \vec{e}_2\) 
    and \(\vert \Delta Z_1 \vert^3 \leq 4 \eta^3(\vert Z_2 \vert^3 + \vert  Z_3 \vert^3)
    , (\Delta Z_1)^2 \leq 2 \eta^2(\vert Z_2 \vert^2 + \vert  Z_3 \vert^2)\): % and \(\vert\E[\cdot]\vert\leq \E[\vert \cdot \vert]\):
    \begin{eqnarray*}
        & &\left\vert \E\left\langle \frac{1}{2}F''(Z_1+\xi \Delta Z_1) (\Delta Z_1)^2(\vec{e}_1 Z_2+ \vec{e}_2 Z_3), \vec{e}_1 \right\rangle \right\vert
        \leq  \frac{1}{2} \E\left\vert F''(Z_1+\xi \Delta Z_1) (\Delta Z_1)^2 Z_2\right\vert \\
        & \leq & \E [[\alpha^2 Z_2^2 (\vert Z_1 \vert + \vert \Delta Z_1 \vert)
        + \alpha \vert Z_2 \vert] (\Delta Z_1)^2 \vert Z_2 \vert ]\\
        & \leq & 2\eta^2 \E[(\alpha^2 |Z_1| |Z_2|^3  + \alpha |Z_2|^2)(|Z_2|^2+|Z_3|^2)]
        + 4\eta^3 \alpha^2 \E[|Z_2|^3 (|Z_2|^3 + |Z_3|^3)] \\
        & = & 8\eta^2 \alpha (1+ \frac{5}{\pi}\alpha)+ 4\cdot(15 + \frac{8}{\pi})\eta^3 \alpha^2 = \mathcal{O}(\eta^2 \alpha),\\
        & & \left\vert \E\left\langle \frac{1}{2}F''(Z_1+\zeta \Delta Z_1) (\Delta Z_1)^2(\vec{e}_1 Z_2+ \vec{e}_2 Z_3), \vec{e}_2 \right\rangle \right\vert
        \leq \frac{1}{2} \E\left\vert F''(Z_1+\zeta \Delta Z_1) (\Delta Z_1)^2 Z_3\right\vert \\
        & \leq & \E [[\alpha^2 Z_2^2 (\vert Z_1 \vert + \vert \Delta Z_1 \vert)
        + \alpha \vert Z_2 \vert] (\Delta Z_1)^2 \vert Z_3 \vert ]\\
        & \leq & 2\eta^2 \E[(\alpha^2 |Z_1| |Z_2|^2 |Z_3|  + \alpha |Z_2| |Z_3|)(|Z_2|^2+|Z_3|^2)]
        + 4\eta^3 \alpha^2 \E[|Z_2|^2 |Z_3| (|Z_2|^3 + |Z_3|^3)] \\
        & = & \frac{16}{\pi}\eta^2 \alpha (1+ \frac{5}{4} \alpha) + 4 \cdot (3 + \frac{16}{\pi}) \eta^3 \alpha^2  
        = \mathcal{O}(\eta^2 \alpha),
    \end{eqnarray*}
    \begin{eqnarray*}
        & & \left\vert \E\left[ \frac{1}{2}G''(Z_1+\zeta \Delta Z_1) (\Delta Z_1)^2 \right] \right\vert
        \leq \frac{1}{2} \E\left\vert G''(Z_1+\zeta \Delta Z_1) (\Delta Z_1)^2 \right\vert 
        \leq \E \left[ \alpha^2 Z_2^2 (\Delta Z_1)^2 \right]\\
        & \leq & 2\eta^2 \alpha^2 \E[|Z_2|^2 (|Z_2|^2 + |Z_3|^2)]
        = 8 \eta^2 \alpha^2 = \mathcal{O}(\eta^2 \alpha^2).
    \end{eqnarray*}
    Note \(\E[Z_3] = 0, \E[Z_3^2] = 1\) and \(Z_1, Z_2, Z_3\) are independent and \(\E[(-1)^{z+1}] = \beta^\ast =\tanh \nu^\ast\), then:
    \begin{eqnarray*}
        M (\theta, \nu)/\sigma & = & 
        \E[(F(Z_1) + F(Z_1)\Delta Z_1)(\vec{e}_1 Z_2 + \vec{e}_2 Z_3)] + \vec{e}_1 \mathcal{O}(\eta^2 \alpha) + \vec{e}_2 \mathcal{O}(\eta^2 \alpha) \\
        & = & \vec{e}_1 \left[\E[F(Z_1) Z_2] + \eta \beta^\ast \rho \E[F'(Z_1) Z_2^2 ] + \mathcal{O}(\eta^2 \alpha)\right]\\
        & + & \vec{e}_2 \left[\eta \beta^\ast\sqrt{1-\rho^2} \E[F'(Z_1)] + \mathcal{O}(\eta^2 \alpha)\right],\\
        N (\theta, \nu) & = & 
        \E[G(Z_1) + G'(Z_1)\Delta Z_1] + \mathcal{O}(\eta^2 \alpha) \\
        & = & \E[G(Z_1)] + \eta \beta^\ast \rho \E[G'(Z_1)Z_2] + \mathcal{O}(\eta^2 \alpha^2).
    \end{eqnarray*}
    Let \(X:= Z_1 Z_2 \sim f_X(x) = \frac{K_0(|x|)}{\pi}\), by invoking Lemma~\ref{suplem:expectation_gaussian} for expectations w.r.t Gaussians, then:
    \begin{eqnarray*}
        \E[F(Z_1) Z_2] & = & \E[\tanh(\alpha Z_1 Z_2 + \nu) Z_1 Z_2] = \E[\tanh(\alpha X + \nu) X],\\
        \E[F'(Z_1) Z_2^2] & = & \E[\tanh(\alpha Z_1 Z_2 + \nu) Z_2^2] + \E[\tanh'(\alpha Z_1 Z_2 + \nu) \alpha Z_1 Z_2^3] 
        = \E[\tanh(\alpha X + \nu) X^2],\\
        \E[F'(Z_1)] & = & \E[\tanh(\alpha Z_1 Z_2 + \nu)] + \E[\tanh'(\alpha Z_1 Z_2 + \nu) \alpha Z_1 Z_2] \\
        & = & \E[\tanh(\alpha X + \nu)] - \alpha \E[\tanh^2(\alpha X + \nu) X],\\
        \E[G(Z_1)] & = & \E[\tanh(\alpha Z_1 Z_2 + \nu)] = \E[\tanh(\alpha X + \nu)],\\
        \E[G'(Z_1) Z_2] & = & \E[\tanh'(\alpha Z_1 Z_2 + \nu) \alpha Z_2^2] 
        = \E[\tanh(\alpha X + \nu) X].
    \end{eqnarray*}
    Substituting the above results into the EM update rules, we have:
    \begin{eqnarray*}
        M (\theta, \nu)/\sigma & = & 
        \vec{e}_1 \left[\E[\tanh(\alpha X + \nu) X] + \eta \beta^\ast \rho \E[\tanh(\alpha X + \nu) X^2 ] + \mathcal{O}(\eta^2 \alpha)\right]\\
        & + & \vec{e}_2 \left[\eta \beta^\ast\sqrt{1-\rho^2} \left(\E[\tanh(\alpha X + \nu)] - \alpha \E[\tanh^2(\alpha X + \nu) X]\right) + \mathcal{O}(\eta^2 \alpha)\right],\\
        N (\theta, \nu) & = & \E[\tanh(\alpha X + \nu)] + \eta \beta^\ast \rho \E[\tanh(\alpha X + \nu) X] + \mathcal{O}(\eta^2 \alpha^2).
    \end{eqnarray*}
    Therefore, we have established exact expressions of the EM update rules \(M(\theta, \nu)\) and \(N(\theta, \nu)\) for the regression parameters \(\theta\) and the mixing weight imbalance \(\tanh \nu = \pi(1) - \pi(2)\) 
    in the low SNR regime where \(\eta = \|\theta^{\ast}\|/\sigma \lesssim 1\). 
    These expressions involve expectations with respect to the random variable \(X \sim f_X(x) = \frac{K_0(|x|)}{\pi}\), 
    where \(K_0\) is the modified Bessel function of the second kind.
\end{proof}

\begin{remark}
    When \(\vert \rho\vert = 1\), \((\Delta Z_1)_{\vert \rho\vert = 1} = (-1)^{z+1}\eta Z_2\) is independent of \(Z_3\), the remainder satisfies:
    \[
        \E\left\langle\frac{1}{2}F''(Z_1+\xi \Delta Z_1) (\Delta Z_1)^2(\vec{e}_1 Z_2+ \vec{e}_2 Z_3), \vec{e}_2 \right\rangle_{\vert \rho\vert = 1}
        =\E\left[\frac{1}{2}F''(Z_1+\Delta Z_1) (\Delta Z_1)^2 \right]_{\vert \rho\vert = 1} 
        \underbrace{\E[Z_3]}_{=0} =0.
    \]
\end{remark}

\newpage
\newpage
\subsection{Approximate Dynamic Equations in Finite Low SNR Regime}\label{supsub:em_dynamic_low_snr}
\begin{theorem}\label{supprop:approx_low_snr}
    For the EM iterations of \(\alpha^t:= \|\theta^t\|/\sigma, \beta^t:= \tanh \nu^t, \rho^t:= \langle \theta^t, \theta^{\ast} \rangle/\|\theta^t\|\|\theta^{\ast}\|\),
    given \(\alpha^t < 1/4\) and \(\eta = \|\theta^{\ast}\|/\sigma \lesssim 1\) in low SNR regime, we have the approximate equations for \(\alpha^{t+1}, \beta^{t+1}\):
    \begin{eqnarray*}
        \alpha^{t+1} & = & \E[\tanh(\alpha^t X +\nu^t)X] + \eta \beta^\ast \rho^t \E[\tanh(\alpha^t X +\nu^t)X^2] + \eta^2 \mathcal{O}\left(\frac{[\beta^\ast]^2[\beta^t]^2/(1-[\beta^t]^2)\vee [\alpha^t]^2}{\alpha^t}\right),\\
        \beta^{t+1} & = & \E[\tanh(\alpha^t X +\nu^t)] + \eta \beta^\ast \rho^t \E[\tanh(\alpha^t X +\nu^t)X] + \eta^2 \mathcal{O}( [\alpha^t]^2).
    \end{eqnarray*}
    Furthermore, with the notations \(C_{\eta}=\alpha^t\frac{(1-[\beta^t]^2)}{|\beta^t\cdot \beta^\ast|}, C'_{\eta}=\sqrt{1-[\beta^t]^2}\), when \(\eta \lesssim C_{\eta} \wedge C'_{\eta}\), we have:
    \begin{eqnarray*}
        \rho^{t+1} & = & \rho^t + (1-[\rho^t]^2) \cdot 
        \eta \beta^\ast (\E[\tanh(\alpha^t X + \nu^t)] - \alpha^t \E[\tanh^2(\alpha^t X + \nu^t) X])/\E[\tanh(\alpha^t X +\nu^t)X] \\
        & + & \mathcal{O}\left( \left(\frac{\eta}{C'_{\eta}}\right)^2 \right)
        + \rho^t \mathcal{O}\left( \left(\frac{\eta}{C_{\eta}}\right)^2 \right)
        + (1-[\rho^t]^2) \cdot \mathcal{O}\left( \left(\frac{\eta}{C_{\eta}}\right)^3 \right),
    \end{eqnarray*}
    where \(X\) is a random variable with the density function \(X\sim f_X(x) = K_0(|x|)/\pi\) involving the Bessel function \(K_0\).
\end{theorem}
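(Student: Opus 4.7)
\begin{Proof Sketch}\textbf{Proposal.}
The plan is to leverage the exact expansion of $M(\theta^t,\nu^t)/\sigma$ and $N(\theta^t,\nu^t)$ already established in Subsection~\ref{supsub:em_update_low_snr} and then translate it into the scalar dynamics for $\alpha^{t+1},\beta^{t+1},\rho^{t+1}$ via their definitions. First I would name the two coefficients in the orthogonal decomposition
$M(\theta^t,\nu^t)/\sigma = A_1\vec{e}_1 + A_2\vec{e}_2$,
where
$A_1 = \E[\tanh(\alpha^t X+\nu^t)X] + \eta\beta^{\ast}\rho^t\E[\tanh(\alpha^t X+\nu^t)X^2] + \mathcal{O}(\eta^2\alpha^t)$
and
$A_2 = \eta\beta^{\ast}\sqrt{1-[\rho^t]^2}\bigl(\E[\tanh(\alpha^t X+\nu^t)] - \alpha^t\E[\tanh^2(\alpha^t X+\nu^t)X]\bigr) + \mathcal{O}(\eta^2\alpha^t)$.
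The update for $\beta^{t+1}=N(\theta^t,\nu^t)$ is read off immediately from the $N$--expansion and already matches the claimed form with remainder $\mathcal{O}(\eta^2[\alpha^t]^2)$, so the substantive work concerns $\alpha^{t+1}$ and $\rho^{t+1}$.

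Next I would compute $\alpha^{t+1} = \sqrt{A_1^2 + A_2^2} = A_1 + \tfrac{A_2^2}{2A_1} + \mathcal{O}(A_2^4/A_1^3)$. Using the series expansions in Lemma~\ref{suplem:taylor_expectation_tanh} and~\ref{suplem:taylor_expectation_tanh_squared} I would bound $\E[\tanh(\alpha^t X+\nu^t)] = \mathcal{O}(\beta^t)$ and $\alpha^t\E[\tanh^2(\alpha^t X+\nu^t)X] = \mathcal{O}([\alpha^t]^2\beta^t)$, which yields $|A_2| \lesssim \eta|\beta^{\ast}|\sqrt{1-[\rho^t]^2}\,|\beta^t|$. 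Combining this with $A_1 \asymp \alpha^t(1-[\beta^t]^2)$ (Proposition~\ref{prop:dynamic} at $\eta=0$) produces $A_2^2/A_1 = \eta^2\,\mathcal{O}\bigl([\beta^{\ast}]^2[\beta^t]^2/(\alpha^t(1-[\beta^t]^2))\bigr)$, which together with the $\mathcal{O}(\eta^2\alpha^t)$ remainder inside $A_1$ gives the announced $\max$-remainder for $\alpha^{t+1}$.

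For $\rho^{t+1}$ I would exploit that $\theta^{\ast}/\|\theta^{\ast}\| = \rho^t\vec{e}_1 + \sqrt{1-[\rho^t]^2}\,\vec{e}_2$ by construction of the orthonormal basis, so
\[
  \rho^{t+1} = \frac{A_1\rho^t + A_2\sqrt{1-[\rho^t]^2}}{\sqrt{A_1^2+A_2^2}}.
\]
Setting $r := A_2/A_1$ and applying $(1+r^2)^{-1/2} = 1 - r^2/2 + \mathcal{O}(r^4)$ yields
\[
  \rho^{t+1} = \rho^t + \sqrt{1-[\rho^t]^2}\,r - \tfrac{1}{2}\rho^t\,r^2 + (1-[\rho^t]^2)\,\mathcal{O}(r^3),
\]
and substituting the leading expression of $A_2/A_1$ recovers the claimed first-order correction. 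The scale parameters $C_\eta = \alpha^t(1-[\beta^t]^2)/|\beta^t\beta^{\ast}|$ and $C_\eta' = \sqrt{1-[\beta^t]^2}$ are precisely the ratios that control $r$: the bounds $|A_2|/A_1 \lesssim \eta/C_\eta$ (from the leading part) and $\sqrt{1-[\rho^t]^2}\cdot|A_2|/A_1 \lesssim \eta/C_\eta'$ (from the perpendicular component) explain the $\mathcal{O}((\eta/C_\eta')^2)$, $\rho^t\mathcal{O}((\eta/C_\eta)^2)$, and $(1-[\rho^t]^2)\mathcal{O}((\eta/C_\eta)^3)$ terms in the statement.

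The main obstacle I expect is bookkeeping rather than a deep estimate: one must track how the $\mathcal{O}(\eta^2\alpha^t)$ remainders inherited from $A_1$ and $A_2$ propagate through the square-root and division operations, especially in the degenerate regimes $\alpha^t\to 0$ (where $A_1$ is small and the ratio $A_2/A_1$ demands the hypothesis $\eta\lesssim C_\eta$) and $|\beta^t|\to 1$ (where $1-[\beta^t]^2$ in the denominator can blow up). The hypotheses $\eta\lesssim C_\eta\wedge C_\eta'$ are tailored to keep the expansion of $(1+r^2)^{-1/2}$ absolutely convergent and to absorb the subleading $r^3$ error into the stated $(1-[\rho^t]^2)\mathcal{O}((\eta/C_\eta)^3)$ term. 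Once this bookkeeping is executed, the three identities follow directly from the two exact formulas of Subsection~\ref{supsub:em_update_low_snr}.
\end{Proof Sketch}
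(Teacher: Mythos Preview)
Your proposal is correct and follows essentially the same route as the paper: both start from the orthogonal decomposition $M(\theta^t,\nu^t)/\sigma = A_1\vec e_1 + A_2\vec e_2$ established in Subsection~\ref{supsub:em_update_low_snr}, then recover $\alpha^{t+1}=\|M\|/\sigma$ and $\rho^{t+1}=\langle M,\theta^\ast\rangle/(\|M\|\,\|\theta^\ast\|)$ by Taylor-expanding the square root and the ratio; the paper merely organizes the square-root step via the law-of-cosines parametrization $\sqrt{r^2+\epsilon^2+2r\epsilon\cos\gamma}=r+\epsilon\cos\gamma+\mathcal{O}(\epsilon^2/r)$ with $r=I_1^t+\eta\beta^\ast\rho^t(I_2^t-J^t)$, $\epsilon=\eta\beta^\ast J^t$, $\cos\gamma=\rho^t$, which is algebraically equivalent to your $A_1\sqrt{1+(A_2/A_1)^2}$ expansion.

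One small correction to your bookkeeping sketch: the bound $\sqrt{1-[\rho^t]^2}\,|A_2|/A_1 \lesssim \eta/C_\eta'$ is not the source of the $\mathcal{O}((\eta/C_\eta')^2)$ error (the leading part of $|A_2|/A_1$ scales like $\eta/C_\eta$, not $\eta/C_\eta'$). That term instead comes from the inherited $\mathcal{O}(\eta^2\alpha^t)$ remainder in $A_1,A_2$ divided by $A_1\asymp\alpha^t(1-[\beta^t]^2)$, giving $\eta^2/(1-[\beta^t]^2)=(\eta/C_\eta')^2$; once you track this correctly the three remainder scales fall out exactly as stated.
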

\begin{proof}
    To simplify the notations, we let \(\alpha^t, \beta^t, \rho^t, \vec{e}_1^t, \vec{e}_2^t\) denote \(\alpha, \beta, \rho, \vec{e}_1, \vec{e}_2\) at iteration \(t\).
    We start by using Lemma~\ref{suplem:taylor_expectation_tanh} and Lemma~\ref{suplem:taylor_expectation_tanh_squared}, we define \(s^t := [\alpha^t]^2(1-[\beta^t]^2)\), then:
    \begin{eqnarray*}
        I_0^t &:=& \E[\tanh(\alpha^t X + \nu^t)]  
        %= \beta^t - [\alpha^t]^2 \beta^t(1-[\beta^t]^2) + (1-[\beta^t]^2)\beta^t\mathcal{O}([\alpha^t]^4)
        = \beta^t (1-s^t[1+\mathcal{O}([\alpha^t]^2)]),\\
        I_1^t &:=& \E[\tanh(\alpha^t X + \nu^t) X ]  
        % =  \alpha^t (1-[\beta^t]^2) -3[\alpha^t]^3 (1-[\beta^t]^2)(1-3[\beta^t]^2) + (1-[\beta^t]^2)\mathcal{O}([\alpha^t]^5)
        = (s^t/ \alpha^t)[1-3[\alpha^t]^2(1-3[\beta^t]^2)+\mathcal{O}([\alpha^t]^4)],\\
        I_2^t &:=& \E[\tanh(\alpha^t X + \nu^t) X^2] 
        % =  \beta^t - 9 [\alpha^t]^2 \beta^t (1-[\beta^t]^2) + (1-[\beta^t]^2)\beta^t\mathcal{O}([\alpha^t]^4)
        = \beta^t (1-9s^t[1+\mathcal{O}([\alpha^t]^2)]),\\
        J^t &:=& \E[\tanh(\alpha^t X + \nu^t)] - \alpha^t \E[\tanh^2(\alpha^t X + \nu^t) X] 
        % &=& \beta^t - 3 [\alpha^t]^2 \beta^t (1-[\beta^t]^2) + (1-[\beta^t]^2)\beta^t\mathcal{O}([\alpha^t]^4)
        = \beta^t (1-3s^t[1+\mathcal{O}([\alpha^t]^2)]).
    \end{eqnarray*}
    % For the simplicity of notations, we define \(s^t := [\alpha^t]^2(1-[\beta^t]^2)\), then:
    % \begin{eqnarray*}
    %     I_0^t & = & \beta^t(1-s^t[1+\mathcal{O}([\alpha^t]^2)]),\\
    %     I_1^t & = & (s^t/\alpha^t)[1-3[\alpha^t]^2(1-3[\beta^t]^2)+\mathcal{O}([\alpha^t]^4)],\\
    %     I_2^t & = & \beta^t(1-9s^t[1+\mathcal{O}([\alpha^t]^2)]),\\
    %     J^t & = & \beta^t(1-3s^t[1+\mathcal{O}([\alpha^t]^2)]).
    % \end{eqnarray*}
    In the low SNR regime where \(\eta = \|\theta^{\ast}\|/\sigma \lesssim 1\), 
    by invoking the proven Theorem in Subsection~\ref{supsub:em_update_low_snr}, we have:
    \begin{eqnarray*}
        M (\theta^t, \nu^t)/\sigma & = & 
        \vec{e}_1^t \left[I_1^t + \eta \beta^\ast \rho^t I_2^t + \mathcal{O}(\eta^2 \alpha^t)\right]\\
        & + & \vec{e}_2^t \left[\eta \beta^\ast\sqrt{1-[\rho^t]^2} J^t + \mathcal{O}(\eta^2 \alpha^t)\right],\\
        N (\theta^t, \nu^t) & = & I_0^t + \eta \beta^\ast \rho^t I_1^t + \mathcal{O}(\eta^2 [\alpha^t]^2).
    \end{eqnarray*}
    For \(\alpha^{t+1}= \Vert M(\theta^t, \nu^t) \Vert/\sigma, \beta^{t+1}= N(\theta^t, \nu^t)\), by applying the triangle inequality, we have:
    \begin{eqnarray*}
        \alpha^{t+1} & = & \sqrt{(I_1^t + \eta \beta^\ast \rho^t I_2^t)^2 + (1-[\rho^t]^2)(\eta \beta^\ast J^t)^2} + \mathcal{O}(\eta^2 \alpha^t),\\
        \beta^{t+1} & = & I_0^t + \eta \beta^\ast \rho^t I_1^t + \mathcal{O}(\eta^2 [\alpha^t]^2).
    \end{eqnarray*}
    The cosine \(\rho^{t+1}\) of angle between \(\theta^{t+1}\) and \(\theta^{\ast}\) is determined by the inner product of 
    \(M(\theta^t, \nu^t)/\sigma\) and \(\theta^\ast/\Vert \theta^\ast\Vert =\rho^t \vec{e}_1^t + \sqrt{1-[\rho^t]^2} \vec{e}_2^t\):
    \[
        \rho^{t+1} \alpha^{t+1} = \left\langle M(\theta^t, \nu^t)/\sigma, \theta^{\ast}/\Vert \theta^{\ast} \Vert \right\rangle
        = \rho^t I_1^t + \eta \beta^\ast [\rho^t]^2 I_2^t 
        + \eta \beta^\ast (1-[\rho^t]^2) J^t 
        + \mathcal{O}(\eta^2 \alpha^t).
    \]
    % By substituting the above results into the approximate dynamic equations, we have:
    % \begin{eqnarray*}
    %     \alpha^{t+1} & = & \sqrt{(I_1^t + \eta \beta^\ast \rho^t\beta^t)^2 + (1-[\rho^t]^2)(\eta \beta^\ast \beta^t)^2} 
    %     + \mathcal{O}(\eta \beta^\ast \beta^t s^t) + \mathcal{O}(\eta^2 \alpha^t),\\
    % \end{eqnarray*}
    To derive the approximation for \(\alpha^{t+1}\) when \(\eta\) is small enough, we first note that following inequality:
    \[
        r+\epsilon \cos \gamma \leq 
        \sqrt{r^2 + \epsilon^2 + 2r\epsilon \cos \gamma} \leq r + \epsilon \cos \gamma + \frac{\epsilon^2}{2 r} = r + \epsilon \cos \gamma + \mathcal{O}\left(\frac{\epsilon^2}{r}\right).
    \]
    By letting \(r = I_1^t + \eta \beta^\ast \rho^t (I_2^t-J^t), \epsilon = \eta \beta^\ast J^t, \cos \gamma =  \rho^t\), we have:
    \[
        \alpha^{t+1} %= I_1^t + \eta \beta^\ast \rho^t I_2^t + [\beta^\ast]^2[\beta^t]^2\mathcal{O}\left(\frac{\eta^2}{\alpha^t(1-[\beta^t]^2)}\right) + \mathcal{O}(\eta^2 \alpha^t),
        = I_1^t + \eta \beta^\ast \rho^t I_2^t + \eta^2 \mathcal{O}\left(\frac{[\beta^\ast]^2[\beta^t]^2/(1-[\beta^t]^2)\vee [\alpha^t]^2}{\alpha^t}\right).
    \]
    where \(I_2^t - J^t = - 6\beta^t s^t [1+ \mathcal{O}([\alpha^t]^2)] \), 
    and \(r = \mathcal{O}(s^t/\alpha^t)=(1-[\beta^t]^2)\mathcal{O}(\alpha^t)\) when \(\eta \leq \mathcal{O}(1) \leq 1/(\alpha^t |\beta^t \rho^t \beta^\ast|)\), 
    namely \(I_1 \gtrsim \eta |\beta^\ast \rho^t (I_2^t-J^t)|\).
    % and by using the inequality
    % Given the above assumption on \(\eta\), we have:
    % \begin{eqnarray*}
    %     \alpha^{t+1} & = & I_1^t + \eta \beta^\ast \rho^t I_2^t + \eta^2 \mathcal{O}\left(\frac{[\beta^\ast]^2[\beta^t]^2/(1-[\beta^t]^2)\vee [\alpha^t]^2}{\alpha^t}\right),\\
    %     \beta^{t+1} & = & I_0^t + \eta \beta^\ast \rho^t I_1^t + \eta^2 \mathcal{O}( [\alpha^t]^2),\\
    % \end{eqnarray*}
    
    Furthurmore, when \(\frac{\eta}{\alpha^t} \lesssim \frac{(1-[\beta^t]^2)}{|\beta^t\cdot \beta^\ast|}\), we have \(I_1^t \gtrsim \eta |\beta^\ast \rho^t I_2^t|\vee \eta^2 [\beta^\ast]^2 [\beta^t]^2/(\alpha^t(1-[\beta^t]^2))\);
    when \(\eta \lesssim \sqrt{1-[\beta^t]^2}\), we have \(I_1^t \gtrsim \eta^2 \alpha^t\).
    Therefore, if \(\eta \lesssim \min\left(\alpha^t\frac{(1-[\beta^t]^2)}{|\beta^t\cdot \beta^\ast|}, \sqrt{1-[\beta^t]^2}\right)
    = \alpha^t\frac{(1-[\beta^t]^2)}{|\beta^t\cdot \beta^\ast|} \wedge \sqrt{1-[\beta^t]^2}\), we have 
    \[
        I_1^t + \eta \beta^\ast \rho^t I_2^t = \mathcal{O}(I_1^t) = \mathcal{O}(s^t/\alpha^t) = (1-[\beta^t]^2)\mathcal{O}(\alpha^t),
        \quad \alpha^{t+1} = \mathcal{O}(I_1^t) = \mathcal{O}(s^t/\alpha^t) = (1-[\beta^t]^2)\mathcal{O}(\alpha^t).
    \]
    For the simplicity of notations, we define \(C_{\eta}:=\alpha^t\frac{(1-[\beta^t]^2)}{|\beta^t\cdot \beta^\ast|}, C'_{\eta}:=\sqrt{1-[\beta^t]^2}\), 
    then our assumptions on \(\eta\) can be rewritten as \(\eta \lesssim C_{\eta} \wedge C'_{\eta}\).
    By using such fact \(1/(1+x) =  1+ \mathcal{O}(|x|), \forall x \in [-1/2, 1/2]\) and \(I_1^t + \eta \beta^\ast \rho^t I_2^t = (1-[\beta^t]^2)\mathcal{O}(\alpha^t)\), then:
    \[
    (I_1^t + \eta \beta^\ast \rho^t I_2^t) / \alpha^{t+1} 
    = 1 + \mathcal{O}\left(\left(\frac{\eta}{C_{\eta}}\right)^2 \vee \left(\frac{\eta}{C'_{\eta}}\right)^2\right),
    %\eta^2 \mathcal{O}\left(\left[\alpha^t\frac{(1-[\beta^t]^2)}{|\beta^t\cdot \beta^\ast|}\right]^{-2}\right)
    %+ \eta^2 \mathcal{O}\left(\left[\sqrt{1-[\beta^t]^2}\right]^{-2}\right),
    \]
    
    Similary, by letting \(x = (\alpha^{t+1} - I_1^t)/ I_1^t\) and note that \(I_1^t = (1-[\beta^t]^2)\mathcal{O}(\alpha^t), I_2^t = \mathcal{O}(\beta^t), J^t = \mathcal{O}(|\beta^t|)\), then:
    \[
    1/\alpha^{t+1} = \frac{1}{I_1^t}  
    \left[1 + \rho^t \mathcal{O}\left(\frac{\eta}{C_{\eta}}\right) 
    + \mathcal{O}\left(\left(\frac{\eta}{C_{\eta}}\right)^2 \vee \left(\frac{\eta}{C'_{\eta}}\right)^2 \right) \right],
    \]
    \[
    \eta \beta^\ast J^t / \alpha^{t+1} 
    = \eta \beta^\ast J^t / I_1^t + \rho^t \mathcal{O}\left(\left(\frac{\eta}{C_{\eta}}\right)^2 \right) 
    + \mathcal{O}\left( \left(\frac{\eta}{C_{\eta}}\right) \left[\left(\frac{\eta}{C_{\eta}}\right)^2 \vee \left(\frac{\eta}{C'_{\eta}}\right)^2\right] \right).
    \]
    Note that \(\alpha^{t+1} =  \mathcal{O}(I_1^t) = \mathcal{O}(s^t/\alpha^t) = (1-[\beta^t]^2)\mathcal{O}(\alpha^t)\), therefore we have:
    \[
    \mathcal{O}(\eta^2 \alpha^t)/ \alpha^{t+1} = \mathcal{O}\left( \left(\frac{\eta}{C'_{\eta}}\right)^2 \right).
    \]
    Note the following relation that we obtained earlier:
    \[
    \rho^{t+1} = \rho^t \cdot (I_1^t + \eta \beta^\ast \rho^t I_2^t) / \alpha^{t+1} 
    + (1-[\rho^t]^2)\cdot (\eta \beta^\ast J^t / \alpha^{t+1} )
    + \mathcal{O}(\eta^2 \alpha^t) / \alpha^{t+1},
    \]
    by combining the above results and invoking the assumption \(\eta \lesssim C_{\eta} \wedge C'_{\eta}\), then we have:
    \[
    \rho^{t+1} = \rho^t + (1-[\rho^t]^2) \cdot \eta \beta^\ast J^t/I_1^t 
    + \mathcal{O}\left( \left(\frac{\eta}{C'_{\eta}}\right)^2 \right)
    + \rho^t \mathcal{O}\left( \left(\frac{\eta}{C_{\eta}}\right)^2 \right)
    + (1-[\rho^t]^2) \cdot \mathcal{O}\left( \left(\frac{\eta}{C_{\eta}}\right)^3 \right).
    \]
    In summary, with the notations 
    \(C_{\eta}=\alpha^t\frac{(1-[\beta^t]^2)}{|\beta^t\cdot \beta^\ast|}, C'_{\eta}=\sqrt{1-[\beta^t]^2}\), 
    given \(\alpha^t< 1/4, \eta \lesssim 1\), we have:
    \begin{eqnarray*}
    \alpha^{t+1} &=& I_1^t + \eta \beta^\ast \rho^t I_2^t 
    + \alpha^t (1-[\beta^t]^2) \mathcal{O}\left( \left(\frac{\eta}{C_{\eta}}\right)^2 \vee \left(\frac{\eta}{C'_{\eta}}\right)^2 \right),\\
    \beta^{t+1} &=& I_0^t + \eta \beta^\ast \rho^t I_1^t + \mathcal{O}( \eta^2 [\alpha^t]^2),
    \end{eqnarray*} 
    with the assumption \(\eta \lesssim C_{\eta} \wedge C'_{\eta}\), we also have:
    \[
    \rho^{t+1} = \rho^t + (1-[\rho^t]^2) \cdot \eta \beta^\ast J^t/I_1^t 
    + \mathcal{O}\left( \left(\frac{\eta}{C'_{\eta}}\right)^2 \right)
    + \rho^t \mathcal{O}\left( \left(\frac{\eta}{C_{\eta}}\right)^2 \right)
    + (1-[\rho^t]^2) \cdot \mathcal{O}\left( \left(\frac{\eta}{C_{\eta}}\right)^3 \right).
    \]
\end{proof}

\begin{remark}
    When \(|\rho^{t}| = 1\), by noting the remark in Subsection~\ref{supsub:em_update_low_snr}, 
    the remainder term of the EM update rules in the direction of \(\vec{e}_2^t\) is exactly zero, 
    namely \([\vec{e}_2^t\mathcal{O}(\eta^2 \alpha^t)]_{|\rho^t|=1} = \vec{0}\), thus we have:
    \[
    \left[M(\theta^t, \nu^t)/\sigma\right]_{\vert \rho^{t}\vert = 1} = \vec{e}_1^t \left[I_1^t + \eta \beta^\ast \rho^t I_2^t + \mathcal{O}(\eta^2 \alpha^t)\right].
    \]
    Therefore, we have \(|\rho^{t+1}| =1\) if \(|\rho^t| = 1\) 
    since \(\theta^\ast/\Vert \theta^\ast\Vert = \sgn(\rho^t)\vec{e}_1^t\) when \(\vert \rho^t\vert = 1\) and 
    \[
    |\rho^{t+1}| = |\langle M(\theta^t, \nu^t)/\Vert M(\theta^t, \nu^t)\Vert, \theta^\ast/\Vert \theta^\ast\Vert\rangle| 
    = | \langle \vec{e}_1^t, \sgn(\rho^t)\vec{e}_1^t\rangle| 
    = | \langle \vec{e}_1^t, \vec{e}_1^t\rangle| 
    = 1.
    \]
\end{remark}

\begin{theorem}[Dynamic Equations of EM Update Rules in Low SNR Regime]\label{supprop:dynamic_low_snr}
    For the EM iterations of \(\alpha^t:= \|\theta^t\|/\sigma, \beta^t:= \tanh \nu^t, \rho^t:= \langle \theta^t, \theta^{\ast} \rangle/\|\theta^t\|\|\theta^{\ast}\|\),
    given \(\alpha^t < 1/4\) and \(\eta = \|\theta^{\ast}\|/\sigma \lesssim 1\) in low SNR regime, we have the approximate dynamic equations for \(\alpha^{t+1}, \beta^{t+1}\):
    \begin{eqnarray*}
        \alpha^{t+1} & = & \alpha^t(1-[\beta^2]) 
        + \eta \beta^\ast \rho^t \beta^t (1- 9(1-[\beta^t]^2)[\alpha^t]^2)\\
        &+& (1-[\beta^2]) \mathcal{O}([\alpha^t]^3)
        + \eta^2 \mathcal{O}\left(\frac{[\beta^\ast]^2[\beta^t]^2/(1-[\beta^t]^2)\vee [\alpha^t]^2}{\alpha^t}\right),\\
        \beta^{t+1} & = & 
          \beta^t(1-(1-[\beta^2]) [\alpha^t]^2) 
        + \eta \beta^\ast \rho^t \alpha^t(1-[\beta^t]^2)
        + (1-[\beta^2]) \mathcal{O}([\alpha^t]^3)
        + \eta^2 \mathcal{O}\left([\alpha^t]^2\right)
    \end{eqnarray*}
    Furthermore, with the notations \(C_{\eta}=\alpha^t\frac{(1-[\beta^t]^2)}{|\beta^t\cdot \beta^\ast|}, C'_{\eta}=\sqrt{1-[\beta^t]^2}\), when \(\eta \lesssim C_{\eta} \wedge C'_{\eta}\), we have:
    \begin{eqnarray*}
        \rho^{t+1} & = & \rho^t 
        + (1-[\rho^t]^2) \cdot \eta \beta^\ast \frac{\beta^t(1-6[\alpha^t]^2[\beta^t]^2)}{\alpha^t (1-[\beta^t]^2)}\\
        &+& (1-[\rho^t]^2) \cdot \eta \beta^\ast \beta^t(1/(1-[\beta^t]^2) + 1) \mathcal{O}([\alpha^t]^3)\\
        &+& \mathcal{O}\left( \left(\frac{\eta}{C'_{\eta}}\right)^2 \right)
        + \rho^t \mathcal{O}\left( \left(\frac{\eta}{C_{\eta}}\right)^2 \right)
        + (1-[\rho^t]^2) \cdot \mathcal{O}\left( \left(\frac{\eta}{C_{\eta}}\right)^3 \right),
    \end{eqnarray*}
\end{theorem}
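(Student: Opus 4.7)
The plan is to derive the claimed dynamic equations by substituting the Taylor expansions of the four expectations $I_0^t := \E[\tanh(\alpha^t X+\nu^t)]$, $I_1^t := \E[\tanh(\alpha^t X+\nu^t)X]$, $I_2^t := \E[\tanh(\alpha^t X+\nu^t)X^2]$, and $J^t := I_0^t - \alpha^t\E[\tanh^2(\alpha^t X+\nu^t)X]$, as established in Lemmas~\ref{suplem:taylor_expectation_tanh} and \ref{suplem:taylor_expectation_tanh_squared}, directly into the exact expressions for $\alpha^{t+1}$, $\beta^{t+1}$, and $\rho^{t+1}$ already proved in Theorem~\ref{supprop:approx_low_snr}. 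The assumption $\alpha^t < 1/4$ guarantees the Taylor expansions are valid with the stated remainder orders, while the smallness condition $\eta \lesssim C_\eta \wedge C'_\eta$ together with all $\eta^2$-scale remainder terms carry over unchanged from the preceding theorem.

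For $\alpha^{t+1}$ and $\beta^{t+1}$, the derivation reduces to a direct substitution. Writing $I_1^t = \alpha^t(1-[\beta^t]^2) + (1-[\beta^t]^2)\mathcal{O}([\alpha^t]^3)$, $I_2^t = \beta^t(1 - 9[\alpha^t]^2(1-[\beta^t]^2)) + (1-[\beta^t]^2)\beta^t\mathcal{O}([\alpha^t]^4)$, and $I_0^t = \beta^t(1 - [\alpha^t]^2(1-[\beta^t]^2)) + (1-[\beta^t]^2)\beta^t\mathcal{O}([\alpha^t]^4)$, inserting these into $\alpha^{t+1} = I_1^t + \eta\beta^\ast\rho^t I_2^t + (\text{remainder})$ and $\beta^{t+1} = I_0^t + \eta\beta^\ast\rho^t I_1^t + (\text{remainder})$ and absorbing the higher-order tails into $(1-[\beta^t]^2)\mathcal{O}([\alpha^t]^3)$ yields the first two stated equations.

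The main work lies in expanding the ratio $J^t/I_1^t$ in the $\rho^{t+1}$ equation. With $J^t = \beta^t(1 - 3[\alpha^t]^2(1-[\beta^t]^2)) + (1-[\beta^t]^2)\beta^t\mathcal{O}([\alpha^t]^4)$ and $I_1^t = \alpha^t(1-[\beta^t]^2)(1 - 3[\alpha^t]^2(1-3[\beta^t]^2) + \mathcal{O}([\alpha^t]^4))$, applying the geometric-series identity $1/(1-x) = 1+x+\mathcal{O}(x^2)$ to the denominator (valid since $\alpha^t<1/4$) and multiplying through produces the leading quadratic coefficient $3[\alpha^t]^2[(1-3[\beta^t]^2)-(1-[\beta^t]^2)] = -6[\alpha^t]^2[\beta^t]^2$, precisely the factor $(1-6[\alpha^t]^2[\beta^t]^2)$ in the claim. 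The composite remainder splits into two contributions: the $(1-[\beta^t]^2)\beta^t\mathcal{O}([\alpha^t]^4)$ numerator tail, divided by $\alpha^t(1-[\beta^t]^2)$, contributes a $\beta^t\mathcal{O}([\alpha^t]^3)$ piece, whereas the $\mathcal{O}([\alpha^t]^4)$ denominator tail, multiplied by the leading factor $\beta^t/(\alpha^t(1-[\beta^t]^2))$, contributes a $\beta^t\mathcal{O}([\alpha^t]^3)/(1-[\beta^t]^2)$ piece; summing recovers the claimed $\beta^t(1/(1-[\beta^t]^2)+1)\mathcal{O}([\alpha^t]^3)$ term. The hard part will be tracking this cancellation in the quadratic coefficient and keeping the two tails separate before recombining, since merging them prematurely would absorb the sharper factored form $(1/(1-[\beta^t]^2)+1)$ into the looser single bound $1/(1-[\beta^t]^2)$.
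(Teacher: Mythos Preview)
Your proposal is correct and follows essentially the same approach as the paper: substituting the Taylor expansions of $I_0^t, I_1^t, I_2^t, J^t$ from Lemmas~\ref{suplem:taylor_expectation_tanh} and~\ref{suplem:taylor_expectation_tanh_squared} into Theorem~\ref{supprop:approx_low_snr}, then expanding $J^t/I_1^t$ via the geometric-series identity to extract the $(1-6[\alpha^t]^2[\beta^t]^2)$ coefficient and the two separate remainder pieces that sum to $\beta^t(1/(1-[\beta^t]^2)+1)\mathcal{O}([\alpha^t]^3)$. The paper organizes the $J^t/I_1^t$ computation by first splitting $J^t = \beta^t - 3\beta^t s^t[1+\mathcal{O}([\alpha^t]^2)]$ into its constant and $s^t$-parts before dividing, whereas you phrase it as numerator-tail versus denominator-tail contributions, but this is the same calculation.
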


\begin{proof}
    By using Lemma~\ref{suplem:taylor_expectation_tanh} and Lemma~\ref{suplem:taylor_expectation_tanh_squared} and defining \(s^t := [\alpha^t]^2(1-[\beta^t]^2)\), we have:
    \begin{eqnarray*}
        I_0^t &:=& \E[\tanh(\alpha^t X + \nu^t)]  
        %= \beta^t - [\alpha^t]^2 \beta^t(1-[\beta^t]^2) + (1-[\beta^t]^2)\beta^t\mathcal{O}([\alpha^t]^4)
        = \beta^t (1-s^t[1+\mathcal{O}([\alpha^t]^2)]),\\
        I_1^t &:=& \E[\tanh(\alpha^t X + \nu^t) X ]  
        % =  \alpha^t (1-[\beta^t]^2) -3[\alpha^t]^3 (1-[\beta^t]^2)(1-3[\beta^t]^2) + (1-[\beta^t]^2)\mathcal{O}([\alpha^t]^5)
        = \alpha^t(1-[\beta^t]^2)[1-3[\alpha^t]^2(1-3[\beta^t]^2)+\mathcal{O}([\alpha^t]^4)],\\
        I_2^t &:=& \E[\tanh(\alpha^t X + \nu^t) X^2] 
        % =  \beta^t - 9 [\alpha^t]^2 \beta^t (1-[\beta^t]^2) + (1-[\beta^t]^2)\beta^t\mathcal{O}([\alpha^t]^4)
        = \beta^t (1-9s^t[1+\mathcal{O}([\alpha^t]^2)]),\\
        J^t &:=& \E[\tanh(\alpha^t X + \nu^t)] - \alpha^t \E[\tanh^2(\alpha^t X + \nu^t) X] 
        % &=& \beta^t - 3 [\alpha^t]^2 \beta^t (1-[\beta^t]^2) + (1-[\beta^t]^2)\beta^t\mathcal{O}([\alpha^t]^4)
        = \beta^t (1-3s^t[1+\mathcal{O}([\alpha^t]^2)]).
    \end{eqnarray*}
    By substituting the above results into the approximate dynamic equations in proven Theorem~\ref{supprop:approx_low_snr}, 
    note that \((1-[\beta^2]) \mathcal{O}([\alpha^t]^3)
    + \eta \beta^\ast \rho^t \beta^t (1-[\beta^t]^2) \mathcal{O}([\alpha^t]^4)
    = (1-[\beta^t]^2) \mathcal{O}([\alpha^t]^3)\), we have:
    \begin{eqnarray*}
        \alpha^{t+1} & = & \alpha^t(1-[\beta^2]) 
        + \eta \beta^\ast \rho^t \beta^t (1- 9(1-[\beta^t]^2)[\alpha^t]^2)\\
        &+& (1-[\beta^2]) \mathcal{O}([\alpha^t]^3)
        + \eta^2 \mathcal{O}\left(\frac{[\beta^\ast]^2[\beta^t]^2/(1-[\beta^t]^2)\vee [\alpha^t]^2}{\alpha^t}\right).
    \end{eqnarray*}
    Note that \( \beta^t (1-[\beta^t]^2) \mathcal{O}([\alpha^t]^4) + \eta \beta^\ast \rho^t (1-[\beta^t]^2) \mathcal{O}([\alpha^t]^3) = (1-[\beta^t]^2) \mathcal{O}([\alpha^t]^3)\), 
    we have:
    \begin{eqnarray*}
        \beta^{t+1} = \beta^t(1-(1-[\beta^2]) [\alpha^t]^2) 
        + \eta \beta^\ast \rho^t \alpha^t(1-[\beta^t]^2)
        + (1-[\beta^2]) \mathcal{O}([\alpha^t]^3)
        + \eta^2 \mathcal{O}\left([\alpha^t]^2\right).
    \end{eqnarray*}
    Furthermore, with the notations \(C_{\eta}=\alpha^t\frac{(1-[\beta^t]^2)}{|\beta^t\cdot \beta^\ast|}, C'_{\eta}=\sqrt{1-[\beta^t]^2}\), when \(\eta \lesssim C_{\eta} \wedge C'_{\eta}\), we have:
    \begin{eqnarray*}
    J^t / I_1^t &=& \frac{\beta^t / (1-[\beta^t]^2)}{\alpha^t}\cdot \frac{1}{1-3[\alpha^t]^2(1-3[\beta^t]^2)+ \mathcal{O}([\alpha^t]^4)}
    - 3 \alpha^t \beta^t \frac{1+ \mathcal{O}([\alpha^t]^2)}{1-3[\alpha^t]^2(1-3[\beta^t]^2)+ \mathcal{O}([\alpha^t]^4)}\\
    & = & \left(
    \frac{\beta^t / (1-[\beta^t]^2)}{\alpha^t}
    + 3\frac{\alpha^t \beta^t}{1-[\beta^t]^2} - 9\frac{\alpha^t [\beta^t]^3}{1-[\beta^t]^2} + \frac{\beta^t}{1-[\beta^t]^2} \mathcal{O}([\alpha^t]^3)
     \right)
     - 3\alpha^t \beta^t + \beta^t \mathcal{O}([\alpha^t]^3)\\
    & = & \frac{\beta^t(1-6[\alpha^t]^2[\beta^t]^2)}{\alpha^t (1-[\beta^t]^2)} + \beta^t(1/(1-[\beta^t]^2) + 1) \mathcal{O}([\alpha^t]^3),
    \end{eqnarray*}
    hence,
    \begin{eqnarray*}
        \rho^{t+1} & = & \rho^t 
        + (1-[\rho^t]^2) \cdot \eta \beta^\ast \frac{\beta^t(1-6[\alpha^t]^2[\beta^t]^2)}{\alpha^t (1-[\beta^t]^2)}\\
        &+& (1-[\rho^t]^2) \cdot \eta \beta^\ast \beta^t(1/(1-[\beta^t]^2) + 1) \mathcal{O}([\alpha^t]^3)\\
        &+& \mathcal{O}\left( \left(\frac{\eta}{C'_{\eta}}\right)^2 \right)
        + \rho^t \mathcal{O}\left( \left(\frac{\eta}{C_{\eta}}\right)^2 \right)
        + (1-[\rho^t]^2) \cdot \mathcal{O}\left( \left(\frac{\eta}{C_{\eta}}\right)^3 \right).
    \end{eqnarray*}
\end{proof}

\end{document}